\def\GDIHmeanhns{9620.33}
\def\GDIHmedianhns{1146.39}
\def\GDIHHWRB{22}
\def\GDIHmeanHWRNS{154.27}
\def\GDIHmedianHWRNS{50.63}
\def\GDIHmeanSABER{71.26}
\def\GDIHmedianSABER{50.63}
\def\GDIHnumframes{2.00E+08}
\def\GDIHgametime{38.5}
\def\GDIImeanhns{7810.1}
\def\GDIImedianhns{832.5}
\def\GDIIHWRB{17}
\def\GDIImeanHWRNS{117.98}
\def\GDIImedianHWRNS{35.78}
\def\GDIImeanSABER{61.66}
\def\GDIImedianSABER{35.78}
\def\GDIInumframes{2.00E+08}
\def\GDIIgametime{38.5}
\def\rainbowmeanhns{873.54}
\def\rainbowmedianhns{230.99}
\def\rainbowHWRB{4}
\def\rainbowmeanHWRNS{28.39}
\def\rainbowmedianHWRNS{4.92}
\def\rainbowmeanSABER{28.39}
\def\rainbowmedianSABER{4.92}
\def\rainbownumframes{2.00E+08}
\def\rainbowgametime{38.5}
\def\impalameanhns{956.99}
\def\impalamedianhns{191.82}
\def\impalaHWRB{3}
\def\impalameanHWRNS{34.52}
\def\impalamedianHWRNS{4.31}
\def\impalameanSABER{29.45}
\def\impalamedianSABER{4.31}
\def\impalanumframes{2.00E+08}
\def\impalagametime{38.5}
\def\lasermeanhns{1740.94}
\def\lasermedianhns{454.91}
\def\laserHWRB{7}
\def\lasermeanHWRNS{45.39}
\def\lasermedianHWRNS{8.08}
\def\lasermeanSABER{36.78}
\def\lasermedianSABER{8.08}
\def\lasernumframes{2.00E+08}
\def\lasergametime{38.5}
\def\rtdtmeanhns{3373.48}
\def\rtdtmedianhns{1342.27}
\def\rtdtHWRB{15}
\def\rtdtmeanHWRNS{98.78}
\def\rtdtmedianHWRNS{33.62}
\def\rtdtmeanSABER{60.43}
\def\rtdtmedianSABER{33.62}
\def\rtdtnumframes{1.00E+10}
\def\rtdtgametime{1929}
\def\ngumeanhns{3169.07}
\def\ngumedianhns{1174.92}
\def\nguHWRB{8}
\def\ngumeanHWRNS{76.00}
\def\ngumedianHWRNS{21.19}
\def\ngumeanSABER{50.47}
\def\ngumedianSABER{21.19}
\def\ngunumframes{3.50E+10}
\def\ngugametime{6751.5}
\def\agentmeanhns{4762.17}
\def\agentmedianhns{1933.49}
\def\agentHWRB{18}
\def\agentmeanHWRNS{125.92}
\def\agentmedianHWRNS{43.62}
\def\agentmeanSABER{76.26}
\def\agentmedianSABER{43.62}
\def\agentnumframes{1.00E+11}
\def\agentgametime{19290}
\def\muzeromeanhns{4994.97}
\def\muzeromedianhns{2041.12}
\def\muzeroHWRB{19}
\def\muzeromeanHWRNS{152.10}
\def\muzeromedianHWRNS{49.80}
\def\muzeromeanSABER{71.94}
\def\muzeromedianSABER{49.80}
\def\muzeronumframes{2.00E+10}
\def\muzerogametime{3858}
\def\dreamermeanhns{642.49}
\def\dreamermedianhns{178.04}
\def\dreamerHWRB{3}
\def\dreamermeanHWRNS{38.60}
\def\dreamermedianHWRNS{4.29}
\def\dreamermeanSABER{27.73}
\def\dreamermedianSABER{4.29}
\def\dreamernumframes{2.00E+08}
\def\dreamergametime{38.58 }
\def\simplemeanhns{25.78}
\def\simplemedianhns{5.55}
\def\simpleHWRB{0}
\def\simplemeanHWRNS{4.80}
\def\simplemedianHWRNS{0.13}
\def\simplemeanSABER{4.80}
\def\simplemedianSABER{0.13}
\def\simplenumframes{1.00E+06}
\def\simplegametime{0.19}
\def\mueslimeanhns{2538.12}
\def\mueslimedianhns{1077.47}
\def\muesliHWRB{5}
\def\mueslimeanHWRNS{75.52}
\def\mueslimedianHWRNS{24.86}
\def\mueslimeanSABER{48.74}
\def\mueslimedianSABER{24.86}
\def\mueslinumframes{2.00E+08}
\def\muesligametime{38.5}
\def\goexploremeanhns{4989.31}
\def\goexploremedianhns{1451.55}
\def\goexploreHWRB{15}
\def\goexploremeanHWRNS{116.89}
\def\goexploremedianHWRNS{50.50}
\def\goexploremeanSABER{71.80}
\def\goexploremedianSABER{50.50}
\def\goexplorenumframes{1.00E+10}
\def\goexploregametime{1929}
\newcommand{\best}[1]{\textbf{#1}}
\DeclareMathOperator*{\argmax}{arg\,max}
\theoremstyle{plain}
\newtheorem{definition}{Definition}[section]
\newtheorem{Theorem}{\textbf{Theorem}}
\newtheorem{Lemma}{\textbf{Lemma}}
\newtheorem*{Remark}{\textbf{Remark}}
\newtheorem*{Proof}{\textbf{Proof}}
\newtheorem{Assumption}{\textbf{Assumption}}
\newtheorem{Example}{Example}
\icmltitlerunning{Generalized Data Distribution Iteration}
\begin{document}

\twocolumn[
\icmltitle{Generalized Data Distribution Iteration}



\icmlsetsymbol{equal}{*}

\begin{icmlauthorlist}
\icmlauthor{Jiajun Fan}{tsinghua}
\icmlauthor{Changnan Xiao}{bytedance}
\end{icmlauthorlist}

\icmlaffiliation{tsinghua}{Tsinghua Shenzhen International Graduate School, Tsinghua University, Beijing, China}
\icmlaffiliation{bytedance}{ByteDance, Beijing, China}

\icmlcorrespondingauthor{Changnan Xiao}{xiaochangnan@bytedance.com}
\icmlcorrespondingauthor{Jiajun Fan}{fanjj21@mails.tsinghua.edu.cn}

\icmlkeywords{Machine Learning, ICML}

\vskip 0.3in
]



\printAffiliationsAndNotice{}  

\begin{abstract}
To obtain higher sample efficiency and superior final performance  simultaneously has been one of the major challenges for deep reinforcement learning (DRL). Previous work could handle one of these challenges but typically failed to address them concurrently. In this paper, we try to tackle these two challenges simultaneously. To achieve this, we firstly decouple these challenges into two classic RL problems: data richness and exploration-exploitation trade-off. 
Then, we cast these two problems into the training data distribution optimization problem, namely to obtain desired training data within limited interactions, and address them concurrently via \textbf{i)} explicit modeling and control of the capacity and diversity of behavior policy and \textbf{ii)} more fine-grained and adaptive control of selective/sampling distribution of the behavior policy using a monotonic data distribution optimization.  Finally, we integrate this process into Generalized Policy Iteration (GPI)  and obtain a more general framework called \textbf{G}eneralized \textbf{D}ata Distribution \textbf{I}teration (GDI). We use the GDI framework to introduce operator-based versions of well-known RL methods from DQN to Agent57. Theoretical guarantee of the superiority of GDI compared with GPI is concluded. We also demonstrate  our state-of-the-art (SOTA) performance on Arcade Learning Environment (ALE), wherein our algorithm  has achieved \GDIHmeanhns\% mean human normalized score (HNS), \GDIHmedianhns\% median HNS and surpassed \GDIHHWRB\ human world records using only 200M training frames. Our performance is comparable to Agent57's  while we consume 500 times less data.  We argue that  there is still a long way to go before obtaining real superhuman agents in ALE.
\end{abstract}

\begin{figure}[!t]
    \centering
	\subfigure{
		\includegraphics[width=0.46\textwidth]{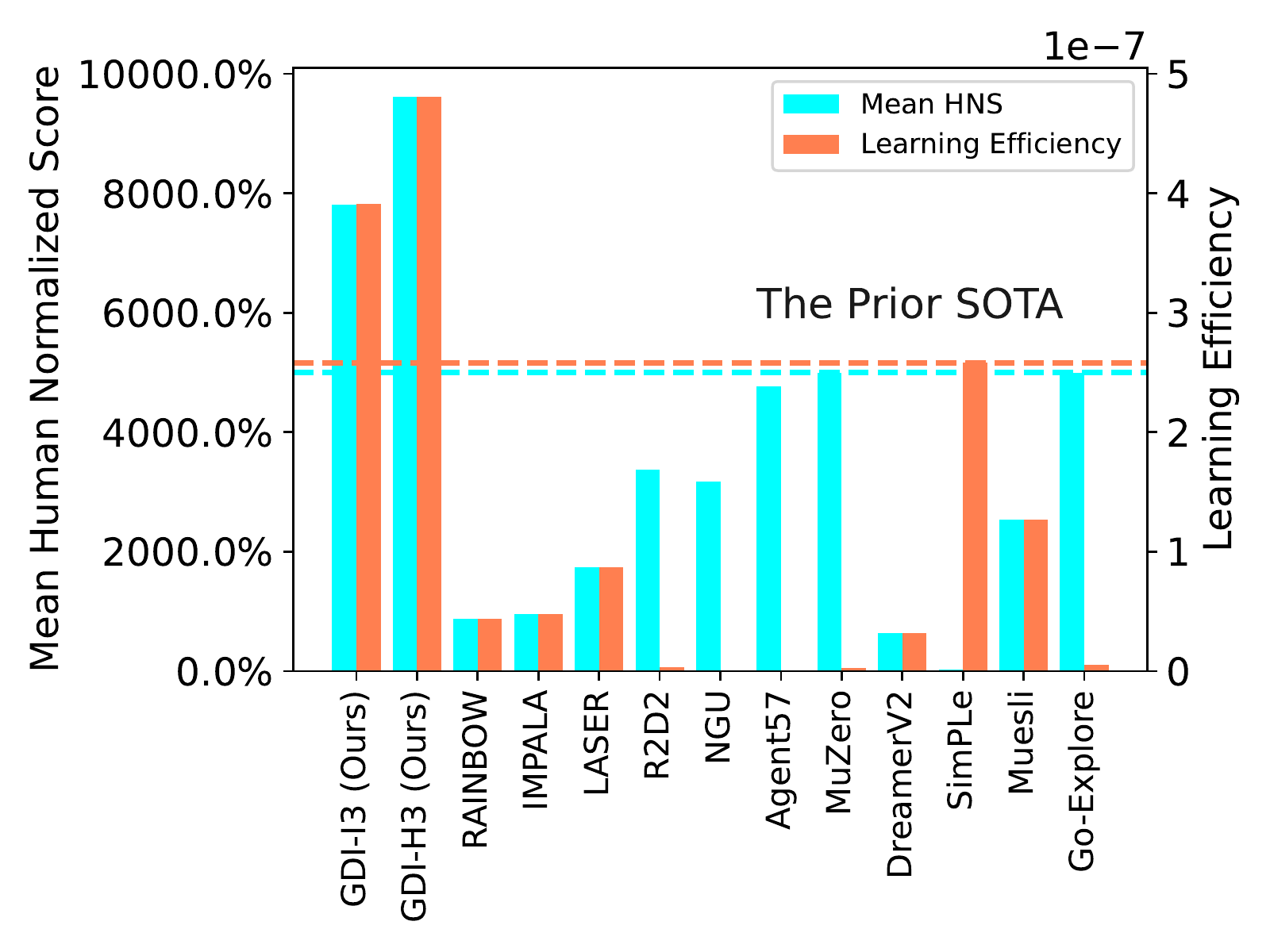}
	}
	\centering
	\caption{Performance of algorithms of Atari 57 games on mean HNS(\%) and  corresponding learning/sample efficiency calculated by $\frac{\text{Mean HNS}}{\text{Training Scale (frames)}}$. For more benchmark results, can see App. \ref{app: Summary of Benchmark Results}.} 
	\label{fig: mean med hns and learning efficiency}
\end{figure}

\begin{figure*}[!t]
    \centering
	\subfigure[Isomorphism GDI]{
		\includegraphics[width=0.46\textwidth]{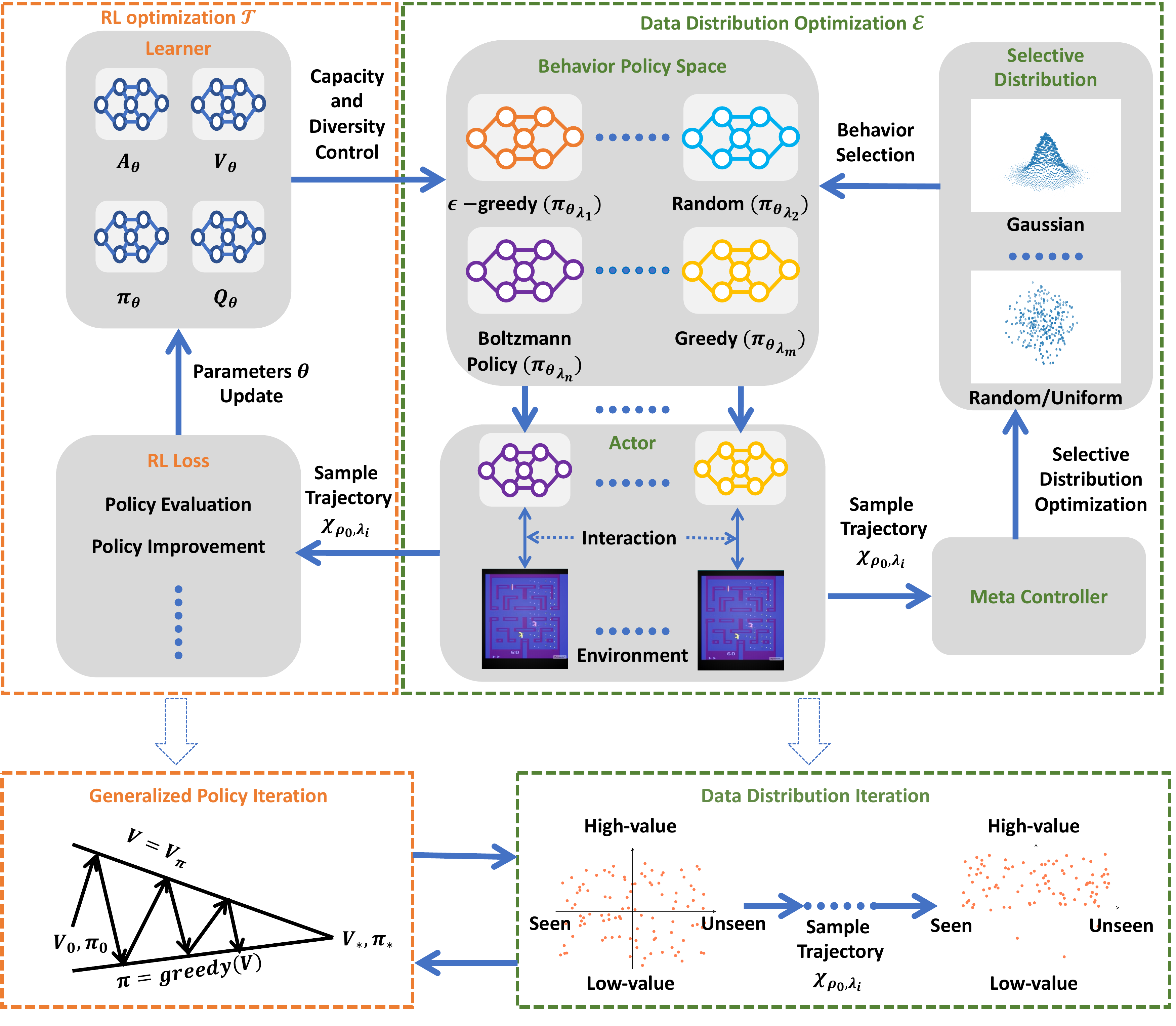}
	}
	\subfigure[Heterogeneous GDI]{
		\includegraphics[width=0.46\textwidth]{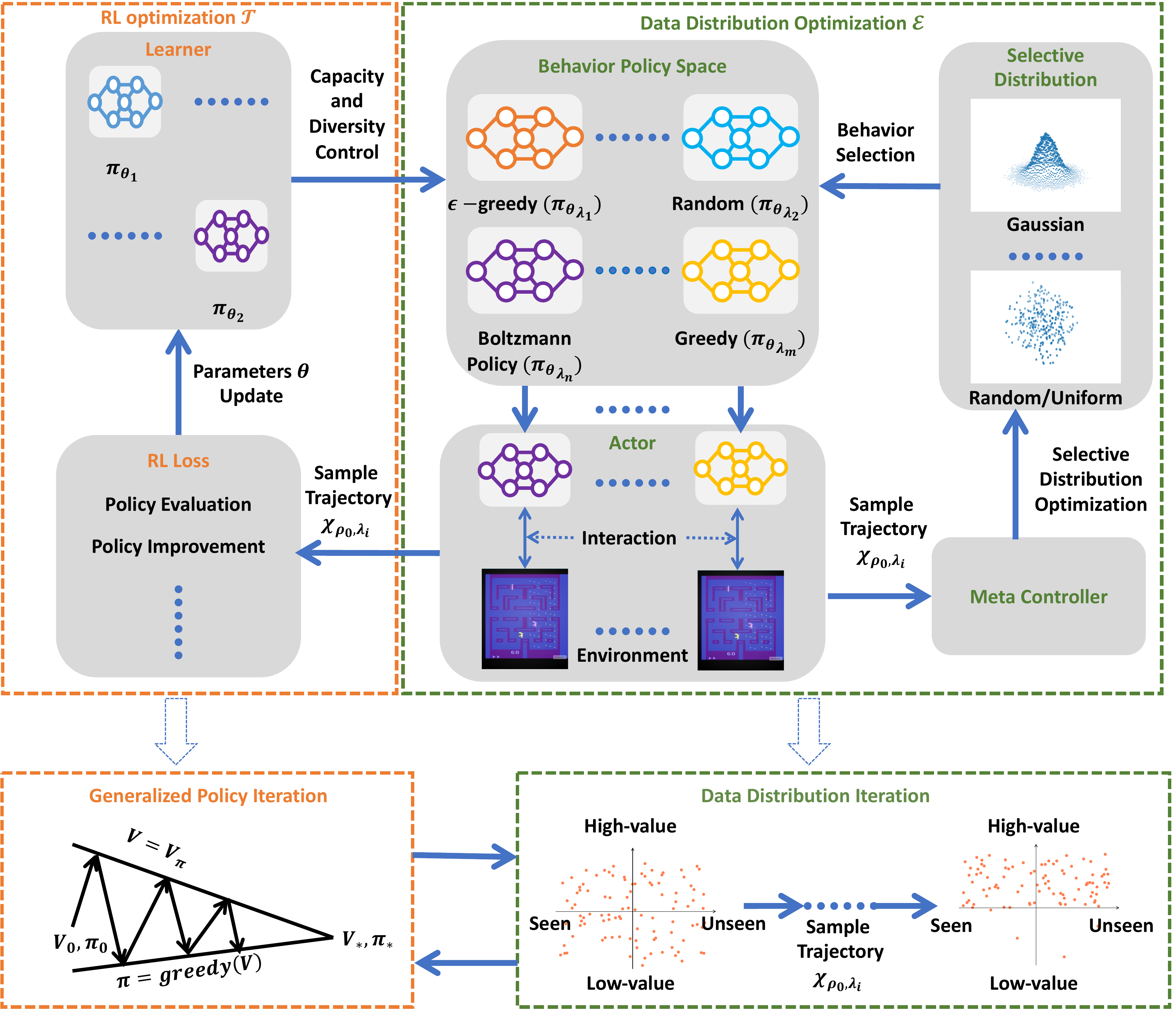}
	}
	\centering
	\caption{Algorithm Architecture Diagram. \textbf{(a)} The Isomorphism architecture of GDI, wherein the  behavior policy space (e.g., the soft entropy policy space, $\pi_{\theta_{\lambda}}=\epsilon \cdot \operatorname{Softmax}\left(\frac{A_{\theta_1}}{\tau_{1}}\right)+(1-\epsilon) \cdot \operatorname{Softmax}\left(\frac{A_{\theta_2}}{\tau_{2}}\right)$) is constructed by the base policy  with shared parameters (i.e., $\theta_1=\theta_2=\theta$) and indexed by $\lambda=(\tau_1,\tau_2,\epsilon)$. \textbf{(b)} The Heterogeneous architecture of GDI, wherein the behavior policy space is constructed by the base policy  with different parameters (i.e., $\theta_1 \neq \theta_2$) and indexed by $\lambda$. For more details, can see Sec. \ref{Sec: Methodology} and \ref{sec: experiment}.} 
	\label{fig: Algorithm Architecture Diagram}
\end{figure*}

\section{Introduction}
\label{sec: introduction}

Reinforcement learning (RL) algorithms, when combined with high-capacity deep neural networks, have shown promise in domains ranging from video games \citep{dqn} to robotic manipulation \citep{trpo,ppo}. However, it still suffers from high sample complexity and unsatisfactory final performance, especially compared to human learning \citep{tsividis2017human}. Prior work could handle one of these problems but commonly failed to tackle both of them simultaneously.

Model-free RL methods typically obtain remarkable final performance via finding a way to encourage exploration and  improve the data richness (e.g., $\frac{\text{Seen Conditions}}{\text{All Conditions}}$) that guarantees traversal of \emph{all possible} conditions. These methods \citep{goexplore,agent57} could perform remarkably well when interactions are (nearly) \emph{limitless} but normally fail when  interactions are \emph{limited}. We argue that when interactions are \emph{limited}, finding a way to guarantee traversal of \emph{all unseen} conditions is unreasonable, and perhaps we should find a way to traverse the \emph{nontrivial} conditions (e.g., unseen \citep{goexplore} and high-value \citep{discor}) first and avoid traversing the \emph{trivial/low-value} conditions repeatedly. In other words, we should explicitly control the training data distribution in RL and maximize the probability of nontrivial conditions being traversed, namely the \emph{data distribution optimization} (see Fig. \ref{fig: Algorithm Architecture Diagram}).

 In RL, training data distribution is normally controlled by the behavior policy \citep{sutton,dqn}, so that the data richness can be controlled by the capacity and diversity  of the behavior policy. Wherein the capacity describes \emph{how many different behavior policies there are in the policy space}, and the diversity describes \emph{how many different behavior policies are  selected/sampled from the policy space to generate training data} (discussed in Sec. \ref{sec: Explicit  Capacity and Diversity  Control of Behavior Policy}). When interactions are limitless, increasing the capacity and maximizing the diversity  via randomly sampling behavior policies (most prior works have achieved SOTA in this way) can significantly improve the data richness and guarantee traversal of almost all \emph{unseen} conditions, which induces better final performance \citep{agent57} and generalization  \citep{ghosh2021generalization}. However, perhaps surprisingly, this is not the case when interactions are limited, where each interaction is rare and the selection of the behavior policy becomes important. In conclusion, we should increase the probability of the traversal of \emph{unseen} conditions (i.e., exploration) via increasing the \emph{capacity} and \emph{diversity} of the behavior policy  and maximize the probability of  \emph{high-value} conditions (i.e., exploitation) being traversed via optimizing the selective distribution of the behavior policy. It's also known as  the  exploration-exploitation trade-off problem. 
 
 From this perspective, we can understand why the prior SOTA algorithms, such as Agent57  and Go-Explore, failed  to obtain high sample efficiency. They have collected massive data to guarantee the traversal of \emph{unseen} conditions but ignore the different values of data. Therefore, they wasted many  trials to collect  \emph{useless/low-value} data, which accounts for their low sample efficiency. In other words, they failed to tackle the data distribution optimization problem.

In this paper, we argue that the sample efficiency of model-free methods can be significantly improved  (even outperform the SOTA model-based schemes \cite{dreamerv2}) without degrading the final performance  via data distribution optimization.   To achieve this, we propose a data distribution optimization operator $\mathcal{E}$ to iteratively optimize the selective distribution of the behavior policy  and  thereby optimize the training data distribution. Specifically, we  construct a parameterized  policy space indexed by $\lambda$ called the soft entropy space, which enjoys a larger capacity  than Agent57. The behavior policies are sampled  from this policy space via a sampling distribution. Then, we adopt a meta-learning method to  optimize the sampling distribution of behavior policies iteratively and thereby achieve a more fine-grained exploration and exploitation trade-off. Moreover, training data collected by the optimized behavior policies will be used for RL optimization via the operator $\mathcal{T}$. This process will be illustrated in Fig. \ref{fig: Algorithm Architecture Diagram}, generalized in Sec. \ref{Sec: Methodology}, proved  superior in Sec. \ref{sec: Monotonic Data Distribution Optimization} and implemented in Sec. \ref{sec: Practical Implement Based on GDI}.

The main contributions of our work are: 
\begin{enumerate}
    \item \textbf{A General RL Framework.} \emph{Efficient learning}  within \emph{limited} interactions induces the data distribution optimization problem. To tackle this problem, we firstly explicitly  control the diversity and capacity of the behavior policy (see Sec. \ref{sec: Explicit  Capacity and Diversity  Control of Behavior Policy}) and  then optimize the sampling distribution of behavior policies iteratively via a data distribution optimization operator (see Sec. \ref{sec: Generalized Data Distribution Iteration}). After integrating  them into GPI, we obtain a general RL framework, GDI (see Fig. \ref{fig: Algorithm Architecture Diagram}).
    
    \item \textbf{An Operator View of RL Algorithms.} We use the GDI framework to introduce operator-based versions of well-known RL methods from DQN to Agent57 in Sec. \ref{sec: An Operator View of RL Methods}, which leads to a better understanding of their original counterparts.

    \item \textbf{Theoretical Proof of Superiority.}  We offer theoretical proof of the superiority of GDI in the case of both first-order and second-order optimization in Sec. \ref{sec: Monotonic Data Distribution Optimization}.

    \item \textbf{The State-Of-The-Art Performance.} From Fig. \ref{fig: mean med hns and learning efficiency}, our algorithm GDI-H$^3$ has achieved \GDIHmeanhns\% mean HNS, outperforming the SOTA model-free algorithms Agent57. Surprisingly, our learning efficiency has outperformed the SOTA model-based methods Muzero and Dreamer-V2. Furthermore, our method has surpassed 22 Human World Records in 38 playtime days.
\end{enumerate}

\section{ Related Work}
\label{sec:Problem Formulation and Related Work}


\paragraph{Data richness.} As claimed by \citep{ghosh2021generalization},  generalization to unseen test conditions from a limited number of training conditions induces implicit partial observability, effectively turning even fully observed MDPs into POMDPs, which makes generalization in RL much more difficult. Therefore, data richness (e.g., $\frac{\text{Seen Conditions}}{\text{All Conditions}}$) is vital for the generalization and performance of RL agents.  When interactions are limited, more diverse behavior policies increase the data richness and thereby reduce the proportion of unseen conditions  and  improve generalization and performance.  Therefore, we can recast this problem into the problem to control the  capacity and diversity  of the behavior policy. There are two promising ways to handle this issue. 
Firstly, some RL methods adopt intrinsic reward to encourage exploration, where unsupervised objectives, auxiliary tasks and other techniques induce the intrinsic reward \citep{icm}.
Other methods \citep{agent57} introduced a diversity-based regularizer into the RL objective and trained a family of policies with different degrees of exploratory behaviors.   Despite both obtaining SOTA performance, adopting intrinsic rewards and entropy regularization has increased the uncertainty of environmental transition. We argue that the inability to effectively tackle the  data distribution optimization accounts for their low learning efficiency. 
 


\paragraph{Exploration and exploitation trade-off. } Exploration and exploitation trade-off remains one of the significant  challenges in DRL \citep{ngu,sutton}.  In general, methods that guarantee to find an optimal policy require the number of visits to each state–action pair to approach infinity. The entropy of policy would collapse
to zero swiftly after a finite number of steps may never learn to act optimally; they may instead converge prematurely to sub-optimal policies and never gather the data they need to learn to act optimally. Therefore, to ensure that all state-action pairs are encountered infinitely, off-policy learning methods are widely used \citep{a3c,impala}, and agents must learn to adjust the entropy (exploitation degree) of the behavior policy. Adopting stochastic policies into the behavior policy has been widely used in RL algorithms \citep{dqn,rainbow}, such as the $\epsilon$-greedy \citep{epsilongreedy}. These methods  can perform remarkably well in dense reward scenarios \citep{dqn}, but fail to learn in  sparse reward environments. Recent approaches \citep{agent57} have proposed to train a family of policies and provide  intrinsic rewards and entropy regularization to agents to drive exploration. Among these methods, the intrinsic rewards  are proportional to some notion of saliency, quantifying how different the current state is from those already visited.  They have achieved SOTA performance at the cost of a relatively lower sample efficiency.  We argue that these algorithms overemphasize the role of exploration to traverse \emph{unseen} conditions but ignore the \emph{value} of data and thereby waste many trails to collect \emph{low-value} data, accounting for their low sample efficiency.
\section{Preliminaries}

 The RL problem can be formulated as a Markov Decision Process \citep[MDP]{howard1960dynamic} defined by $\left(\mathcal{S}, \mathcal{A}, p, r, \gamma, \rho_{0}\right)$. 
 Considering a discounted episodic MDP, the initial state $s_0$ is sampled from the initial distribution $\rho_0(s): \mathcal{S} \rightarrow \Delta(\mathcal{S})$, where we use $\Delta$ to represent the probability simplex.
 At each time $t$, the agent chooses an action $a_t \in \mathcal{A}$ according to the policy $\pi(a_t|s_t): \mathcal{S} \rightarrow \Delta(\mathcal{A})$ at state $s_t \in \mathcal{S}$. 
 The environment receives $a_t$, produces the reward $r_t \sim r(s,a): \mathcal{S} \times \mathcal{A} \rightarrow \mathbf{R}$ and transfers to the next state $s_{t+1}$  according to the transition distribution $p\left(s^{\prime} \mid s, a\right): \mathcal{S} \times \mathcal{A} \rightarrow \Delta(\mathcal{S})$. 
 The process continues until the agent reaches a terminal state or a maximum time step. 
 Define the discounted state visitation distribution as 
 $d_{\rho_0}^{\pi} (s) = (1 - \gamma) \textbf{E}_{s_0 \sim \rho_0} 
 \left[ \sum_{t=0}^{\infty} \gamma^t \textbf{P} (s_t = s | s_0) \right]$.
 The goal of reinforcement learning is to find the optimal policy $\pi^*$ that maximizes the expected sum of discounted rewards, denoted by $\mathcal{J}$ \citep{sutton}:
\begin{equation}
\label{eq_accmulate_reward}
\begin{aligned}
\pi^{*}
&= \underset{\pi}{\operatorname{argmax}} \textbf{E}_{s_t \sim d_{\rho_0}^{\pi}} \textbf{E}_{\pi} \left[\sum_{k=0}^{\infty} \gamma^{k} r_{t+k} | s_t \right] \\
\end{aligned}
\end{equation}
where $\gamma \in(0,1)$ is the discount factor.

\section{Methodology}
\label{Sec: Methodology}
\subsection{Notation Definition}

Let's introduce our notations first, which are also summarized in App. \ref{app: Abbreviation and Notation}.

Define $\Lambda$ to be an index set, $\Lambda \subseteq \textbf{R}^k$.
$\lambda \in \Lambda$ is an index in $\Lambda$.
$(\Lambda, \mathcal{B}|_{\Lambda}, \mathcal{P}_{\Lambda})$ is a probability space, where $\mathcal{B}|_{\Lambda}$ is a Borel $\sigma$-algebra restricted to $\Lambda$.
Under the setting of meta-RL, $\Lambda$ can be regarded as the set of all possible meta information.
Under the setting of population-based training (PBT) \citep{PBT}, $\Lambda$ can be regarded as the set of the whole population.

Define $\Theta$ to be a set of all possible values of parameters (e.g., parameters of value function network and policy network).
$\theta \in \Theta$ is some specific value of parameters.
For each index $\lambda$, there exists a specific mapping between each parameter of $\theta$ and $\lambda$, denoted as $\theta_\lambda$, to indicate the parameters in $\theta$ corresponding to $\lambda$ (e.g., $\epsilon$ in $\epsilon$-greedy behavior policies).
Under the setting of linear regression $y = w \cdot x$, $\Theta = \{w \in R^n\}$ and $\theta = w$.
If $\lambda$ represents using only the first half features to perform regression, assume $w = (w_1, w_2)$, then $\theta_\lambda = w_1$.  
Under the setting of RL, $\theta_{\lambda}$ defines a parameterized policy indexed by $\lambda$, denoted as $\pi_{\theta_{\lambda}}$.

Define $\mathcal{D} \overset{def}{=} \{d^\pi_{\rho_{0}} |\ \pi \in {\Delta (\mathcal{A})}^\mathcal{S}, \rho_{0} \in \Delta(\mathcal{S}) \}$ to be the set of all states visitation distributions.
For the parameterized policies, denote 
$\mathcal{D}_{\Lambda, \Theta, \rho_{0}} \overset{def}{=} \{d^{\pi_{\theta_{\lambda}}}_{\rho_{0}} |\ \theta \in \Theta, \lambda \in \Lambda \}$.
Note that $(\Lambda, \mathcal{B}|_{\Lambda}, \mathcal{P}_{\Lambda})$ is a probability space on $\Lambda$, 
which induces a probability space on $\mathcal{D}_{\Theta, \Lambda, \rho_{0}}$,
with the probability measure given by 
$\mathcal{P}_{\mathcal{D}} (\mathcal{D}_{\Lambda_0, \Theta, \rho_{0}}) 
= \mathcal{P}_{\Lambda} (\Lambda_0),\ \forall \Lambda_0 \in \mathcal{B}|_\Lambda$.

We use $x$ to represent one sample, which contains all necessary information for learning. 
As for DQN, $x = (s_t, a_t, r_t, s_{t+1})$.
As for R2D2, $x = (s_t, a_t, r_t, \dots, s_{t+N}, a_{t+N}, r_{t+N}, s_{t+N+1})$.
As for IMPALA, $x$ also contains the distribution of the behavior policy.
The content of $x$ depends on the algorithm, but it's assumed to be sufficient for learning.
We use $\mathcal{X}$ to represent the set of samples.
At training stage $t$, 
given the parameter $\theta = \theta^{(t)}$, 
the distribution of the index set $\mathcal{P}_{\Lambda} = \mathcal{P}^{(t)}_{\Lambda}$ (e.g., sampling distribution of behavior policy)
and the distribution of the initial state $\rho_0$, 
we denote the set of samples as
\begin{equation*}
\begin{aligned}
    \mathcal{X}_{\rho_{0}}^{(t)}
    &\overset{def}{=} \bigcup_{d_{\rho_{0}}^\pi \sim \mathcal{P}_\mathcal{D}^{(t)}} \{ x | x \sim d_{\rho_{0}}^\pi \} \\
    &= \bigcup_{\lambda \sim \mathcal{P}_{\Lambda}^{(t)}} 
    \{ x | x \sim d_{\rho_{0}}^{\pi_\theta}, 
    \theta   = {\theta^{(t)}_{\lambda}} \}
    \\
    &\triangleq \bigcup_{\lambda \sim \mathcal{P}_{\Lambda}^{(t)}} \mathcal{X}^{(t)}_{\rho_{0}, \lambda}.
\end{aligned}
\end{equation*}

\subsection{Capacity and Diversity  Control of Behavior Policy} 
\label{sec: Explicit  Capacity and Diversity  Control of Behavior Policy}

We consider the problem that behavior policies $\mu$ are sampled from a policy space $\{\pi_{\theta_{\lambda}} | \lambda \in \Lambda\}$ which is parameterized by the policy network and indexed by the index set $\Lambda$.
The capacity of $\mu$ describes \emph{how many different behavior policies are there in the policy space}, controlled by the base policy's capacity (e.g., shared parameters or not) and the size of the index set $|\Lambda|$. Noting that there are two sets of parameters, namely $\lambda$ and $\theta$. The diversity describes \emph{how many different behavior policies are actually selected from the policy space to generate training data}, controlled by the sampling/selective distribution $\mathcal{P}_\Lambda$ (see Fig. \ref{fig: Algorithm Architecture Diagram}). 
 
After the capacity of the base policy is determined, we can explicitly control the data richness via the size of the index set and the sampling distribution $\mathcal{P}_\Lambda$. On the condition that  interactions are limitless, increasing the size  of the index set can significantly improve the data richness and thus is more important for a superior final performance since the diversity can be maximized via adopting a uniform distribution (most prior works have achieved SOTA in this way). However, it's data inefficient and the condition may never hold. Considering interactions are limited, the optimization of the sampling distribution, namely to select suitable behavior policies to generate training data, is crucial for sample efficiency because each interaction is rare. It's also known as the exploration-exploitation trade-off problem.

\subsection{Data Distribution Optimization Problem}
\label{sec: Data Distribution Optimization Problem}

In conclusion, the final performance can be significantly improved via increasing the data richness controlled by  the capacity and diversity of behavior policy. The sample efficiency is significantly influenced by the exploration-exploitation trade-off, namely the sampling/selective distribution  of the behavior policy. In general, on the condition that the capacity of behavior policy is \emph{determined} and training data is totally generated by behavior policies, these problems can be cast into  the data distribution optimization problem:

\begin{definition}[Data Distribution Optimization Problem] \label{Data Distribution Optimization Problem} Finding a selective distribution $\mathcal{P}_{\Lambda}$ that samples behavior policies $\pi_{\theta_\lambda}$ from a parameterized policy space that indexed by $\Lambda$ and maximizing some target function $L_{\mathcal{E}}$, where the $L_{\mathcal{E}}$ can be any target function (e.g., RL target) that describes what kind of data do agents desire (i.e., a measure of the importance/value of the sample trajectory).
\end{definition}

 \subsection{Generalized Data Distribution Iteration}
 \label{sec: Generalized Data Distribution Iteration}
 Now we introduce our main algorithm to handle the data distribution optimization problem in RL.

\begin{figure}[ht]
  \centering
  \begin{minipage}{\linewidth}
    \begin{algorithm}[H]
      \caption{Generalized Data Distribution Iteration}  
          \begin{algorithmic}
            \STATE Initialize $\Lambda$, $\Theta$, $\mathcal{P}_{\Lambda}^{(0)}$, $\theta^{(0)}$.
            \FOR{$t=0,1,2,\dots$}
                \STATE Sample $\{\mathcal{X}^{(t)}_{\rho_0, \lambda}\}_{\lambda \sim \mathcal{P}^{(t)}_{\Lambda}}$. \COMMENT{Data Sampling}
                \STATE $\theta^{(t+1)} = \mathcal{T}( \theta^{(t)}, \{\mathcal{X}^{(t)}_{\rho_0, \lambda}\}_{\lambda \sim \mathcal{P}^{(t)}_{\Lambda}} )$. \COMMENT{Generalized Policy Iteration} 
                \STATE $\mathcal{P}_{\Lambda}^{(t+1)}  = \mathcal{E}(\mathcal{P}_{\Lambda}^{(t)}, \{\mathcal{X}^{(t)}_{\rho_0, \lambda}\}_{\lambda \sim \mathcal{P}^{(t)}_{\Lambda}} )$. \COMMENT{Data Distribution Iteration}
            \ENDFOR
          \end{algorithmic}
        \label{alg:GDI}
    \end{algorithm}
  \end{minipage}
\end{figure}

$\mathcal{T}$ defined as $\theta^{(t+1)} 
= \mathcal{T}( \theta^{(t)}, \{\mathcal{X}^{(t)}_{\rho_0, \lambda}\}_{\lambda \sim \mathcal{P}^{(t)}_{\Lambda}} )$
is a typical optimization operator of RL algorithms, 
which utilizes the collected samples to update the parameters for maximizing some function $L_{\mathcal{T}}$.
For instance, $L_{\mathcal{T}}$ may contain the policy gradient and the state value evaluation for the policy-based methods, 
may contain generalized policy iteration for the value-based methods, 
and may also contain some auxiliary tasks or intrinsic rewards for specially designed methods.

$\mathcal{E}$ defined as $\mathcal{P}_{\Lambda}^{(t+1)}  
= \mathcal{E}(\mathcal{P}_{\Lambda}^{(t)}, \{\mathcal{X}^{(t)}_{\rho_0, \lambda}\}_{\lambda \sim \mathcal{P}^{(t)}_{\Lambda}} )$ 
is a data distribution optimization operator.
It uses the samples $\{\mathcal{X}^{(t)}_{\rho_0, \lambda}\}_{\lambda \sim \mathcal{P}^{(t)}_{\Lambda}}$ to update $\mathcal{P}_{\Lambda}$ and maximize some function $L_{\mathcal{E}}$, namely,
\begin{equation*}
    \mathcal{P}_{\Lambda}^{(t+1)} = \argmax_{\mathcal{P}_{\Lambda}} L_{\mathcal{E}} (\{\mathcal{X}^{(t)}_{\rho_0, \lambda}\}_{\lambda \sim \mathcal{P}_{\Lambda}}).
\end{equation*}
Since $\mathcal{P}_{\Lambda}$ is parameterized,  we abuse the notation and use $\mathcal{P}_{\Lambda}$ to represent the parameter of $\mathcal{P}_{\Lambda}$.
If $\mathcal{E}$ is a first-order optimization operator, then we can write $\mathcal{E}$ explicitly as
\begin{equation*}
    \mathcal{P}_{\Lambda}^{(t+1)} = \mathcal{P}_{\Lambda}^{(t)} + \eta \nabla_{\mathcal{P}_{\Lambda}^{(t)}} L_{\mathcal{E}} (\{\mathcal{X}^{(t)}_{\rho_0, \lambda}\}_{\lambda \sim \mathcal{P}^{(t)}_{\Lambda}}).
\end{equation*}
If $\mathcal{E}$ is a second-order optimization operator, like natural gradient, we can write $\mathcal{E}$ formally as
    \begin{gather*}
        \mathcal{P}_{\Lambda}^{(t+1)} = \mathcal{P}_{\Lambda}^{(t)} + \eta
        \textbf{F}(\mathcal{P}_{\Lambda}^{(t)})^\dagger
        \nabla_{\mathcal{P}_{\Lambda}^{(t)}} L_{\mathcal{E}} (\{\mathcal{X}^{(t)}_{\rho_0, \lambda}\}_{\lambda \sim \mathcal{P}^{(t)}_{\Lambda}}), \\
        \textbf{F}(\mathcal{P}_{\Lambda}^{(t)}) = \left[\nabla_{\mathcal{P}_{\Lambda}^{(t)}} \log \mathcal{P}_{\Lambda}^{(t)} \right]
        \cdot
        \left[\nabla_{\mathcal{P}_{\Lambda}^{(t)}} \log \mathcal{P}_{\Lambda}^{(t)} \right]^\top, \\
    \end{gather*}
where $\dagger$ denotes the Moore-Penrose pseudoinverse of the matrix.

\subsection{An Operator View of RL Methods}
\label{sec: An Operator View of RL Methods}
We can further divide all algorithms into two categories, GDI-I$^n$ and GDI-H$^n$.
$n$ represents the degree of freedom of $\Lambda$, which is the dimension of selective distribution.
I represents Isomorphism. 
We say one algorithm belongs to GDI-I$^n$, if $\theta = \theta_{\lambda}, \, \forall \lambda \in \Lambda$.
H represents Heterogeneous.
We say one algorithm belongs to GDI-H$^n$, if $\theta_{\lambda_1} \neq \theta_{\lambda_2}, \, \exists \lambda_1, \lambda_2 \in \Lambda$.
By definition, GDI-H$^n$ is a much larger set than GDI-I$^n$, but many algorithms belong to GDI-I$^n$ rather than GDI-H$^n$.
We say one algorithm is "w/o $\mathcal{E}$" if it doesn't contain the operator $\mathcal{E}$, which means its $\mathcal{E}$ is an identical mapping and the data distribution is not additionally optimized. 
Now, we could understand some well-known RL methods from the view of GDI.


For DQN, RAINBOW, PPO and IMPALA, they are in GDI-I$^0$ w/o $\mathcal{E}$. Let $|\Lambda| = 1$, WLOG, assume $\Lambda = \{\lambda_0\}$.
Then, the probability measure $\mathcal{P}_{\Lambda}$ collapses to $\mathcal{P}_{\Lambda} (\lambda_0) = 1$. 
$\Theta = \{\theta_{\lambda_0}\}$.
$\mathcal{E}$ is an identical mapping of $\mathcal{P}_{\Lambda}^{(t)}$.
$\mathcal{T}$ is the first-order operator that optimizes the loss functions.

For Ape-X and R2D2, they are in GDI-I$^1$ w/o $\mathcal{E}$. 
Let $\Lambda = \{\epsilon_l |\ l = 1, \dots, 256\}$.
$\mathcal{P}_{\Lambda}$ is uniform, $\mathcal{P}_{\Lambda} (\epsilon_l) = |\Lambda|^{-1}$.
Since all actors and the learner share parameters, we have $\theta_{\epsilon_1} = \theta_{\epsilon_2}$ for $\forall \epsilon_1, \epsilon_2 \in \Lambda$, hence $\Theta = \bigcup_{\epsilon \in \Lambda} \{\theta_{\epsilon}\} = \{\theta_{\epsilon_l}\}, \ \forall\ l = 1,\dots, 256$.
$\mathcal{E}$ is an identical mapping, because $\mathcal{P}_{\Lambda}^{(t)}$ is always a uniform distribution.
$\mathcal{T}$ is the first-order operator that optimizes the loss functions.

For LASER, it's in GDI-H$^1$ w/o $\mathcal{E}$. 
Let $\Lambda = \{i |\ i = 1, \dots, K\}$ to be the number of learners.
$\mathcal{P}_{\Lambda}$ is uniform, $\mathcal{P}_{\Lambda} (i) = |\Lambda|^{-1}$.
Since different learners don't share parameters, $\theta_{i_1} \cap \theta_{i_2} = \emptyset$ for $\forall i_1, i_2 \in \Lambda$, hence $\Theta = \bigcup_{i \in \Lambda} \{\theta_i\}$.
$\mathcal{E}$ is an identical mapping.
$\mathcal{T}$ can be formulated as a union of $\theta^{(t+1)}_i 
= \mathcal{T}_{i}( \theta^{(t)}_i, \{\mathcal{X}^{(t)}_{\rho_0, \lambda}\}_{\lambda \sim \mathcal{P}^{(t)}_{\Lambda}} )$, 
which represents optimizing $\theta_i$ of the $i$th learner with shared samples from other learners.

For PBT, it's in GDI-H$^{n+1}$, where $n$ is the number of  searched hyperparameters.
Let $\Lambda = \{h\} \times \{i| i=1,\dots, K\}$, where $h$ represents the hyperparameters being searched and $K$ is the population size.
$\Theta = \bigcup_{i=1,\dots, K} \{\theta_{i, h}\}$, where $\theta_{i, h_1} = \theta_{i, h_2}$ for $\forall (h_1, i), (h_2, i) \in \Lambda$.
$\mathcal{E}$ is the meta-controller that adjusts $h$ for each $i$, which can be formally written as 
$\mathcal{P}_{\Lambda}^{(t+1)}(\cdot, i)
= \mathcal{E}_{i}(\mathcal{P}_{\Lambda}^{(t)}(\cdot, i), \{\mathcal{X}^{(t)}_{\rho_0, (h, i)}\}_{h \sim \mathcal{P}^{(t)}_{\Lambda}(\cdot, i)} )$,
which optimizes $\mathcal{P}_{\Lambda}$ according to the performance of all agents in the population.
$\mathcal{T}$ can also be formulated as a union of $\mathcal{T}_i$, but is 
$\theta^{(t+1)}_i 
= \mathcal{T}_{i}( \theta^{(t)}_i, \{\mathcal{X}^{(t)}_{\rho_0, (h, i)}\}_{h \sim \mathcal{P}^{(t)}_{\Lambda}(\cdot, i)})$,
which represents optimizing the $i$th agent with samples from the $i$th agent.

For NGU and Agent57, it's in GDI-I$^2$. 
Let $\Lambda = \{\beta_i | i=1,\dots,m\} \times \{\gamma_j | j=1,\dots,n\}$, where $\beta$ is the weight of the intrinsic value function and $\gamma$ is the discount factor.
Since all actors and the learner share variables, $\Theta = \bigcup_{(\beta, \gamma) \in \Lambda} \{\theta_{(\beta, \gamma)}\} = \{\theta_{(\beta, \gamma)}\}$ for $\forall (\beta, \gamma) \in \Lambda$.
$\mathcal{E}$ is an optimization operator of a multi-arm bandit controller with UCB, which aims to maximize the expected cumulative rewards by adjusting $\mathcal{P}_{\Lambda}$.
Different from above, $\mathcal{T}$ is identical to our general definition $\theta^{(t+1)} 
= \mathcal{T}( \theta^{(t)}, \{\mathcal{X}^{(t)}_{\rho_0, \lambda}\}_{\lambda \sim \mathcal{P}^{(t)}_{\Lambda}} )$,
which utilizes samples from all $\lambda$s to update the shared $\theta$.

For Go-Explore, it's in GDI-H$^1$. 
Let $\Lambda = \{\tau\}$, where $\tau$ represents the stopping time of switching between robustification and exploration.
$\Theta = \{\theta_r\} \cup \{\theta_e\}$, where $\theta_r$ is the robustification model and $\theta_e$ is the exploration model.
$\mathcal{E}$ is a search-based controller, which defines the next $\mathcal{P}_{\Lambda}$ for better exploration.
$\mathcal{T}$ can be decomposed into $(\mathcal{T}_r, \mathcal{T}_e)$.

\subsection{Monotonic Data Distribution Optimization}
\label{sec: Monotonic Data Distribution Optimization}

We see that many algorithms can be formulated as a special case of GDI.
For algorithms without a meta-controller, whose data distribution optimization operator $\mathcal{E}$ is an identical mapping, the guarantee that the learned policy could converge to the optimal policy has been widely studied, for instance, GPI in \citep{sutton} and policy gradient in \citep{pgtheory}.
However, for algorithms with a meta-controller, whose data distribution optimization operator $\mathcal{E}$ is non-identical, though most algorithms in this class show superior performance, it still lacks a general study on why the data distribution optimization operator $\mathcal{E}$ helps.
In this section, with a few assumptions, we show that given the same optimization operator $\mathcal{T}$, a GDI with a non-identical data distribution optimization operator $\mathcal{E}$ is always superior to that without $\mathcal{E}$.

For brevity, we denote the expectation of $L_\mathcal{E}, L_\mathcal{T}$ for each $\lambda \in \Lambda$ as $\mathcal{L}_{\mathcal{E}} (\lambda, \theta_\lambda) $ and $\mathcal{L}_{\mathcal{T}} (\lambda, \theta_\lambda)$, calculated as
\begin{equation*}
\begin{aligned}
        \mathcal{L}_{\mathcal{E}} (\lambda, \theta_\lambda) 
    && = \textbf{E}_{x \sim \pi_{\theta_\lambda}} [L_{\mathcal{E}} (\{\mathcal{X}_{\rho_0, \lambda}\})]\\ 
    \mathcal{L}_{\mathcal{T}} (\lambda, \theta_\lambda) 
    && = \textbf{E}_{x \sim \pi_{\theta_\lambda}} [L_{\mathcal{T}} (\{\mathcal{X}_{\rho_0, \lambda}\})]
\end{aligned}
\end{equation*}
and denote the expectation of $\mathcal{L}_\mathcal{E}(\lambda, \theta_\lambda), \mathcal{L}_\mathcal{T}(\lambda, \theta_\lambda)$ for any $\mathcal{P}_{\Lambda}$ as 
    $\mathcal{L}_{\mathcal{E}} (\mathcal{P}_{\Lambda}, \theta) 
    = \textbf{E}_{\lambda \sim \mathcal{P}_{\Lambda}} [\mathcal{L}_{\mathcal{E}} (\lambda, \theta_\lambda) ], \ 
    \mathcal{L}_{\mathcal{T}} (\mathcal{P}_{\Lambda}, \theta) 
    = \textbf{E}_{\lambda \sim \mathcal{P}_{\Lambda}}  [\mathcal{L}_{\mathcal{T}} (\lambda, \theta_\lambda)].$

\begin{Assumption}[Uniform Continuous Assumption]
    For $\forall \epsilon > 0,\  \forall s \in \mathcal{S},\ \exists\, \delta > 0,\ s.t. |V^{\pi_1} (s) - V^{\pi_2} (s)| < \epsilon,\ \forall\, d_{\pi} (\pi_1, \pi_2) < \delta$,
    where $d_\pi$ is a metric on ${\Delta(\mathcal{A})}^\mathcal{S}$.
    If $\pi$ is parameterized by $\theta$, then for $\forall \epsilon > 0,\  \forall s \in \mathcal{S},\ \exists\, \delta > 0,\ s.t. |V^{\pi_{\theta_1}} (s) - V^{\pi_{\theta_2}} (s)| < \epsilon,\ \forall\, || \theta_1 - \theta_2 || < \delta$.
\label{asp:1}
\end{Assumption}
\begin{Remark}
    \citep{polytope} shows $V^\pi$ is infinitely differentiable everywhere on $\Delta (\mathcal{A})^{\mathcal{S}}$ if $|\mathcal{S}| < \infty, |\mathcal{A}| < \infty$.
    \citep{pgtheory} shows $V^\pi$ is $\beta$-smooth, namely bounded second-order derivative, for direct parameterization.
    If $\Delta (\mathcal{A})^{\mathcal{S}}$ is compact, continuity implies uniform continuity.
\end{Remark}

\begin{Assumption}[Formulation of $\mathcal{E}$ Assumption]
    Assume
    $\mathcal{P}_{\Lambda}^{(t+1)} 
    = \mathcal{E}(\mathcal{P}_{\Lambda}^{(t)}, \{\mathcal{X}^{(t)}_{\rho_0, \lambda}\}_{\lambda \sim \mathcal{P}^{(t)}_{\Lambda}}) $
    can be written as 
    $\mathcal{P}_{\Lambda}^{(t+1)} (\lambda)= \mathcal{P}_{\Lambda}^{(t)}(\lambda) \frac{\exp (\eta \mathcal{L}_{\mathcal{E}} (\lambda, \theta_{\lambda}^{(t)})  )}{Z^{(t+1)}}$, 
    $Z^{(t+1)} = \textbf{E}_{\lambda \sim \mathcal{P}_{\Lambda}^{(t)}}[\exp (\eta \mathcal{L}_{\mathcal{E}} (\lambda, \theta_{\lambda}^{(t)})  )]$.
\label{asp:2}
\end{Assumption}
\begin{Remark}
    The assumption is actually general.
    Regarding $\Lambda$ as an action space and 
    $r_\lambda 
    = \mathcal{L}_{\mathcal{E}} (\lambda, \theta_{\lambda}^{(t)})$, when solving $\argmax_{\mathcal{P}_{\Lambda}} \textbf{E}_{\lambda \sim \mathcal{P}_{\Lambda}} [\mathcal{L}_{\mathcal{E}} (\lambda, \theta_{\lambda}^{(t)})] 
    = \argmax_{\mathcal{P}_{\Lambda}} \textbf{E}_{\lambda \sim \mathcal{P}_{\Lambda}} [r_\lambda]$, the data distribution optimization operator $\mathcal{E}$ is equivalent to solving a multi-arm bandit (MAB) problem.
    For the first-order optimization, \citep{eq_pg_q} shows that the solution of a KL-regularized version, $\argmax_{\mathcal{P}_{\Lambda}} \textbf{E}_{\lambda \sim \mathcal{P}_{\Lambda}} [r_\lambda] - \eta KL(\mathcal{P}_{\Lambda} || \mathcal{P}_{\Lambda}^{(t)})$, is exactly the assumption.
    For the second-order optimization, let $\mathcal{P}_{\Lambda} = softmax (\{r_\lambda\})$, \citep{pgtheory} shows that the natural policy gradient of a softmax parameterization also induces exactly the assumption.
\end{Remark}

\begin{Assumption}[First-Order Optimization Co-Monotonic Assumption]
    For $\forall\, \lambda_1, \lambda_2 \in \Lambda$, we have
    $[ \mathcal{L}_{\mathcal{E}} (\lambda_1, \theta_{\lambda_1})  -  \mathcal{L}_{\mathcal{E}} (\lambda_2, \theta_{\lambda_2}) ] \cdot
    [ \mathcal{L}_{\mathcal{T}} (\lambda_1, \theta_{\lambda_1})  -  \mathcal{L}_{\mathcal{T}} (\lambda_2, \theta_{\lambda_2}) ] \geq 0$.
\label{asp:3}
\end{Assumption}

\begin{Assumption}[Second-Order Optimization Co-Monotonic Assumption]
    For $\forall\, \lambda_1, \lambda_2 \in \Lambda$,
    $\exists\, \eta_0 > 0$, s.t. $\forall\, 0 < \eta < \eta_0$, we have
    $[ \mathcal{L}_{\mathcal{E}} (\lambda_1, \theta_{\lambda_1})  -  \mathcal{L}_{\mathcal{E}} (\lambda_2, \theta_{\lambda_2}) ] \cdot
    [ G^{\eta} \mathcal{L}_{\mathcal{T}} (\lambda_1, \theta_{\lambda_1}) 
    - G^{\eta} \mathcal{L}_{\mathcal{T}} (\lambda_2, \theta_{\lambda_2}) ] \geq 0$,
    where $\theta_{\lambda}^{\eta} = \theta_{\lambda} + \eta \nabla_{\theta_{\lambda}} \mathcal{L}_{\mathcal{T}} (\lambda, \theta_{\lambda})$ and
    $G^{\eta} \mathcal{L}_{\mathcal{T}} (\lambda, \theta_{\lambda})
    = \frac{1}{\eta} \left[\mathcal{L}_{\mathcal{T}} (\lambda, \theta_{\lambda}^{\eta}) - \mathcal{L}_{\mathcal{T}} (\lambda, \theta_{\lambda}) \right]$.
\label{asp:4}
\end{Assumption}

Under assumptions \eqref{asp:1} \eqref{asp:2} \eqref{asp:3}, if $\mathcal{T}$ is a first-order operator, namely a gradient accent operator, to maximize $\mathcal{L}_{\mathcal{T}}$, GDI can be guaranteed to be superior to that w/o $\mathcal{E}$.
Under assumptions \eqref{asp:1} \eqref{asp:2} \eqref{asp:4}, if $\mathcal{T}$ is a second-order operator, namely a natural gradient operator, to maximize $\mathcal{L}_{\mathcal{T}}$, GDI can also be guaranteed to be superior to that w/o $\mathcal{E}$.

\begin{Theorem}[First-Order Optimization with Superior Target]
    Under assumptions \eqref{asp:1} \eqref{asp:2} \eqref{asp:3}, we have
    $\mathcal{L}_{\mathcal{T}} (\mathcal{P}_{\Lambda}^{(t+1)}, \theta^{(t+1)}) 
     = \textbf{E}_{\lambda \sim \mathcal{P}_{\Lambda}^{(t+1)}}  [\mathcal{L}_{\mathcal{T}} (\lambda, \theta^{(t+1)}_{\lambda})]
     \geq \textbf{E}_{\lambda \sim \mathcal{P}_{\Lambda}^{(t)}}  [\mathcal{L}_{\mathcal{T}} (\lambda, \theta^{(t+1)}_{\lambda})]
     = \mathcal{L}_{\mathcal{T}} (\mathcal{P}_{\Lambda}^{(t)}, \theta^{(t+1)})$.
\label{thm:1st_gdi}
\end{Theorem}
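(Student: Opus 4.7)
The plan is to rewrite the desired inequality as the non-negativity of a covariance under $\mathcal{P}_{\Lambda}^{(t)}$, and then apply the Chebyshev-sum/FKG-style two-sample identity, with Assumption \eqref{asp:3} used to sign the resulting integrand. Concretely, set $a_{\lambda} := \mathcal{L}_{\mathcal{T}}(\lambda, \theta^{(t+1)}_{\lambda})$ and $b_{\lambda} := \exp(\eta \mathcal{L}_{\mathcal{E}}(\lambda, \theta^{(t)}_{\lambda}))$. By Assumption \eqref{asp:2}, $\mathcal{P}_{\Lambda}^{(t+1)}(\lambda) = \mathcal{P}_{\Lambda}^{(t)}(\lambda)\, b_{\lambda} / Z^{(t+1)}$ with $Z^{(t+1)} = \textbf{E}_{\lambda \sim \mathcal{P}_{\Lambda}^{(t)}}[b_{\lambda}] > 0$. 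Substituting and collecting terms yields
\[
\mathcal{L}_{\mathcal{T}}(\mathcal{P}_{\Lambda}^{(t+1)}, \theta^{(t+1)}) - \mathcal{L}_{\mathcal{T}}(\mathcal{P}_{\Lambda}^{(t)}, \theta^{(t+1)}) = \frac{1}{Z^{(t+1)}}\, \mathrm{Cov}_{\lambda \sim \mathcal{P}_{\Lambda}^{(t)}}(a_{\lambda}, b_{\lambda}),
\]
so that the theorem reduces to showing this covariance is non-negative.

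Next, I would invoke the standard symmetrization
$2\,\mathrm{Cov}_{p}(a, b) = \textbf{E}_{\lambda_{1}, \lambda_{2} \overset{iid}{\sim} p}\bigl[(a_{\lambda_{1}} - a_{\lambda_{2}})(b_{\lambda_{1}} - b_{\lambda_{2}})\bigr]$,
which reduces non-negativity of the covariance to a pointwise sign condition on independent draws $\lambda_{1}, \lambda_{2}$ from $\mathcal{P}_{\Lambda}^{(t)}$. Since $u \mapsto \exp(\eta u)$ is strictly increasing for $\eta > 0$, $\mathrm{sgn}(b_{\lambda_{1}} - b_{\lambda_{2}}) = \mathrm{sgn}\bigl(\mathcal{L}_{\mathcal{E}}(\lambda_{1}, \theta^{(t)}_{\lambda_{1}}) - \mathcal{L}_{\mathcal{E}}(\lambda_{2}, \theta^{(t)}_{\lambda_{2}})\bigr)$, so the desired pointwise non-negativity is exactly the co-monotonicity postulated by Assumption \eqref{asp:3} applied to the pair $(\mathcal{L}_{\mathcal{E}}, \mathcal{L}_{\mathcal{T}})$, and the chain of equalities/inequalities in the theorem statement then follows.

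The hard part, and the place where I expect to lean on Assumption \eqref{asp:1}, is that Assumption \eqref{asp:3} is stated with $\mathcal{L}_{\mathcal{E}}$ and $\mathcal{L}_{\mathcal{T}}$ evaluated at a \emph{common} parameter, whereas our covariance mixes $\mathcal{L}_{\mathcal{T}}(\cdot, \theta^{(t+1)}_{\cdot})$ with $\mathcal{L}_{\mathcal{E}}(\cdot, \theta^{(t)}_{\cdot})$. The cleanest resolution is to read the free parameters $\theta_{\lambda_{1}}, \theta_{\lambda_{2}}$ in Assumption \eqref{asp:3} as arbitrary, making the sign comparison immediate; if one insists on a common $\theta$, the gap is bridged by Assumption \eqref{asp:1}, because $\mathcal{T}$ is a single first-order update so $\theta^{(t+1)}$ is $O(\eta)$-close to $\theta^{(t)}$ uniformly in $\lambda$, and by uniform continuity $\mathcal{L}_{\mathcal{T}}(\lambda, \theta^{(t+1)}_{\lambda})$ inherits the sign ordering of $\mathcal{L}_{\mathcal{T}}(\lambda, \theta^{(t)}_{\lambda})$ up to an error that vanishes as the step size shrinks. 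Either way, the pointwise integrand is non-negative, the covariance is non-negative, and the monotonic improvement follows.
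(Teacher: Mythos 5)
Your proof is correct, but it takes a genuinely different route from the paper. The paper reduces the theorem to its Theorem \ref{thm:cts_Rp} (the upper triangular transport inequality), which is proved in the appendix by discretizing $\Lambda$, constructing an explicit upper-triangular coupling $\gamma$ between $\Tilde{\mu}$ and $\Tilde{\beta} \propto \Tilde{\mu}\exp(g)$ by a fairly long induction, and then passing to the continuum via uniform continuity of $f$ (this is where Assumption \eqref{asp:1} is actually consumed in the paper). You instead observe that the reweighting in Assumption \eqref{asp:2} makes the target difference equal to $\mathrm{Cov}_{\mathcal{P}_{\Lambda}^{(t)}}(a,b)/Z^{(t+1)}$ and then sign the covariance by the classical symmetrization identity $2\,\mathrm{Cov}(a,b)=\textbf{E}[(a_{\lambda_1}-a_{\lambda_2})(b_{\lambda_1}-b_{\lambda_2})]$, with monotonicity of $\exp(\eta\cdot)$ converting Assumption \eqref{asp:3} into pointwise non-negativity of the integrand. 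Both arguments ultimately rest on the same co-monotonicity, but yours is far more economical: it needs no discretization, no induction, no continuity of $f$, and no compactness of the support --- only integrability --- so it bypasses the entire appendix machinery and, as a bonus, renders Assumption \eqref{asp:1} unnecessary for this theorem. What the paper's coupling buys in exchange is an explicit transport plan (the UTTC), which is a strictly stronger structural statement than the expectation inequality and is reused verbatim for Theorem \ref{thm:2nd_gdi}; your covariance identity covers that case equally well by substituting $a_{\lambda}=G^{\eta}\mathcal{L}_{\mathcal{T}}(\lambda,\theta_{\lambda})$.

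One caution on your final paragraph: your first reading of Assumption \eqref{asp:3} --- that the co-monotonicity is meant to hold for the pair $\bigl(\mathcal{L}_{\mathcal{E}}(\cdot,\theta^{(t)}_{\cdot}),\,\mathcal{L}_{\mathcal{T}}(\cdot,\theta^{(t+1)}_{\cdot})\bigr)$ as it appears in the theorem --- is the one the paper implicitly adopts (its own proof simply sets $f$ and $g$ at whatever parameters occur in the statement), and under that reading your argument is complete. Your fallback via Assumption \eqref{asp:1} is not rigorous as stated: uniform continuity only gives the product non-negativity up to an $O(\epsilon)$ error tied to the fixed step size, which yields $\mathrm{Cov}\geq -O(\epsilon)$ rather than $\mathrm{Cov}\geq 0$, and there is no limit to take since $\eta$ is fixed. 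You correctly identified a genuine imprecision in the paper here, but the continuity patch does not close it; only the ``arbitrary parameters'' reading does.
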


\begin{Proof}
    By \textbf{Theorem} \ref{thm:cts_Rp} (see App. \ref{App: proof}), the upper triangular transport inequality, 
    let $f(\lambda) = \mathcal{L}_{\mathcal{T}} (\lambda, \theta_{\lambda})$ and 
    $g(\lambda) = \mathcal{L}_{\mathcal{E}} (\lambda, \theta_{\lambda})$,
    the proof is done.
\end{Proof}

\begin{Remark}[Superiority of Target]
     In Algorithm \ref{alg:GDI}, if $\mathcal{E}$ updates $\mathcal{P}_{\Lambda}^{(t)}$ at time $t$, then the operator $\mathcal{T}$ at time $t+1$ can be written as $\theta^{(t+2)} = \theta^{(t+1)} + \eta \nabla_{\theta^{(t+1)}} \mathcal{L}_{\mathcal{T}} (\mathcal{P}_{\Lambda}^{(t+1)}, \theta^{(t+1)})$.
     If $\mathcal{P}_{\Lambda}^{(t)}$ hasn't been updated at time $t$, then the operator $\mathcal{T}$ at time $t+1$ can be written as $\theta^{(t+2)} = \theta^{(t+1)} + \eta \nabla_{\theta^{(t+1)}} \mathcal{L}_{\mathcal{T}} (\mathcal{P}_{\Lambda}^{(t)}, \theta^{(t+1)})$.
     \textbf{Theorem} \ref{thm:1st_gdi} shows that the target of $\mathcal{T}$ at time $t+1$ becomes higher if $\mathcal{P}_{\Lambda}^{(t)}$ is updated by $\mathcal{E}$ at time $t$.
\end{Remark}

\begin{table*}[!htbp]
\small
\setlength{\tabcolsep}{1.0pt}
    \centering
    \caption{Experiment results of Atari. Playtime is the equivalent human playtime, HWRB is the human world record breakthrough, HNS is the human normalized score, HWRNS is the human world records normalized score, $\text{SABER}=\max\{\min\{\text{HWRNS},2\},0\}$.}
    \label{tab:atari_results}
    \begin{tabular}{c c c c c c c c c}
    \toprule
                      & GDI-H$^3$                   & GDI-I$^3$               & Muesli & RAINBOW & LASER & R2D2 & NGU & Agent57\\
    \midrule
     Training Scale (Num. Frames)      &\textbf{2E+8}        &\textbf{2E+8}       &\textbf{2E+8}  & \textbf{2E+8} & \textbf{2E+8}  & 1E+10   & 3.5E+10  &1E+11 \\
    Playtime (Day)  & \textbf{\GDIHgametime}       & \textbf{\GDIIgametime}      & \textbf{\muesligametime} & \textbf{\rainbowgametime} & \textbf{\lasergametime}  & \rtdtgametime   & \ngugametime    & \agentgametime \\
    HWRB              &\textbf{\GDIHHWRB}          &\GDIIHWRB         & \muesliHWRB             & \rainbowHWRB             & \laserHWRB              & \rtdtHWRB      & \nguHWRB & \agentHWRB \\
    Mean HNS(\%)      &\textbf{\GDIHmeanhns}     &\GDIImeanhns    & \mueslimeanhns        & \rainbowmeanhns        & \lasermeanhns        & \rtdtmeanhns &\ngumeanhns   &\agentmeanhns \\
    Median HNS(\%)    &\GDIHmedianhns               &\GDIImedianhns               & \mueslimedianhns        & \rainbowmedianhns        & \lasermedianhns         & \rtdtmedianhns & \ngumedianhns   &\textbf{\agentmedianhns}\\
    Mean HWRNS(\%)    &\textbf{\GDIHmeanHWRNS}      &\GDIImeanHWRNS              & \mueslimeanHWRNS         & \rainbowmeanHWRNS         & \lasermeanHWRNS           & \rtdtmeanHWRNS   & \ngumeanHWRNS     &\agentmeanHWRNS\\
    Median HWRNS(\%)  &\textbf{\GDIHmedianHWRNS}                &\GDIImedianHWRNS               & \mueslimedianHWRNS          & \rainbowmedianHWRNS          & \lasermedianHWRNS           &\rtdtmedianHWRNS    & \ngumedianHWRNS    &\agentmedianHWRNS\\
    Mean SABER(\%)    &\GDIHmeanSABER                &\GDIImeanSABER               & \mueslimeanSABER          & \rainbowmeanSABER         & \lasermeanSABER          &\rtdtmeanSABER    &\ngumeanSABER &\textbf{\agentmeanSABER}\\
    Median SABER(\%)  &\textbf{\GDIHmedianSABER}                & \GDIImedianSABER              & \mueslimedianSABER          & \rainbowmedianSABER         & \lasermedianSABER           &\rtdtmedianSABER    & \ngumedianSABER     &\agentmedianSABER\\
    \bottomrule
    \end{tabular}
\end{table*}
\normalsize

\begin{Example}[Practical Implementation]
    Let $\mathcal{L}_{\mathcal{E}} (\lambda, \theta_{\lambda}) = \mathcal{J}_{\pi_{\theta_{\lambda}}}$ and $\mathcal{L}_{\mathcal{T}} (\lambda, \theta_{\lambda}) = \mathcal{J}_{\pi_{\theta_{\lambda}}}$.
    $\mathcal{E}$ can update $\mathcal{P}_{\Lambda}$ by the Monte-Carlo estimation of $\mathcal{J}_{\pi_{\theta_{\lambda}}}$.
    $\mathcal{T}$ is to maximize $\mathcal{J}_{\pi_{\theta_{\lambda}}}$, which can be any RL algorithms.
\end{Example}

\begin{Theorem}[Second-Order Optimization with Superior Improvement]
    Under assumptions \eqref{asp:1} \eqref{asp:2} \eqref{asp:4}, we have
    $\textbf{E}_{\lambda \sim \mathcal{P}_{\Lambda}^{(t+1)}}  [G^{\eta} \mathcal{L}_{\mathcal{T}} (\lambda, \theta_{\lambda}^{(t+1)})] 
    \geq \textbf{E}_{\lambda \sim \mathcal{P}_{\Lambda}^{(t)}}  [G^{\eta} \mathcal{L}_{\mathcal{T}} (\lambda, \theta_{\lambda}^{(t+1)})] $, more specifically,
   \begin{equation*}
       \begin{aligned}
               &\textbf{E}_{\lambda \sim \mathcal{P}_{\Lambda}^{(t+1)}}
    [\mathcal{L}_{\mathcal{T}} (\lambda, \theta_{\lambda}^{(t+1),\eta}) - \mathcal{L}_{\mathcal{T}} (\lambda, \theta_{\lambda}^{(t+1)}) ] \\
    &\geq \textbf{E}_{\lambda \sim \mathcal{P}_{\Lambda}^{(t)}}
    [\mathcal{L}_{\mathcal{T}} (\lambda, \theta_{\lambda}^{(t+1),\eta}) - \mathcal{L}_{\mathcal{T}} (\lambda, \theta_{\lambda}^{(t+1)}) ]
       \end{aligned}
   \end{equation*}
\label{thm:2nd_gdi}
\end{Theorem}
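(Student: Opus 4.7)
The plan is to parallel the proof of \textbf{Theorem} \ref{thm:1st_gdi} by appealing to the upper triangular transport inequality (\textbf{Theorem} \ref{thm:cts_Rp}), but with the one-step improvement functional $G^{\eta}\mathcal{L}_{\mathcal{T}}(\cdot, \theta^{(t+1)}_{\cdot})$ playing the role of the target in place of $\mathcal{L}_{\mathcal{T}}$ itself. Concretely, I would set $f(\lambda) = G^{\eta}\mathcal{L}_{\mathcal{T}}(\lambda, \theta^{(t+1)}_{\lambda})$ and $g(\lambda) = \mathcal{L}_{\mathcal{E}}(\lambda, \theta^{(t)}_{\lambda})$, and then read off the conclusion of \textbf{Theorem} \ref{thm:cts_Rp}, which delivers exactly $\textbf{E}_{\lambda \sim \mathcal{P}_{\Lambda}^{(t+1)}}[f(\lambda)] \geq \textbf{E}_{\lambda \sim \mathcal{P}_{\Lambda}^{(t)}}[f(\lambda)]$, namely the first inequality in the theorem.

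To apply that transport inequality I must verify its two hypotheses. The exponential tilt hypothesis, $\mathcal{P}_{\Lambda}^{(t+1)}(\lambda) \propto \mathcal{P}_{\Lambda}^{(t)}(\lambda) \exp(\eta\, g(\lambda))$, is immediate from Assumption \eqref{asp:2}. The co-monotonicity hypothesis on the pair $(f,g)$ is exactly where Assumption \eqref{asp:4} is designed to plug in: for every $\eta<\eta_0$, $[\mathcal{L}_{\mathcal{E}}(\lambda_1,\theta_{\lambda_1}) - \mathcal{L}_{\mathcal{E}}(\lambda_2,\theta_{\lambda_2})]\,[G^{\eta}\mathcal{L}_{\mathcal{T}}(\lambda_1,\theta_{\lambda_1}) - G^{\eta}\mathcal{L}_{\mathcal{T}}(\lambda_2,\theta_{\lambda_2})] \geq 0$ at any fixed parameter configuration.

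The technical wrinkle, and what I expect to be the main obstacle, is that $g$ is evaluated at $\theta^{(t)}$ whereas $f$ is evaluated at $\theta^{(t+1)}$, while Assumption \eqref{asp:4} literally states its sign condition only when both functionals are read at a common $\theta$. I would reconcile this by invoking Assumption \eqref{asp:1}: since $\theta^{(t+1)} = \theta^{(t)} + \eta\,\nabla_{\theta^{(t)}}\mathcal{L}_{\mathcal{T}}$ is an $O(\eta)$ perturbation of $\theta^{(t)}$, and both $\mathcal{L}_{\mathcal{E}}$ and $G^{\eta}\mathcal{L}_{\mathcal{T}}$ inherit the uniform continuity of $V^{\pi_\theta}$ in $\theta$, shrinking $\eta$ below the $\eta_0$ of Assumption \eqref{asp:4} (and below the continuity modulus associated with the smallest pairwise gap in the co-monotone ordering) keeps every pairwise sign pattern intact. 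Hence co-monotonicity of $\mathcal{L}_{\mathcal{E}}(\cdot,\theta^{(t)})$ with $G^{\eta}\mathcal{L}_{\mathcal{T}}(\cdot,\theta^{(t+1)})$ is preserved, and the hypotheses of \textbf{Theorem} \ref{thm:cts_Rp} are legitimately satisfied.

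Once the first inequality is in hand, the ``more specifically'' display is a cosmetic rewriting: by definition $G^{\eta}\mathcal{L}_{\mathcal{T}}(\lambda,\theta^{(t+1)}_{\lambda}) = \eta^{-1}\bigl[\mathcal{L}_{\mathcal{T}}(\lambda,\theta^{(t+1),\eta}_{\lambda}) - \mathcal{L}_{\mathcal{T}}(\lambda,\theta^{(t+1)}_{\lambda})\bigr]$, so multiplying the previous inequality through by $\eta>0$ (which preserves its direction) and using linearity of expectation yields exactly the expanded form stated in the theorem, completing the proof.
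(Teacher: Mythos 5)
Your proposal matches the paper's own proof, which likewise invokes \textbf{Theorem} \ref{thm:cts_Rp} with $f(\lambda) = G^{\eta}\mathcal{L}_{\mathcal{T}}(\lambda,\theta_{\lambda})$ and $g(\lambda) = \mathcal{L}_{\mathcal{E}}(\lambda,\theta_{\lambda})$ and concludes immediately. In fact you are more careful than the paper: the mismatch between evaluating $g$ at $\theta^{(t)}$ and $f$ at $\theta^{(t+1)}$, which you patch via the uniform continuity of Assumption \ref{asp:1}, is passed over in silence in the paper's one-line argument.
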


\begin{Proof}
    By \textbf{Theorem} \ref{thm:cts_Rp} (see App. \ref{App: proof}), the upper triangular transport inequality,
    let $f(\lambda) = G^{\eta} \mathcal{L}_{\mathcal{T}} (\lambda, \theta_{\lambda})$ and 
    $g(\lambda) = \mathcal{L}_{\mathcal{E}} (\lambda, \theta_{\lambda})$,
    the proof is done.
\end{Proof}

\begin{Remark}[Superiority of Improvement]
    \textbf{Theorem} \ref{thm:2nd_gdi} shows that, if $\mathcal{P}_{\Lambda}$ is updated by $\mathcal{E}$, the expected improvement of $\mathcal{T}$ is higher.
\end{Remark}

\begin{Example}[Practical Implementation]
\label{example: GDI meta}
    Let $\mathcal{L}_{\mathcal{E}} (\lambda, \theta_{\lambda}) = \textbf{E}_{s \sim d_{\rho_0}^{\pi}} \textbf{E}_{a \sim \pi(\cdot | s) \exp(\epsilon A^{\pi}(s, \cdot))/Z} [A^{\pi}(s, a)]$, where $\pi = \pi_{\theta_{\lambda}}$.
    Let $\mathcal{L}_{\mathcal{T}} (\lambda, \theta_{\lambda}) = \mathcal{J}_{\pi_{\theta_{\lambda}}}$.
    If we optimize $\mathcal{L}_{\mathcal{T}} (\lambda, \theta_{\lambda})$ by natural gradient, 
    \citep{pgtheory} shows that, for direct parameterization, the natural policy gradient gives $\pi^{(t+1)} \propto \pi^{(t)} \exp (\epsilon A^{\pi^{(t)}})$, by \textbf{Lemma} \ref{lemma:perfdiff} (see App. \ref{App: proof}), the performance difference lemma,
    $V^{\pi} (s_0) - V^{\pi'} (s_0) = \frac{1}{1 - \gamma} \textbf{E}_{s \sim d_{s_0}^\pi} \textbf{E}_{a \sim \pi (\cdot | s)} [ A^{\pi'} (s, a) ]$, 
    hence if we ignore the gap between the states visitation distributions of $\pi^{(t)}$ and $\pi^{(t+1)}$, 
    $\mathcal{L}_{\mathcal{E}} (\lambda, \theta_{\lambda}^{(t)}) \approx \frac{1}{1 - \gamma} \textbf{E}_{s \sim d_{\rho_0}^{\pi}} [V^{\pi^{(t+1)}}(s) - V^{\pi^{(t)}} (s)]$, 
    where $\pi^{(t)} = \pi_{\theta_{\lambda}^{(t)}}$.
    Hence, $\mathcal{E}$ is actually putting more measure on $\lambda$ that can achieve more improvement.
\end{Example}

\begin{figure*}[!t]
\subfigure[Performance of Control Groups]{
\includegraphics[width=0.4\textwidth]{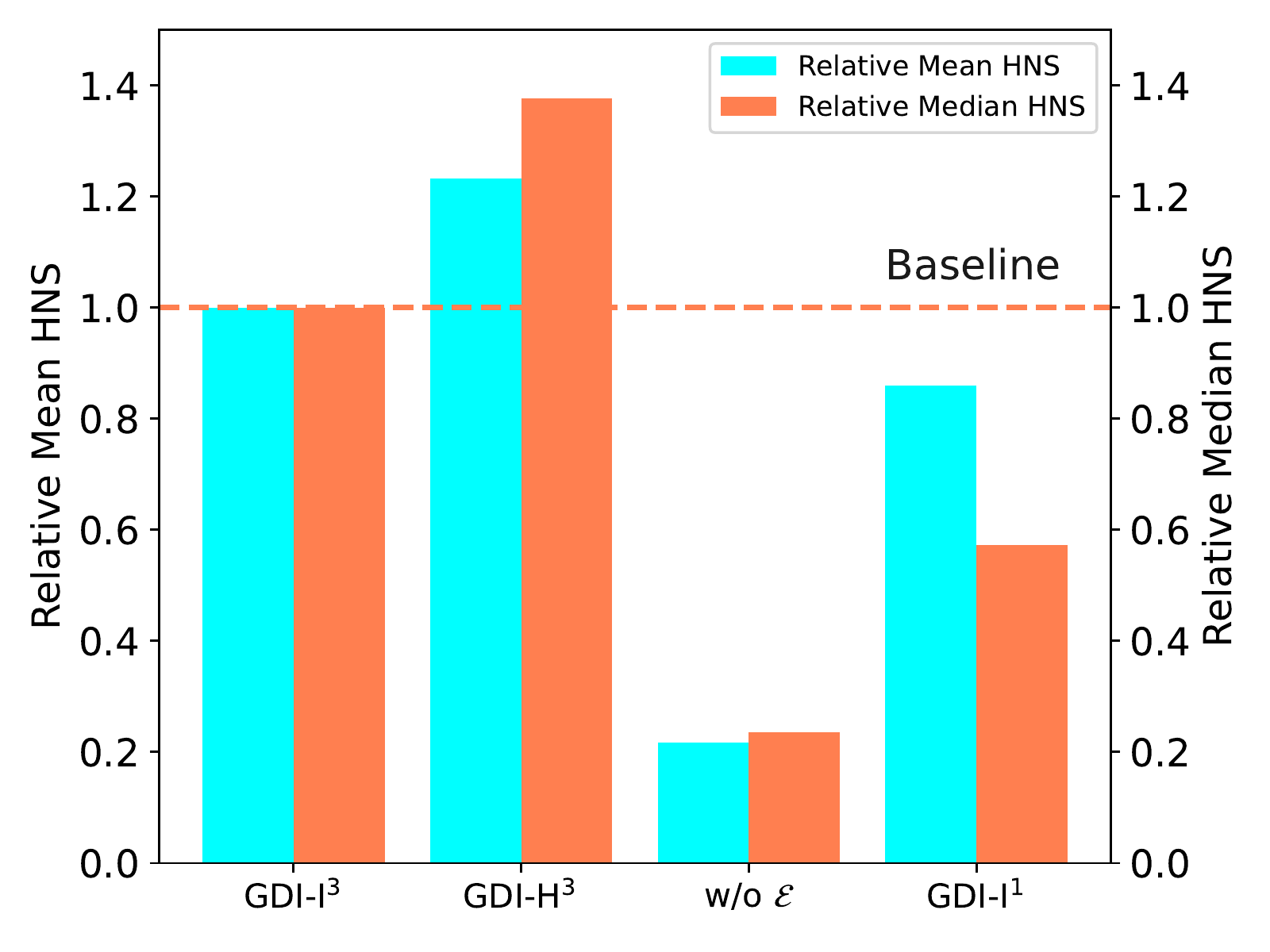}
}
\subfigure[t-SNE of GDI-I$^3$]{
		\includegraphics[width=0.25\textwidth]{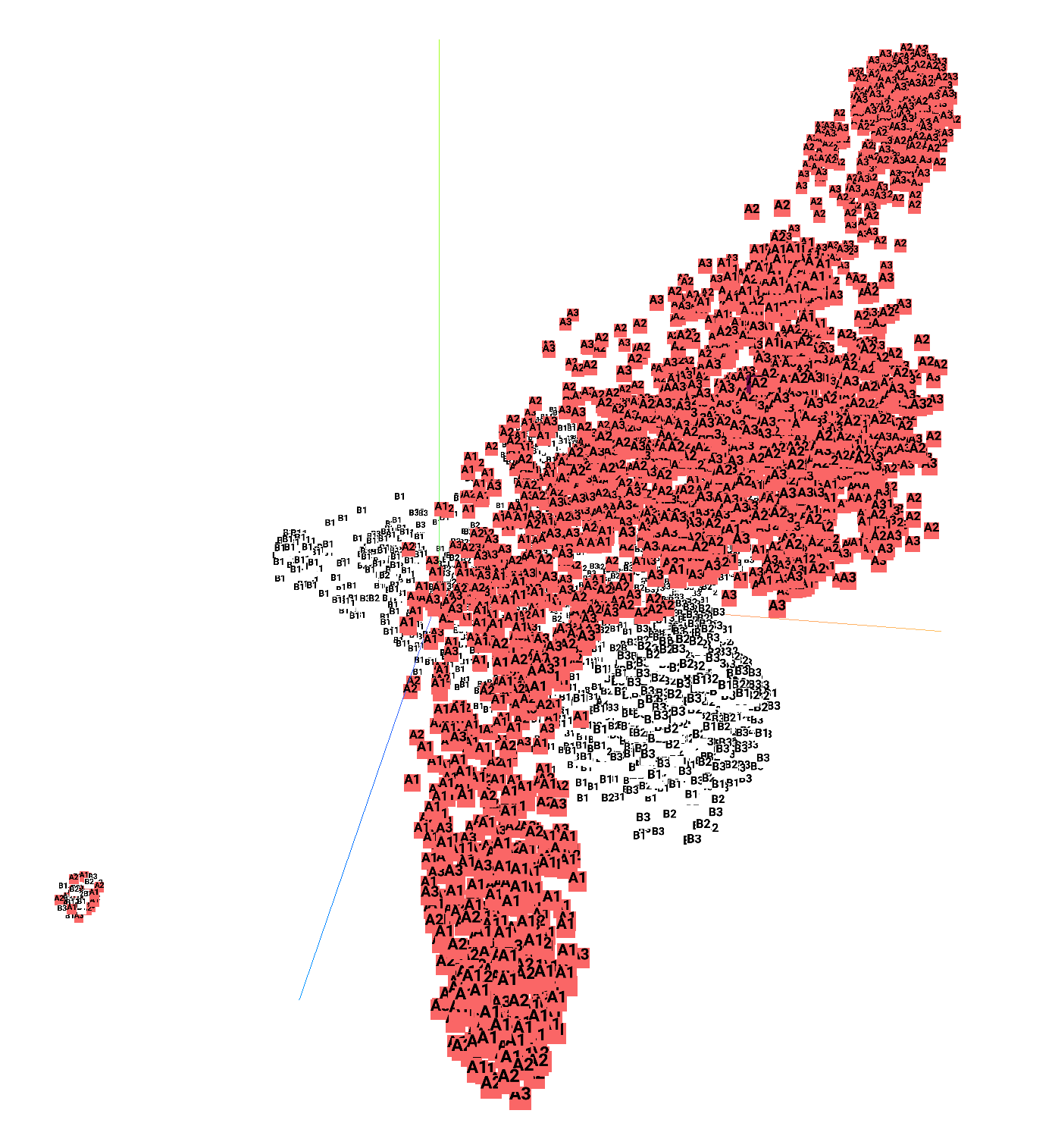}
	}
	\subfigure[t-SNE of GDI-I$^1$]{
		\includegraphics[width=0.25\textwidth]{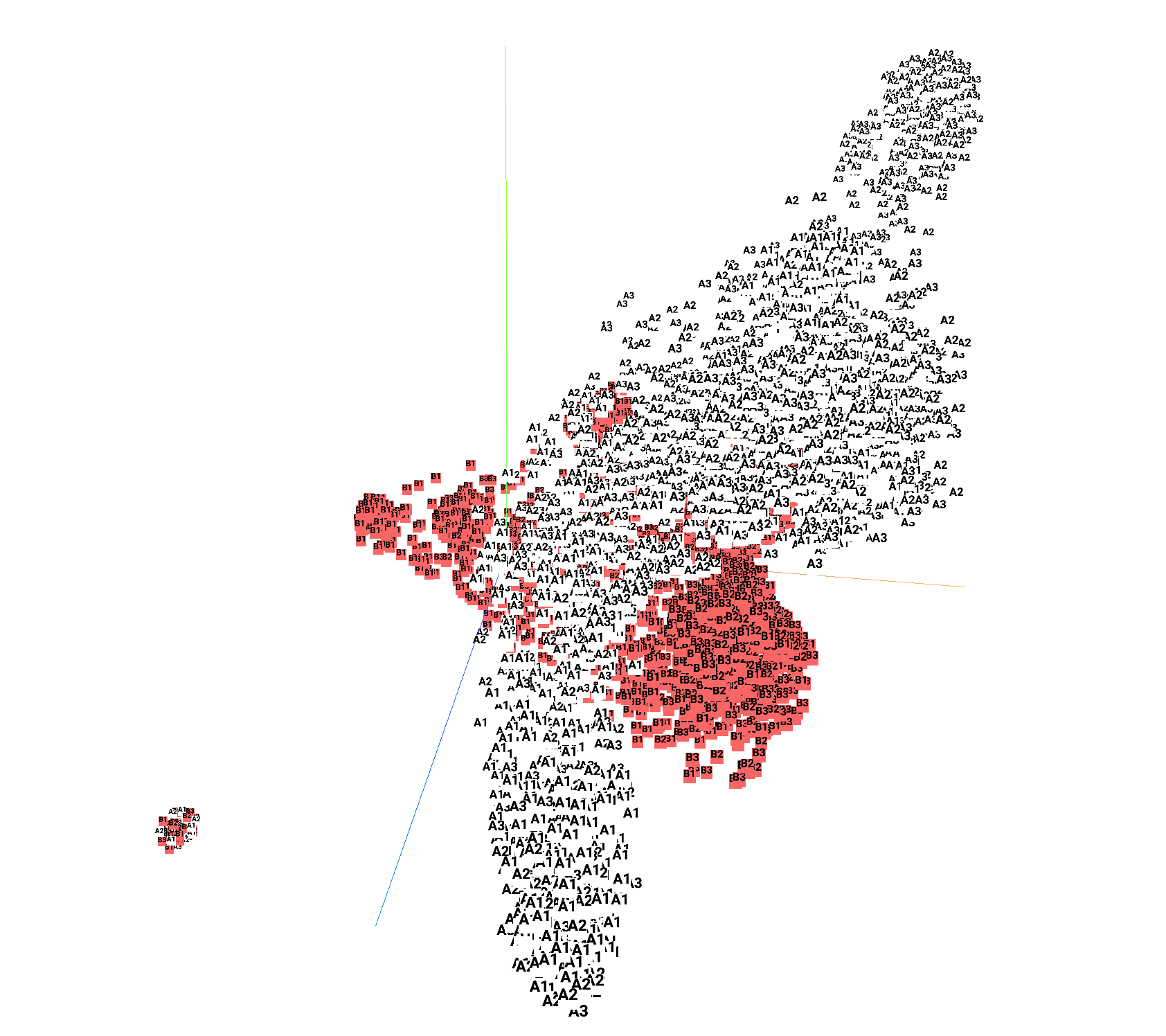}
	}
\caption{Figures of ablation study. \textbf{(a)} shows  how   the ablation groups (see App. \ref{Sec: appendix Ablation Study}) perform compared with the baseline (i.e., GDI-I$^3$). Noting that the performance has been normalized by GDI-I$^3$ (e.g., $\frac{\text{Mean HNS of  GDI-I}^1}{\text{Mean HNS of GDI-I}^3}$), and w/o $\mathcal{E}$ means without the meta-controller. \textbf{(b)} and \textbf{(c)} illustrate the  data richness (e.g., $\frac{\text{Seen Conditions}}{\text{All Conditions}}$) of GDI-I$^1$ and GDI-I$^3$ via t-SNE  of visited states (see App. \ref{app: tsne}).}
\label{fig:ablation study}
\end{figure*}

\section{Experiment}
\label{sec: experiment}
In this section, we designed our experiment to answer the following questions:
\begin{itemize}
    \item How to implement RL algorithms based on GDI step by step (see Sec. \ref{sec: Practical Implement Based on GDI})?  Whether the proposed methods can outperform all prior SOTA RL algorithms in both sample efficiency and final performance  (see Tab. \ref{tab:atari_results})?
    \item How to construct a behavior policy space (see Sec. \ref{sec: Policy Space Construction})? What's the impact of the size of the index set $\Lambda$, namely, whether the data richness can be improved via increasing the capacity and diversity (see Fig. \ref{fig:ablation study})?
    \item How to design a data distribution optimization operator (e.g., a meta-controller) to tackle the exploration and exploitation trade-off (see Sec. \ref{sec: Practical Implement Based on GDI})? How much performance would be degraded without data distribution optimization, namely no meta-controller (see Fig. \ref{fig:ablation study})?
\end{itemize}

\subsection{Practical Implementation Based on GDI}
\label{sec: Practical Implement Based on GDI}

\paragraph{Policy Space Construction}
\label{sec: Policy Space Construction}

To illustrate the effectiveness of GDI, we give two representative practical implementations of GDI, namely GDI-I$^3$ and GDI-H$^3$, the capacity of whose behavior policy space is larger than Agent57.  Let $\Lambda = \{\lambda | \lambda = (\tau_1, \tau_2, \epsilon)\}$.
The behavior policy belongs to a \emph{soft} entropy policy space including policies ranging from very exploratory to purely exploitative and thereby the optimization of the sampling distribution  of behavior policy $\mathcal{P}_{\Lambda}$ can be reframed into the trade-off between exploration and exploitation.
We define the behavior policy $\pi_{\theta_{\lambda}}$ as
\begin{equation}
\label{equ: soft epsilon policy space}
    \pi_{\theta_{\lambda}}=\epsilon \cdot \operatorname{Softmax}\left(\frac{A_{\theta_1}}{\tau_{1}}\right)+(1-\epsilon) \cdot \operatorname{Softmax}\left(\frac{A_{\theta_2}}{\tau_{2}}\right)
\end{equation}
wherein $\pi_{\theta_{\lambda}}$ constructs a parameterized policy space, and the index set $\Lambda$ is constructed by $\lambda = (\tau_1, \tau_2, \epsilon)$.
For GDI-I$^3$, $A_{\theta_1}$ and $A_{\theta_2}$ are identical advantage functions \citep{dueling_q}. Namely, they are estimated by an isomorphic family of trainable variables $\theta$.
The learning policy is also $\pi_{\theta_{\lambda}}$.
For GDI-H$^3$,  $A_{\theta_1}$ and $A_{\theta_2}$ are different, and they are estimated by two different families of trainable variables (i.e., $\theta_1 \neq \theta_2$).
Since \textbf{GDI needn't assume $A_{\theta_1}$ and $A_{\theta_2}$ are learned from the same MDP}, we adopt two kinds of reward shaping to learn $A_{\theta_1}$ and $A_{\theta_2}$ respectively, which  can see App. \ref{Sec: appendix hyperparameters}.
More implementation details see App. \ref{App: Algorithm Pseudocode}.

\paragraph{Data Distribution Optimization Operator} The operator $\mathcal{E}$, which optimizes $\mathcal{P}_{\Lambda}$,  is achieved by Multi-Arm Bandits \citep[MAB]{sutton}, 
where assumption \eqref{asp:2} holds naturally.
For more details, can see App. \ref{Sec: appendix MAB}.

\paragraph{Reinforcement Learning  Optimization Operator}  The operator $\mathcal{T}$ is achieved by policy gradient, V-Trace and ReTrace \citep{impala, retrace} (see App. \ref{app: background on RL}), which meets Theorem \ref{thm:1st_gdi} by first-order optimization. 

\subsection{Summary of Results}
\label{sec: Summary of Results}

\paragraph{Experimental Details} Recommended by \citep{agent57,atarihuman}, we construct an evaluation system to highlight the superiority of GDI from multiple levels (see App. \ref{app:Evaluation Metrics for ALE}). Furthermore, to avoid any issues that aggregated metrics may have, App. \ref{appendix: experiment results} provides full learning curves for all games and detailed comparison tables of raw and normalized scores. More details see App. \ref{sec:app Experiment Details}.

\paragraph{Effectiveness of GDI} The aggregated results across games are reported in Tab. \ref{tab:atari_results}. Our agents obtain the highest mean HNS with the minimal training frames, leading to the best learning efficiency. Furthermore, our agents have surpassed 22 human world records within  38 playtime days, which is \textbf{500 times} more efficient than Agent57. Extensive experiments have demonstrated the fact that either GDI-I$^3$ or GDI-H$^3$ could obtain  superhuman performance with remarkable learning efficiency.

\paragraph{Discussion of the Results} Agent57 could obtain the highest median HNS but relatively lower learning efficiency via \textbf{i)} a relatively larger behavior policy space and  a meta-controller \textbf{ii)} intrinsic rewards and nearly unlimited data. However, Agent57 fails to distinguish the value of data and thereby collects many useless/low-value samples.  Other algorithms are struggling to match our performance.

\subsection{Ablation Study}

\paragraph{Ablation Study Design} In the ablation study, we further investigate the effects of several properties of GDI. In the first experiment, we demonstrate the effectiveness of the capacity and diversity control via exploring how different sizes  of the index set of the policy space influence the performance and data richness. In the second experiment, we highlight the effectiveness of data distribution optimization operator $\mathcal{E}$ via ablating $\mathcal{E}$. More details can see App. \ref{Sec: appendix Ablation Study}.

\paragraph{Effectiveness of Capacity and Diversity Control}

In this experiment, we firstly implement a GDI-I$^1$ algorithm with Boltzmann policy space (i.e., $\pi_{\theta_{\lambda}}=\text{Softmax}(\frac{A}{\tau})$) to explore the impact of the  capacity and diversity control. Then, we explore whether the data richness is indeed improved via a case study of t-SNE of GDI-I$^3$ and GDI-I$^1$. Results are illustrated in Fig. \ref{fig:ablation study}, from which we could find the visited states of GDI-I$^3$ are indeed  richer than GDI-I$^1$, which concludes its better performance. In the same way, the behavior policy space of GDI-I$^3$ is a sub-space (i.e., $\theta_1=\theta_2$) of that of GDI-H$^3$, leading to further performance improvement.

\paragraph{Effectiveness of  Data Distribution Optimization} 
From  Fig. \ref{fig:ablation study}, we could also find that not using a meta-controller (e.g., the index $\lambda$ of behavior policy takes a fixed value) will  dramatically degrade performance, which confirms the effectiveness of the data distribution optimization and echoes the previous theoretical proof.

\section{Conclusion}

Simultaneously obtaining superior sample efficiency and  better final performance is an important and challenging problem in RL. In this paper, we present the first attempt to address this problem from  training data distribution control, namely to obtain any desired (e.g., nontrivial) data within \emph{limited} interactions. To tackle this problem, we firstly cast it into a data distribution optimization problem. Then, we handle this problem via \textbf{i)} explicitly modeling and controlling the diversity  of the behavior policies and \textbf{ii)} adaptively tackling the  exploration-exploitation trade-off using meta-learning. After integrating this process into GPI, we surprisingly find a more general framework GDI and then we give an operation-version of recent SOTA algorithms. Under the guidance of GDI, we propose feasible implementations and achieve the superhuman final performance with remarkable learning efficiency within only 38 playtime days.


\section*{Acknowledgements}
We are grateful for the careful reading and insightful reviews of meta-reviewers and reviewers.

\nocite{langley00}

\bibliography{example_paper}
\bibliographystyle{icml2022}

\newpage
\appendix
\onecolumn

\section{Summary of Notation and Abbreviation}
\label{app: Abbreviation and Notation}
In this section, we briefly summarize some common notations and abbreviations in this paper for the convenience of readers, which are illustrated in Tab. \ref{tab: notation} and Tab. \ref{tab: abbreviation}.

\begin{table}[!hb]
	\centering
	\caption{Summary of Notation}
	\label{tab: notation}
	\begin{tabular}{|c |c| }
	    \hline
		\textbf{Notation} &\textbf{Description}\\
		\hline
	    $s$ & state \\
	    \hline
	    $a$ & action \\
	    \hline
	    $\mathcal{S} $ & set of all states \\
	    \hline
	    $\mathcal{A} $ & set of all actions \\
	    \hline
	    $\Delta$  & probability simplex \\
	    \hline
	    $\mu $ & behavior policy \\
	    \hline
	    $\pi $ & target policy \\
	    \hline
	    $G_t $ & \makecell[c]{cumulative discounted reward \\ or return at $t$} \\
	    \hline
	    $d_{\rho_0}^{\pi}$  & \makecell[c]{the states visitation distribution of $\pi$ \\ with the initial state distribution 
	    $\rho_0$} \\
	    \hline
	    $J_{\pi}$     & \makecell[c]{the expectation of the returns \\ with the states visitation distribution of $\pi$} \\
	    \hline
	    $V^{\pi}$ & the state value function of $\pi$\\
	    \hline
	    $Q^{\pi}$ &  the state-action value function of $\pi$\\
	    \hline
	    $\gamma$ & discount-rate parameter \\
	    \hline
	    $\delta_{t}$ & temporal-difference error at $t$\\
	    \hline
	    $\Lambda$ & set of indexes  \\
	    \hline
	    $\lambda$ & one index in $\Lambda$ \\
	    \hline
	    $\mathcal{P}_{\Lambda}$ & one probability measure on $\Lambda$ \\
	    \hline
	    $\Theta$  & set of all possible parameter values \\
	    \hline
	    $\theta$  & one parameter value in $\Theta$ \\
	    \hline
	    $\theta_\lambda$ & \makecell[c]{a subset of $\theta$, indicates \\ the parameter in $\theta$ being used by the index $\lambda$} \\
	    \hline
	    $\mathcal{X}$ & set of samples \\
	    \hline
	    $x$ & one sample in $\mathcal{X}$ \\
	    \hline
	    $\mathcal{D}$ & set of all possible states visitation distributions \\
	    \hline
	    $\mathcal{E}$ & the data distribution optimization operator \\
	    \hline
	    $\mathcal{T}$ & the RL algorithm optimization operator \\
	    \hline
	    $L_{\mathcal{E}}$ & \makecell[c]{the loss function of $\mathcal{E}$ to be maximized, \\ calculated by the samples set $\mathcal{X}$ }\\
	    \hline
	    $\mathcal{L}_\mathcal{E}$ & \makecell[c]{expectation of $L_{\mathcal{E}}$, \\ with respect to each sample $x \in \mathcal{X}$} \\
	    \hline
	    $L_{\mathcal{T}}$ & \makecell[c]{the loss function of $\mathcal{T}$ to be maximized,\\  calculated by the samples set $\mathcal{X}$} \\
	    \hline
	    $\mathcal{L}_\mathcal{T}$ & \makecell[c]{expectation of $L_{\mathcal{T}}$, \\ with respect to each sample $x \in \mathcal{X}$} \\
		\hline
	\end{tabular} 
\end{table}

\begin{table}[!hb]
	\centering
	\caption{Summary ofAbbreviation}
	\label{tab: abbreviation}
	\begin{tabular}{|c| c|}
		\hline
		\textbf{Abbreviation} &\textbf{Description}\\
		\hline
		Sec.  & Section \citep{agent57} \\
		\hline
		Figs. & Figures \citep{dreamerv2} \\
		\hline
		Fig. & Figure \citep{agent57} \\
		\hline
		Eq.     & Equation \citep{agent57}\\
		\hline
		Tab.    & Table \citep{agent57} \\
		\hline
		App.    & Appendix \citep{agent57} \\
		\hline
		SOTA & State-of-The-Art \citep{agent57}    \\
		\hline
		RL  & Reinforcement Learning \citep{sutton} \\
		\hline
		DRL & Deep Reinforcement Learning \citep{sutton} \\
        \hline
        GPI & Generalized Policy Iteration \citep{sutton} \\
        \hline
        PG  & Policy Gradient \citep{sutton} \\
        \hline
        AC  & Actor Critic \citep{sutton} \\
        \hline
        ALE & Atari Learning Environment \citep{ale} \\
        \hline
        HNS   & Human Normalized Score \citep{ale} \\
        \hline
        HWRB & Human World Records Breakthrough \\
        \hline
        HWRNS & Human World Records Normalized Score \\
        \hline
        SABER & Standardized Atari BEnchmark for RL \citep{atarihuman}\\
        \hline
        CHWRNS & Capped Human World Records Normalized Score \\      
        \hline
        WLOG   & Without Loss of Generality \\
        \hline
        w/o    & Without \\
		\hline
	\end{tabular} 
\end{table}

\clearpage
\section{Background on RL}
\label{app: background on RL}

 The RL problem can be formulated by a Markov decision process \citep[MDP]{howard1960dynamic} defined by the tuple  $\left(\mathcal{S}, \mathcal{A}, p, r, \gamma, \rho_{0}\right)$. 
 Considering a discounted episodic MDP, the initial state $s_0$ will be sampled from the distribution denoted by $\rho_0(s): \mathcal{S} \rightarrow \Delta(\mathcal{S})$. 
 At each time t, the agent choose an action $a_t \in \mathcal{A}$ according to the policy $\pi(a_t|s_t): \mathcal{S} \rightarrow \Delta(\mathcal{A})$ at state $s_t \in \mathcal{S}$. 
 The environment receives the action, produces a reward $r_t \sim r(s,a): \mathcal{S} \times \mathcal{A} \rightarrow \mathbf{R}$ and transfers to the next state $s_{t+1}$  submitted to the transition distribution $p\left(s^{\prime} \mid s, a\right): \mathcal{S} \times \mathcal{A} \rightarrow \Delta(\mathcal{S})$. 
 The process continues until the agent reaches a terminal state or a maximum time step. 
 Define return $G_t = \sum_{k=0}^\infty \gamma^k r_{t+k}$, state value function $V^{\pi}(s_t) = \textbf{E}\left[ \sum_{k=0}^\infty \gamma^k r_{t+k} | s_t \right]$, state-action value function $Q^{\pi}(s_t, a_t) = \textbf{E}\left[ \sum_{k=0}^\infty \gamma^k r_{t+k} | s_t, a_t \right]$, and advantage function $A^{\pi}(s_t,a_t) = Q^{\pi}(s_t, a_t) - V^{\pi}(s_t)$, wherein $\gamma \in(0,1)$ is the discount factor.
The connections between $V^\pi$ and $Q^\pi$ is given by the Bellman equation,
\begin{equation*}
    \mathcal{T}Q^{\pi} (s_t, a_t) = \textbf{E}_{'\pi} [r_t + \gamma V^{\pi}(s_{t+1})],
\end{equation*}
where
\begin{equation*}
    V^{\pi} (s_t)  = \textbf{E}_{\pi} [Q^{\pi} (s_t, a_t)].
\end{equation*}
The goal of reinforcement learning is to find the optimal policy $\pi^*$ that maximizes the expected sum of discounted rewards, denoted by $\mathcal{J}$ \citep{sutton}:
\begin{equation*}
\pi^{*}=\underset{\pi}{\operatorname{argmax}} \mathcal{J}_{\pi}(\tau) = \underset{\pi}{\operatorname{argmax}} \textbf{E}_{\pi}\left[G_{t}\right]= \underset{\pi}{\operatorname{argmax}} \textbf{E}_{\pi}[\sum_{k=0}^{\infty} \gamma^{k} r_{t+k}]
\end{equation*}


Model-free reinforcement learning (MFRL) has made many impressive breakthroughs in a wide range of Markov decision processes  \citep[MDP]{alpha_star,ftw,agent57}.
MFRL mainly consists of two categories, valued-based methods \citep{dqn,rainbow} and policy-based methods \citep{trpo,ppo,impala}.

Value-based methods learn state-action values and select actions according to these values. 
One merit of value-based methods is to accurately control the exploration rate of the behavior policies by some trivial mechanism, such like $\epsilon$-greedy.
The drawback is also apparent. 
The policy improvement of valued-based methods totally depends on the policy evaluation. 
Unless the selected action is changed by a more accurate policy evaluation, the policy won't be improved. 
So the policy improvement of each policy iteration is limited, which leads to a low learning efficiency.
Previous works equip valued-based methods with many appropriated designed structures, achieving a more promising learning efficiency \citep{dueling_q,priority_q,r2d2}.

In practice, value-based methods maximize $\mathcal{J}$ by policy iteration \citep{sutton}. 
The policy evaluation is fulfilled by minimizing $\textbf{E}_{\pi} [(G - Q^\pi) ^ 2]$, which gives the gradient ascent direction 
$\textbf{E}_{\pi} [(G - Q^\pi) \nabla Q^\pi]$. 
The policy improvement is usually achieved by $\epsilon$-greedy.

Q-learning is a typical value-based methods, which updates the state-action value function $Q(s,a)$ with Bellman Optimality Equation \citep{qlearning}: 
\begin{equation*}
    \begin{array}{c}
    \delta_{t}=r_{t+1}+\gamma \arg \max _{a} Q\left(s_{t+1}, a\right)-Q\left(s_{t}, a_{t}\right) \\
    Q\left(s_{t}, a_{t}\right) \leftarrow Q\left(s_{t}, a_{t}\right)+\alpha \delta_{t}
    \end{array}
\end{equation*}
wherein $\delta_t$ is the temporal difference error \citep{TDerror}, and $\alpha$ is the learning rate.

A refined structure design of $Q^\pi$ is achieved by \citep{dueling_q}. It estimates $Q^\pi$ by a summation of two separated networks, $Q^\pi = A^\pi + V^\pi$, which has been widely studied in \citep{dueling_q,casa_bridge}.

Policy gradient \citep[PG]{williams1992simple} methods is an outstanding representative of policy-based RL algorithms, which directly parameterizes the policy and  updates through optimizing the following objective: 
\begin{equation*}
    \mathcal{J} (\theta)=\mathbb{E}_{\pi}\left[\sum_{t=0}^{\infty} \log \pi_{\theta}\left(a_{t} \mid s_{t}\right) R(\tau)\right]
\end{equation*}
wherein $R(\tau)$ is the cumulative return on trajectory $\tau$. In PG method, policy improves via ascending  along the gradient of the above equation, denoted as policy gradient:
\begin{equation*}
\nabla_{\theta} \mathcal{J} \left(\pi_{\theta}\right) =\underset{\tau \sim \pi_{\theta}}{\mathrm{E}}\left[\sum_{t=0}^{\infty} \nabla_{\theta} \log \pi_{\theta}\left(a_{t} \mid s_{t}\right) R(\tau)\right]
\end{equation*}

One merit of policy-based methods is that they incorporate a policy improvement phase every training step, suggesting a higher learning efficiency than value-based methods.
Nevertheless, policy-based methods easily fall into a suboptimal solution, where the entropy drops to $0$ \citep{sac}.
The actor-critic methods introduce a value function as the baseline to reduce the variance of the policy gradient \citep{a3c}, but maintain the other characteristics unchanged.

Actor-Critic \citep[AC]{sutton} reinforcement learning updates the policy gradient with an value-based critic, which can reduce variance of estimates and thereby ensure  more stable and rapid optimization.
\begin{equation*}
    \nabla_{\theta} \mathcal{J}(\theta)=\mathbb{E}_{\pi}\left[\sum_{t=0}^{\infty} \psi_{t} \nabla_{\theta} \log \pi_{\theta}\left(a_{t} \mid s_{t}\right)\right]
\end{equation*}
wherein $\psi_{t}$ is the critic to guide the improvement directions of policy improvement, which can be the state-action value function $Q^{\pi}\left(s_{t}, a_{t}\right)$, the advantage function $A^{\pi}\left(s_{t}, a_{t}\right)=Q^{\pi}\left(s_{t}, a_{t}\right)-V^{\pi}(s_t)$.

\subsection{Retrace}

When large scale training is involved, the off-policy problem is inevitable.
Denote $\mu$ to be the behavior policy, $\pi$ to be the target policy, and $c_t = \min\{\frac{\pi_t}{\mu_t}, \Bar{c}\}$ to be the clipped importance sampling. 
For brevity, denote $c_{[t: t+k]} = \prod_{i=0}^{k} c_{t+i}$. 
ReTrace \citep{retrace} estimates $Q(s_t, a_t)$ by clipped per-step importance sampling
\begin{equation*}
\label{Equ: retrace}
    Q^{\Tilde{\pi}} (s_t, a_t) 
= \textbf{E}_{\mu} [ Q(s_t, a_t) + \sum_{k \geq 0} \gamma^k 
c_{[t+1:t+k]} \delta^{Q}_{t+k} Q ],
\end{equation*}
where $\delta^{Q}_t Q \overset{def}{=} r_t + \gamma Q(s_{t+1}, a_{t+1}) - Q(s_t, a_t)$. 
The above operator is a contraction mapping, 
and $Q$ converges to $Q^{\Tilde{\pi}_{ReTrace}}$ that corresponds to some $\Tilde{\pi}_{ReTrace}$.

\subsection{Vtrace}
Policy-based methods maximize $\mathcal{J}$ by policy gradient. 
It's shown \citep{sutton} that $\nabla \mathcal{J} = \textbf{E}_\pi [G \nabla \log \pi]$. 
When involved with a baseline, it becomes an actor-critic algorithm such as $\nabla \mathcal{J} = \textbf{E}_\pi [(G - V^\pi) \nabla \log \pi]$, where $V^\pi$ is optimized by minimizing $\textbf{E}_\pi [(G - V^\pi)^2]$, i.e. gradient ascent direction $\textbf{E}_\pi [(G - V^\pi)\nabla V^\pi]$.

IMPALA \citep{impala} introduces V-Trace off-policy actor-critic algorithm to correct for the discrepancy between target policy and behavior policy. Denote $\rho_t = \min\{\frac{\pi_t}{\mu_t}, \Bar{\rho} \}$. V-Trace estimates $V(s_t)$ by
\begin{equation*}
\label{Equ: vtrace}
    V^{\Tilde{\pi}} (s_t) 
        = \textbf{E}_{\mu} [ 
        V(s_t) + \sum_{k \geq 0} \gamma^k 
     c_{[t:t+k-1]} \rho_{t+k}  \delta^{V}_{t+k} V ],
\end{equation*}
where $\delta^{V}_t V \overset{def}{=} r_t + \gamma V(s_{t+1}) - V(s_t)$. 
If $\Bar{c} \leq \Bar{\rho}$, the above operator is a contraction mapping, and $V$ converges to $V^{\Tilde{\pi}}$ that corresponds to 
$$
        \Tilde{\pi}(a|s) = \frac
        {\min \left\{\Bar{\rho} \mu (a|s), \pi(a|s)\right\}}
        {\sum_{b \in \mathcal{A}}\min \left\{\Bar{\rho} \mu (b|s), \pi(b|s)\right\}}.
$$
The policy gradient is given by
$$
\textbf{E}_\mu \left[\rho_t (r_t + \gamma V^{\Tilde{\pi}}(s_{t+1}) - V(s_t)) \nabla \log \pi \right].
$$

\clearpage

\section{Theoretical Proof}
\label{App: proof}

For a monotonic sequence of numbers which satisfies $a = x_0 < x_1 < \dots < x_n < b$,
we call it a split of interval $[a, b]$.

\begin{Lemma}[Discretized Upper Triangular Transport Inequality for Increasing Functions in $\mathbf{R}^1$]
Assume $\mu$ is a continuous probability measure supported on $[0, 1]$. 
Let $0 = x_0 < x_1 < \dots < x_n < 1$ to be any split of $[0, 1]$.
Define $\Tilde{\mu}(x_i) = \mu([x_i, x_{i+1}))$. 
Define 
$$\Tilde{\beta}(x_i) = \Tilde{\mu}(x_i) \exp(x_i) / Z,\  Z = \sum_{i} \Tilde{\mu}(x_i) \exp (x_i).$$ 
Then there exists a probability measure $\gamma: \{x_i\}_{i=0,\dots, n} \times \{x_i\}_{i=0,\dots, n} \rightarrow [0, 1]$, s.t. 
\begin{equation}
    \left\{
    \begin{aligned}
        &\sum_j \gamma (x_i, y_j) = \Tilde{\mu} (x_i),& &\ i = 0, \dots, n; \\
        &\sum_i \gamma (x_i, y_j) = \Tilde{\beta} (y_j),& &\ j = 0, \dots, n; \\
        &\gamma (x_i, y_j) = 0,& &\ i > j. \\
    \end{aligned}
    \right.
\label{eq:UTTC_R1}
\end{equation}
Then for any monotonic increasing function $f:\{x_i\}_{i=0,\dots, n} \rightarrow $ \textbf{R},
we have
$$
\textbf{E}_{\Tilde{\mu}} [f] \leq \textbf{E}_{\Tilde{\beta}} [f].
$$
\label{lemma:dct_inc_R1}
\end{Lemma}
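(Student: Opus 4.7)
\begin{Proof}[Proof Proposal]
The plan is to reduce the lemma to a standard stochastic dominance statement: first establish that $\Tilde{\beta}$ stochastically dominates $\Tilde{\mu}$, then build the upper triangular coupling $\gamma$ from that dominance, and finally read off the expectation inequality from the support of $\gamma$.

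First I would verify the key monotone tilting inequality, namely that the cumulative mass of $\Tilde{\beta}$ is pointwise below that of $\Tilde{\mu}$: for every $k$,
\begin{equation*}
\sum_{i \leq k} \Tilde{\beta}(x_i) \;\leq\; \sum_{i \leq k} \Tilde{\mu}(x_i).
\end{equation*}
Writing $p = \sum_{i \leq k} \Tilde{\mu}(x_i)$ and splitting $Z = \sum_i \Tilde{\mu}(x_i)\exp(x_i)$ into the contributions from $\{i \leq k\}$ and $\{i > k\}$, this reduces to comparing two conditional expectations of $\exp$ under $\Tilde{\mu}$: conditioning on $i \leq k$ gives a value bounded by $\exp(x_k)$, while conditioning on $i > k$ gives a value bounded below by $\exp(x_{k+1})$. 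Because $\exp$ is strictly increasing, the law of total expectation then yields the claim after rearrangement. This is the only place where the structural fact ``tilting by an increasing weight moves mass rightward'' is used; everything else is combinatorics.

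Next I would construct $\gamma$ explicitly from the dominance. I prefer a greedy ``left to right'' filling procedure: set $i = j = 0$ with running residuals $r_\mu = \Tilde{\mu}(x_0)$ and $r_\beta = \Tilde{\beta}(y_0)$; assign $\gamma(x_i, y_j) = \min(r_\mu, r_\beta)$, subtract this mass from both residuals, and advance $i$ or $j$ whichever residual hits zero (advancing $j$ if both do). The upper triangular condition $\gamma(x_i, y_j) = 0$ for $i > j$ is the invariant that at every step $j \geq i$, which follows inductively from the dominance inequality of the previous paragraph: we never exhaust the $y_j$ column before filling at least the $x_i$ row with $i \leq j$, because $\sum_{\ell \leq j}\Tilde{\beta}(y_\ell) \leq \sum_{\ell \leq j}\Tilde{\mu}(x_\ell)$. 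Verifying the marginal conditions in \eqref{eq:UTTC_R1} is immediate from the construction. Alternatively, one can use the quantile coupling $X = F_{\Tilde\mu}^{-1}(U)$, $Y = F_{\Tilde\beta}^{-1}(U)$ with $U$ uniform, and argue $X \leq Y$ almost surely from $F_{\Tilde\beta} \leq F_{\Tilde\mu}$; I would likely use whichever is shorter in the final write-up.

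Finally, with the coupling in hand, the monotone expectation inequality is one line:
\begin{equation*}
\textbf{E}_{\Tilde{\beta}}[f] - \textbf{E}_{\Tilde{\mu}}[f]
= \sum_{i,j} \gamma(x_i, y_j)\bigl(f(y_j) - f(x_i)\bigr) \;\geq\; 0,
\end{equation*}
since the only contributing terms have $j \geq i$ and $f$ is monotone increasing. The hardest part is really the first step, verifying pointwise CDF dominance after the exponential tilt; once that monotone rearrangement inequality is written cleanly, the coupling and its consequence are essentially bookkeeping. I expect no technical subtlety from the ``continuous $\mu$'' hypothesis, since we immediately pass to the finite discretization $\Tilde{\mu}$ on the split points.
\end{Proof}
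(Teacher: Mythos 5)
Your proposal is correct, but it takes a genuinely different route from the paper. The paper constructs the coupling $\gamma$ directly by induction on the number of support points: it defines truncated measures $\Tilde{\mu}_m, \Tilde{\beta}_m$, rescales the inductively-obtained coupling by $Z_m/Z_{m+1}$, and redistributes the freed mass by solving a small linear system in $(u,v,w)$ whose nonnegativity hinges on the two inequalities $Z_{m+1} > Z_m$ and $\Tilde{\beta}_{m+1}(x_{m+1})/\Tilde{\beta}_{m+1}(x_m) \geq \Tilde{\mu}_{m+1}(x_{m+1})/\Tilde{\mu}_{m+1}(x_m)$ (the likelihood-ratio monotonicity of the exponential tilt). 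You instead integrate that same likelihood-ratio fact into a global statement, pointwise CDF dominance $F_{\Tilde{\beta}} \leq F_{\Tilde{\mu}}$, and then obtain the upper-triangular coupling as the standard monotone (quantile / northwest-corner) coupling of stochastically ordered measures; your verification that cell $(i,j)$ with $i>j$ receives no mass is exactly the overlap criterion $F_{\Tilde{\mu}}(x_{i-1}) \geq F_{\Tilde{\mu}}(x_j) \geq F_{\Tilde{\beta}}(y_j)$. Your route is shorter and leans on a classical equivalence (and, as a bonus, the final inequality could even be read off from CDF dominance by Abel summation, though the lemma explicitly demands $\gamma$ so the construction is still needed); the paper's incremental construction is more laborious but is set up so that the same induction transfers verbatim to the co-monotonic and $\mathbf{R}^p$ versions (Lemmas \ref{lemma:dct_R1} and \ref{lemma:dct_Rp}) by re-sorting the points in increasing order of $g$ -- a generalization your CDF argument also admits, provided you are explicit that "dominance" is then measured along the $g$-induced ordering rather than the natural order on $[0,1]$. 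One small bookkeeping point: in your greedy filling, when both residuals hit zero simultaneously you should advance both indices (or accept a zero-mass cell before advancing), but this does not affect correctness.
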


\begin{proof}[Proof of Lemma \ref{lemma:dct_inc_R1}]

For any couple of measures $(\mu, \beta)$, we say the couple satisfies Upper Triangular Transport Condition (UTTC), if there exists $\gamma$ s.t. \eqref{eq:UTTC_R1} holds.

Given $0 = x_0 < x_1 < \dots < x_n < 1$, we prove the existence of $\gamma$ by induction.

Define 
\begin{equation*}
    \Tilde{\mu}_m(x_i) = 
\left\{
\begin{aligned}
    &\mu ([x_i, x_{i+1})),& &i < m, \\
    &\mu ([x_i, 1)),& &i = m, \\
    & 0,& &i > m.
\end{aligned}
\right.
\end{equation*}

Define
$$
\Tilde{\beta}_m (x_i) = \Tilde{\mu}_m (x_i) \exp(x_i) / Z_m,
\  Z_m = \sum_{i} \Tilde{\mu}_m (x_i) \exp (x_i).$$ 

Noting if we prove that $(\Tilde{\mu}_m, \Tilde{\beta}_m)$ satisfies UTTC for $m=n$, it's equivalent to prove the existence of $\gamma$ in \eqref{eq:UTTC_R1}.

To clarify the proof, we use $x_i$ to represent the point for $\Tilde{\mu}$-axis in coupling and $y_j$ to represent the point for $\Tilde{\beta}$-axis, but they are actually identical, i.e. $x_i = y_j$ when $i = j$. 

When $m = 0$, it's obvious that $(\Tilde{\mu}_0, \Tilde{\beta}_0)$ satisfies UTTC, as
$$
\gamma_0 (x_i, y_j) = \left\{
\begin{aligned}
    &1, \ i = 0, j = 0, \\
    &0, \ else.
\end{aligned}
\right.
$$

Assume UTTC holds for $m$, i.e. there exists $\gamma_m$ s.t. $(\Tilde{\mu}_m, \Tilde{\beta}_m)$ satisfies UTTC, we want to prove it also holds for $m+1$.

By definition of $\Tilde{\mu}_m$, we have
\begin{equation*}
    \left\{
    \begin{aligned}
        &\Tilde{\mu}_m (x_i) = \Tilde{\mu}_{m+1} (x_i),& &i < m, \\
        &\Tilde{\mu}_m (x_i) = \Tilde{\mu}_{m+1} (x_i) + \Tilde{\mu}_{m+1} (x_{i+1}),& &i=m, \\
        &\Tilde{\mu}_m (x_{m+1}) = \Tilde{\mu}_m (x_i) = \Tilde{\mu}_{m+1} (x_i) = 0,& &i>m+1. \\
    \end{aligned}
    \right.
\end{equation*}

By definition of $\Tilde{\beta}_m$, we have
\begin{equation*}
    \left\{
    \begin{aligned}
        &\Tilde{\beta}_m (x_i) = \Tilde{\beta}_{m+1} (x_i) \cdot \frac{Z_{m+1}}{Z_m},& &i < m, \\
        &\Tilde{\beta}_m (x_i) = \left( \Tilde{\beta}_{m+1} (x_i)  + \Tilde{\beta}_{m+1} (x_{i+1}) \exp(x_i - x_{i+1}) \right) \cdot \frac{Z_{m+1}}{Z_m},& &i=m, \\
        &\Tilde{\beta}_m (x_{m+1}) = \Tilde{\beta}_m (x_i) = \Tilde{\beta}_{m+1} (x_i) = 0,& &i>m+1. \\
    \end{aligned}
    \right.
\end{equation*}

Multiplying $\gamma_m$ by $\frac{Z_m}{Z_{m+1}}$, we get the following UTTC
\begin{equation*}
    \left\{
    \begin{aligned}
        &\sum_j \frac{Z_m}{Z_{m+1}} \gamma_m (x_i, y_j) = \frac{Z_m}{Z_{m+1}} \Tilde{\mu}_{m+1} (x_i),& &\ i < m; \\
        &\sum_j \frac{Z_m}{Z_{m+1}} \gamma_m (x_i, y_j) = \frac{Z_m}{Z_{m+1}} (\Tilde{\mu}_{m+1} (x_i) + \Tilde{\mu}_{m+1} (x_{i+1})),& &\ i = m; \\
        &\sum_j \frac{Z_m}{Z_{m+1}} \gamma_m (x_i, y_j) = 0,& &\ i = m+1; \\
        &\sum_j \frac{Z_m}{Z_{m+1}} \gamma_m (x_i, y_j) = \Tilde{\mu}_{m+1} (x_i) = 0,& &\ i > m+1; \\
        &\sum_i \frac{Z_m}{Z_{m+1}} \gamma_m (x_i, y_j) = \Tilde{\beta}_{m+1} (y_j),& &\ j < m; \\
        &\sum_i \frac{Z_m}{Z_{m+1}} \gamma_m (x_i, y_j) = \Tilde{\beta}_{m+1} (y_i)  + \Tilde{\beta}_{m+1} (y_{j+1}) \exp(y_j - y_{j+1}),& &\ j = m; \\
        &\sum_i \frac{Z_m}{Z_{m+1}} \gamma_m (x_i, y_j) = 0,& &\ j = m+1; \\
        &\sum_i \frac{Z_m}{Z_{m+1}} \gamma_m (x_i, y_j) = \Tilde{\beta}_{m+1} (y_j) = 0,& &\ j > m+1; \\
        &\frac{Z_m}{Z_{m+1}} \gamma_m (x_i, y_j) = 0,& &\ i > j. \\
    \end{aligned}
    \right.
\end{equation*}

By definition of $Z_m$,
\begin{equation}
    Z_{m+1} - Z_m = \Tilde{\mu}_{m+1} (x_{m+1}) (\exp(x_{m+1}) - \exp(x_m)) > 0,
\label{eq:Zm_inc}
\end{equation}
so we have $\frac{Z_m}{Z_{m+1}} \Tilde{\mu}_{m+1} (x_i) < \Tilde{\mu}_{m+1} (x_i)$.

Noticing that $\Tilde{\beta}_{m+1} (y_{i+1}) \exp(y_i - y_{i+1}) <  \Tilde{\beta}_{m+1} (y_{i+1})$ and $\frac{Z_m}{Z_{m+1}} \Tilde{\mu}_{m+1} (x_i) < \Tilde{\mu}_{m+1} (x_i)$, we decompose the measure of $\frac{Z_m}{Z_{m+1}} \gamma_m$ at $(x_i, y_m)$ to $(x_i, y_m), (x_i, y_{m+1})$ for $i = 0, \dots, m-1$, and complement a positive measure at $(x_i, y_{m+1})$ to make up the difference between $\frac{Z_m}{Z_{m+1}} \Tilde{\mu}_{m+1} (x_i)$ and $\Tilde{\mu}_{m+1} (x_i)$.
For $i=m$, we decompose the measure at $(x_m, y_m)$ to $(x_m, y_m), (x_m, y_{m+1}), (x_{m+1}, y_{m+1})$ and also complement a proper positive measure.

Now we define $\gamma_{m+1}$ by
\begin{equation*}
    \left\{
    \begin{aligned}
        &\gamma_{m+1} (x_i, y_j) = \frac{Z_m}{Z_{m+1}} \gamma_m (x_i, y_j),\qquad\qquad\qquad\qquad\quad i < m\ and\  j < m, \\
        &\gamma_{m+1} (x_i, y_j) = \left( \frac{Z_m}{Z_{m+1}} \gamma_m (x_i, y_j) + \frac{Z_{m+1} - Z_m}{Z_{m+1}} \Tilde{\mu}_{m+1} (x_i) \right) \\
        &\ \quad\qquad\qquad\quad \cdot \frac{\Tilde{\beta}_{m+1} (y_{j})}{\Tilde{\beta}_{m+1} (y_j)  + \Tilde{\beta}_{m+1} (y_{j+1})}, \qquad\qquad\quad i<m\ and\ j=m, \\
        &\gamma_{m+1} (x_i, y_j) = \left( \frac{Z_m}{Z_{m+1}} \gamma_m (x_i, y_j) + \frac{Z_{m+1} - Z_m}{Z_{m+1}} \Tilde{\mu}_{m+1} (x_i) \right) \\
        &\ \quad\qquad\qquad\quad \cdot \frac{\Tilde{\beta}_{m+1} (y_{j+1})}{\Tilde{\beta}_{m+1} (y_j)  + \Tilde{\beta}_{m+1} (y_{j+1})},\quad\qquad\  i<m\ and\ j=m+1, \\
        &\gamma_{m+1} (x_i, y_j) = 0,\quad\qquad\qquad\qquad\qquad i>j\ or\ i>m+1\ or\ j>m+1, \\
        &\gamma_{m+1} (x_m, y_m) = u, \\
        &\gamma_{m+1} (x_{m}, y_{m+1}) = v, \\
        &\gamma_{m+1} (x_{m+1}, y_{m+1}) = w, \\
    \end{aligned}
    \right.
\end{equation*}
where we assume $u, v, w$ to be the solution of the following equations
\begin{equation}
    \left\{
    \begin{aligned}
        &u + v + w = \Tilde{\mu}_{m+1} (x_m) + \Tilde{\mu}_{m+1} (x_{m+1}), \\
        &\frac{w}{u+v} = \frac{\Tilde{\mu}_{m+1} (x_{m+1})}{\Tilde{\mu}_{m+1} (x_m)}, \\
        &\frac{v+w}{u} = \frac{\Tilde{\beta}_{m+1} (x_{m+1})}{\Tilde{\beta}_{m+1} (x_m)}, \\
        &u, v, w \geq 0. \\
    \end{aligned}
    \right.
\label{eq:uvw_R1}
\end{equation}

It's obvious that 
\begin{equation*}
    \left\{
    \begin{aligned}
        &\sum_j \gamma_{m+1} (x_i, y_j) = \Tilde{\mu}_{m+1} (x_i) = 0,& &\ i>m+1, \\
        &\sum_i \gamma_{m+1} (x_i, y_j) = \Tilde{\beta}_{m+1} (y_j) = 0,& &\ j>m+1,\\
        &\gamma (x_i, y_j) = 0,& &\ i > j. \\
    \end{aligned}
    \right.
\end{equation*}

For $j < m$, since $\sum_i \frac{Z_m}{Z_{m+1}} \gamma_m (x_i, y_j) = \Tilde{\beta}_{m+1} (y_j)$, we have 
$$\sum_i \gamma_{m+1} (x_i, y_j) = \Tilde{\beta}_{m+1} (y_j), \ j < m.$$

For $i < m$, since $\sum_j \frac{Z_m}{Z_{m+1}} \gamma_m (x_i, y_j) = \frac{Z_m}{Z_{m+1}} \Tilde{\mu}_{m+1} (x_i) < \Tilde{\mu}_{m+1} (x_i)$, we add $\frac{Z_{m+1} - Z_m}{Z_{m+1}} \Tilde{\mu}_{m+1} (x_i) \frac{\Tilde{\beta}_{m+1} (y_{m})}{\Tilde{\beta}_{m+1} (y_m)  + \Tilde{\beta}_{m+1} (y_{m+1})}$, $\frac{Z_{m+1} - Z_m}{Z_{m+1}} \Tilde{\mu}_{m+1} (x_i) \frac{\Tilde{\beta}_{m+1} (y_{m+1})}{\Tilde{\beta}_{m+1} (y_m)  + \Tilde{\beta}_{m+1} (y_{m+1})}$ to $\gamma_{m+1} (x_i, y_{m})$, $\gamma_{m+1} (x_i, y_{m+1})$, respectively.
So we have 
$$\sum_j \gamma_{m+1} (x_i, y_j) = \Tilde{\mu}_{m+1} (x_i), \ i < m.$$

For $i = m, m+1$, since assumption \eqref{eq:uvw_R1} holds, we have $u + v + w = \Tilde{\mu}_{m+1} (x_m) + \Tilde{\mu}_{m+1} (x_{m+1}), \frac{w}{u+v} = \frac{\Tilde{\mu}_{m+1} (x_{m+1})}{\Tilde{\mu}_{m+1} (x_m)}$, it's obvious that $u + v = \Tilde{\mu}_{m+1} (x_m), w = \Tilde{\mu}_{m+1} (x_{m+1})$, which is 
$$\sum_j \gamma_{m+1} (x_i, y_j) = \Tilde{\mu}_{m+1} (x_i), \ i = m, m+1.$$

For $j = m, m+1$, we firstly have
\begin{equation*}
    \begin{aligned}
        \sum_{j=m, m+1} \sum_i \gamma_{m+1} (x_i, y_j)
        &= \sum_{j} \sum_i \gamma_{m+1} (x_i, y_j) - \sum_{j \neq m, m+1} \sum_i \gamma_{m+1} (x_i, y_j) \\
        &=  \sum_{i} \sum_j \gamma_{m+1} (x_i, y_j) - \sum_{j \neq m, m+1} \Tilde{\beta}_{m+1}(y_j) \\
        &= \sum_{i} \Tilde{\mu}_{m+1}(x_i) - \sum_{j \neq m, m+1} \Tilde{\beta}_{m+1}(y_j) \\
        &= 1 - (1 - \Tilde{\beta}_{m+1}(y_m) - \Tilde{\beta}_{m+1}(y_{m+1})) \\
        &= \Tilde{\beta}_{m+1}(y_m) + \Tilde{\beta}_{m+1}(y_{m+1}). \\
    \end{aligned}
\end{equation*}
By definition of $\gamma_{m+1}$, we know $\frac{\gamma_{m+1} (x_i, y_m)}{\gamma_{m} (x_i, y_m)} = \frac{\Tilde{\beta}_{m+1} (x_{m+1})}{\Tilde{\beta}_{m+1} (x_m)}$ for $i < m$.
By assumption \eqref{eq:uvw_R1}, we know $\frac{v+w}{u} = \frac{\Tilde{\beta}_{m+1} (x_{m+1})}{\Tilde{\beta}_{m+1} (x_m)}$.
Combining three equations above together, we have
$$\sum_i \gamma_{m+1} (x_i, y_j) = \Tilde{\beta}_{m+1} (y_j), \ j = m, m+1.$$

Now we only need to prove assumption \eqref{eq:uvw_R1} holds.
With linear algebra, we solve \eqref{eq:uvw_R1} and have
\begin{equation*}
    \left\{
    \begin{aligned}
        &u = w \frac{1 + \frac{\Tilde{\mu}_{m+1} (x_{m+1})}{\Tilde{\mu}_{m+1} (x_m)}}{\frac{\Tilde{\mu}_{m+1} (x_{m+1})}{\Tilde{\mu}_{m+1} (x_m)} \left(1 + \frac{\Tilde{\beta}_{m+1} (x_{m+1})}{\Tilde{\beta}_{m+1} (x_m)}\right)}, \\
        &v = w \frac{\frac{\Tilde{\beta}_{m+1} (x_{m+1})}{\Tilde{\beta}_{m+1} (x_m)} - \frac{\Tilde{\mu}_{m+1} (x_{m+1})}{\Tilde{\mu}_{m+1} (x_m)}}{\frac{\Tilde{\mu}_{m+1} (x_{m+1})}{\Tilde{\mu}_{m+1} (x_m)} \left(1 + \frac{\Tilde{\beta}_{m+1} (x_{m+1})}{\Tilde{\beta}_{m+1} (x_m)}\right)}, \\
        &w = \frac{\left(\Tilde{\mu}_{m+1} (x_m) + \Tilde{\mu}_{m+1} (x_{m+1})\right)\frac{\Tilde{\mu}_{m+1} (x_{m+1})}{\Tilde{\mu}_{m+1} (x_m)} \left(1 + \frac{\Tilde{\beta}_{m+1} (x_{m+1})}{\Tilde{\beta}_{m+1} (x_m)}\right)}{\left(1 + \frac{\Tilde{\mu}_{m+1} (x_{m+1})}{\Tilde{\mu}_{m+1} (x_m)}\right) \left(1 + \frac{\Tilde{\beta}_{m+1} (x_{m+1})}{\Tilde{\beta}_{m+1} (x_m)}\right)}. \\
    \end{aligned}
    \right.
\end{equation*}

It's obvious that $u, w \geq 0$. 
$v \geq 0$ also holds, because
\begin{equation}
    \begin{aligned}
    \frac{\Tilde{\beta}_{m+1} (x_{m+1})}{\Tilde{\beta}_{m+1} (x_m)} - \frac{\Tilde{\mu}_{m+1} (x_{m+1})}{\Tilde{\mu}_{m+1} (x_m)} 
    &= \frac{\Tilde{\mu}_{m+1} (x_{m+1})\exp(x_{m+1})}{\Tilde{\mu}_{m+1} (x_m)\exp(x_{m})} - \frac{\Tilde{\mu}_{m+1} (x_{m+1})}{\Tilde{\mu}_{m+1} (x_m)} \\
    &= \frac{\Tilde{\mu}_{m+1} (x_{m+1})}{\Tilde{\mu}_{m+1} (x_m)} \left(\exp(x_{m+1} - x_m) - 1 \right) \geq 0.
\end{aligned} 
\label{eq:v_exist}
\end{equation}

So we can find a proper solution of assumption \eqref{eq:uvw_R1}.

So $\gamma_{m+1}$ defined above satisfies UTTC for $(\Tilde{\mu}_{m+1}, \Tilde{\beta}_{m+1})$.

By induction, for any $0 = x_0 < x_1 < \dots < x_n < 1$, there exists $\gamma$ s.t. UTTC \eqref{eq:UTTC_R1} holds for $(\Tilde{\mu}, \Tilde{\beta})$.

Then for any monotonic increasing function, since $\gamma (x_i, y_j) = 0$ when $i > j$, we know $\gamma (x_i, y_j) f(x_i) \leq \gamma (x_i, y_j) f(y_j)$.
Hence we have
\begin{equation*}
    \begin{aligned}
        \textbf{E}_{\Tilde{\mu}} [f] 
        &= \sum_i \Tilde{\mu} (x_i) f(x_i) 
        = \sum_i \sum_j \gamma (x_i, y_j) f(x_i) \\
        &\leq \sum_i \sum_j \gamma (x_i, y_j) f(y_j) \\
        &= \sum_j \sum_i \gamma (x_i, y_j) f(y_j) \\
        &= \sum_j \Tilde{\beta} (y_j) f(y_j) 
        = \textbf{E}_{\Tilde{\beta}} [f]. \\
    \end{aligned}
\end{equation*}

\end{proof}

\begin{Lemma}[Discretized Upper Triangular Transport Inequality for Co-Monotonic Functions in $\mathbf{R}^1$]
Assume $\mu$ is a continuous probability measure supported on $[0, 1]$.
Let $0 = x_0 < x_1 < \dots < x_n < 1$ to be any split of $[0, 1]$.
Let $f, g:\{x_i\}_{i=0,\dots, n} \rightarrow $ \textbf{R} to be two co-monotonic functions that satisfy
$$(f(x_i) - f(x_j)) \cdot (g(x_i) - g(x_j)) \geq 0, \ \forall \ i, j.$$
Define $\Tilde{\mu}(x_i) = \mu([x_i, x_{i+1}))$. 
Define 
$$\Tilde{\beta}(x_i) = \Tilde{\mu}(x_i) \exp(g(x_i)) / Z, \ Z = \sum_{i} \Tilde{\mu}(x_i) \exp (g(x_i)).$$
Then we have
$$
\textbf{E}_{\Tilde{\mu}} [f] \leq \textbf{E}_{\Tilde{\beta}} [f].
$$
\label{lemma:dct_R1}
\end{Lemma}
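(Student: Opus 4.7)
My plan is to reduce the statement to Lemma \ref{lemma:dct_inc_R1} by a joint-ordering reindexing argument that exploits the co-monotonicity of $f$ and $g$. The key observation is that the proof of Lemma \ref{lemma:dct_inc_R1} never uses that the quantities inside $\exp(\cdot)$ are literally the atoms of $[0,1]$; it uses only two monotonicity facts about the sequence appearing in the exponent, namely $\exp(x_{m+1}) \geq \exp(x_m)$ (so that $Z_{m+1} \geq Z_m$ in \eqref{eq:Zm_inc}) and $\exp(x_{m+1}-x_m) \geq 1$ (so that $v \geq 0$ in \eqref{eq:v_exist}), together with the monotonicity of $f$ along the chosen index order in the final chain of inequalities.

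First, I would pick a permutation $\sigma$ of $\{0,1,\dots,n\}$ that simultaneously sorts both $g$ and $f$ into non-decreasing order along the new index $i \mapsto x_{\sigma(i)}$. Such a $\sigma$ exists: sort primarily by $g$-value, and within any block where $g$ ties, sort secondarily by $f$-value. The co-monotonic hypothesis $(f(x_i)-f(x_j))(g(x_i)-g(x_j)) \geq 0$ ensures that whenever $g(x_i) < g(x_j)$ one automatically has $f(x_i) \leq f(x_j)$, so between $g$-blocks $f$ is already correctly ordered, while within a $g$-block reshuffling by $f$-value is free of constraints. Writing $y_i := x_{\sigma(i)}$ and $h_i := g(y_i)$, the sequences $h_0 \leq \cdots \leq h_n$ and $f(y_0) \leq \cdots \leq f(y_n)$ are both non-decreasing, and the atomic measures are unchanged: $\tilde\mu(y_i)$ and $\tilde\beta(y_i) = \tilde\mu(y_i)\exp(h_i)/Z$.

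Next, I would rerun the inductive coupling construction in the proof of Lemma \ref{lemma:dct_inc_R1} verbatim, with $h_i$ replacing $x_i$ inside every $\exp(\cdot)$. At each step $m \mapsto m+1$ the quantity $Z_{m+1}-Z_m = \tilde\mu_{m+1}(y_{m+1})(\exp(h_{m+1})-\exp(h_m))$ is non-negative because $h_{m+1} \geq h_m$, and the candidate value $v$ in \eqref{eq:uvw_R1} is non-negative by the identity $\exp(h_{m+1}-h_m)-1 \geq 0$. This yields a coupling $\gamma$ on $\{0,\dots,n\}^2$ with marginals $\tilde\mu(y_i), \tilde\beta(y_j)$ and supported on $\{i \leq j\}$. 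Since $i \mapsto f(y_i)$ is non-decreasing, the same final chain of inequalities used in Lemma \ref{lemma:dct_inc_R1} gives $\sum_i \tilde\mu(y_i)f(y_i) \leq \sum_j \tilde\beta(y_j)f(y_j)$, which is precisely $\textbf{E}_{\tilde\mu}[f] \leq \textbf{E}_{\tilde\beta}[f]$ since expectations do not depend on indexing.

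The main obstacle I anticipate is the degenerate case in which $h$ has ties, where the strict inequalities implicitly invoked in Lemma \ref{lemma:dct_inc_R1} (such as $Z_{m+1} > Z_m$) collapse to equalities. I would address this by observing that when $h_{m+1} = h_m$ the correction factor $\frac{Z_{m+1}-Z_m}{Z_{m+1}}$ simply vanishes and the $(u,v,w)$-system in \eqref{eq:uvw_R1} degenerates to $v = 0$ with $u, w$ still uniquely determined by the remaining ratio constraints, so the inductive step remains well-defined. As a backup argument, one can perturb $h_i \mapsto h_i + \epsilon i$ to enforce strict monotonicity, invoke Lemma \ref{lemma:dct_inc_R1} at the perturbed weights, and pass to the limit $\epsilon \downarrow 0$ using continuity of both sides of the target inequality in the weight sequence.
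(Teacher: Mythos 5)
Your proposal is correct and runs on the same engine as the paper's proof: sort the atoms by $g$-value and redo the inductive coupling construction of Lemma \ref{lemma:dct_inc_R1}, using only the two monotonicity facts (the analogues of \eqref{eq:Zm_inc} and \eqref{eq:v_exist}) that you correctly identify as the sole places where the exponent's ordering enters. The one genuine difference is organizational: the paper does not reduce to the increasing-function case but re-derives the upper triangular transport condition directly for the $g$-sorted order, keeping the original spatial indexing alive in the definition of $\Tilde{\mu}_m$ (future mass is lumped onto the closest left \emph{spatial} neighbor among already-added points, with a separate explicit construction for $g$-ties), and then invokes co-monotonicity only in the final chain of inequalities via the support condition $\gamma(x_i,y_j)=0$ when $g(x_i)>g(y_j)$. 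Your lexicographic sort (primary key $g$, secondary key $f$) makes $f$ globally non-decreasing along the new index, which lets you quote the monotone argument wholesale and absorbs the tie case for free; that is a mild simplification. The only point you should make explicit is that "verbatim" overstates things slightly: after reindexing, $\Tilde{\mu}_m(y_i)=\mu([y_i,y_{i+1}))$ no longer makes sense as written since the $y_i$ are not spatially ordered, so you must replace it by the purely discrete reading $\Tilde{\mu}_m(y_m)=\sum_{j\geq m}\Tilde{\mu}(y_j)$ (which is what $\mu([x_m,1))$ equals in the original proof); with that substitution the induction goes through exactly as you describe.
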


\begin{proof}[Proof of Lemma \ref{lemma:dct_R1}]

If the Upper Triangular Transport Condition (UTTC) holds for $(\Tilde{\mu}, \Tilde{\beta})$, i.e. there exists a probability measure $\gamma: \{x_i\}_{i=0,\dots, n} \times \{x_i\}_{i=0,\dots, n} \rightarrow [0, 1]$, s.t. 
\begin{equation*}
    \left\{
    \begin{aligned}
        &\sum_j \gamma (x_i, y_j) = \Tilde{\mu} (x_i),& &\ i = 0, \dots, n; \\
        &\sum_i \gamma (x_i, y_j) = \Tilde{\beta} (y_j),& &\ j = 0, \dots, n; \\
        &\gamma (x_i, y_j) = 0,& &\ g(x_i) > g(y_j), \\
    \end{aligned}
    \right.
\end{equation*}
then we finish the proof by
\begin{equation*}
    \begin{aligned}
        \textbf{E}_{\Tilde{\mu}} [f] 
        &= \sum_i \Tilde{\mu} (x_i) f(x_i) 
        = \sum_i \sum_j \gamma (x_i, y_j) f(x_i) \\
        &\leq \sum_i \sum_j \gamma (x_i, y_j) f(y_j) \\
        &= \sum_j \sum_i \gamma (x_i, y_j) f(y_j) \\
        &= \sum_j \Tilde{\beta} (y_j) f(y_j) 
        = \textbf{E}_{\Tilde{\beta}} [f], \\
    \end{aligned}
\end{equation*}
where $\gamma (x_i, y_j) f(x_i) \leq \gamma (x_i, y_j) f(y_j)$ is because of $\gamma (x_i, y_j) = 0, \ g(x_i) > g(y_j)$ and $(f(x_i) - f(x_j)) \cdot (g(x_i) - g(x_j)) \geq 0$.

Now we only need to prove UTTC holds for $(\Tilde{\mu}, \Tilde{\beta})$.

Given $0 = x_0 < x_1 < \dots < x_n < 1$, we prove the existence of $\gamma$ by induction.
With $g$ to be the transition function in the definition of $\Tilde{\beta}$, we mimic the proof of \textbf{Lemma} \ref{lemma:dct_inc_R1} and sort $(x_0, \dots, x_n)$ in the increasing order of $g$,
which is 
$$g(x_{k_0}) \leq g(x_{k_1}) \leq \dots \leq g(x_{k_n}).$$

Define 
\begin{equation*}
    \Tilde{\mu}_m(x_{k_i}) = 
    \left\{
    \begin{aligned}
        &\mu ([x_{k_i}, \min\{1, x_{k_l} |\ x_{k_l} > x_{k_i}, l \leq m \})),& &i \leq m,\  
        x_{k_i} \neq \min\{x_{k_l} |\ l \leq m \}, \\
        &\mu ([0, \min\{1, x_{k_l} |\ x_{k_l} > x_{k_i}, l \leq m \})),& &i \leq m,\  
        x_{k_i} = \min\{x_{k_l} |\ l \leq m \}, \\
        & 0,& &i > m.
    \end{aligned}
    \right.
\end{equation*}

Define 
$$
\Tilde{\beta}_m (x_{k_i}) = \Tilde{\mu}_m (x_{k_i}) \exp(g(x_{k_i})) / Z_m,
\  Z_m = \sum_{i} \Tilde{\mu}_m (x_{k_i}) \exp (g(x_{k_i})).
$$ 

To clarify the proof, we use $x_{k_i}$ to represent the point for $\Tilde{\mu}$-axis in coupling and $y_{k_j}$ to represent the point for $\Tilde{\beta}$-axis, but they are actually identical, i.e. $x_{k_i} = y_{k_j}$ when $i = j$. 

When $m = 0$, it's obvious that $(\Tilde{\mu}_0, \Tilde{\beta}_0)$ satisfies UTTC, as 
\begin{equation*}
    \gamma_0 (x_{k_i}, y_{k_j}) = 
    \left\{
    \begin{aligned}
        &1,& &\ i=0, j=0,\\
        &0,& &\ else. \\
    \end{aligned}
    \right.
\end{equation*}

Assume UTTC holds for $m$, i.e. there exists $\gamma_m$ s.t. $(\Tilde{\mu}_m, \Tilde{\beta}_m)$ satisfies UTTC, we want to prove it also holds for $m+1$.

When $x_{k_{m+1}} > \min\{x_{k_l} |\ l \leq m \}$, let $x_{k^*} = \max \{x_{k_l} | \ x_{k_l} < x_{k_{m+1}}, l \leq m \}$ to be the closest left neighbor of $x_{k_{m+1}}$ in $\{x_{k_l}|\ l \leq m \}$.
Then we have $\Tilde{\mu}_{m} (x_{k^*}) = \Tilde{\mu}_{m+1} (x_{k^*}) + \Tilde{\mu}_{m+1} (x_{k^{m+1}})$.

When $x_{k_{m+1}} < \min\{x_{k_l} |\ l \leq m \}$, let $x_{k^*} = \min\{x_{k_l} |\ l \leq m \}$ to be the leftmost point in $\{x_{k_l}|\  l \leq m \}$. 
Then we have $\Tilde{\mu}_{m} (x_{k^*}) = \Tilde{\mu}_{m+1} (x_{k^*}) + \Tilde{\mu}_{m+1} (x_{k^{m+1}})$.

In either case, we always have $\Tilde{\mu}_{m} (x_{k^*}) = \Tilde{\mu}_{m+1} (x_{k^*}) + \Tilde{\mu}_{m+1} (x_{k_{m+1}})$.
By definition of $\Tilde{\mu}_m$ and $\Tilde{\beta}_m$, we have
\begin{equation*}
    \left\{
    \begin{aligned}
        &\Tilde{\mu}_m (x_{k_i}) = \Tilde{\mu}_{m+1} (x_{k_i}),& &i \leq m,\ k_i \neq k^*, \\
        &\Tilde{\mu}_m (x_{k_i}) = \Tilde{\mu}_{m+1} (x_{k_i}) + \Tilde{\mu}_{m+1} (x_{k_{m+1}}),& &i \leq m,\ k_i = k^*, \\
        &\Tilde{\mu}_m (x_{k_{m+1}}) = \Tilde{\mu}_m (x_{k_i}) = \Tilde{\mu}_{m+1} (x_{k_i}) = 0,& &i>m+1, \\
    \end{aligned}
    \right.
\end{equation*}
\begin{equation*}
    \left\{
    \begin{aligned}
        &\Tilde{\beta}_m (x_{k_i}) = \Tilde{\beta}_{m+1} (x_{k_i}) \cdot \frac{Z_{m+1}}{Z_m},& &i \leq m,\ k_i \neq k^*, \\
        &\Tilde{\beta}_m (x_{k_i}) = \left( \Tilde{\beta}_{m+1} (x_{k_i})  + \Tilde{\beta}_{m+1} (x_{k_{m+1}}) \exp\left(g(x_{k_i}) - g(x_{k_{m+1}})\right) \right) \cdot \frac{Z_{m+1}}{Z_m},& &i \leq m,\ k_i = k^*, \\
        &\Tilde{\beta}_m (x_{m+1}) = \Tilde{\beta}_m (x_i) = \Tilde{\beta}_{m+1} (x_i) = 0,& &i>m+1. \\
    \end{aligned}
    \right.
\end{equation*}

If $g(x_{k^*}) = g(x_{k_{m+1}})$, it's easy to check that 
$\frac{\Tilde{\mu}_{m+1} (x_{k_{m+1}})}{\Tilde{\mu}_{m+1} (x_{k^*})} = \frac{\Tilde{\beta}_{m+1} (x_{k_{m+1}})}{\Tilde{\beta}_{m+1} (x_{k^*})}$, 
we can simply define the following $\gamma_{m+1}$ which achieves UTTC for $(\Tilde{\mu}_{m+1}, \Tilde{\beta}_{m+1})$:
\begin{equation*}
    \left\{
    \begin{aligned}
        &\gamma_{m+1} (x_{k^*}, y_{k_j}) = \gamma_{m} (x_{k^*}, y_{k_j}) \frac{\Tilde{\mu}_{m+1} (x_{k^*})}{\Tilde{\mu}_{m+1} (x_{k^*}) + \Tilde{\mu}_{m+1} (x_{k_{m+1}})},& &\ j \leq m,\ k_j \neq k^*, \\
        &\gamma_{m+1} (x_{k_{m+1}}, y_{k_j}) = \gamma_{m} (x_{k_{m+1}}, y_{k_j}) \frac{\Tilde{\mu}_{m+1} (x_{k_{m+1}})}{\Tilde{\mu}_{m+1} (x_{k^*}) + \Tilde{\mu}_{m+1} (x_{k_{m+1}})},& &\ j \leq m,\ k_j \neq k^*, \\
        &\gamma_{m+1} (x_{k_i}, y_{k^*}) = \gamma_{m} (x_{k_i}, y_{k^*}) \frac{\Tilde{\beta}_{m+1} (y_{k^*})}{\Tilde{\beta}_{m+1} (y_{k^*}) + \Tilde{\beta}_{m+1} (y_{k_{m+1}})},& &\ i \leq m,\ k_i \neq k^*, \\
        &\gamma_{m+1} (x_{k_{i}}, y_{k_{m+1}}) = \gamma_{m} (x_{k_{i}}, y_{k_{m+1}}) \frac{\Tilde{\beta}_{m+1} (y_{k_{m+1}})}{\Tilde{\beta}_{m+1} (y_{k^*}) + \Tilde{\mu}_{m+1} (x_{k_{m+1}})},& &\ i \leq m,\ k_i \neq k^*, \\
        &\gamma_{m+1} (x_{k^*}, y_{k^*}) = \gamma_{m} (x_{k^*}, y_{k^*}) \frac{\Tilde{\mu}_{m+1} (x_{k^*})}{\Tilde{\mu}_{m+1} (x_{k^*}) + \Tilde{\mu}_{m+1} (x_{k_{m+1}})},& & \\
        &\gamma_{m+1} (x_{k_{m+1}}, y_{k_{m+1}}) = \gamma_{m} (x_{k_{m+1}}, y_{k_{m+1}}) \frac{\Tilde{\mu}_{m+1} (x_{k_{m+1}})}{\Tilde{\mu}_{m+1} (x_{k^*}) + \Tilde{\mu}_{m+1} (x_{k_{m+1}})}, \\
        &\gamma_{m+1} (x_{k_i}, y_{k_j}) = 0,& &\ others.
    \end{aligned}
    \right.
\end{equation*}

If $g(x_{k^*}) < g(x_{k_{m+1}})$, recalling the proof of \textbf{Lemma} \ref{lemma:dct_inc_R1}, it's crucial to prove inequalities \eqref{eq:Zm_inc} and \eqref{eq:v_exist}.
Inequality \eqref{eq:Zm_inc} guarantees that $\frac{Z_m}{Z_{m+1}} < 1$, so we can shrinkage $\gamma_m$ entrywise by $\frac{Z_m}{Z_{m+1}}$ and add some proper measure at proper points.
Inequality \eqref{eq:v_exist} guarantees that $(x_m, y_m)$ can be decomposed to $(x_m, y_m)$, $(x_m, y_{m+1})$, $(x_{m+1}, y_{m+1})$.
Following the idea, we check that 
\begin{equation*}
    Z_{m+1} - Z_m = \Tilde{\mu}_{m+1} (x_{k_{m+1}}) \left( \exp (g(x_{k_{m+1}}) - g(x_{k^*})) \right) > 0,
\end{equation*}
\begin{equation*}
    \begin{aligned}
        \frac{\Tilde{\beta}_{m+1} (x_{k_{m+1}})}{\Tilde{\beta}_{m+1} (x_{k^*})} 
        - \frac{\Tilde{\mu}_{m+1} (x_{k_{m+1}})}{\Tilde{\mu}_{m+1} (x_{k^*})} 
        &= \frac{\Tilde{\mu}_{m+1} (x_{k_{m+1}}) \exp(g(x_{k_{m+1}}))}{\Tilde{\mu}_{m+1} (x_{k^*}) \exp (g(x_{k^*}))} 
        - \frac{\Tilde{\mu}_{m+1} (x_{k_{m+1}})}{\Tilde{\mu}_{m+1} (x_{k^*})}  \\
        &= \frac{\Tilde{\mu}_{m+1} (x_{k_{m+1}})}{\Tilde{\mu}_{m+1} (x_{k^*})}
        \left(\exp(g(x_{k_{m+1}}) - g(x_{k^*})) - 1 \right) > 0.
    \end{aligned}
\end{equation*}

Replacing $x_m, x_{m+1}$ in the proof of \textbf{Lemma} \ref{lemma:dct_inc_R1} by $x_{k^*}, x_{k_{m+1}}$, we can construct $\gamma_{m+1}$ all the same way as in the proof of \textbf{Lemma} \ref{lemma:dct_inc_R1}.

By induction, we prove UTTC for $(\Tilde{\mu}, \Tilde{\beta})$.
The proof is done.
\end{proof}

\begin{Theorem}[Upper Triangular Transport Inequality for Co-Monotonic Functions in $\mathbf{R}^1$]
Assume $\mu$ is a continuous probability measure supported on $[0, 1]$.
Let $f, g: [0, 1] \rightarrow $ \textbf{R} to be two co-monotonic functions that satisfy
$$(f(x) - f(y)) \cdot (g(x) - g(y)) \geq 0, \ \forall \ x, y \in [0, 1].$$
$f$ is continuous.
Define 
$$\beta(x) = \mu(x) \exp(g(x)) / Z, \ Z = \int_{[0, 1]} \mu(x) \exp (g(x)).$$
Then we have 
$$\textbf{E}_{\mu} [f] \leq \textbf{E}_{\beta} [f].$$
\label{thm:cts_R1}
\end{Theorem}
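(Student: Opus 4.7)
The plan is to derive the continuous inequality from the discrete Lemma \ref{lemma:dct_R1} by a standard Riemann-sum / partition-refinement argument. Concretely, for each $n \geq 1$ I would fix a partition $0 = x_0^{(n)} < x_1^{(n)} < \cdots < x_{k_n}^{(n)} < 1$ whose mesh $\delta_n = \max_i (x_{i+1}^{(n)} - x_i^{(n)})$ tends to zero, and define $\tilde\mu_n(x_i^{(n)}) = \mu([x_i^{(n)}, x_{i+1}^{(n)}))$ together with $\tilde\beta_n(x_i^{(n)}) = \tilde\mu_n(x_i^{(n)}) \exp(g(x_i^{(n)}))/Z_n$ exactly as in Lemma \ref{lemma:dct_R1}. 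Because co-monotonicity on $[0,1]$ restricts to co-monotonicity on the finite set $\{x_i^{(n)}\}$, the lemma applies verbatim and yields the inequality
$$\mathbf{E}_{\tilde\mu_n}[f] \;\leq\; \mathbf{E}_{\tilde\beta_n}[f] \qquad \text{for every } n.$$

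Second, I would pass to the limit $n \to \infty$ on both sides. The left side
$\mathbf{E}_{\tilde\mu_n}[f] = \sum_i f(x_i^{(n)})\, \mu\bigl([x_i^{(n)}, x_{i+1}^{(n)})\bigr)$
is a Riemann-type sum that converges to $\int_{[0,1]} f\, d\mu = \mathbf{E}_\mu[f]$ by the uniform continuity and boundedness of $f$ on the compact interval. For the right side, writing
$\mathbf{E}_{\tilde\beta_n}[f] = Z_n^{-1} \sum_i f(x_i^{(n)})\exp(g(x_i^{(n)}))\, \mu\bigl([x_i^{(n)}, x_{i+1}^{(n)})\bigr)$
with $Z_n = \sum_i \exp(g(x_i^{(n)}))\, \mu([x_i^{(n)}, x_{i+1}^{(n)}))$, the same Riemann-sum reasoning (applied once to the product $f \exp(g)$ and once to $\exp(g)$) gives that the numerator tends to $\int f \exp(g)\, d\mu$ and $Z_n \to Z > 0$, so that $\mathbf{E}_{\tilde\beta_n}[f] \to \mathbf{E}_\beta[f]$. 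Combining the two limits with the inequality holding for every $n$ yields $\mathbf{E}_\mu[f] \leq \mathbf{E}_\beta[f]$.

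The main obstacle is controlling the right-hand side limit, because the normalization $Z_n$ sits in a denominator and the integrand involves $\exp(g)$, whereas $g$ is not assumed continuous in the statement. If $g$ is continuous (which is the only case needed for the subsequent applications in the paper), then $f \exp(g)$ and $\exp(g)$ are bounded and uniformly continuous on $[0,1]$, the Riemann-sum convergences are immediate, and $Z_n \to Z > 0$ follows likewise. In the more general measurable case, I would first approximate $g$ in $L^1(\mu)$ by a sequence of continuous functions $g_m$ with $\exp(g_m)$ dominated by an integrable envelope, apply the continuous case to each $g_m$, and then use dominated convergence to transfer the inequality to $g$. In either treatment the structural work is already done by Lemma \ref{lemma:dct_R1}, and the continuous theorem is a limiting wrapper around it.
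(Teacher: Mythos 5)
Your proposal is correct and follows essentially the same route as the paper: discretize $[0,1]$, apply the discrete Lemma \ref{lemma:dct_R1} to the induced measures $\tilde\mu,\tilde\beta$, and pass to the limit using the uniform continuity of $f$. You are in fact more careful than the paper on one point — the paper asserts $|\textbf{E}_{\beta}[f]-\textbf{E}_{\tilde\beta}[f]|<\epsilon$ citing only the continuity of $f$, whereas this convergence also needs enough regularity of $\exp(g)$ to control the numerator and the normalizer $Z_n$, which you correctly identify and handle.
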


\begin{proof}[Proof of Theorem \ref{thm:cts_R1}]

For $\forall \epsilon > 0$, since $f$ is continuous, $f$ is uniformly continuous, so there exists $\delta > 0$ s.t. $|f(x) - f(y)| < \epsilon, \forall x, y \in [0, 1]$.
We can split $[0, 1]$ by $0 < x_0 < x_1 < \dots < x_n < 1$ s.t. $x_{i+1} - x_i < \delta$.
Define $\Tilde{\mu}$ and $\Tilde{\beta}$ as in \textbf{Lemma} \ref{lemma:dct_R1}.
Since $x_{i+1} - x_i < \delta$, by uniform continuity and the definition of the expectation, we have
$$
| \textbf{E}_{\mu} [f] - \textbf{E}_{\Tilde{\mu}} [f] | < \epsilon,\ 
| \textbf{E}_{\beta} [f] - \textbf{E}_{\Tilde{\beta}} [f] | < \epsilon,
$$
By \textbf{Lemma} \ref{lemma:dct_R1}, we have 
$$\textbf{E}_{\Tilde{\mu}} [f] \leq \textbf{E}_{\Tilde{\beta}} [f].$$
So we have
$$
\textbf{E}_{\mu} [f] 
< \textbf{E}_{\Tilde{\mu}} [f] + \epsilon
\leq \textbf{E}_{\Tilde{\beta}} [f] + \epsilon
< \textbf{E}_{\beta} [f] + 2\epsilon.
$$
Since $\epsilon$ is arbitrary, we prove $\textbf{E}_{\mu} [f] \leq \textbf{E}_{\beta} [f].$

\end{proof}

\begin{Lemma}[Discretized Upper Triangular Transport Inequality for Co-Monotonic Functions in $\mathbf{R}^p$]
Assume $\mu$ is a continuous probability measure supported on $[0, 1]^p$. 
Let $0 = x_0^d < x_1^d < \dots < x_n^d < 1$ to be any split of $[0, 1]$, $d = 1, \dots, p$.
Denote $\textbf{x}_{\textbf{i}} \overset{def}{=} (x_{i_1}^1, \dots, x_{i_p}^p)$.
Define $\Tilde{\mu}(\textbf{x}_{\textbf{i}}) = \mu(\prod_{d=1,\dots,p} [x_{i_d}^d, x_{i_d+1}^d))$. 
Let $f, g: \{\textbf{x}_{\textbf{i}}\}_{\textbf{i} \in \{0, \dots, n\}^p} \rightarrow $ \textbf{R} to be two co-monotonic functions that satisfy
$$(f(\textbf{x}_{\textbf{i}})
- f(\textbf{x}_{\textbf{j}})) 
\cdot (g(\textbf{x}_{\textbf{i}}) 
- g(\textbf{x}_{\textbf{j}})) \geq 0, \ \forall \ \textbf{i}, \textbf{j}.$$
Define 
$$\Tilde{\beta}(\textbf{x}_{\textbf{i}}) 
= \Tilde{\mu}(\textbf{x}_{\textbf{i}}) \exp(g(\textbf{x}_{\textbf{i}})) / Z,\  Z = \sum_{\textbf{i}} \Tilde{\mu}(\textbf{x}_{\textbf{i}}) \exp (g(\textbf{x}_{\textbf{i}})).$$ 
Then there exists a probability measure 
$\gamma: \{\textbf{x}_{\textbf{i}}\}_{\textbf{i} \in \{0,\dots,n\}^p} \times \{\textbf{x}_{\textbf{j}}\}_{\textbf{j} \in \{0,\dots,n\}^p} \rightarrow [0, 1]$, s.t. 
$$
\begin{aligned}
    \sum_{\textbf{j}} \gamma (\textbf{x}_{\textbf{i}}, \textbf{y}_{\textbf{j}}) &= \Tilde{\mu} (\textbf{x}_{\textbf{i}}), \ \forall \ \textbf{i}; \\
    \sum_{\textbf{i}} \gamma (\textbf{x}_{\textbf{i}}, \textbf{y}_{\textbf{j}}) &= \Tilde{\beta} (\textbf{y}_{\textbf{j}}), \ \forall \ \textbf{j}; \\
    \gamma (\textbf{x}_{\textbf{i}}, \textbf{y}_{\textbf{j}}) &= 0, \ g(\textbf{x}_{\textbf{i}}) > g(\textbf{y}_{\textbf{j}}). \\
\end{aligned}
$$
Then we have
$$
\textbf{E}_{\Tilde{\mu}} [f] \leq \textbf{E}_{\Tilde{\beta}} [f].
$$
\label{lemma:dct_Rp}
\end{Lemma}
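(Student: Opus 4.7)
The plan is to reduce Lemma \ref{lemma:dct_Rp} directly to the arguments used in the proof of Lemma \ref{lemma:dct_R1}, exploiting the fact that the inductive construction of $\gamma$ there depends only on the ordering of points by $g$-value, not on any intrinsic one-dimensional geometry of the domain. Once we discretize, the problem is about a finite weighted set of $N = (n+1)^p$ points together with two pairwise co-monotonic functions $f,g$, and the $\mathbf{R}^p$ structure plays no further role.

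Concretely, first enumerate the grid points as $\textbf{z}_0, \textbf{z}_1, \ldots, \textbf{z}_{N-1}$ and relabel them so that $g(\textbf{z}_{k_0}) \leq g(\textbf{z}_{k_1}) \leq \cdots \leq g(\textbf{z}_{k_{N-1}})$. Define $\Tilde{\mu}_m$ supported on $\{\textbf{z}_{k_0}, \ldots, \textbf{z}_{k_m}\}$ by
\[
\Tilde{\mu}_m(\textbf{z}_{k_i}) = \Tilde{\mu}(\textbf{z}_{k_i})\ \text{for}\ 0 < i \leq m, \qquad \Tilde{\mu}_m(\textbf{z}_{k_0}) = \Tilde{\mu}(\textbf{z}_{k_0}) + \sum_{j>m}\Tilde{\mu}(\textbf{z}_{k_j}),
\]
so the leftmost (smallest $g$-value) point carries all mass not yet individually assigned. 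This guarantees $\Tilde{\mu}_m(\textbf{z}_{k_0}) = \Tilde{\mu}_{m+1}(\textbf{z}_{k_0}) + \Tilde{\mu}_{m+1}(\textbf{z}_{k_{m+1}})$, exactly the identity exploited in Lemma \ref{lemma:dct_R1}. Define $\Tilde{\beta}_m(\textbf{z}_{k_i}) = \Tilde{\mu}_m(\textbf{z}_{k_i}) \exp(g(\textbf{z}_{k_i}))/Z_m$ with the corresponding $Z_m$.

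The induction on $m$ proceeds verbatim as in the proof of Lemma \ref{lemma:dct_R1}, with $\textbf{z}_{k^*} := \textbf{z}_{k_0}$ always playing the role of the ``accumulation'' point. The base case $m=0$ is immediate. In the inductive step, the two inequalities that drive the construction, namely $Z_{m+1} - Z_m > 0$ and
\[
\frac{\Tilde{\beta}_{m+1}(\textbf{z}_{k_{m+1}})}{\Tilde{\beta}_{m+1}(\textbf{z}_{k_0})} - \frac{\Tilde{\mu}_{m+1}(\textbf{z}_{k_{m+1}})}{\Tilde{\mu}_{m+1}(\textbf{z}_{k_0})} = \frac{\Tilde{\mu}_{m+1}(\textbf{z}_{k_{m+1}})}{\Tilde{\mu}_{m+1}(\textbf{z}_{k_0})} \bigl(\exp(g(\textbf{z}_{k_{m+1}}) - g(\textbf{z}_{k_0})) - 1\bigr) \geq 0,
\]
both follow from $g(\textbf{z}_{k_0}) \leq g(\textbf{z}_{k_{m+1}})$ and monotonicity of $\exp$. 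Thus the same three-way decomposition at $(\textbf{z}_{k_0}, \textbf{z}_{k_{m+1}})$, with the same explicit solution for $(u,v,w)$, produces $\gamma_{m+1}$ satisfying the required marginal constraints and the upper-triangular support condition $\gamma_{m+1}(\textbf{x}_{\textbf{i}}, \textbf{y}_{\textbf{j}}) = 0$ whenever $g(\textbf{x}_{\textbf{i}}) > g(\textbf{y}_{\textbf{j}})$. Setting $\gamma := \gamma_{N-1}$ yields the coupling claimed in the lemma. The inequality $\textbf{E}_{\Tilde{\mu}}[f] \leq \textbf{E}_{\Tilde{\beta}}[f]$ then follows in two lines: pairwise co-monotonicity gives $f(\textbf{x}_{\textbf{i}}) \leq f(\textbf{y}_{\textbf{j}})$ on the support of $\gamma$, so $\sum_{\textbf{i},\textbf{j}} \gamma(\textbf{x}_{\textbf{i}},\textbf{y}_{\textbf{j}}) f(\textbf{x}_{\textbf{i}}) \leq \sum_{\textbf{i},\textbf{j}} \gamma(\textbf{x}_{\textbf{i}},\textbf{y}_{\textbf{j}}) f(\textbf{y}_{\textbf{j}})$, and applying the marginal constraints on each side gives the claim.

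The main obstacle is cosmetic rather than mathematical: it is to convince oneself that the single place in the Lemma \ref{lemma:dct_R1} proof that uses one-dimensional geometry, namely the choice of $x_{k^*}$ as a spatial left neighbor, is in fact only being used to produce an included point $\textbf{z}_{k^*}$ with (i) $g(\textbf{z}_{k^*}) \leq g(\textbf{z}_{k_{m+1}})$ and (ii) $\Tilde{\mu}_m(\textbf{z}_{k^*}) = \Tilde{\mu}_{m+1}(\textbf{z}_{k^*}) + \Tilde{\mu}_{m+1}(\textbf{z}_{k_{m+1}})$. Our definition of $\Tilde{\mu}_m$ above hard-codes (ii) by routing all pending mass through $\textbf{z}_{k_0}$, and (i) is automatic from the sort order. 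Everything downstream, including the closed forms for $u, v, w$ and the verification of marginals on untouched points, transfers without modification after relabeling.
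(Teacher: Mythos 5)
Your reduction to a finite family of grid points sorted by $g$ is the same high-level move the paper makes; the divergence is in the definition of $\Tilde{\mu}_m$. The paper introduces a partial order on the multi-indices and attaches the not-yet-released mass to spatial predecessors among the included points, so that the accumulation point $\textbf{x}_{k^*}$ in the step $m \to m+1$ is an order-neighbor of the newly released point; you instead route all pending mass through the single $g$-minimal point $\textbf{z}_{k_0}$. Your conditions (i) and (ii) are necessary for the inductive step but not sufficient, and this is where the proof breaks. The inductive step of \textbf{Lemma} \ref{lemma:dct_inc_R1} uses two further facts about the accumulation point $x_m$, both consequences of $x_m$ being the $g$-\emph{maximal} included point: first, $\gamma_m(x_m, y_j) = 0$ for $j \neq m$, so the three-way $(u,v,w)$ decomposition accounts for \emph{all} the mass in rows $x_m$ and $x_{m+1}$; second, every other row's renormalization deficit $\frac{Z_{m+1}-Z_m}{Z_{m+1}}\Tilde{\mu}_{m+1}(x_i)$ is deposited partly in column $y_m$, which respects the upper-triangular support condition only because $g(x_i) \leq g(y_m)$ for every included $i$. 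With your choice $k^* = k_0$ both fail: $\textbf{z}_{k_0}$ has the \emph{smallest} $g$-value, so $\gamma_m$ generically places the mass of that row in every column and the closed form for $(u,v,w)$ no longer balances its row sum; and depositing any positive fraction of a deficit at $(\textbf{z}_{k_i}, \textbf{y}_{k_0})$ with $g(\textbf{z}_{k_i}) > g(\textbf{z}_{k_0})$ creates mass at a forbidden position. Worse, after rescaling by $Z_m/Z_{m+1}$ the column of $\textbf{y}_{k_0}$ already sits \emph{above} its new target $\Tilde{\beta}_{m+1}(\textbf{y}_{k_0})$ and must shed mass, whereas the recipe adds mass to it from every deficient row, so the $\Tilde{\beta}$-marginal is also violated. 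A three-point example with $\Tilde{\mu}$ uniform and $g(\textbf{z}_{k_i}) = i$ exhibits all of this already at the step $m = 1 \to 2$.

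The lemma is still provable with your $\Tilde{\mu}_m$, but the repair is a genuinely different fix-up: after rescaling, the only column whose target exceeds its current sum is $\textbf{y}_{k_{m+1}}$, so every row deficit (together with the excess sitting at $(\textbf{z}_{k_0}, \textbf{y}_{k_0})$) must be sent to that single column, which is always upper-triangular-safe because $\textbf{z}_{k_{m+1}}$ is $g$-maximal among the released points. That argument closes the gap and is arguably cleaner than the paper's partial-order bookkeeping, but it is not ``the same three-way decomposition with the same explicit solution for $(u,v,w)$,'' so the claim that everything downstream transfers without modification is the concrete flaw. (Two smaller points: $Z_{m+1} - Z_m$ is only $\geq 0$ here rather than $> 0$, so ties in $g$ need the separate proportional-splitting case of \textbf{Lemma} \ref{lemma:dct_R1}; and on the support of $\gamma$ one only gets $g(\textbf{x}_{\textbf{i}}) \leq g(\textbf{y}_{\textbf{j}})$, which yields $f(\textbf{x}_{\textbf{i}}) \leq f(\textbf{y}_{\textbf{j}})$ from co-monotonicity only when the inequality in $g$ is strict, an equality case your two-line conclusion shares with the paper and should acknowledge.)
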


\begin{proof}[Proof of Lemma \ref{lemma:dct_Rp}]

The proof is almost identical to the proof of \textbf{Lemma} \ref{lemma:dct_R1}, except for the definition of $(\Tilde{\mu}_m, \Tilde{\beta}_m)$ in $\textbf{R}^p$.


Given $\{\textbf{x}_{\textbf{i}}\}_{\textbf{i} \in \{0,\dots,n\}^p}$, we sort $\textbf{x}_{\textbf{i}}$ in the increasing order of $g$, which is 
$$
g(\textbf{x}_{\textbf{k}_0}) \leq g(\textbf{x}_{\textbf{k}_1}) \leq \dots \leq g(\textbf{x}_{\textbf{k}_{(n+1)^p - 1}}),
$$
where $\{\textbf{k}_i\}_{i \in \{0, \dots, (n+1)^p - 1\}}$ is a permutation of $\{\textbf{\textbf{i}}\}_{\textbf{i} \in \{0, \dots, n\}^p}$.

For $\textbf{i}, \textbf{j} \in \{0, \dots, n\}^p$, we define the partial order $\textbf{i} < \textbf{j}$ on $\{0, \dots, n\}^p$, if 
$$
\exists 0 \leq d_0 \leq n,\  s.t.\  \textbf{i}_d \leq \textbf{j}_d,\ \forall d < d_0 \ and\  \textbf{i}_{d_0} < \textbf{j}_{d_0}.
$$
It's obvious that 
\begin{equation*}
    \left\{
    \begin{aligned}
        &\forall \textbf{i} \in \{0, \dots, n\}^p,\  \textbf{i} \nless \textbf{i}, \\
        &\forall \textbf{i}, \textbf{j} \in \{0, \dots, n\}^p,\ \textbf{i} < \textbf{j} \Rightarrow \textbf{j} \nless \textbf{i}, \\
        &\forall \textbf{i}, \textbf{j}, \textbf{k} \in \{0, \dots, n\}^p,\ \textbf{i} < \textbf{j},\  \textbf{j} < \textbf{k} \Rightarrow \textbf{i} < \textbf{k}. \\
    \end{aligned}
    \right.
\end{equation*}
We define $\textbf{i} = \textbf{j}$ if $\textbf{i}_d = \textbf{j}_d,\, \forall\, 0 \leq d \leq n$.
So we define the partial order relation, and we can further define the $\min$ function and the $\max$ function on $\{0, \dots, n\}^p$.

Now using this partial order relation, we define 
\begin{equation*}
    \Tilde{\mu}_m (\textbf{x}_{\textbf{k}_i}) = 
    \left\{
    \begin{aligned}
        &\sum_{\textbf{k} \geq \textbf{k}_i, \textbf{k} < \min \{ \textbf{k}_l | \textbf{k}_l > \textbf{k}_i, l \leq m \} } \Tilde{\mu} (\textbf{x}_{\textbf{k}}) 
        ,& &\ i \leq m,\ \textbf{k}_i \neq \min\{ \textbf{k}_l |\ l \leq m \}, \\
        &\sum_{\textbf{k} < \min\{ \textbf{k}_l | \textbf{k}_l > \textbf{k}_i, l \leq m \} } \Tilde{\mu} (\textbf{x}_{\textbf{k}})     
        ,& &\ i \leq m,\ \textbf{k}_i = \min\{\textbf{k}_l |\ l \leq m \}, \\
        &0         ,& &\  i > m. \\
    \end{aligned}
    \right.
\end{equation*}

With this definition of $\Tilde{\mu}_m$, other parts are identical to the proof of \textbf{Lemma} \ref{lemma:dct_R1}.
The proof is done.

\end{proof}

\begin{Theorem}[Upper Triangular Transport Inequality for Co-Monotonic Functions in $\mathbf{R}^p$]
Assume $\mu$ is a continuous probability measure supported on $[0, 1]^p$. 
Denote $\textbf{x} \overset{def}{=} (x^1, \dots, x^p)$.
Let $f, g: [0, 1]^p \rightarrow $ \textbf{R} to be two co-monotonic functions that satisfy
$$(f(\textbf{x})
- f(\textbf{y})) 
\cdot (g(\textbf{x}) 
- g(\textbf{y})) \geq 0, \ \forall \ \textbf{x}, \textbf{y} \in [0, 1]^p.$$
$f$ is continuous.
Define 
$$\beta(\textbf{x}) 
= \mu(\textbf{x}) \exp(g(\textbf{x})) / Z,\  
Z = \int_{[0, 1]^p} \mu(\textbf{x}) \exp (g(\textbf{x})).$$ 
Let $f, g: [0, 1]^p \rightarrow $ \textbf{R} to be two co-monotonic functions that satisfy
$$(f(\textbf{x})
- f(\textbf{y})) 
\cdot (g(\textbf{x}) 
- g(\textbf{y})) \geq 0, \ \forall \ \textbf{x}, \textbf{y} \in [0, 1]^p.$$
Then we have
$$
\textbf{E}_{\mu} [f] \leq \textbf{E}_{\beta} [f].
$$
\label{thm:cts_Rp}
\end{Theorem}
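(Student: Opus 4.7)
The plan is to reduce the continuous $\mathbf{R}^p$ statement to the discrete $\mathbf{R}^p$ statement of \textbf{Lemma} \ref{lemma:dct_Rp} by the same approximation argument used to prove \textbf{Theorem} \ref{thm:cts_R1}, just lifted from one axis to $p$ axes. Since $[0,1]^p$ is compact and $f$ is continuous, $f$ is uniformly continuous, so for each $\epsilon>0$ there is a $\delta>0$ with $|f(\textbf{x})-f(\textbf{y})|<\epsilon$ whenever $\|\textbf{x}-\textbf{y}\|_\infty<\delta$. First I would pick a joint partition $0=x_0^d<x_1^d<\cdots<x_n^d<1$ of each coordinate axis, with mesh less than $\delta$ in every direction, so that every sub-box $\prod_d[x_{i_d}^d,x_{i_d+1}^d)$ has diameter below $\delta$.

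Next I would assemble the discretized measures exactly as in \textbf{Lemma} \ref{lemma:dct_Rp}: set $\tilde{\mu}(\textbf{x}_{\textbf{i}})=\mu(\prod_d[x_{i_d}^d,x_{i_d+1}^d))$ and $\tilde{\beta}(\textbf{x}_{\textbf{i}})=\tilde{\mu}(\textbf{x}_{\textbf{i}})\exp(g(\textbf{x}_{\textbf{i}}))/\tilde{Z}$ with $\tilde{Z}=\sum_{\textbf{i}}\tilde{\mu}(\textbf{x}_{\textbf{i}})\exp(g(\textbf{x}_{\textbf{i}}))$. Restricting $f$ and $g$ to the grid $\{\textbf{x}_{\textbf{i}}\}$ preserves co-monotonicity, so \textbf{Lemma} \ref{lemma:dct_Rp} applies and yields
\[
\textbf{E}_{\tilde{\mu}}[f]\ \leq\ \textbf{E}_{\tilde{\beta}}[f].
\]

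Then I would compare each discretized expectation to its continuous counterpart. Because each sub-box has diameter at most $\delta$, uniform continuity of $f$ forces $|\textbf{E}_{\mu}[f]-\textbf{E}_{\tilde{\mu}}[f]|<\epsilon$. For the $\beta$ side I would write $\textbf{E}_\beta[f]=Z^{-1}\int f(\textbf{x})\exp(g(\textbf{x}))\,d\mu(\textbf{x})$ and $\textbf{E}_{\tilde{\beta}}[f]=\tilde{Z}^{-1}\sum_{\textbf{i}}f(\textbf{x}_{\textbf{i}})\exp(g(\textbf{x}_{\textbf{i}}))\tilde{\mu}(\textbf{x}_{\textbf{i}})$, so that the same mesh-refinement controls $|Z-\tilde{Z}|$ and the weighted integral; hence $|\textbf{E}_{\beta}[f]-\textbf{E}_{\tilde{\beta}}[f]|$ can be made smaller than any prescribed $\epsilon'$ by refining the partition (combining the discrete inequality with these two approximation estimates gives $\textbf{E}_\mu[f]\leq\textbf{E}_\beta[f]+O(\epsilon)$, and letting $\epsilon\to 0$ finishes the proof).

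The main obstacle I expect is the $\beta$-side approximation: unlike the $\mu$-side, $\beta$ carries the factor $\exp(g)$ and the normalizer $Z$, and the theorem does not assume $g$ is continuous, only $f$. I would therefore need a mild regularity hypothesis on $g$ (measurability plus boundedness on $[0,1]^p$, which is implicit since the construction requires $Z<\infty$ and $\tilde{Z}\to Z$) so that the Riemann-type sums for $\int f\exp(g)\,d\mu$ and $Z$ converge as the mesh shrinks; this is the only nontrivial analytic point, and it mirrors exactly the step in the proof of \textbf{Theorem} \ref{thm:cts_R1}, with all other details being a direct coordinate-wise lift.
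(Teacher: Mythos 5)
Your proposal follows essentially the same route as the paper's own proof: discretize $[0,1]^p$ with a mesh controlled by the uniform continuity of $f$, apply \textbf{Lemma} \ref{lemma:dct_Rp} to the discretized measures $\tilde{\mu}, \tilde{\beta}$, and let the mesh shrink. The regularity concern you raise on the $\beta$ side is a genuine subtlety, but the paper's proof glosses over it in exactly the same way, simply asserting $|\textbf{E}_{\beta}[f]-\textbf{E}_{\tilde{\beta}}[f]|<\epsilon$ by ``uniform continuity and the definition of the expectation'' without addressing the $\exp(g)$ weight or the normalizer.
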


\begin{proof}[Proof of Theorem \ref{thm:cts_Rp}]

For $\forall \epsilon > 0$, since $f$ is continuous, $f$ is uniformly continuous, so there exists $\delta > 0$ s.t. $|f(\textbf{x}) - f(\textbf{y})| < \epsilon, \forall \textbf{x}, \textbf{y} \in [0, 1]^p$.
We can split $[0, 1]$ by $0 < x_0 < x_1 < \dots < x_n < 1$ s.t. $x_{i+1} - x_i < \delta / \sqrt{p}$.
Define $x_i^d = x_i, \ \forall 0 \leq d \leq p$.
Define $\Tilde{\mu}$ and $\Tilde{\beta}$ as in \textbf{Lemma} \ref{lemma:dct_Rp}.
Since $x_{i+1} - x_i < \delta / \sqrt{p}$, 
$|(x_{i+1}^0, \dots, x_{i+1}^p) - (x_{i}^0, \dots, x_{i}^p)| < \delta$, by uniform continuity and the definition of the expectation, we have
$$
| \textbf{E}_{\mu} [f] - \textbf{E}_{\Tilde{\mu}} [f] | < \epsilon,\ 
| \textbf{E}_{\beta} [f] - \textbf{E}_{\Tilde{\beta}} [f] | < \epsilon,
$$
By \textbf{Lemma} \ref{lemma:dct_Rp}, we have 
$$\textbf{E}_{\Tilde{\mu}} [f] \leq \textbf{E}_{\Tilde{\beta}} [f].$$
So we have
$$
\textbf{E}_{\mu} [f] 
< \textbf{E}_{\Tilde{\mu}} [f] + \epsilon
\leq \textbf{E}_{\Tilde{\beta}} [f] + \epsilon
< \textbf{E}_{\beta} [f] + 2\epsilon.
$$
Since $\epsilon$ is arbitrary, we prove $\textbf{E}_{\mu} [f] \leq \textbf{E}_{\beta} [f].$

\end{proof}

\begin{Lemma}[Performance Difference Lemma]
For any policies $\pi, \pi'$ and any state $s_0$, we have
\begin{equation*}
    V^{\pi} (s_0) - V^{\pi'} (s_0) = \frac{1}{1 - \gamma} \textbf{E}_{s \sim d_{s_0}^\pi} \textbf{E}_{a \sim \pi (\cdot | s)} \left[ A^{\pi'} (s, a) \right].
\end{equation*}
\label{lemma:perfdiff}
\end{Lemma}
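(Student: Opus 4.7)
The plan is to use the classical telescoping-sum argument that turns a difference of value functions into a discounted sum of advantages along trajectories of $\pi$. First I would fix $s_0$ and sample a trajectory $(s_0, a_0, s_1, a_1, \dots)$ from $\pi$, and rewrite
$$V^{\pi}(s_0) - V^{\pi'}(s_0) = \textbf{E}_{\pi}\!\left[\sum_{t=0}^{\infty} \gamma^t r_t \;\Big|\; s_0\right] - V^{\pi'}(s_0),$$
then inject a telescoping expansion of $-V^{\pi'}(s_0)$. Since $\gamma \in (0,1)$ and rewards are assumed bounded, $\gamma^t V^{\pi'}(s_t) \to 0$, so
$$-V^{\pi'}(s_0) = \textbf{E}_{\pi}\!\left[\sum_{t=0}^{\infty} \bigl(\gamma^{t+1} V^{\pi'}(s_{t+1}) - \gamma^{t} V^{\pi'}(s_t)\bigr) \;\Big|\; s_0\right].$$

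Combining the two expressions and collecting terms by power of $\gamma$ would give
$$V^{\pi}(s_0) - V^{\pi'}(s_0) = \textbf{E}_{\pi}\!\left[\sum_{t=0}^{\infty} \gamma^t \bigl(r_t + \gamma V^{\pi'}(s_{t+1}) - V^{\pi'}(s_t)\bigr) \;\Big|\; s_0\right].$$
Next I would condition on $(s_t, a_t)$ and use the Bellman equation for $\pi'$ stated earlier in the excerpt, namely $\textbf{E}[r_t + \gamma V^{\pi'}(s_{t+1}) \mid s_t, a_t] = Q^{\pi'}(s_t, a_t)$, which collapses the bracket to $A^{\pi'}(s_t, a_t) = Q^{\pi'}(s_t, a_t) - V^{\pi'}(s_t)$.

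The last step is to repackage the sum of discounted expectations as a single expectation under the discounted visitation distribution. Using the definition $d_{s_0}^{\pi}(s) = (1-\gamma)\sum_{t=0}^{\infty} \gamma^t \textbf{P}(s_t = s \mid s_0)$ from the Preliminaries,
$$\sum_{t=0}^{\infty} \gamma^t \textbf{E}_{\pi}\!\left[A^{\pi'}(s_t, a_t) \mid s_0\right] = \frac{1}{1-\gamma}\, \textbf{E}_{s \sim d_{s_0}^{\pi}} \textbf{E}_{a \sim \pi(\cdot\mid s)}\!\left[A^{\pi'}(s, a)\right],$$
which yields the claim.

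The only real obstacle is a technical one rather than a structural one: justifying that the telescoping sum converges and that sum and expectation may be interchanged. Both follow immediately from boundedness of rewards together with $\gamma \in (0,1)$ (dominated convergence with the geometric tail as envelope), so I do not expect any conceptual difficulty. The substantive content is purely the telescoping identity plus one application of the Bellman equation for $\pi'$.
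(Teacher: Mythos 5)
Your proof is correct: the telescoping expansion of $-V^{\pi'}(s_0)$ along trajectories of $\pi$, followed by one application of the Bellman equation for $\pi'$ and the repackaging via $d_{s_0}^{\pi}$, is exactly the standard argument. The paper does not write out a proof at all — it simply cites \citep{kakade2002approximately} — and the argument in that reference is the same telescoping decomposition you give, so there is nothing further to reconcile.
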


\begin{proof}
    See \citep{kakade2002approximately}.
\end{proof}

\clearpage

\section{Algorithm Pseudocode}
\label{App: Algorithm Pseudocode}

\subsection{GDI-I$^3$}
In this section, we provide the implementation pseudocode of GDI-I$^3$, which is shown in \textbf{Algorithm} \ref{alg:i3}.

\begin{equation}
\label{Equ: i3 casa equ}
    \left\{
    \begin{aligned}
        &A=A_{\theta}\left(s_{t}\right),& 
        &V=V_{\theta}\left(s_{t}\right), \\
        &\bar{A}=A-E_{\pi}[A],& 
        &Q=\bar{A}+V. \\
    \end{aligned}
    \right.
\end{equation}

\begin{equation}
\label{Equ: i3 soft entropy}
    \lambda = (\tau_1, \tau_2, \epsilon), \ 
    \pi_{\theta_{\lambda}}=\epsilon \cdot \underbrace{\operatorname{Softmax}\left(\frac{A}{\tau_{1}}\right)}_{Exploration}+(1-\epsilon) \cdot \underbrace{\operatorname{Softmax}\left(\frac{A}{\tau_{2}}\right)}_{Exploitation}
\end{equation}

\begin{figure}[ht]
  \centering
  \begin{minipage}{.7\linewidth}
    \begin{algorithm}[H]
      \caption{GDI-I$^3$ Algorithm.}  
          \begin{algorithmic}
            \STATE Initialize Parameter Server (PS) and Data Collector (DC).
            \STATE
            \STATE // LEARNER
            \STATE Initialize $d_{push}$.
            \STATE Initialize $\theta$ as Eq. \eqref{Equ: i3 casa equ} and \eqref{Equ: i3 soft entropy}.
            \STATE Initialize $count = 0$.
            \WHILE{$True$}
                \STATE Load data from DC.
                \STATE Estimate $qs$ and $vs$ by proper off-policy algorithms.
                \STATE \ \ \ \ (For instance, ReTrace \eqref{Equ: retrace} for $qs$ and V-Trace \eqref{Equ: vtrace} for $vs$.)
                \STATE Update $\theta$ via policy gradient and policy evaluation.
                \IF{$count$ mod $d_{push}$ = 0}
                    \STATE Push $\theta$ to PS.
                \ENDIF
                \STATE $count \leftarrow count + 1$.
            \ENDWHILE
            \STATE
            \STATE // ACTOR
            \STATE Initialize $d_{pull}$, $M$.
            \STATE Initialize $\theta$ as Eq. \eqref{Equ: i3 casa equ} and \eqref{Equ: i3 soft entropy}.
            \STATE Initialize $\{\mathcal{B}_m\}_{m=1,...,M}$ and sample $\lambda$ as in \textbf{Algorithm} \ref{alg:bva}.
            \STATE Initialize $count = 0$, $G = 0$.
            \WHILE{$True$}
                \STATE Calculate $\pi_{\theta_{\lambda}}(\cdot | s)$.
                \STATE Sample $a \sim \pi_{\theta_{\lambda}}(\cdot | s)$.
                \STATE $s, r, done \sim p(\cdot | s, a)$.
                \STATE $G \leftarrow G + r$.
                \IF{$done$}
                    \STATE Update $\{\mathcal{B}_m\}_{m=1,...,M}$ with $(\lambda, G)$ as in \textbf{Algorithm} \ref{alg:bva}.
                    \STATE Send data to DC and reset the environment.
                    \STATE $G \leftarrow 0$.
                    \STATE Sample $\lambda$ as in \textbf{Algorithm} \ref{alg:bva}
                \ENDIF
                \IF{$count \mod d_{pull}$ = 0}
                    \STATE Pull $\theta$ from PS and update $\theta$.
                \ENDIF
                \STATE $count \leftarrow count + 1$.
            \ENDWHILE
          \end{algorithmic}
        \label{alg:i3}
    \end{algorithm}
  \end{minipage}
\end{figure}

\clearpage
\subsection{GDI-H$^3$}
In this section, we provide the implementation pseudocode of GDI-H$^3$, which is shown in \textbf{Algorithm} \ref{alg:h3}.

\begin{equation}
\label{Equ: h3 casa equ}
    \begin{aligned}
    \left\{
    \begin{aligned}
        &A_{\theta_1}=A_{\theta_1}\left(s_{t}\right),& 
        &V_{\theta_1}=V_{\theta_1}\left(s_{t}\right), \\
        &\bar{A}_{\theta_1}=A_{\theta_1}-E_{\pi}[A_{\theta_1}],& 
        &Q_{\theta_1}=\bar{A}_{\theta_1}+V_{\theta_1}. \\
    \end{aligned}
    \right.\\
    \left\{
    \begin{aligned}
        &A_{\theta_2}=A_{\theta_2}\left(s_{t}\right),& 
        &V_{\theta_2}=V_{\theta_2}\left(s_{t}\right), \\
        &\bar{A}_{\theta_2}=A_{\theta_2}-E_{\pi}[A_{\theta_2}],& 
        &Q_{\theta_2}=\bar{A}_{\theta_2}+V_{\theta_2}. \\
    \end{aligned}
    \right.
    \end{aligned}
\end{equation}

\begin{equation}
\label{Equ: h3 soft entropy}
    \lambda = (\tau_1, \tau_2, \epsilon), \ 
    \pi_{\theta_{\lambda}}=\epsilon \cdot \operatorname{Softmax}\left(\frac{A_{\theta_1}}{\tau_{1}}\right)+(1-\epsilon) \cdot \operatorname{Softmax}\left(\frac{A_{\theta_2}}{\tau_{2}}\right)
\end{equation}

\begin{figure}[ht]
  \centering
  \begin{minipage}{.7\linewidth}
    \begin{algorithm}[H]
      \caption{GDI-H$^3$ Algorithm.}  
          \begin{algorithmic}
            \STATE Initialize Parameter Server (PS) and Data Collector (DC).
            \STATE
            \STATE // LEARNER
            \STATE Initialize $d_{push}$.
            \STATE Initialize $\theta$  as Eq. \eqref{Equ: h3 casa equ} and \eqref{Equ: h3 soft entropy}.
            \STATE Initialize $count = 0$.
            \WHILE{$True$}
                \STATE Load data from DC.
                \STATE Estimate $qs_1, qs_2$ and $vs_1, vs_2$ by proper off-policy algorithms.
                \STATE \ \ \ \ (For instance, ReTrace \eqref{Equ: retrace} for $qs1, qs_2$ and V-Trace \eqref{Equ: vtrace} for $vs_1, vs_2$.)
                \STATE Update $\theta_1, \theta_2$ via policy gradient and policy evaluation, respectively.
                \IF{$count$ mod $d_{push}$ = 0}
                    \STATE Push $\theta_1, \theta_2$ to PS.
                \ENDIF
                \STATE $count \leftarrow count + 1$.
            \ENDWHILE
            \STATE
            \STATE // ACTOR
            \STATE Initialize $d_{pull}$, $M$.
            \STATE Initialize $\theta_1, \theta_2$ as Eq. \eqref{Equ: h3 casa equ} and \eqref{Equ: h3 soft entropy}.
            \STATE Initialize $\{\mathcal{B}_m\}_{m=1,...,M}$ and sample $\lambda$ as in \textbf{Algorithm} \ref{alg:bva}.
            \STATE Initialize $count = 0$, $G = 0$.
            \WHILE{$True$}
                \STATE Calculate $\pi_{\theta_{\lambda}}(\cdot | s)$.
                \STATE Sample $a \sim \pi_{\theta_{\lambda}}(\cdot | s)$.
                \STATE $s, r, done \sim p(\cdot | s, a)$.
                \STATE $G \leftarrow G + r$.
                \IF{$done$}
                    \STATE Update $\{\mathcal{B}_m\}_{m=1,...,M}$ with $(\lambda, G)$ as in \textbf{Algorithm} \ref{alg:bva}.
                    \STATE Send data to DC and reset the environment.
                    \STATE $G \leftarrow 0$.
                    \STATE Sample $\lambda$ as in \textbf{Algorithm} \ref{alg:bva}
                \ENDIF
                \IF{$count \mod d_{pull}$ = 0}
                    \STATE Pull $\theta$ from PS and update $\theta$.
                \ENDIF
                \STATE $count \leftarrow count + 1$.
            \ENDWHILE
          \end{algorithmic}
        \label{alg:h3}
    \end{algorithm}
  \end{minipage}
\end{figure}

\clearpage

\section{Adaptive Controller Formalism}
\label{Sec: appendix MAB}

In practice, we use a Bandits Controller (BC) to control the behavior sampling distribution adaptively, which has been widely used in prior works \citep{agent57,casa_entropy}. More details on Bandits  can see \citep{sutton}. The whole algorithm is shown in \textbf{Algorithm} \ref{alg:bva}. As the behavior policy can be parameterized and thereby sampling behaviors from the policy space is equivalent to sampling indexes $x$ from the index set. 

Let's firstly define a bandit as $B = Bandit(mode, l, r, lr, d, acc, ta, to, \textbf{w}, \textbf{N})$.
\begin{itemize}
    \item $mode$ is the mode of sampling, with two choices, $argmax$ and $random$, wherein $argmax$ greedily chooses the behaviors with top estimated value from the policy space, and $random$ samples behaviors according to a distribution calculated by $Softmax(V)$.
    \item $l$ is the left boundary of the index set, and each $x$ is clipped to $x = \max \{x, l\}$.
    \item $r$ is the right boundary of the index set, and each $x$ is clipped to $x = \min \{x, r\}$.
    \item $acc$ is the accuracy of space to be optimized, where each $x$ is located in the $\lfloor (\min\{\max\{x, l\}, r\} - l) / acc \rfloor$th block.
    \item tile coding is a representation method of continuous space \citep{sutton}, and each kind of tile coding can be uniquely determined by $l$, $r$, $to$ and $ta$, wherein $to$ represents the tile offset and $ta$ represents the accuracy of the tile coding.
    \item $to$ is the offset of each tile coding, which represents the relative offset of the basic coordinate system (normally we select the space to be optimized as basic coordinate system).
    \item $ta$ is the accuracy of each tile coding, where each $x$ is located in the $\lfloor (\min\{\max\{x-to, l\}, r\} - l) / ta \rfloor$th tile.
    \item $M_{btt}$ represents block-to-tile, which is a mapping from the block of the original space to the tile coding space.
    \item $M_{ttb}$ represents tile-to-block, which is a mapping from the tile coding space to the block of the original space.
    \item $\textbf{w}$ is a vector in $\mathbf{R}^{\lfloor (r-l) / ta \rfloor}$, which represents the weight of each tile.
    \item $\textbf{N}$ is a vector in $\mathbf{R}^{\lfloor (r-l) / ta \rfloor}$, which counts the number of sampling of each tile.
    \item $lr$ is the learning rate.
    \item $d$ is an integer, which represents how many candidates is provided by each bandit when sampling.
\end{itemize}

During the evaluation process, we evaluate the value of the $i$th tile by
\begin{equation}
\label{eq:bandit_eval}
V_i = \frac{\sum_{k}^{M_{btt}(block_i)} \textbf{w}_k}{len(M_{btt}(block_i))}
\end{equation}

During the training process, for each sample $(x, g)$, where $g$ is the target value. Since $x$ locates in the $j$th tile of $k$th tile\_coding, we update $B$ by
\begin{equation}
\label{eq:bandit_update}
\left\{
\begin{aligned}
&j = \lfloor (\min\{\max\{x-to_{k}, l\}, r\} - l) / ta_{k} \rfloor, \\
&\textbf{w}_j 
\leftarrow \textbf{w}_j + lr * \left(g - V_i\right)\\
& \textbf{N}_j \leftarrow \textbf{N}_j + 1
\end{aligned}
\right.
\end{equation}

During the sampling process, we firstly evaluate $\mathcal{B}$ by \eqref{eq:bandit_eval} and get $(V_1, ..., V_{\lfloor (r-l) / acc \rfloor})$.
We calculate the score of $i$th tile by
\begin{equation}
\label{eq:bandit_score}
score_i = \frac{V_i - \mu(\{V_j\}_{j=1,...,\lfloor(r-l)/acc\rfloor})}{\sigma(\{V_j\}_{j=1,...,\lfloor(r-l)/acc\rfloor})} + c \cdot \sqrt{\frac{\log (1 + \sum_j \textbf{N}_j)}{1 + \textbf{N}_i}}.
\end{equation}
For different $mode$s, we sample the candidates by the following mechanism,
\begin{itemize}
    \item if $mode$ = $argmax$, find blocks with top-$d$ $score$s, then sample $d$ candidates from these blocks, one uniformly from a block;
    \item if $mode$ = $random$, sample $d$ blocks with $score$s as the logits without replacement, then sample $d$ candidates from these blocks, one uniformly from a block;
\end{itemize}

In practice, we define a set of bandits $\mathcal{B}_m = \{B_m\}_{m=1,...,M}$.
At each step, we sample $d$ candidates $\{c_{m, i}\}_{i=1,...,d}$ from each $B_m$, so we have a set of $m \times d$ candidates $\{c_{m, i}\}_{m=1,...,M; i=1,...,d}$.
Then we sample uniformly from these $m \times d$ candidates to get $x$. 
At last, we transform the selected $x$ to $\alpha=\{\tau_1,\tau_2,\epsilon\}$ by $\tau_{1,2} = \frac{1}{\exp (x_{1,2}) - 1}$ and $\epsilon = x_{3}$
When we receive $(\alpha, g)$, we transform $\alpha$ to $x$ by $x_{1,2} = \log (1 + 1 / \tau_{1,2})$, and  $x_{3} = \epsilon$.
Then we update each $B_m$ by \eqref{eq:bandit_update}.

\begin{figure}[ht]
  \centering
  \begin{minipage}{.9\linewidth}
    \begin{algorithm}[H]
      \caption{Bandits Controller}  
          \begin{algorithmic}
            \FOR{$m=1,...,M$}
                \STATE Sample $mode \sim \{argmax, random\}$ and other initialization parameters
                \STATE Initialize $B_m = Bandit(mode, l, r, lr, d, acc, to, ta, \textbf{w}, \textbf{N})$
                \STATE Ensemble $B_m$ to constitute $\mathcal{B}_m$ 
            \ENDFOR
            \WHILE{$True$}  
                \FOR{$m=1,...,M$}
                    \STATE Evaluate $\mathcal{B}_m$ by \eqref{eq:bandit_eval}.
                    \STATE Sample candidates $c_{m, 1}, ..., c_{m, d}$  from $\mathcal{B}_m$ via \eqref{eq:bandit_score} following its $mode$.
                \ENDFOR
                \STATE Sample $x$ from $\{c_{m, i}\}_{m=1,...,M; i=1,...,d}$.
                \STATE Execute $x$ and receive the return $G$.
                \FOR{$m=1,...,M$}
                    \STATE Update $\mathcal{B}_m$ with $(x, G)$ by \eqref{eq:bandit_update}.
                \ENDFOR
            \ENDWHILE
          \end{algorithmic}  
        \label{alg:bva}
    \end{algorithm}
  \end{minipage}
\end{figure}

\clearpage

\section{Experiment Details}
\label{sec:app Experiment Details}

The overall training architecture is on the top of the Learner-Actor  framework \citep{impala}, which supports large-scale training. Additionally, the recurrent encoder with LSTM \citep{lstm} is used to handle the partially observable MDP problem \citep{ale}. 
The burn-in technique is adopted to deal with the representational drift \citep{r2d2}, and we train each sample twice.
A complete description of the hyperparameters  can see App. \ref{Sec: appendix hyperparameters}. 
We employ additional environments to evaluate the scores during training, and the undiscounted episode returns averaged over 32 environments with different seeds have been recorded. 
Details on relevant evaluation criteria  can see App. \ref{app:Evaluation Metrics for ALE}.
 
We evaluated all agents on 57 Atari 2600 games from the arcade learning environment  \citep[ALE]{ale} by recording the average score of the population of agents during training. We have demonstrated our evaluation metrics for ALE in App. \ref{app:Evaluation Metrics for ALE}, and we will describe more details in the following. Besides, all the experiment is accomplished using a single CPU with 92 cores and a single Tesla-V100-SXM2-32GB GPU.

Noting that episodes will be truncated at 100K frames (or 30 minutes of simulated play) as other baseline algorithms \citep{rainbow,agent57,laser,ngu,r2d2} and thereby we calculate the mean playtime over 57 games which is called Playtime. In addition to comparing the mean and median human normalized scores (HNS), we also report the performance based on human world records among these algorithms and the related learning efficiency to further highlight the significance of our algorithm. Inspired by \citep{atarihuman}, human world records normalized score (HWRNS) and SABER are  better descriptors for evaluating algorithms on human top level on Atari games, which simultaneously give rise to more challenges and lead the related research into a new journey to train the superhuman agent instead of  just paying attention to  the human average level.

\clearpage
\section{Hyperparameters}
\label{Sec: appendix hyperparameters}

In this section, we firstly detail the hyperparameters we use to pre-process the environment frames received from the Arcade Learning Environment. The hyperparameters that we used in all experiments are almost the same as Agent57  \citep{agent57}, NGU \citep{ngu}, MuZero \citep{muzero} and R2D2 \citep{r2d2}.
In Tab. \ref{tab:ale_process}, we detail these pre-processing hyperparameters. Then we will detail the hyperparameters we used for Atari experiments, which is demonstrated in Tab. \ref{tab:fixed_model_hyperparameters_atari}. 

\begin{table}[H]
\begin{center}
\caption{Atari pre-processing hyperparameters.}
\label{tab:ale_process}
\begin{tabular}{|c|c|}
\hline
\textbf{Hyperparameter} & \textbf{Value}  \\
\hline
Random modes and difficulties & No \\
\hline
Sticky action probability  & 0.0 \\
\hline
Life information & Not allowed \\
\hline
Image Size & (84, 84) \\
\hline
Num. Action Repeats & 4 \\
\hline
Num. Frame Stacks & 4 \\
\hline
Action Space & Full \\
\hline
Max episode length   & 100000 \\
\hline
Random noops range  & 30\\
\hline
Grayscaled/RGB      & Grayscaled\\
\hline
\end{tabular}

\end{center}
\end{table}
\clearpage

\begin{table}[H]
\begin{center}
\caption{Hyperparameters for Atari experiments.}
\label{tab:fixed_model_hyperparameters_atari}
\begin{tabular}{|c|c|}
\hline
\textbf{Parameter} & \textbf{Value}  \\
\hline
Num. Frames & 200M (2E+8) \\
\hline
Replay & 2 \\
\hline
Num. Environments & 160 \\
\hline
GDI-I$^3$ Reward Shape & $\log (abs (r) + 1.0) \cdot (2 \cdot 1_{\{r \geq 0\}} - 1_{\{r < 0\}})$ \\
\hline
GDI-H$^3$ Reward Shape 1 & $\log (abs (r) + 1.0) \cdot (2 \cdot 1_{\{r \geq 0\}} - 1_{\{r < 0\}})$ \\
\hline
GDI-H$^3$ Reward Shape 2 & $sign(r) \cdot ((abs (r) + 1.0)^{0.25} - 1.0) + 0.001 \cdot r$ \\
\hline
Reward Clip & No \\
\hline
Intrinsic Reward & No \\
\hline
Entropy Regularization & No \\
\hline
Burn-in & 40 \\
\hline
Seq-length & 80 \\
\hline
Burn-in Stored Recurrent State & Yes \\
\hline
Bootstrap & Yes \\
\hline
Batch size & 64 \\
\hline
Discount ($\gamma$) & 0.997 \\
\hline
$V$-loss Scaling ($\xi$) & 1.0 \\
\hline
$Q$-loss Scaling ($\alpha$) & 10.0 \\
\hline
$\pi$-loss Scaling ($\beta$) & 10.0 \\
\hline
Importance Sampling Clip $\Bar{c}$ & 1.05 \\
\hline
Importance Sampling Clip $\Bar{\rho}$ & 1.05 \\
\hline
Backbone & IMPALA,deep \\
\hline
LSTM Units & 256 \\
\hline
Optimizer & Adam Weight Decay \\
\hline
Weight Decay Rate & 0.01 \\
\hline
Weight Decay Schedule & Anneal linearly to 0 \\
\hline
Learning Rate & 5e-4 \\
\hline
Warmup Steps & 4000 \\
\hline
Learning Rate Schedule & Anneal linearly to 0 \\
\hline
AdamW $\beta_1$ & 0.9 \\
\hline
AdamW $\beta_2$ & 0.98 \\
\hline
AdamW $\epsilon$ & 1e-6 \\
\hline
AdamW Clip Norm & 50.0 \\
\hline
Auxiliary Forward Dynamic Task & Yes \\
\hline
Auxiliary Inverse Dynamic Task & Yes \\
\hline
Learner Push Model Every $N$ Steps & 25 \\
\hline
Actor Pull Model Every $N$ Steps & 64 \\
\hline
Num. Bandits & 7 \\
\hline
Bandit Learning Rate & Uniform([0.05, 0.1, 0.2]) \\
\hline
Bandit Tiling Width & Uniform([2, 3, 4]) \\
\hline
Num. Bandit Candidates & 3 \\
\hline
Offset of Tile coding & Uniform([0, 60]) \\
\hline
Accuracy of Tile coding & Uniform([2, 3, 4]) \\
\hline
Accuracy of Search Range for [$1/\tau_1$,$1/\tau_2$,$\epsilon$]& [1.0, 1.0, 0.1]\\
\hline
Fixed Selection for [$1/\tau_1$,$1/\tau_2$,$\epsilon$]&[1.0,0.0,1.0]\\
\hline
Bandit Search Range for $1/\tau_1$ & [0.0, 50.0] \\
\hline
Bandit Search Range for $1/\tau_2$ & [0.0, 50.0] \\
\hline
Bandit Search Range for $\epsilon$ & [0.0, 1.0] \\
\hline
\end{tabular}
\end{center}
\end{table}
\clearpage

\section{Evaluation Metrics for ALE}
\label{app:Evaluation Metrics for ALE}
In this section, we will mainly introduce the evaluation metrics in ALE, including those that have been commonly used by previous works like the raw score and the normalized score over all the Atari games based on human average score baseline, and some novel evaluation criteria for the superhuman Atari benchmark such as the normalized score based on human world records, learning efficiency, and human world record breakthrough. For the summary of benchmark results on these evaluation metrics can see App. \ref{app: Summary of Benchmark Results}. For more details on these evaluation metrics, we refer to see \citep{review2021_atari}.

\subsection{Raw Score}

Raw score refers to using tables (e.g., Table of Scores) or figures (e.g., Training Curve) to show the total scores of RL algorithms on all Atari games, which can be calculated by the sum of the undiscounted reward of the \textbf{g}th game of Atari using algorithm \textbf{i} as follows:

\begin{equation}
\label{equ: raw score}
G_{g,i} =  \textbf{E}_{s_t \sim d_{\rho_0}^{\pi}} \textbf{E}_{\pi} \left[\sum_{k=0}^{\infty} r_{t+k} | s_t \right], g \in [1,57]
\end{equation}

As \citet{ale} firstly put it, raw score over the whole 57 Atari games can reflect the performance and generality of RL agents to a certain extent. However, this evaluation metric has many limitations:

\begin{enumerate}
    \item It is difficult to compare the performance of the two algorithms directly.
    \item Its value is easily affected by the score scale. For example, the score scale of Pong is [-21,21], but that of Chopper Command is [0,999900], so the Chopper Command will dominate the mean score of those games.
\end{enumerate}

In recent RL advances, this metric is used to avoid any issues that aggregated metrics may have \citep{agent57}. Furthermore, this paper used these metrics to prove whether the RL agents have surpassed the human world records, which will be introduced in detail later.

\subsection{Normalized Scores}
To handle the drawbacks of the raw score, some methods \citep{ale,dqn} proposed the normalized score. The normalized score of the \textbf{g}th game of Atari using algorithm \textbf{i} can be calculated as follows:
\begin{equation}
\label{equ: Normalized Score}
    Z_{g,i} = \frac{G_{g,i}-G_{g,base}}{G_{g,reference}-G_{g,base}}
\end{equation}
As \citet{ale} put it, we can compare games with different scoring scales by  normalizing scores, which makes the numerical values become comparable. In practice, we can make $G_{g,base} = r_{g,min}$ and $G_{g,reference} = r_{g,max}$, where $[r_{g,min},r_{g,max}]$ is the score scale of the \textbf{g}th game. Then Equ. \eqref{equ: Normalized Score} becomes $Z_{g,i} = \frac{G_{g,i}-r_{g,min}}{r_{i,max}-r_{g,min}}$, which is a \textbf{Min-Max Scaling} and thereby $Z_{g,i} \in [0,1]$ become comparable across the 57 games. It seems this metric can be served to compare the performance between two different algorithms. However, the Min-Max normalized score fail to intuitively reflect the gap between the algorithm and the average level of humans. Thus, we need a human baseline normalized score.

\subsubsection{Human Average Score Baseline}
As we mentioned above, recent reinforcement learning advances \citep{agent57,ngu,r2d2,goexplore,muzero,muesli,rainbow} are seeking agents that can achieve superhuman performance. Thus, we need a metric to intuitively reflect the level of the algorithms compared to human performance. Since being proposed by \citep{ale}, the Human Normalized Score (HNS) is widely used in the RL research\citep{ale2}. HNS can be calculated as follows:

\begin{equation}
     \text{HNS}_{g,i} = \frac{G_{g,i}-G_{g,\text{random}}}{G_{g,\text{human average}}-G_{g,\text{random}}}
\end{equation}
wherein g denotes the gth game of Atari, i represents the algorithm i, $G_{g,\text{human average}}$ represents the human average score baseline \citep{atarihuman}, and $G_{g,\text{random}}$ represents the performance of a random policy. Adopting HNS as an evaluation metric has the following advantages:

\begin{enumerate}
    \item \textbf{Intuitive comparison with human performance.} $\text{HNS}_{g,i} \geq 100\%$ means algorithm $i$ have surpassed the human average performance in game g. Therefore, we can directly use HNS to reflect which games the RL agents have surpassed the average human performance. 
    \item \textbf{Performance across algorithms become comparable.} Like Max-Min Scaling, the human normalized score can also make two different algorithms comparable. The value of $\text{HNS}_{g,i}$ represents the degree to which algorithm i surpasses the average level of humans in game g.
\end{enumerate}

\textbf{Mean HNS} represents the mean performance of the algorithms across the 57 Atari games  based on the human average score. However, it is susceptible to interference from individual high-scoring games like the hard-exploration problems in Atari \citep{goexplore}. While taking it as the only evaluation metric, Go-Explore\citep{goexplore} has achieved SOTA compared to Agent57\citep{agent57}, NGU\citep{ngu}, R2D2\citep{r2d2}. However, Go-Explore fails to handle many other games in Atari like Demon Attack, Breakout, Boxing, Phoenix. Additionally, Go-Explore fails to balance the trade-off between exploration and exploitation, which makes it suffer from the low sample efficiency problem, which will be discussed later.

\textbf{Median HNS} represents the median performance of the algorithms across the 57 Atari games based on the human average score. Some methods \citep{muzero,muesli} have adopted it as a more reasonable metric for comparing performance between different algorithms. The median HNS has overcome the interference from individual high-scoring games. However, As far as we can see, there are at least two problems while only referring to it as the evaluation metrics. First of all, the median HNS only represents the mediocre performance of an algorithm. How about the top performance? One algorithm \citep{muesli} can easily achieve high median HNS, but at the same time obtain a poor mean HNS by adjusting the hyperparameters of algorithms for games near the median score. It shows that these metrics can show the generality of the algorithms but fail to reflect the algorithm's potential. Moreover, adopting these metrics will urge us to pursue rather mediocre methods.

In practice, we often use \textbf{mean HNS} or \textbf{median HNS} to show the final performance or generality of an algorithm. Dispute upon whether the mean value or the median value is more representative to show the generality and performance of the algorithms lasts for several years \citep{dqn,rainbow,dreamerv2,muesli,ale,ale2}. To avoid any issues that aggregated metrics may have, \textbf{we advocate calculating both of them in the final results} because they serve different purposes, and we could not evaluate any algorithm via a single one of them.

\subsubsection{Capped Normalized Score}
Capped Normalized Score is also widely used in many reinforcement learning advances \citep{atarihuman,agent57}. Among them, Agent57 \citep{agent57} adopts the capped human normalized score (CHNS) as a better descriptor for evaluating general performance, which can be calculated as $\mathrm{CHNS}=\max \{\min \{\mathrm{HNS}, 1\}, 0\}$. Agent57 claimed CHNS emphasizes the games that are below the average human performance benchmark and used CHNS to judge whether an algorithm has surpassed the human performance via $\mathrm{CHNS} \geq 100\%$.  The mean/median CHNS represents the mean/median completeness of surpassing human performance. However, there are several problems while adopting 
these metrics:
\begin{enumerate}
    \item CHNS fails to reflect the real performance in specific games. For example, $\mathrm{CHNS} \geq 100\%$ represents the algorithms surpassed the human performance but failed to reveal how good the algorithm is in this game. From the view of CHNS, Agent57 \citep{agent57} has achieved SOTA performance across 57 Atari games, but while referring to the mean HNS or median HNS, Agent57 lost to MuZero.
    \item It is still controversial that using $\mathrm{CHNS} \geq 100\%$ to represent the superhuman performance because it underestimates the human performance \citep{atarihuman}.
    \item CHNS ignores the low sample efficiency problem as other metrics using normalized scores.
\end{enumerate}

In practice, CHNS can serve as an indicator to reflect whether RL agents can surpass the average human performance. The mean/median CHNS can be used to reflect the generality of the algorithms.

\subsubsection{Human World Records Baseline}
As \citep{atarihuman} put it, the Human Average Score Baseline potentially underestimates human performance relative to what is possible. To better reflect the performance of the algorithm compared to the human world record, we introduced a complete human world record baseline extended from \citep{dreamerv2,atarihuman} to normalize the raw score, which is called the Human World Records Normalized Score (HWRNS), which can be calculated as follows:

\begin{equation}
     \text{HWRNS}_{g,i} = \frac{G_{g,i}-G_{g,\text{random}}}{G_{g,\text{human world records}}-G_{g,\text{random}}}
\end{equation}
wherein g denotes the gth game of Atari, i represents the RL algorithm, $G_{i,human}$ represents the human world records, and $G_{g,random}$ represents means the performance of a random policy. Adopting HWRNS as an evaluation metric of algorithm performance has the following advantages:

\begin{enumerate}
    \item \textbf{Intuitive comparison with human world records.} As $\text{HNS}_{g,i} \geq 100\%$ means algorithm $i$ have surpassed the human world records performance in game g. We can directly use HWRNS to reflect which games the RL agents have surpassed the human world records, which can be used to calculate the human world records breakthrough in Atari benchmarks.
    \item \textbf{Performance across algorithms become comparable.} Like the Max-Min Scaling, the HWRNS can also make two different algorithms comparable. The value of $\text{HWRNS}_{g,i}$ represents the degree to which algorithm i has surpassed the human world records in game g.
\end{enumerate}

\textbf{Mean HWRNS} represents the mean performance of the algorithms across the 57 Atari games. Compared to mean HNS, mean HWRNS put forward higher requirements on the algorithm. Poor performance algorithms like SimPLe \citep{modelbasedatari} will can be directly distinguished from other algorithms. It requires the algorithms to pursue a better performance across all the games rather than concentrate on one or two of them because breaking through any human world record is a huge milestone, which puts forward significant challenges to the performance and generality of the algorithm. For example, current model-free SOTA algorithms on HNS is Agent57 \citep{agent57}, which only acquires 125.92\% mean HWRNS, while GDI-H$^3$ obtained 154.27\% mean HWRNS and thereby became the new state-of-the-art.

\textbf{Median HWRNS} represents the median performance of the algorithms across the 57 Atari games. Compared to Median HNS, median HWRNS also puts forward higher requirements for the algorithm. For example, current SOTA RL algorithms like Muzero \citep{muzero} obtain much higher median HNS over GDI-H$^3$ but relatively lower median HWRNS.

\textbf{Capped HWRNS} Capped HWRNS (also called SABER) is firstly proposed and used by \citep{atarihuman}, which is calculated by $\mathrm{SABER}=\max \{\min \{\mathrm{HWRNS}, 2\}, 0\}$. SABER also has the same problems as CHNS, and we will not repeat them here. For more details on SABER, can see \citep{atarihuman}.

\subsection{Learning Efficiency}
As we mentioned above, traditional SOTA algorithms typically ignore the low learning efficiency problem, which makes the data used for training continuously increasing (e.g., from 10B \citep{r2d2} to 100B \citep{agent57}). Increasing the training volume hinders the application of reinforcement learning algorithms into the real world. In this paper, we advocate not to improve the final performance via improving the learning efficiency instead of increasing the training volume. We advocate achieving SOTA within 200M training frames for Atari. To evaluate the learning efficiency of an algorithm, we introduce three promising metrics.

\subsubsection{Training Scale}
As one of the commonly used metrics to reveal the learning efficiency for machine learning algorithms, training scale can also serve the purpose in RL problems. In ALE, the training scale means the scale of video frames used for training. Training frames for world modeling or planning via real-world models also need to be counted in model-based settings.

\subsubsection{Playtime}
Playtime is a unique metric of Atari, which means the equivalent real-time gameplay \citep{ale2}. We can use the following formula to calculate this metric:
\begin{equation}
    \text{Playtime (day)} = \frac{\text{Num.Frames}}{\text{108000*2*24}}
\end{equation}
For example, 200M training frames equal to 38.5 days real-time gameplay, and 100B training frames equal to 19250 days (52.7 years) real-time gameplay \citep{agent57}. As far as we know, no Atari human world record was achieved by playing a game continuously for more than 52.7 years because it is less than 52.7 years since the birth of the Atari games.

\subsubsection{Learning Efficiency}
As we mentioned several times while discussing the drawbacks of the normalized score, learning efficiency has been ignored in massive SOTA algorithms. Many SOTA algorithms achieved SOTA through training with vast amounts of data, which may equal 52.7 years continuously playing for a human. In this paper, we argue it is unreasonable to rely on the increase of data to improve the algorithm's performance. Thus, we proposed the following metric to evaluate the learning efficiency of an algorithm: 
\begin{equation}
    \text{Learning Efficiency}=\frac{\text{Related Evaluation Metric}}{\text{Num.Frames}}
\end{equation}
For example, the learning efficiency of an algorithm over means HNS is $\frac{\text{mean HNS}}{\text{Num.Frames}}$, which means the algorithms obtaining higher mean HNS via lower training frames are better than those acquiring more training data methods.
\subsection{Human World Record Breakthrough}
As we mentioned above, we need higher requirements to prove RL agents achieve real superhuman performance. Therefore, like the CHNS \citep{agent57}, the Human World Record Breakthrough (HWRB) can serve as the metric to reveal whether the algorithm has achieved the real superhuman performance, which can be calculated by $\text{HWRB} =\sum_{i=1}^{57}(\text{HWRNS} \geq 1) $.

\clearpage
\section{Atari Benchmark}
\label{app: atari benchmark}

In this section, we introduce some SOTA algorithms in the Atari Benchmarks. For more details on evaluation metrics for ALE, can see App. \ref{app:Evaluation Metrics for ALE}. For summary of the benchmark results on those evaluation metrics can see App. \ref{app: Summary of Benchmark Results}.

\subsection{Model-Free Reinforcement Learning}
\subsubsection{Rainbow}
Rainbow \citep{rainbow} is a classic value-based RL algorithm among the  DQN algorithm family, which has fruitfully combined six extensions of the DQN algorithm family. It is recognized to achieve state-of-the-art performance on the ALE benchmark. Thus, we select it as one of the representative algorithms of the SOTA DQN algorithms.
\subsubsection{IMPALA}
IMPALA, namely the Importance Weighted Actor Learner Architecture \citep{impala}, is a classic distributed off-policy actor-critic framework, which decouples acting from learning and learning from experience trajectories using V-trace. IMPALA actors communicate trajectories of experience (sequences of states, actions, and rewards) to a centralized learner, which boosts distributed large-scale training. Thus, we select it as one of the representative algorithms of the traditional distributed RL algorithm.
\subsubsection{LASER}
LASER \citep{laser} is a classic Actor-Critic algorithm, which investigated the combination of Actor-Critic algorithms with a uniform large-scale experience replay. It trained populations of actors with shared experiences and claimed to achieve SOTA in Atari. Thus, we select it as one of the SOTA RL algorithms within 200M training frames.

\subsubsection{R2D2}
\citep{r2d2}
Like IMPALA, R2D2 \citep{r2d2} is also a classic distributed RL algorithms. It trained  RNN-based RL agents from distributed prioritized experience replay, which achieved SOTA in Atari. Thus, we select it as one of the representative value-based distributed RL algorithms.

\subsubsection{NGU}
One of the classical problems in ALE for RL agents is the hard exploration problems \citep{goexplore,ale,agent57} like \textit{Private Eye, Montezuma’s Revenge, Pitfall!}. NGU \citep{ngu}, or Never Give Up, try to ease this problem by augmenting the reward signal with an internally generated intrinsic reward that is sensitive to novelty at two levels: short-term novelty within an episode and long-term novelty across episodes. It then learns a family
of policies for exploring and exploiting (sharing the same parameters) to obtain the highest score under the exploitative policy. NGU has achieved SOTA in Atari and thus  we selected it as one of the representative population-based model-free RL algorithms.

\subsubsection{Agent57}
Agent57 \citep{agent57} is the SOTA model-free RL algorithms on CHNS or Median HNS of Atari Benchmark. Built on the NGU agents, Agent57 proposed a novel state-action value function parameterization method and adopted an adaptive exploration over a family of policies, which overcome the drawback of NGU \citep{agent57}. We select it as one of the SOTA model-free RL algorithms.

\subsubsection{GDI}
GDI, or Generalized Data Distribution Iteration, claimed to have achieved  SOTA on mean/median HWRNS, mean HNS, HWRB, median SABER of Atari Benchmark. GDI is one of the novel Reinforcement Learning paradigms, which combined a data distribution optimization operator into the traditional generalized policy iteration (GPI) \citep{sutton} and thus achieved human-level learning efficiency.  Thus, we select them as one of the SOTA model-free RL algorithms.

\subsection{Model-Based Reinforcement Learning}
\subsubsection{SimPLe}
As one of the classic model-based RL algorithms on Atari, SimPLe, or Simulated Policy Learning \citep{modelbasedatari}, adopted a video prediction model to enable RL agents to solve Atari problems with higher sample efficiency. It claimed to outperform the SOTA model-free algorithms in most games, so we selected it as representative model-based RL algorithms.
\subsubsection{Dreamer-V2}
Dreamer-V2 \citep{dreamerv2} built world models to facilitate generalization across the experience and allow learning behaviors from imagined outcomes in the compact latent space of the world model to increase sample efficiency. Dreamer-V2 is claimed to achieve SOTA in Atari and thus  we select it as one of the SOTA model-based RL algorithms within the 200M training scale.
\subsubsection{Muzero}
Muzero \citep{muzero} combined a tree-based search with a learned model and has achieved superhuman performance on Atari. We thus selected it as one of the SOTA model-based RL algorithms.
\subsection{Other SOTA algorithms}
\subsubsection{Go-Explore}
As mentioned in NGU, a grand challenge in reinforcement learning is intelligent exploration, which is called the hard-exploration problem \citep{ale2}. Go-Explore \citep{goexplore} adopted three principles to solve this problem. Firstly, agents remember previously visited states. Secondly, agents first return to a promising state and then explore it. Finally, solve simulated environment through any available means, and then robustify via imitation learning. Go-Explore has achieved SOTA in Atari, so we select it as one of the SOTA algorithms of the hard exploration problem.

\subsubsection{Muesli}
Muesli \citep{muesli} proposed a novel policy update that combines regularized policy optimization with model learning as an auxiliary loss. It acts directly with a policy network and has a computation speed comparable to model-free baselines. As it claimed to achieve SOTA in Atari within 200M training frames, we select it as one of the SOTA  RL algorithms within 200M training frames.

\subsection{Summary of Benchmark Results}
This part summarizes the results among all the algorithms we mentioned above on the human world record benchmark for Atari. In Figs, we illustrated the benchmark results on HNS, HWRNS, SABER, and the corresponding training scale. \ref{fig: scale mean HNS time}, \ref{fig: scale mean HWRNS time} and \ref{fig: scale mean SABER time}, HWRB and corresponding game time  and learning efficiency in Fig. \ref{fig: HWRB time}. From those results, we see GDI  has achieved SOTA in learning efficiency, HWRB, HWRNS, mean HNS, and median SABER within 200M training frames. Agent57 has achieved SOTA in mean SABER, and Muzero \citep{muzero} has achieved SOTA in median HNS.

\clearpage
\section{Summary of Benchmark Results}
\label{app: Summary of Benchmark Results}

In this section, we illustrate the benchmark results of all the SOTA algorithms mentioned in this paper. For more details on these algorithms, can see App. \ref{app: atari benchmark}.

\subsection{RL Benchmarks on HNS}
\label{app: RL Benchmarks on HNS}
We report several milestones of Atari benchmarks on HNS, including DQN \citep{dqn}, RAINBOW \citep{rainbow}, IMPALA \citep{impala}, LASER \citep{laser}, R2D2 \citep{r2d2}, NGU \citep{ngu}, Agent57 \citep{agent57}, Go-Explore \citep{goexplore}, MuZero \citep{muzero}, DreamerV2 \citep{dreamerv2}, SimPLe \citep{modelbasedatari} and Muesli \citep{muesli}. We summarize mean HNS and median HNS of these algorithms  with their playtime (human playtime), learning efficiency , and training scale in Fig \ref{fig: year mean HNS time}, \ref{fig: efficiency mean HNS time} and \ref{fig: scale mean HNS time}.

\begin{figure*}[!t]
    \centering
	\subfigure{
		\includegraphics[width=0.45\textwidth]{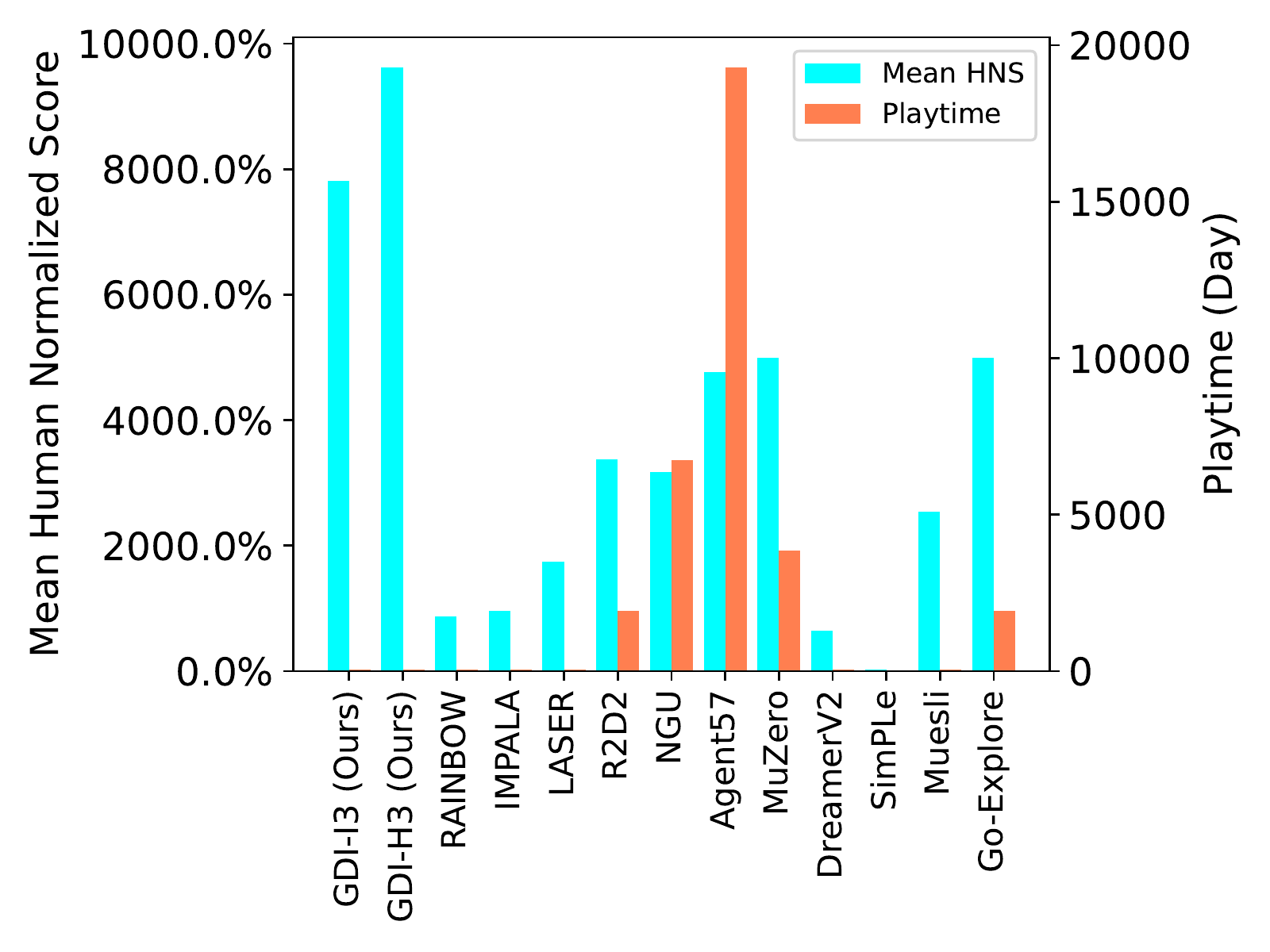}
	}
	\subfigure{
		\includegraphics[width=0.45\textwidth]{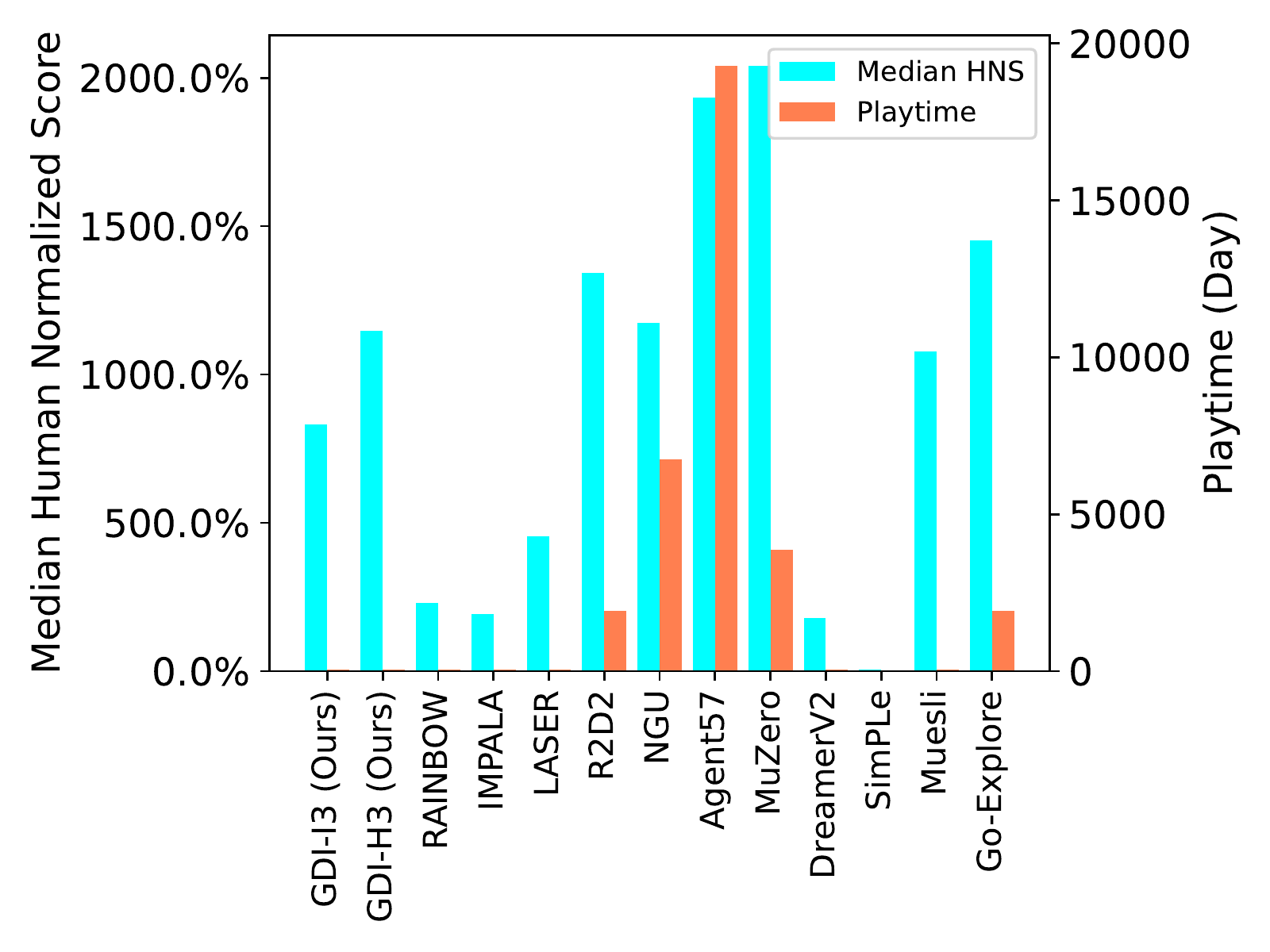}
	}
	\centering
	\caption{SOTA algorithms of Atari 57 games on mean and median HNS (\%) and playtime.}
	\label{fig: year mean HNS time}
\end{figure*}

\begin{figure*}[!t]
    \centering
	\subfigure{
		\includegraphics[width=0.45\textwidth]{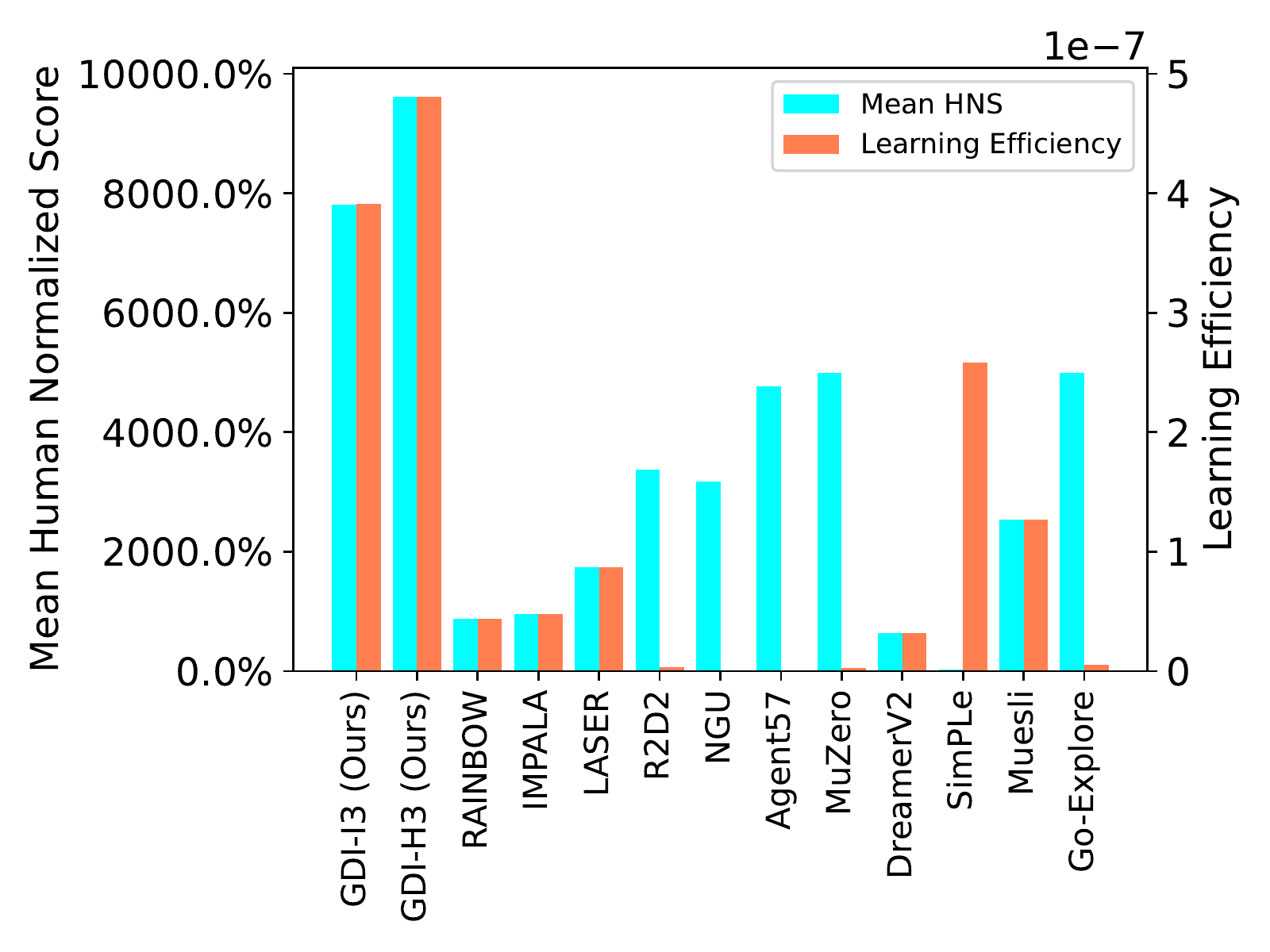}
	}
	\subfigure{
		\includegraphics[width=0.45\textwidth]{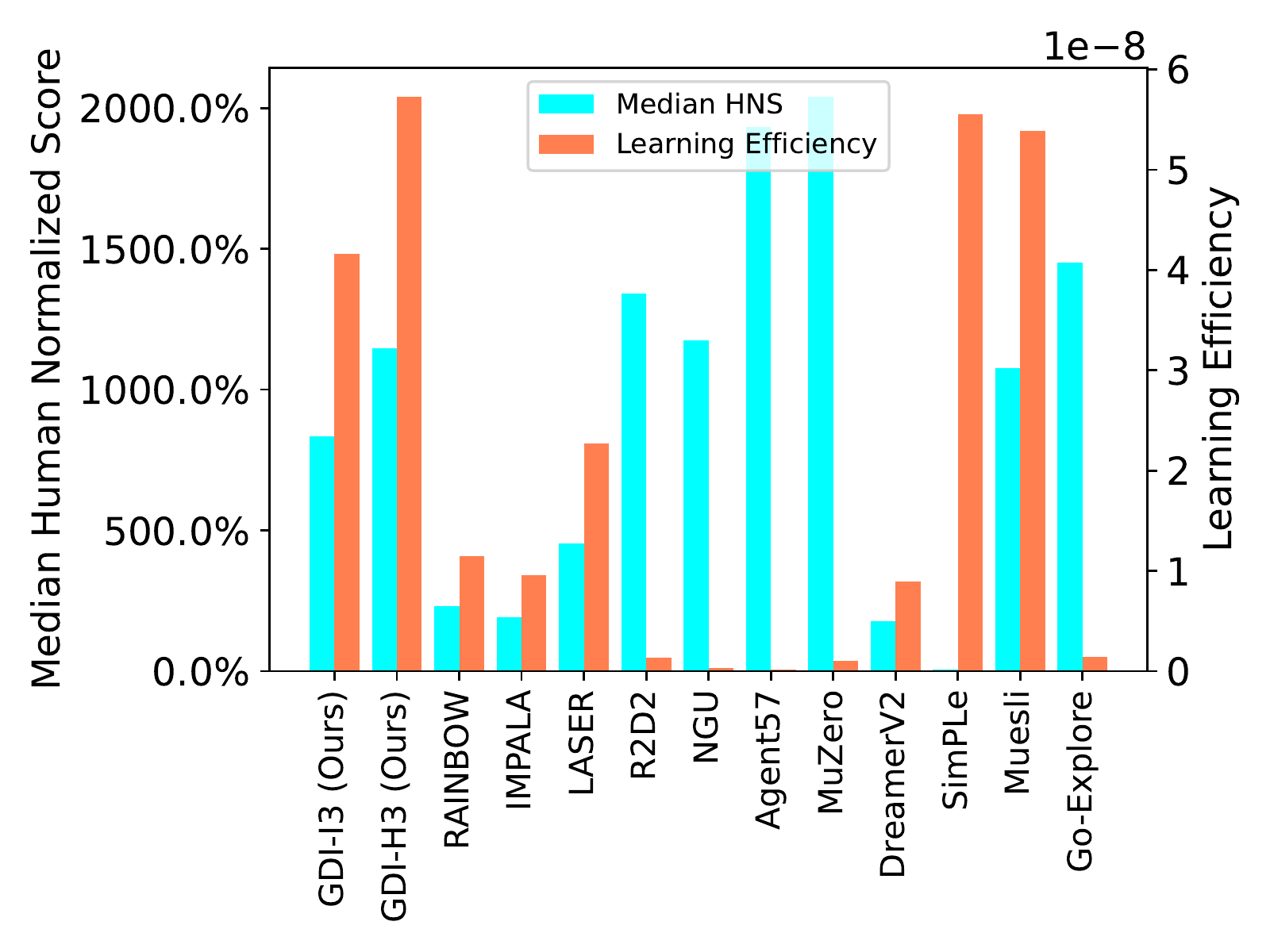}
	}
	\centering
	\caption{SOTA algorithms of Atari 57 games on mean and median HNS (\%) and corresponding learning efficiency calculated by $\frac{\text{MEAN HNS/MEDIAN HNS}}{\text{TRAINING FRAMES}}$.}
	\label{fig: efficiency mean HNS time}
\end{figure*}

\begin{figure*}[!t]
    \centering
	\subfigure{
		\includegraphics[width=0.45\textwidth]{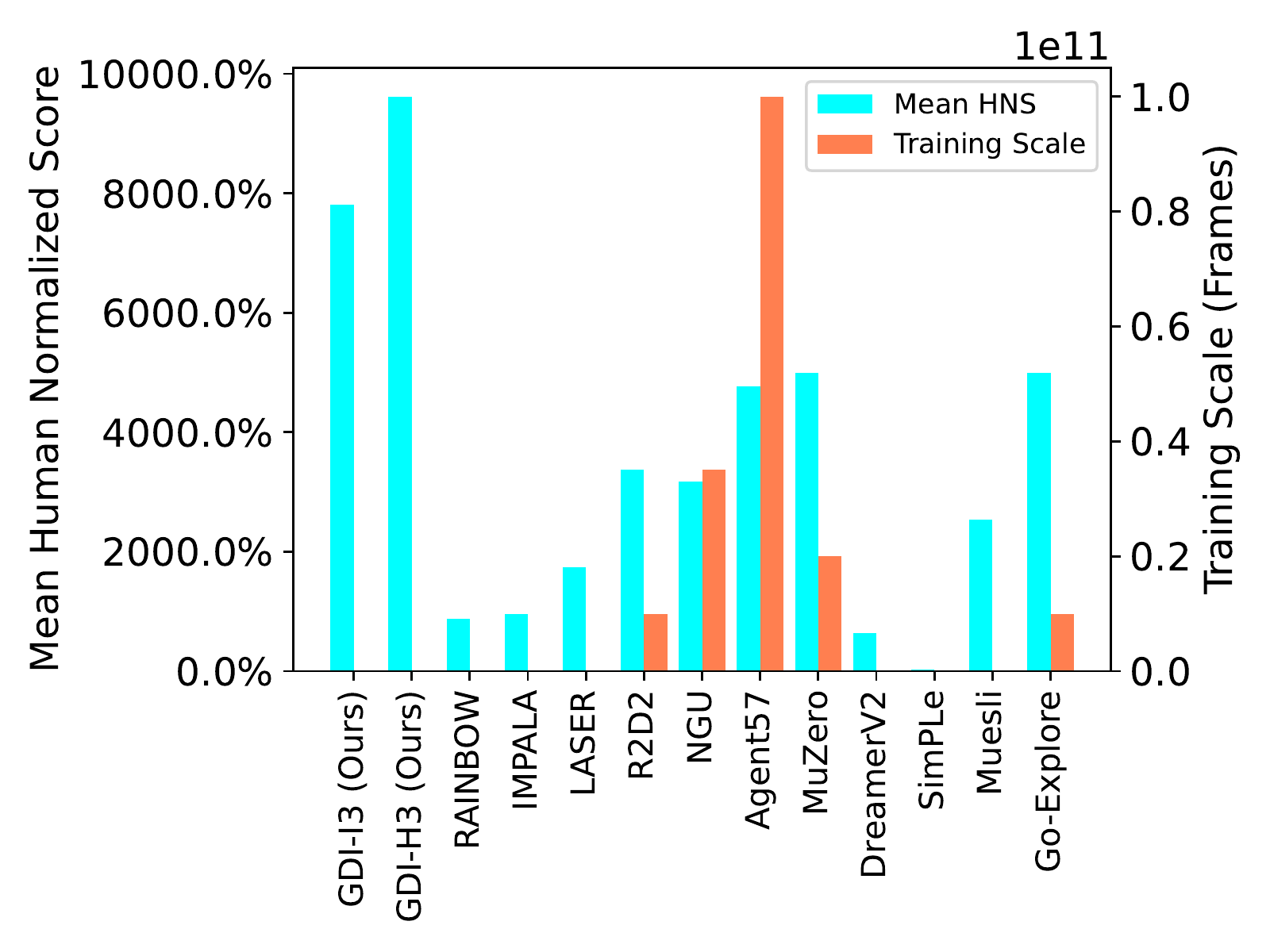}
	}
	\subfigure{
		\includegraphics[width=0.45\textwidth]{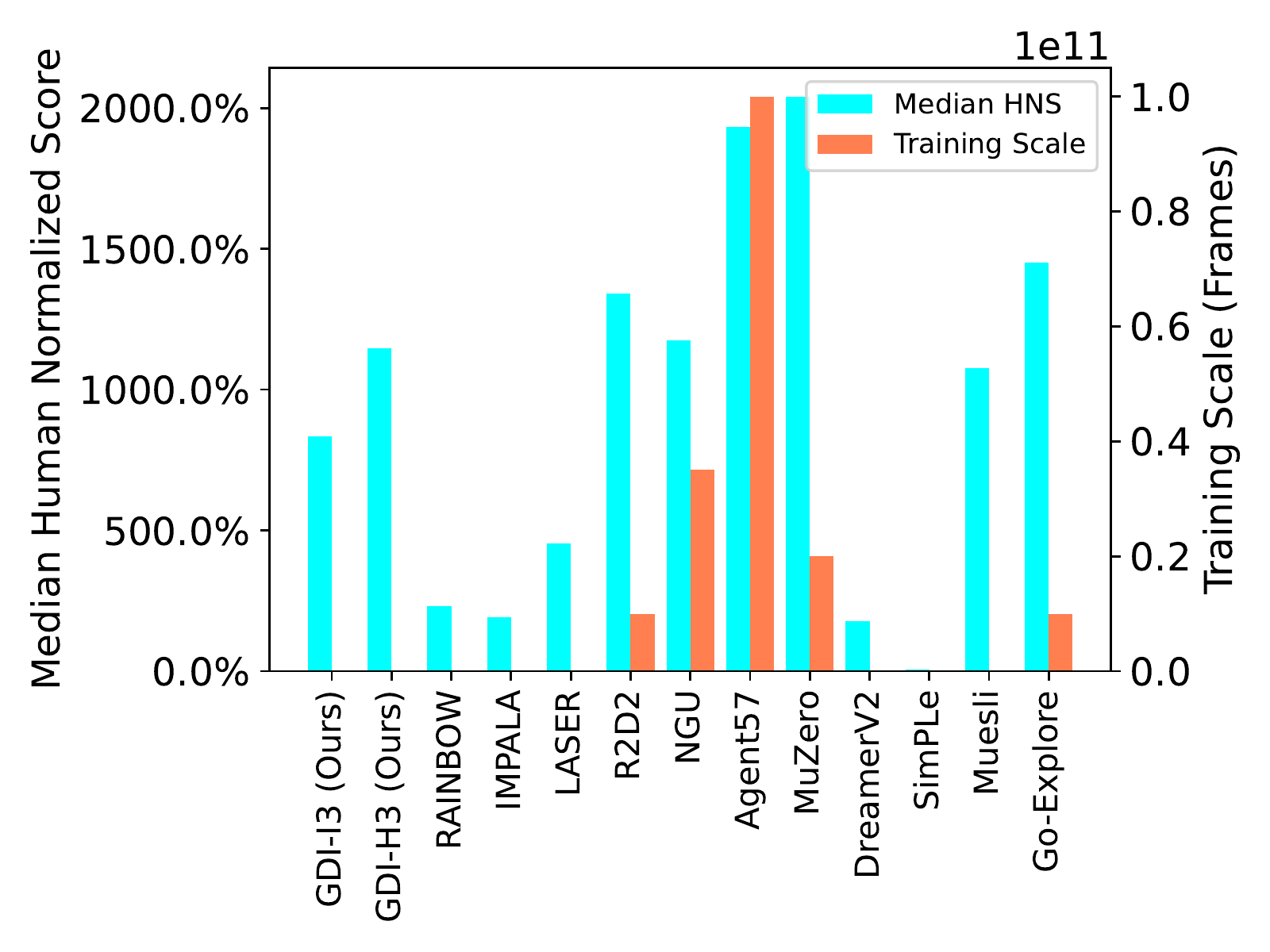}
	}
	\centering
	\caption{SOTA algorithms of Atari 57 games on mean and median HNS (\%) and corresponding training scale.}
	\label{fig: scale mean HNS time}
\end{figure*}

\subsection{RL Benchmarks on HWRNS}
\label{app: RL Benchmarks on HWRNS}

We report several milestones of Atari benchmarks on Human World Records Normalized Score (HWRNS), including DQN \citep{dqn}, RAINBOW \citep{rainbow}, IMPALA \citep{impala}, LASER \citep{laser}, R2D2 \citep{r2d2}, NGU \citep{ngu}, Agent57 \citep{agent57}, Go-Explore \citep{goexplore}, MuZero \citep{muzero}, DreamerV2 \citep{dreamerv2}, SimPLe \citep{modelbasedatari} and Muesli \citep{muesli}. We summarize mean HWRNS and median HWRNS of these algorithms  with their playtime (day), learning efficiency , and training scale in Fig \ref{fig: year mean HWRNS time}, \ref{fig: efficiency mean HWRNS time} and \ref{fig: scale mean HWRNS time}.

\begin{figure*}[!t]
    \centering
	\subfigure{
		\includegraphics[width=0.45\textwidth]{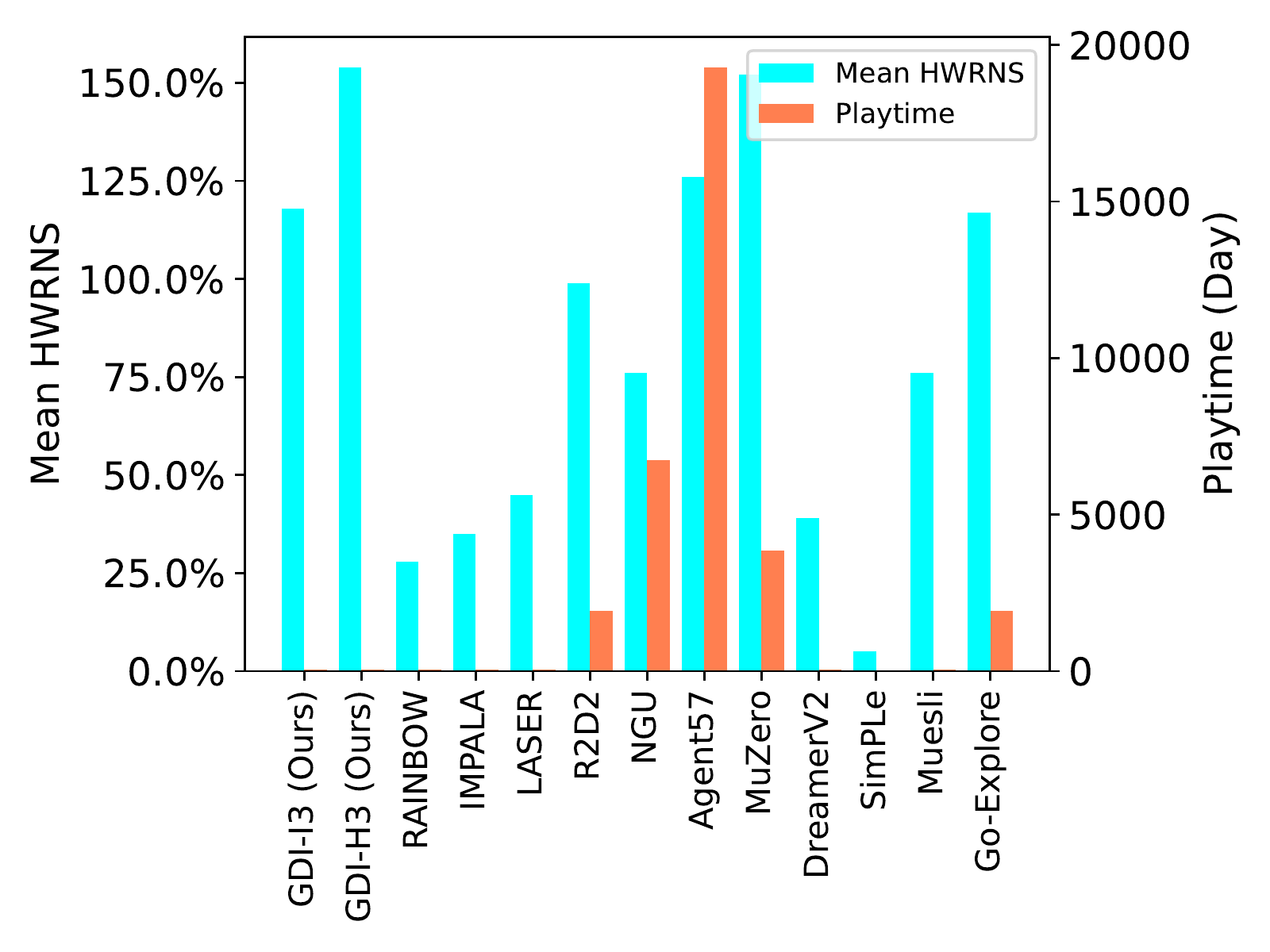}
	}
	\subfigure{
		\includegraphics[width=0.45\textwidth]{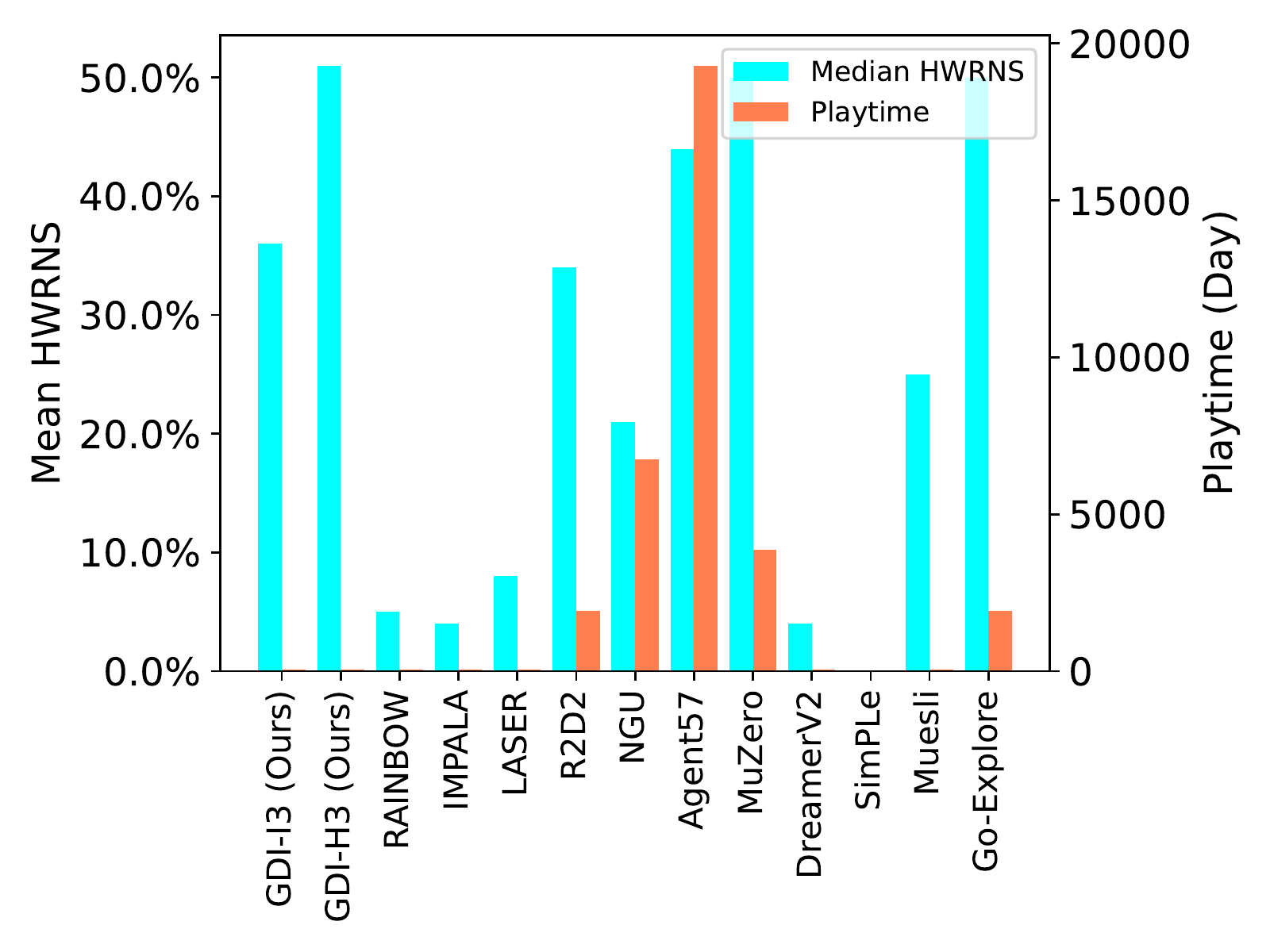}
	}
	\centering
	\caption{SOTA algorithms of Atari 57 games on mean and median HWRNS (\%) and corresponding playtime.}
	\label{fig: year mean HWRNS time}
\end{figure*}

\begin{figure*}[!t]
    \centering
	\subfigure{
		\includegraphics[width=0.45\textwidth]{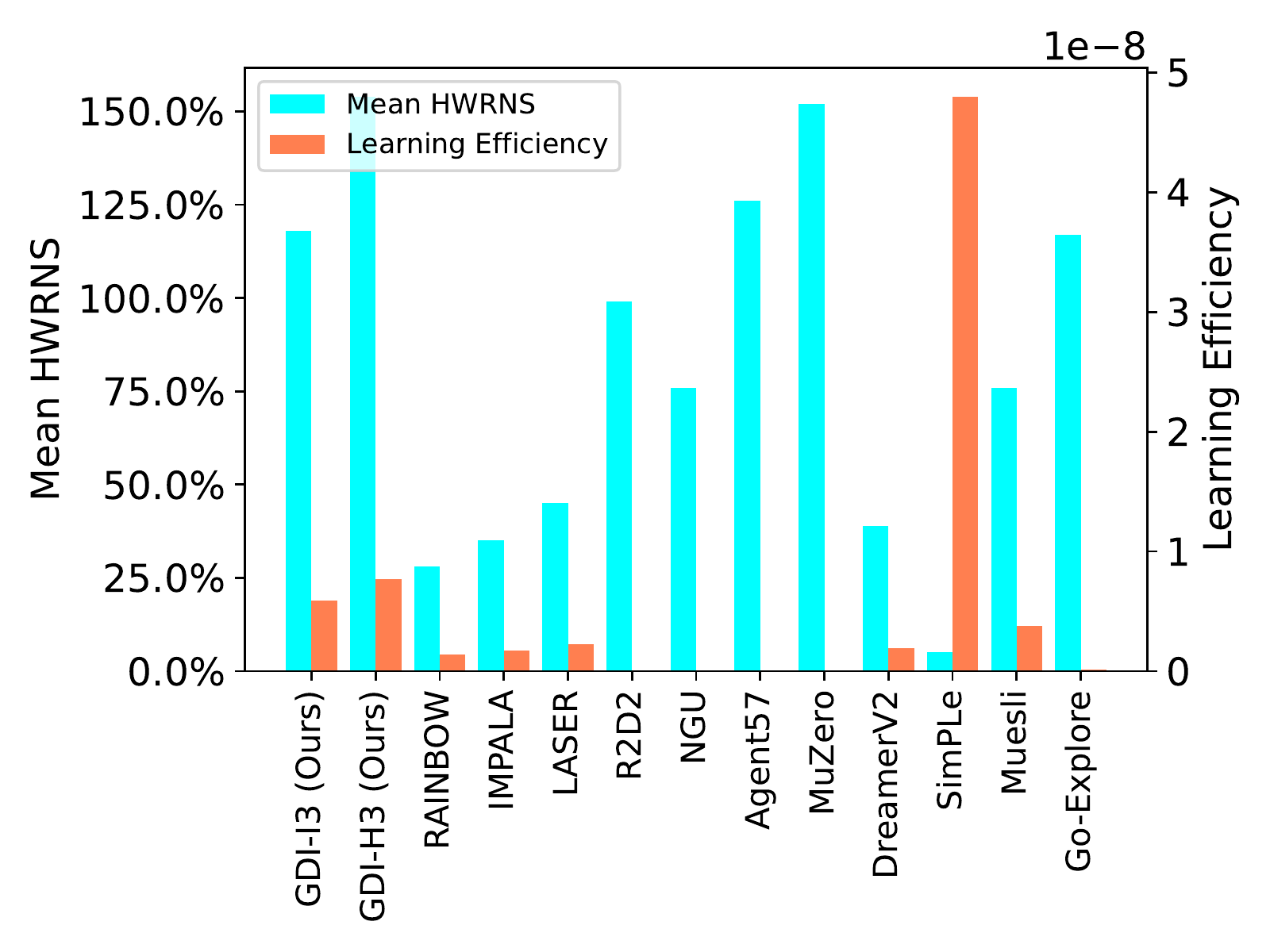}
	}
	\subfigure{
		\includegraphics[width=0.45\textwidth]{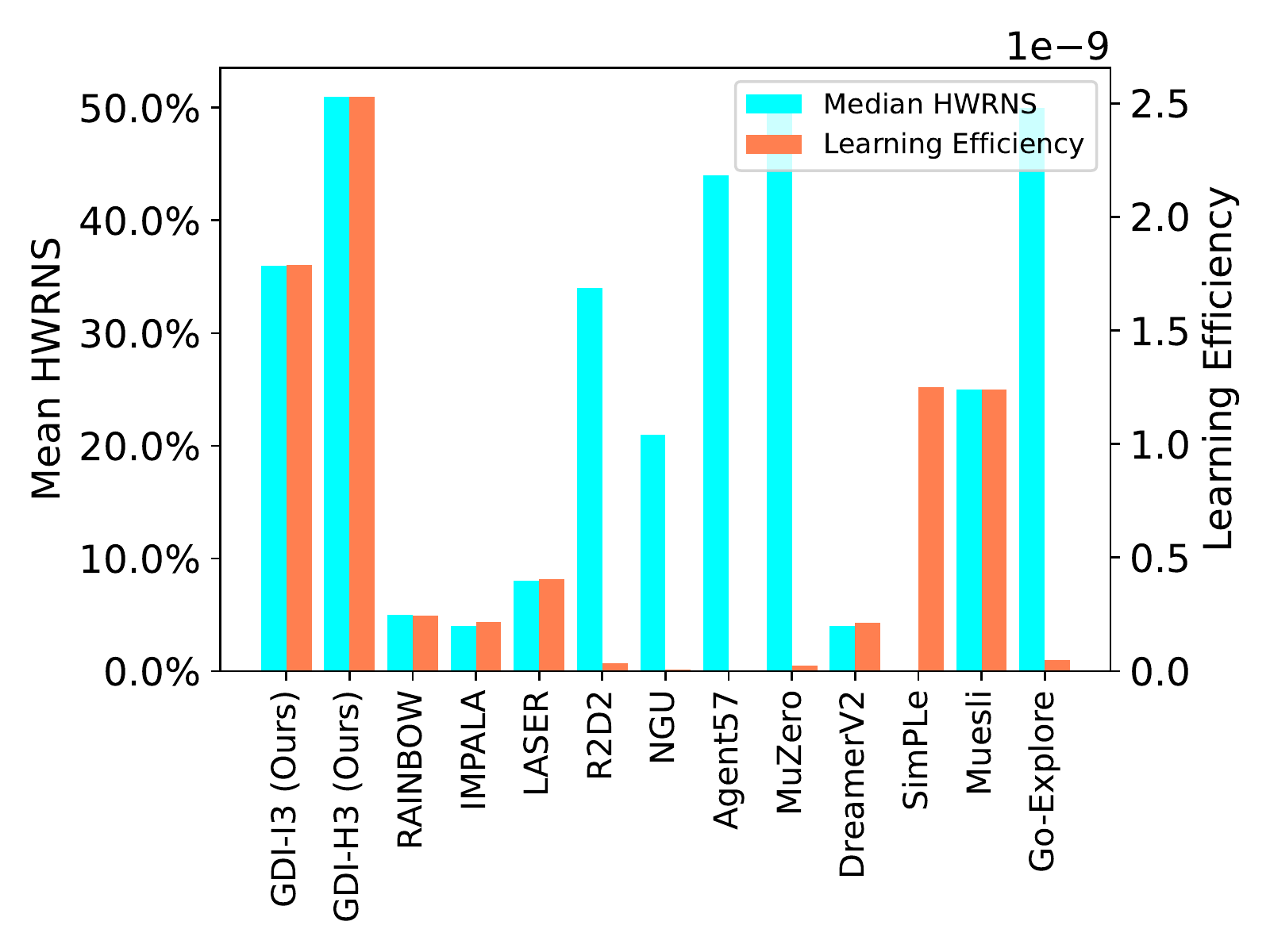}
	}
	\centering
	\caption{SOTA algorithms of Atari 57 games on mean and median HWRNS (\%) and corresponding learning efficiency calculated by $\frac{\text{MEAN HWRNS/MEDIAN HWRNS}}{\text{TRAINING FRAMES}}$.}
	\label{fig: efficiency mean HWRNS time}
\end{figure*}

\begin{figure*}[!t]
    \centering
	\subfigure{
		\includegraphics[width=0.45\textwidth]{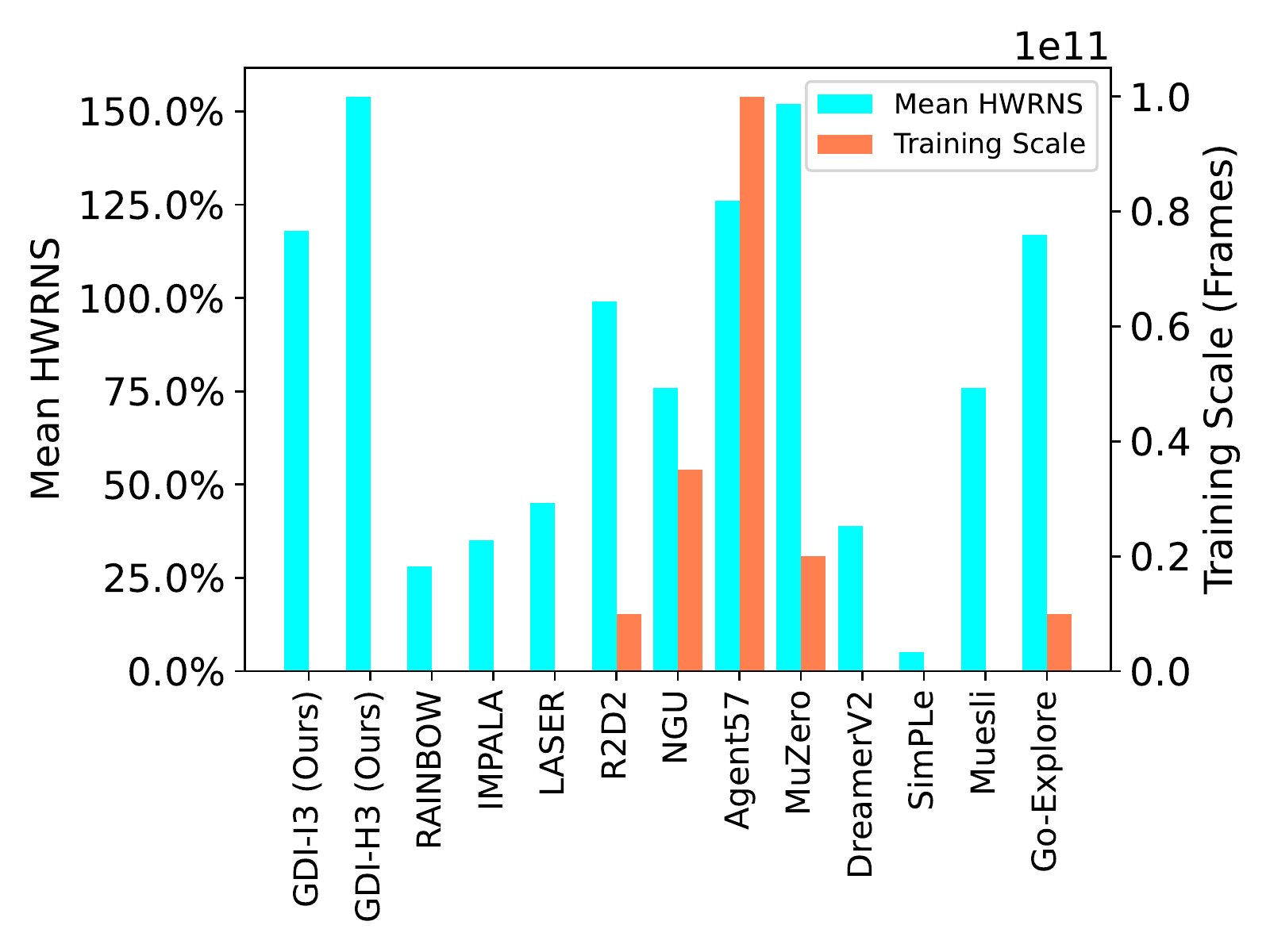}
	}
	\subfigure{
		\includegraphics[width=0.45\textwidth]{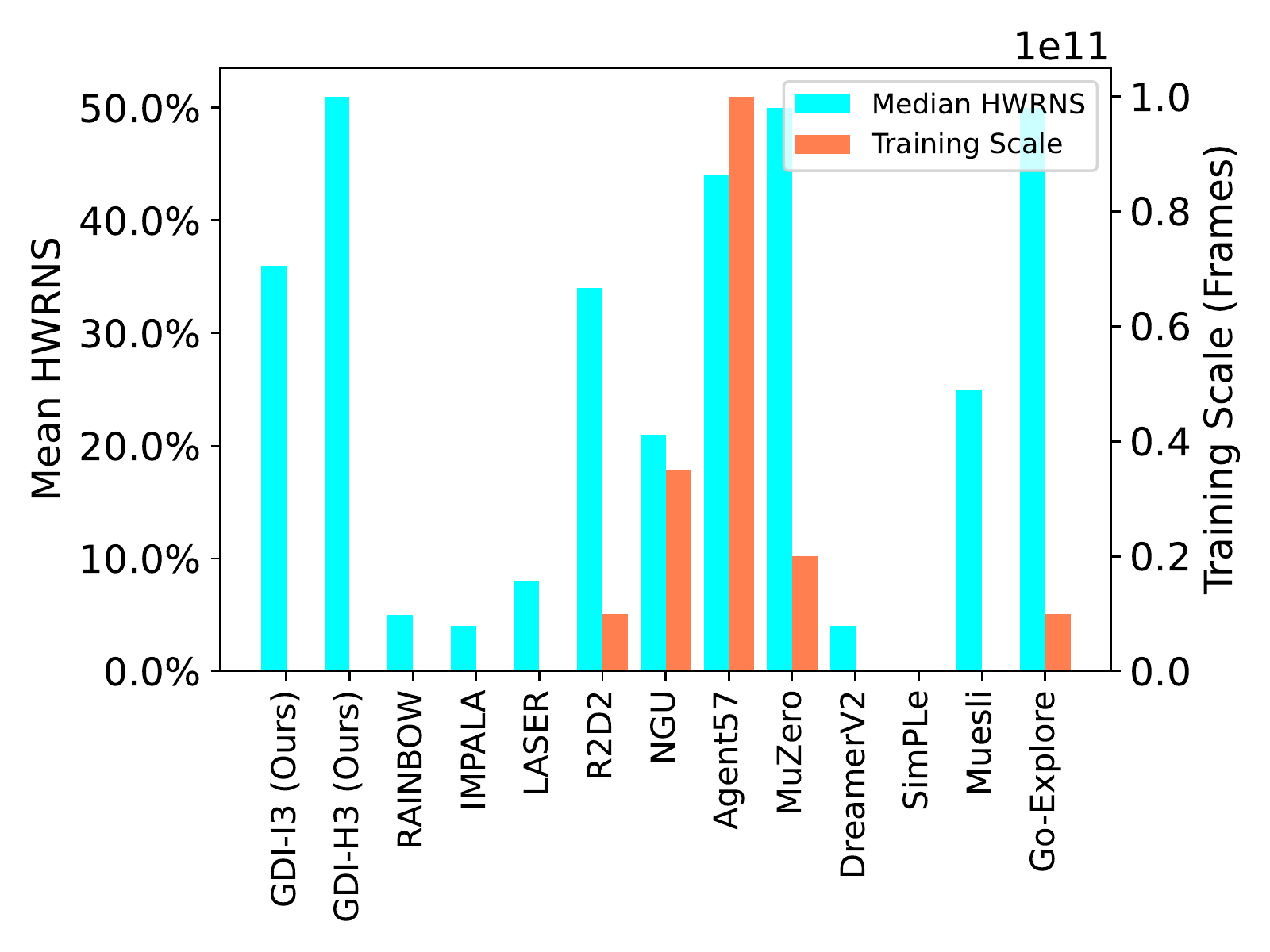}
	}
	\centering
	\caption{SOTA algorithms of Atari 57 games on mean and median HWRNS (\%) and corresponding training scale.}
	\label{fig: scale mean HWRNS time}
\end{figure*}

\subsection{RL Benchmarks on SABER}
\label{app: RL Benchmarks on SABER}

We report several milestones of Atari benchmarks on Standardized Atari BEnchmark for RL (SABER), including DQN \citep{dqn}, RAINBOW \citep{rainbow}, IMPALA \citep{impala}, LASER \citep{laser}, R2D2 \citep{r2d2}, NGU \citep{ngu}, Agent57 \citep{agent57}, Go-Explore \citep{goexplore}, MuZero \citep{muzero}, DreamerV2 \citep{dreamerv2}, SimPLe \citep{modelbasedatari} and Muesli \citep{muesli}. We summarize mean SABER and median SABER of these algorithms  with their playtime, learning efficiency, and training scale in Figs \ref{fig: year mean SABER time}, \ref{fig: efficiency mean SABER time} and \ref{fig: scale mean SABER time}.

\begin{figure*}[!t]
    \centering
	\subfigure{
		\includegraphics[width=0.45\textwidth]{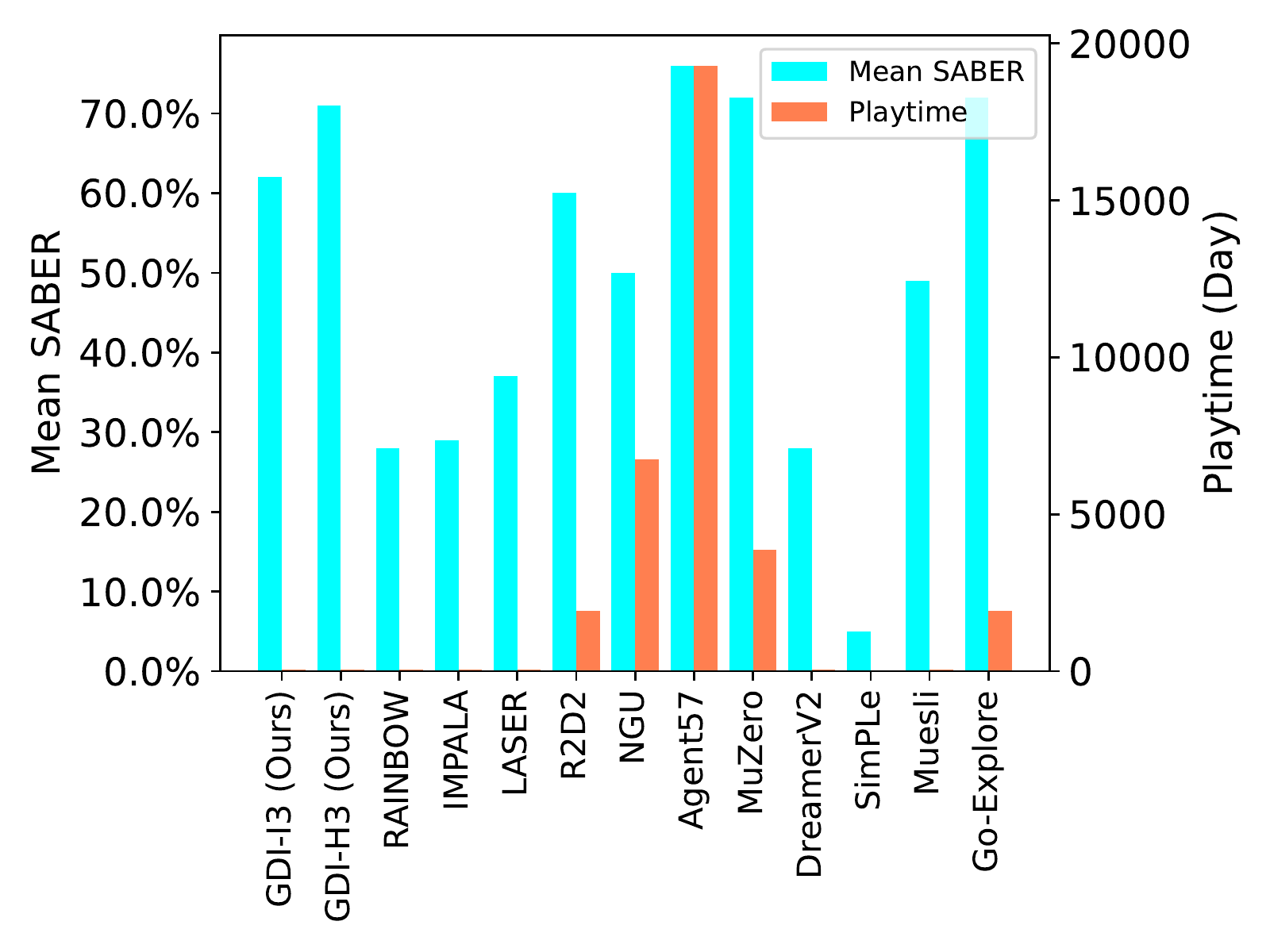}
	}
	\subfigure{
		\includegraphics[width=0.45\textwidth]{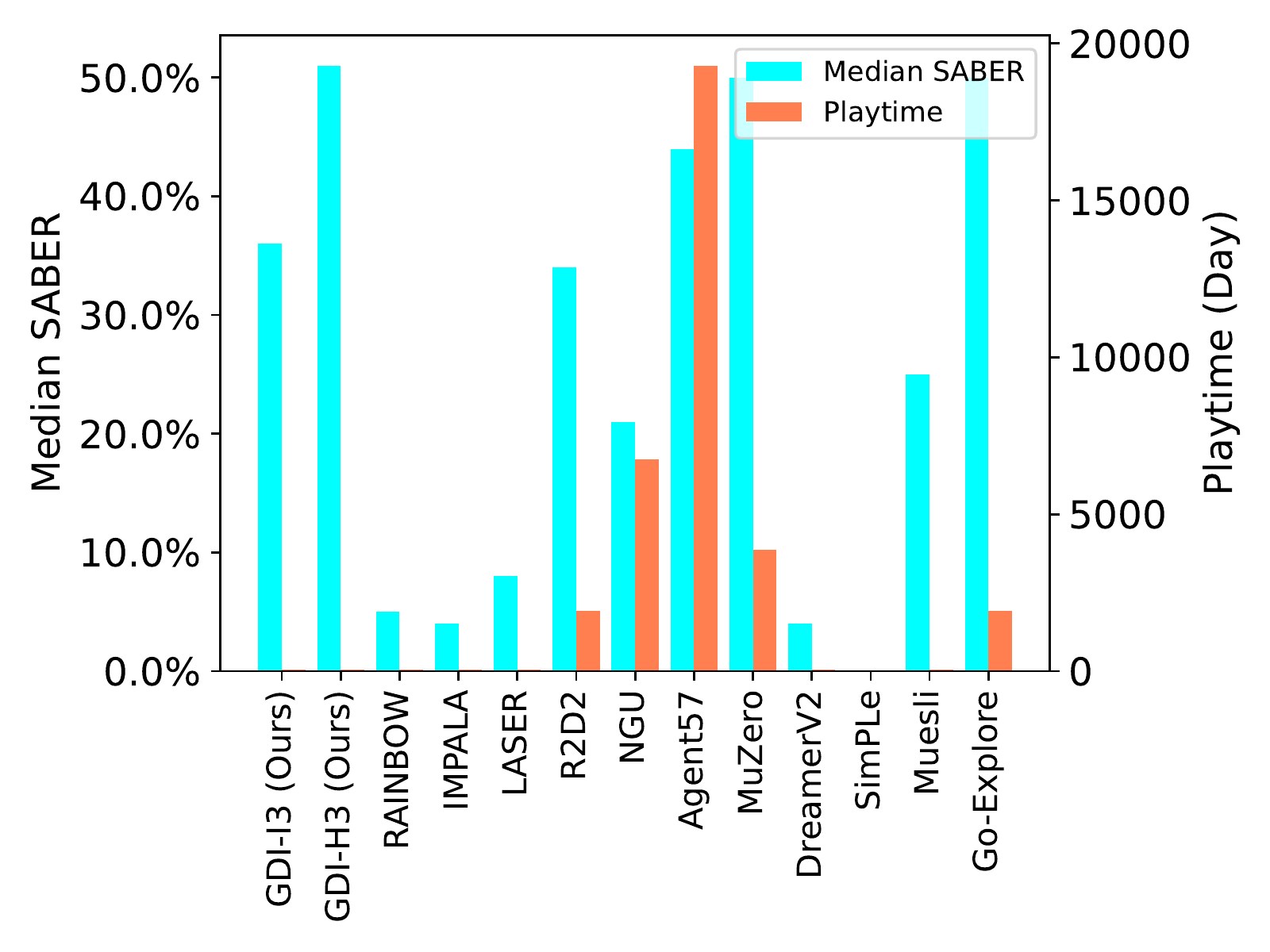}
	}
	\centering
	\caption{SOTA algorithms of Atari 57 games on mean and median SABER (\%) and corresponding playtime.}
	\label{fig: year mean SABER time}
\end{figure*}

\begin{figure*}[!t]
    \centering
	\subfigure{
		\includegraphics[width=0.45\textwidth]{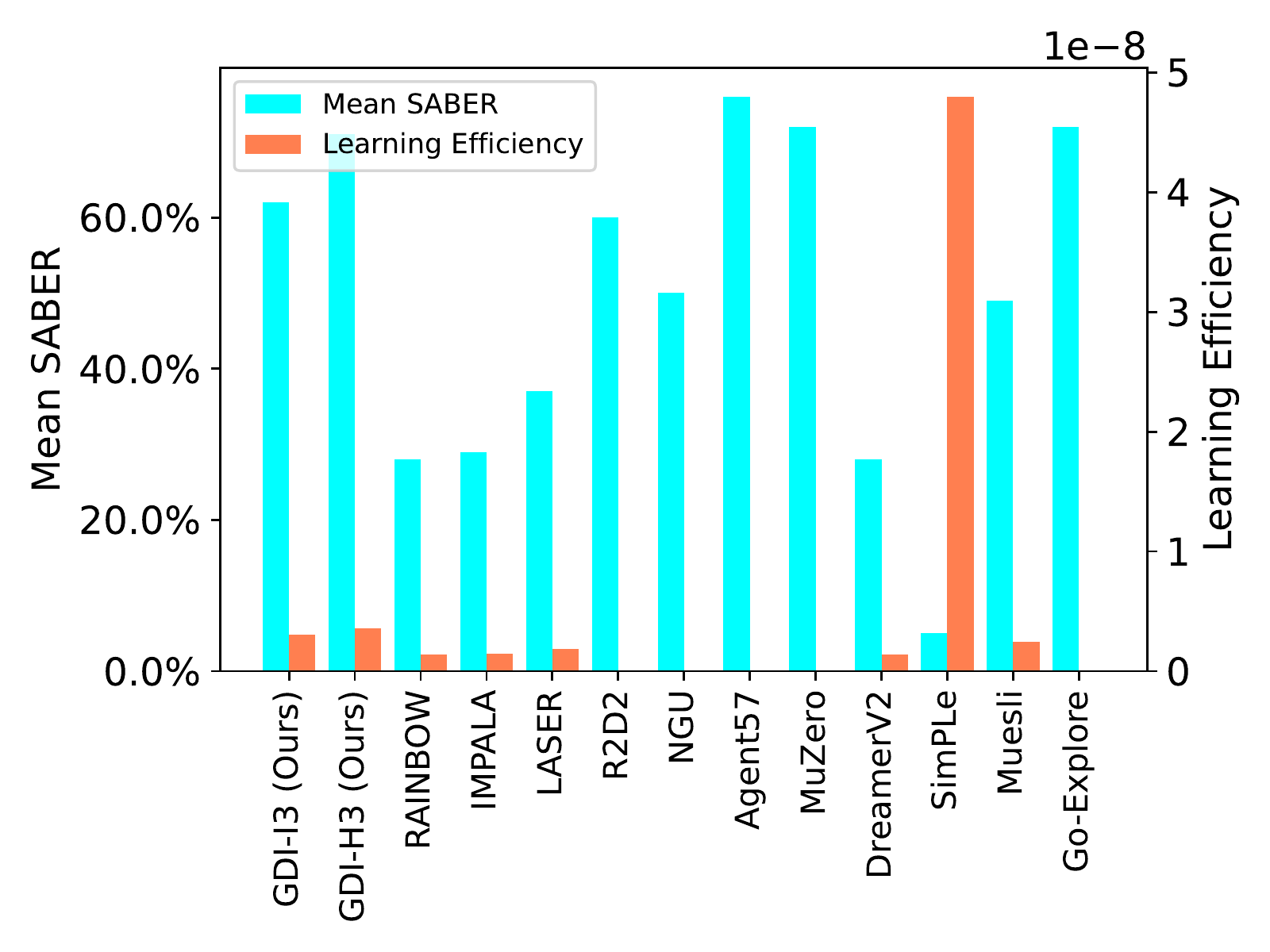}
	}
	\subfigure{
		\includegraphics[width=0.45\textwidth]{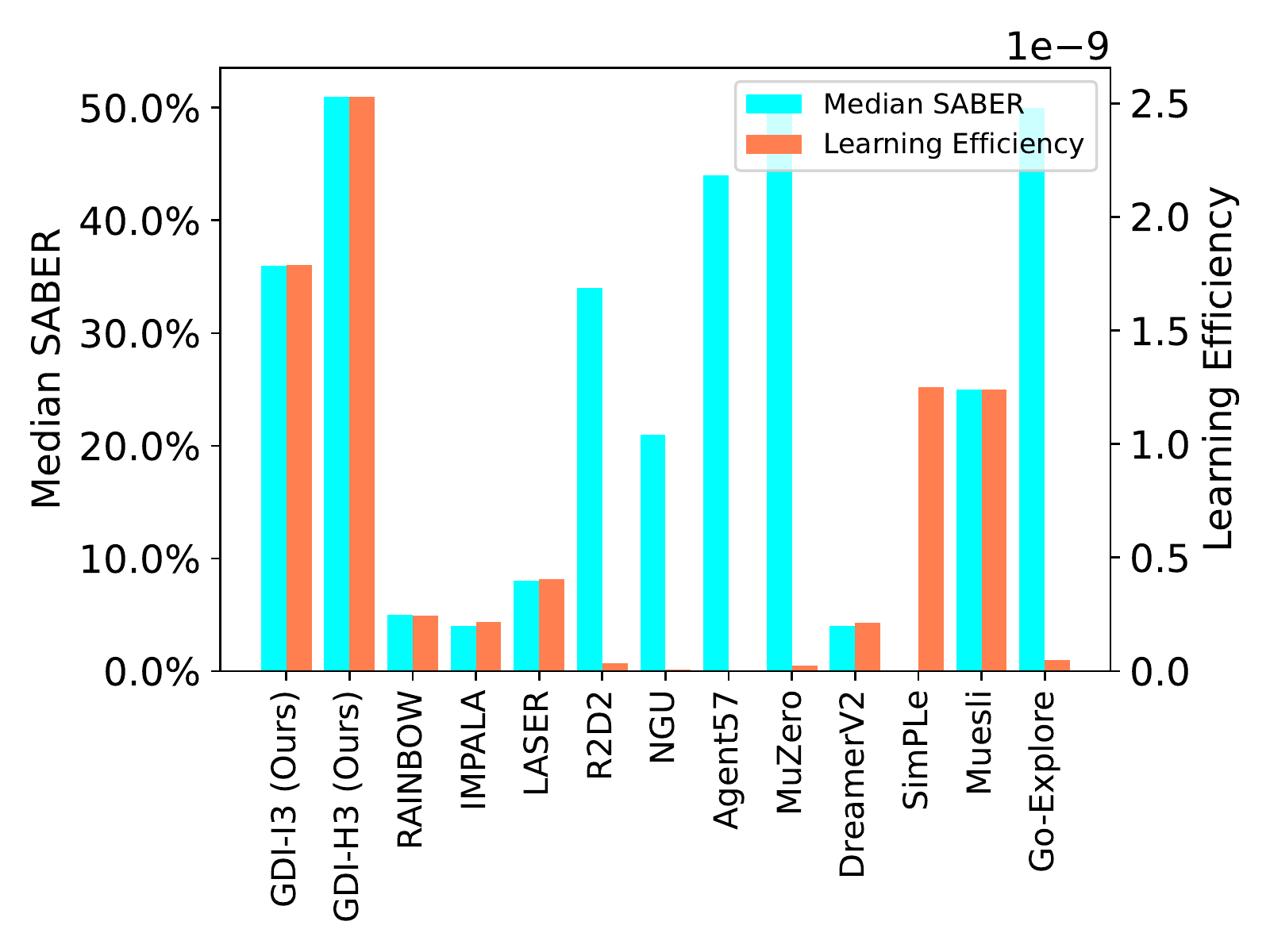}
	}
	\centering
	\caption{SOTA algorithms of Atari 57 games on mean and median SABER (\%) and corresponding learning efficiency calculated by $\frac{\text{MEAN SABER/MEDIAN SABER}}{\text{TRAINING FRAMES}}$.}
	\label{fig: efficiency mean SABER time}
\end{figure*}

\begin{figure*}[!t]
    \centering
	\subfigure{
		\includegraphics[width=0.45\textwidth]{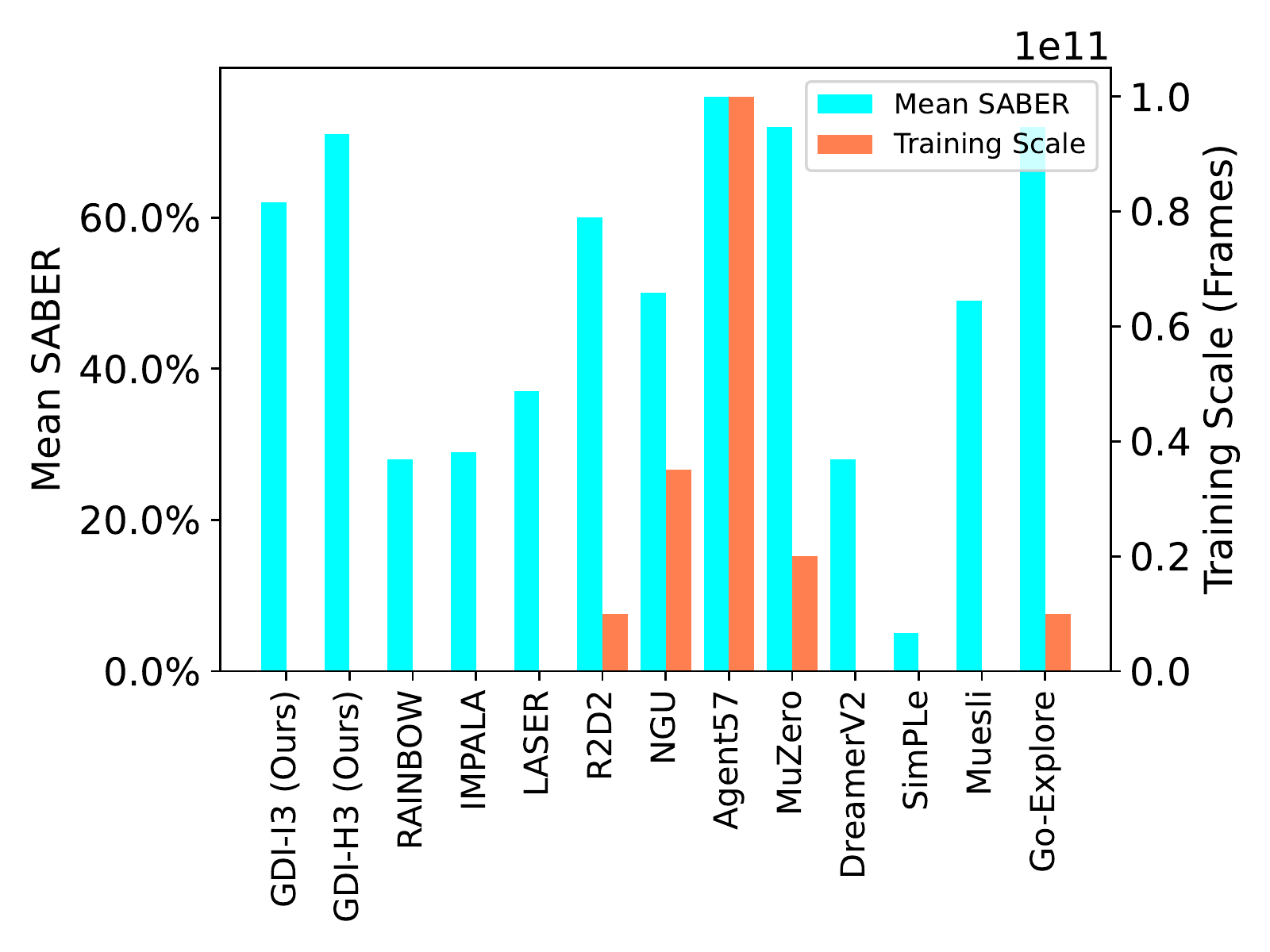}
	}
	\subfigure{
		\includegraphics[width=0.45\textwidth]{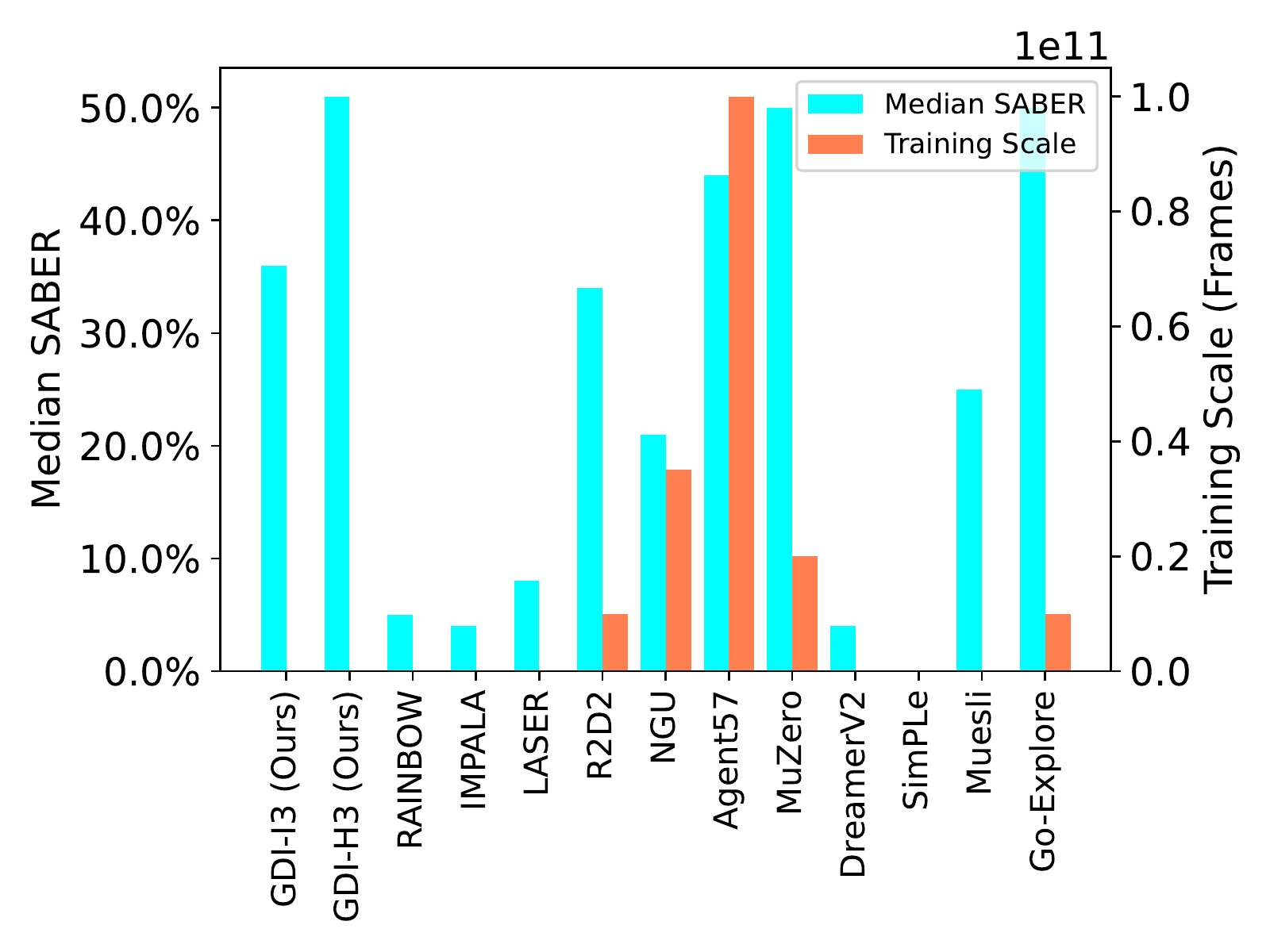}
	}
	\centering
	\caption{SOTA algorithms of Atari 57 games on mean and median SABER (\%) and corresponding training scale.}
	\label{fig: scale mean SABER time}
\end{figure*}

\subsection{RL Benchmarks on HWRB}
\label{app: RL Benchmarks on HWRB}

We report several milestones of Atari benchmarks on HWRB, including DQN \citep{dqn}, RAINBOW \citep{rainbow}, IMPALA \citep{impala}, LASER \citep{laser}, R2D2 \citep{r2d2}, NGU \citep{ngu}, Agent57 \citep{agent57}, Go-Explore \citep{goexplore}, MuZero \citep{muzero}, DreamerV2 \citep{dreamerv2}, SimPLe \citep{modelbasedatari} and Muesli \citep{muesli}. We summarize HWRB of these algorithms  with their playtime, learning efficiency , and training scale in Figs \ref{fig: HWRB time}.

\begin{figure*}[!t]
    \centering
	\subfigure{
		\includegraphics[width=0.45\textwidth]{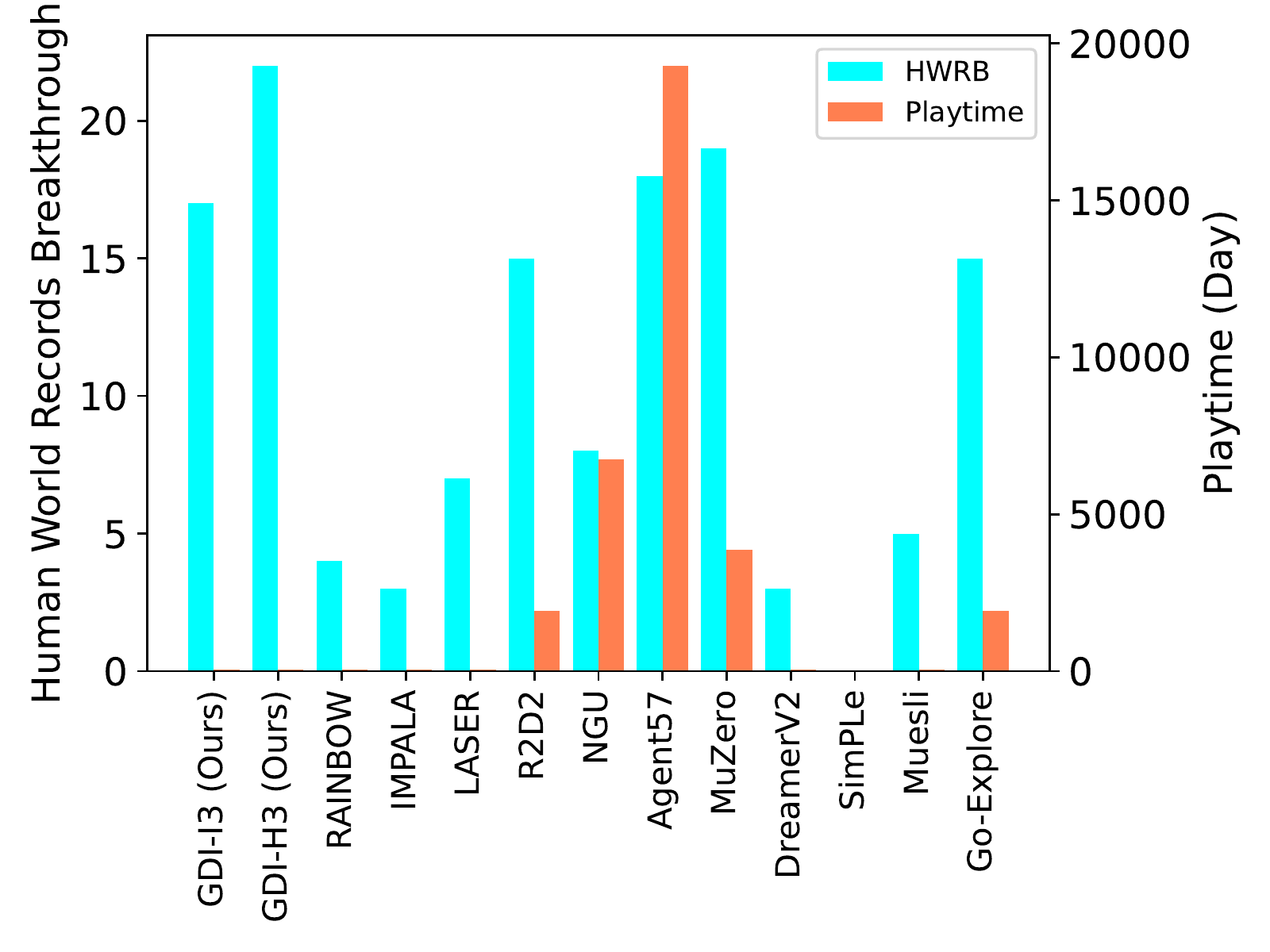}
	}
	\subfigure{
		\includegraphics[width=0.45\textwidth]{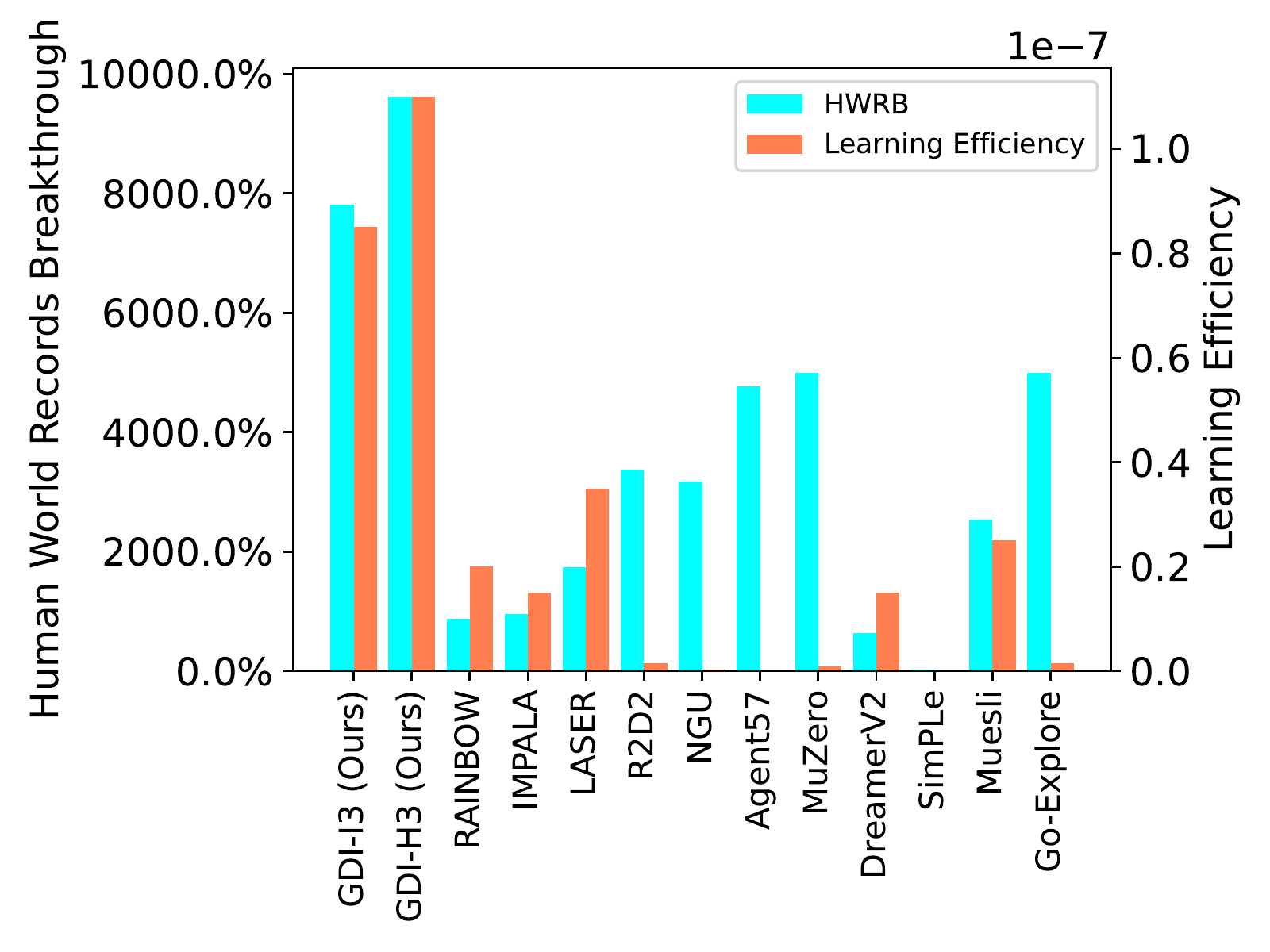}
	}
	\subfigure{
		\includegraphics[width=0.45\textwidth]{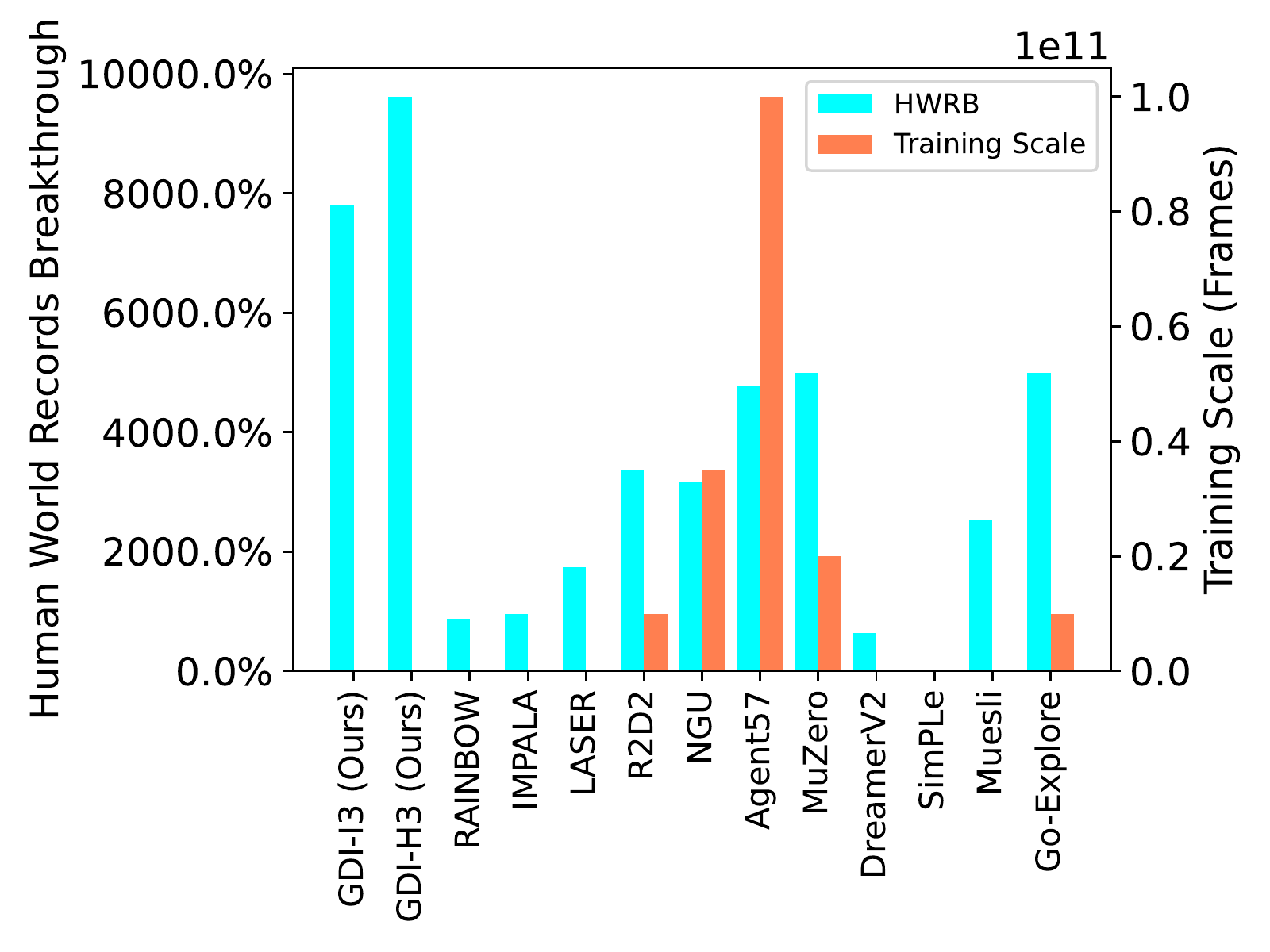}
	}
	\centering
	\caption{SOTA algorithms of Atari 57 games on HWRB.
	HWRB of SimPLe is 0, so it's not shown in the up-right figure.}
	\label{fig: HWRB time}
\end{figure*}

\clearpage

\section{Experimental Results}
\label{appendix: experiment results}

In this section, we report the performance of GDI-H$^3$, GDI-I$^3$, and many well-known SOTA algorithms, including both the model-based and model-free methods (see App. \ref{app: atari benchmark}). First of all, we summarize the performance of all the algorithms over all the evaluation criteria of our evaluation system in App. \ref{app: Full Performance Comparison} which is mentioned in App. \ref{sec:app Experiment Details}. In the following three parts, we visualize the performance of GDI-H$^3$, GDI-I$^3$ over HNS in App. \ref{app: Figure of HNS}, HWRNS in App. \ref{app: Figure of HWRNS}, SABER in App. \ref{app: Figure of SABER} via histogram. Furthermore, we detail all the original scores of all the algorithms and provide raw data that calculates those  evaluation criteria, wherein we first provide all the human world records in 57 Atari games and calculate the HNS in App. \ref{app: Atari Games Table of Scores Based on Human Average Records}, HWRNS in App. \ref{app: Atari Games Table of Scores Based on Human World Records} and SABER in App. \ref{app: Atari Games Table of Scores Based on SABER} of all 57 Atari games. We further provide all the evaluation curves of GDI-H$^3$ and GDI-I$^3$ over 57 Atari games in the App. \ref{app: Atari Games Learning Curves}.

\subsection{Full Performance Comparison}
\label{app: Full Performance Comparison}
In this part, we summarize the performance of all mentioned algorithms over all the evaluation criteria in Tab. \ref{Tab: full performance comparison}. In the following sections, we will detail the performance of each algorithm on all Atari games one by one. 

\begin{table}[H]
\footnotesize
\centering
\caption{Full performance comparison on Atari. The units of training scale is sampled frames. The units of playtime is huamn playtime (day). HNS(\%), HWRNS(\%), and SABER(\%) adopts the  percentage format (i.e., \%). Bold scores indicate the SOTA performance}
\label{Tab: full performance comparison}
\setlength{\tabcolsep}{1.0pt}
\begin{tabular}{ |c | c | c | c| c c | c c |c c|}
\hline
Algorithms & Training Scale & Playtime & HWRB & Mean HNS & Median HNS & Mean HWRNS & Median HWRNS & Mean SABER  & Median SABER\\
\hline
GDI-I$^3$        &\GDIInumframes    & \GDIIgametime    & \textbf{\GDIIHWRB} & \textbf{\GDIImeanhns} & \textbf{\GDIImedianhns}      &\textbf{\GDIImeanHWRNS} & \textbf{\GDIImedianHWRNS}& \textbf{\GDIImeanSABER} & \textbf{\GDIImedianSABER}\\
Rainbow     &\rainbownumframes    & \rainbowgametime     & \rainbowHWRB  & \rainbowmeanhns  & \rainbowmedianhns     &\rainbowmeanHWRNS  & \rainbowmedianHWRNS & \rainbowmeanSABER & \rainbowmedianSABER\\
IMPALA      &\impalanumframes    & \impalagametime     & \impalaHWRB  & \impalameanhns  & \impalamedianhns     &\impalameanHWRNS  & \impalamedianHWRNS & \impalameanSABER & \impalamedianSABER\\
LASER       &\lasernumframes    & \lasergametime     & \laserHWRB  & \lasermeanhns & \lasermedianhns     &\lasermeanHWRNS  & \lasermedianHWRNS & \lasermeanSABER & \lasermedianSABER\\
\hline
\hline
GDI-I$^3$        &\GDIInumframes    &\GDIIgametime   & \GDIIHWRB & \textbf{\GDIImeanhns} & \GDIImedianhns     &\GDIImeanHWRNS &\GDIImedianHWRNS & \GDIImeanSABER & \GDIImedianSABER\\
R2D2        &\rtdtnumframes   &\rtdtgametime     & \rtdtHWRB & \rtdtmeanhns & \rtdtmedianhns &\rtdtmeanHWRNS  & \rtdtmedianHWRNS& \rtdtmeanSABER & \rtdtmedianSABER \\
NGU         &\ngunumframes & \ngugametime    & \nguHWRB  & \ngumeanhns & \ngumedianhns    &\ngumeanHWRNS  & \ngumedianHWRNS& \ngumeanSABER & \ngumedianSABER\\
Agent57     &\agentnumframes   & \agentgametime      & \textbf{\agentHWRB} & \agentmeanhns & \textbf{\agentmedianhns}    &\textbf{\agentmeanHWRNS} & \textbf{\agentmedianHWRNS}& \textbf{\agentmeanSABER} & \textbf{\agentmedianSABER}\\
\hline
\hline
GDI-I$^3$       &\GDIInumframes    &\GDIIgametime    & \GDIIHWRB & \textbf{\GDIImeanhns} & \GDIImedianhns     &\GDIImeanHWRNS &\GDIImedianHWRNS & \GDIImeanSABER & \GDIImedianSABER\\
SimPLe      &\simplenumframes   & \simplegametime  & \simpleHWRB  & \simplemeanhns   & \simplemedianhns       &\simplemeanHWRNS   & \simplemedianHWRNS & \simplemeanSABER  & \simplemedianSABER\\
DreamerV2   &\dreamernumframes    & \dreamergametime     & \dreamerHWRB  & \dreamermeanhns  & \dreamermedianhns     &\dreamermeanHWRNS  & \dreamermedianHWRNS & \dreamermeanSABER & \dreamermedianSABER \\
MuZero      & \muzeronumframes   & \muzerogametime     & \textbf{\muzeroHWRB} & \muzeromeanhns & \textbf{\muzeromedianhns}    &\textbf{\muzeromeanHWRNS} & \textbf{\muzeromedianHWRNS}& \textbf{\muzeromeanSABER} & \textbf{\muzeromedianSABER} \\
\hline
\hline
GDI-I$^3$       &\GDIInumframes    &\GDIIgametime    & \textbf{\GDIIHWRB} & \textbf{\GDIImeanhns} & \GDIImedianhns     &\GDIImeanHWRNS &\GDIImedianHWRNS & \GDIImeanSABER & \GDIImedianSABER\\
Muesli      &\mueslinumframes    & \muesligametime     & \muesliHWRB           &  \mueslimeanhns         & \mueslimedianhns    & \mueslimeanHWRNS          & \mueslimedianHWRNS  & \mueslimeanSABER & \mueslimedianSABER \\
Go-Explore  & \goexplorenumframes   & \goexploregametime     & \goexploreHWRB & \goexploremeanhns & \textbf{\goexploremedianhns}    &\goexploremeanHWRNS & \textbf{\goexploremedianHWRNS}& \textbf{\goexploremeanSABER} & \textbf{\goexploremedianSABER}\\
\hline
\hline
GDI-H$^3$   &\GDIHnumframes    & \GDIHgametime    &\textbf{ \GDIHHWRB} & \textbf{\GDIHmeanhns} & \textbf{\GDIHmedianhns}     &\textbf{\GDIHmeanHWRNS} &\textbf{\GDIHmedianHWRNS} & \textbf{\GDIHmeanSABER} & \textbf{\GDIHmedianSABER}\\
Rainbow     &\rainbownumframes    & \rainbowgametime     & \rainbowHWRB  & \rainbowmeanhns  & \rainbowmedianhns     &\rainbowmeanHWRNS  & \rainbowmedianHWRNS & \rainbowmeanSABER & \rainbowmedianSABER\\
IMPALA      &\impalanumframes    & \impalagametime     & \impalaHWRB  & \impalameanhns  & \impalamedianhns     &\impalameanHWRNS  & \impalamedianHWRNS & \impalameanSABER & \impalamedianSABER\\
LASER       &\lasernumframes    & \lasergametime     & \laserHWRB  & \lasermeanhns & \lasermedianhns     &\lasermeanHWRNS  & \lasermedianHWRNS & \lasermeanSABER & \lasermedianSABER\\
\hline
\hline
GDI-H$^3$  &\GDIHnumframes    & \GDIHgametime     & \textbf{\GDIHHWRB} & \textbf{\GDIHmeanhns} & \GDIHmedianhns     &\textbf{\GDIHmeanHWRNS} &\textbf{\GDIHmedianHWRNS} & \GDIHmeanSABER &\textbf{ \GDIHmedianSABER}\\
R2D2        &\rtdtnumframes   &\rtdtgametime     & \rtdtHWRB & \rtdtmeanhns & \rtdtmedianhns &\rtdtmeanHWRNS  & \rtdtmedianHWRNS& \rtdtmeanSABER & \rtdtmedianSABER \\
NGU         &\ngunumframes & \ngugametime    & \nguHWRB  & \ngumeanhns & \ngumedianhns    &\ngumeanHWRNS  & \ngumedianHWRNS& \ngumeanSABER & \ngumedianSABER\\
Agent57     &\agentnumframes   & \agentgametime      & \agentHWRB & \agentmeanhns & \textbf{\agentmedianhns}    &\agentmeanHWRNS & \agentmedianHWRNS& \textbf{\agentmeanSABER} & \agentmedianSABER\\
\hline
\hline
GDI-H$^3$   &\GDIHnumframes    & \GDIHgametime      & \textbf{\GDIHHWRB} & \textbf{\GDIHmeanhns} & \GDIHmedianhns     &\textbf{\GDIHmeanHWRNS} &\textbf{\GDIHmedianHWRNS} & \GDIHmeanSABER & \textbf{\GDIHmedianSABER}\\
SimPLe      &\simplenumframes   & \simplegametime  & \simpleHWRB  & \simplemeanhns   & \simplemedianhns       &\simplemeanHWRNS   & \simplemedianHWRNS & \simplemeanSABER  & \simplemedianSABER\\
DreamerV2   &\dreamernumframes    & \dreamergametime     & \dreamerHWRB  & \dreamermeanhns  & \dreamermedianhns     &\dreamermeanHWRNS  & \dreamermedianHWRNS & \dreamermeanSABER & \dreamermedianSABER \\
MuZero      & \muzeronumframes   & \muzerogametime     & \muzeroHWRB & \muzeromeanhns & \textbf{\muzeromedianhns}    &\muzeromeanHWRNS & \muzeromedianHWRNS& \textbf{\muzeromeanSABER} & \muzeromedianSABER \\
\hline
\hline
GDI-H$^3$    &\GDIHnumframes    & \GDIHgametime       & \textbf{\GDIHHWRB} & \textbf{\GDIHmeanhns} & \GDIHmedianhns     &\textbf{\GDIHmeanHWRNS} &\textbf{\GDIHmedianHWRNS} & \GDIHmeanSABER & \textbf{\GDIHmedianSABER}\\
Muesli      &\mueslinumframes    & \muesligametime     & \muesliHWRB           &  \mueslimeanhns         & \mueslimedianhns    & \mueslimeanHWRNS          & \mueslimedianHWRNS  & \mueslimeanSABER & \mueslimedianSABER \\
Go-Explore  & \goexplorenumframes   & \goexploregametime     & \goexploreHWRB & \goexploremeanhns & \textbf{\goexploremedianhns}    &\goexploremeanHWRNS & \goexploremedianHWRNS& \textbf{\goexploremeanSABER} & \goexploremedianSABER\\
\hline
\end{tabular}
\end{table}

\normalsize
\clearpage

\subsection{Figure of HNS}
\label{app: Figure of HNS}
In this part, we  visualize the HNS using GDI-H$^3$ and  GDI-I$^3$ in all 57 games. The HNS histogram of GDI-I$^3$ is illustrated in Fig. \ref{fig: HNS of GDII}. The HNS histogram of GDI-H$^3$ is illustrated in Fig. \ref{fig: HNS of GDIH}. 

\begin{figure*}[!ht]
	\subfigure{
		\includegraphics[width=0.8\textwidth]{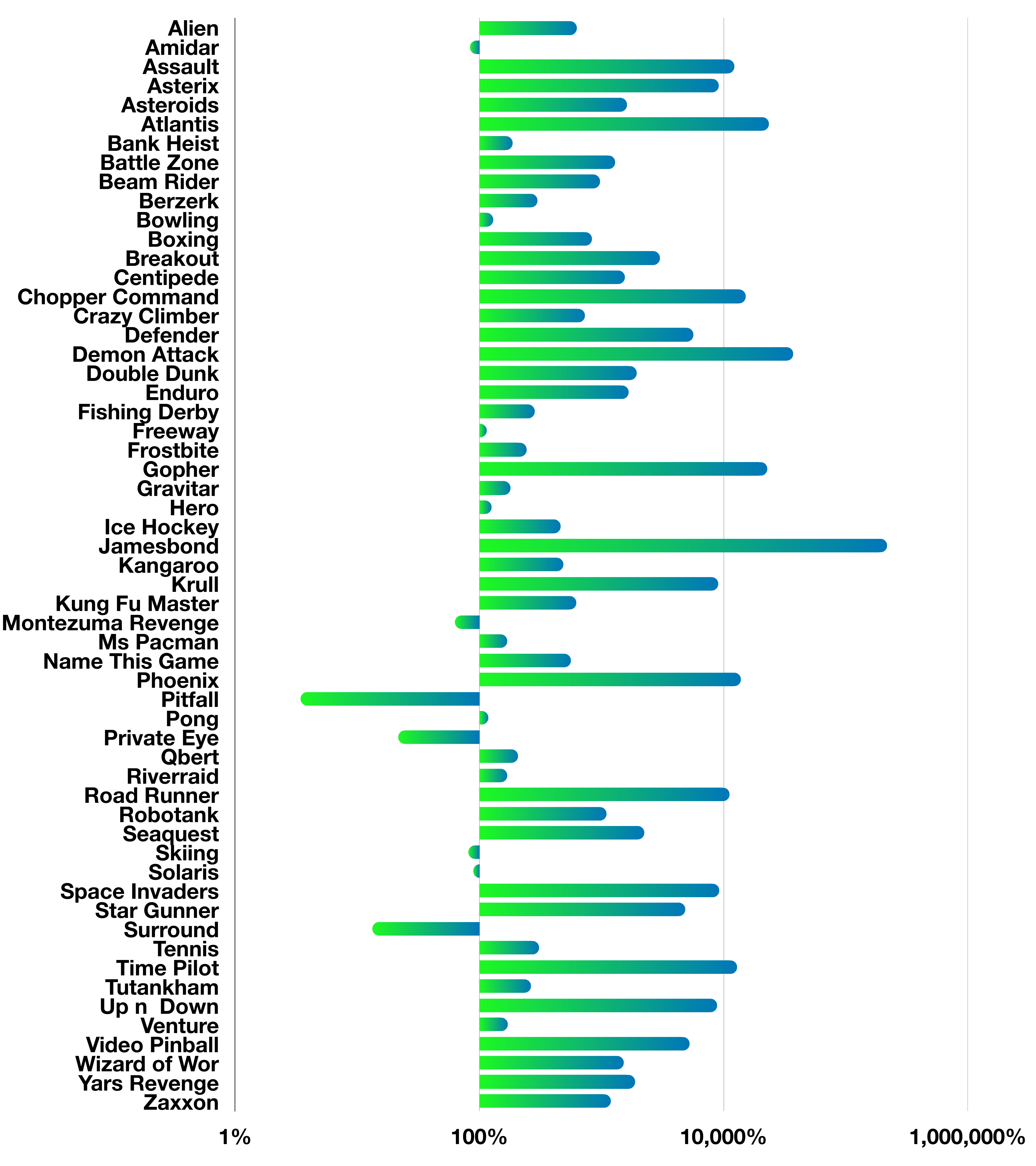}
	}
	\caption{HNS (\%) of Atari 57 games using GDI-I$^3$.}
	\label{fig: HNS of GDII}
\end{figure*}

\begin{figure*}[!ht]
	\subfigure{
		\includegraphics[width=0.8\textwidth]{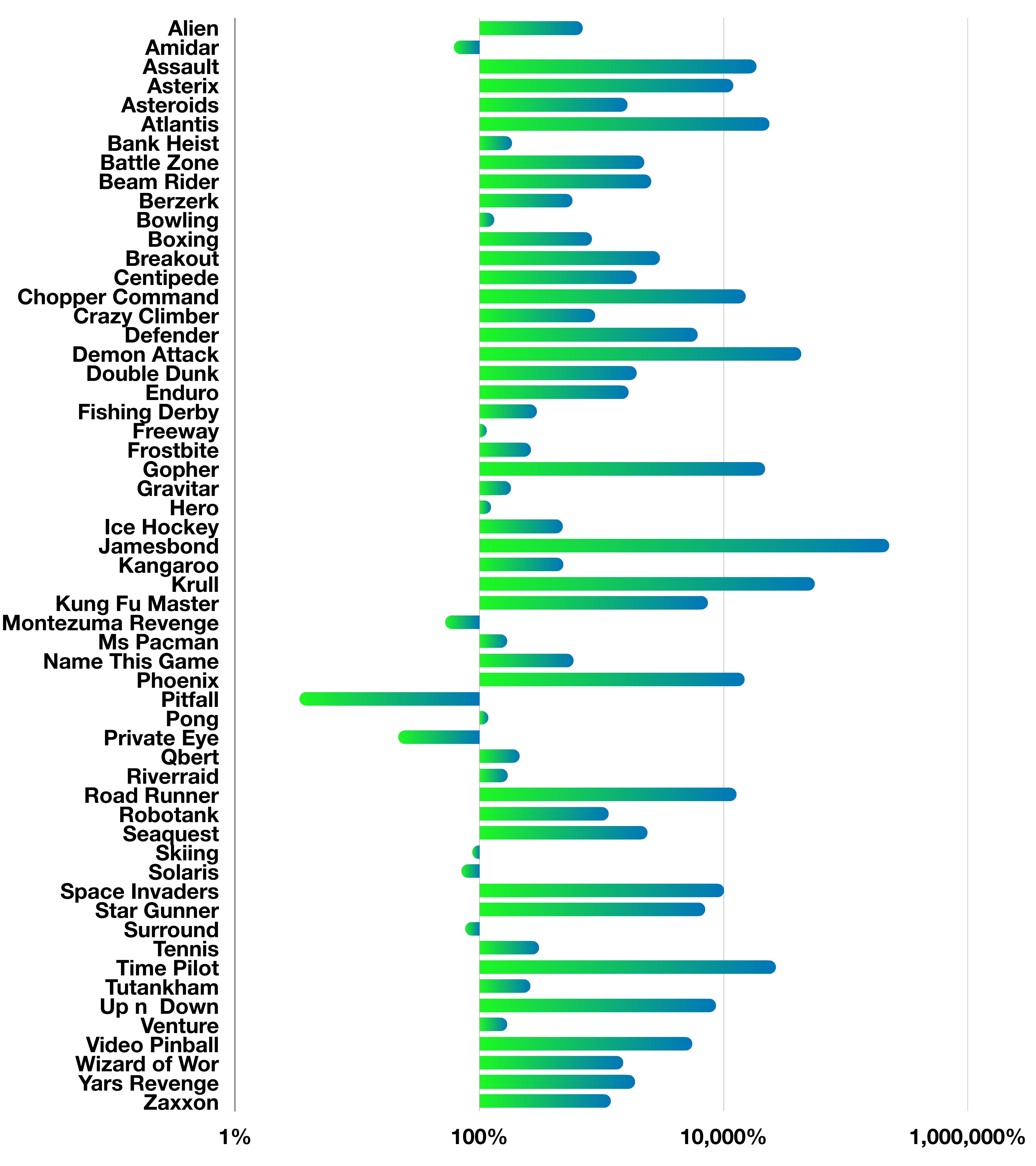}
	}
	\caption{HNS (\%) of Atari 57 games using GDI-H$^3$.}
	\label{fig: HNS of GDIH}
\end{figure*}

\clearpage

\subsection{Figure of HWRNS}
\label{app: Figure of HWRNS}
In this part, we  visualize the HWRNS \citep{dreamerv2,atarihuman} using GDI-H$^3$ and  GDI-I$^3$ in all 57 games. The HWRNS histogram of GDI-I$^3$ is illustrated in Fig. \ref{fig: HWRNS of GDII}. The HWRNS histogram of GDI-H$^3$ is illustrated in Fig. \ref{fig: HWRNS of GDIH}.

\begin{figure*}[!ht]
	\subfigure{
		\includegraphics[width=0.8\textwidth]{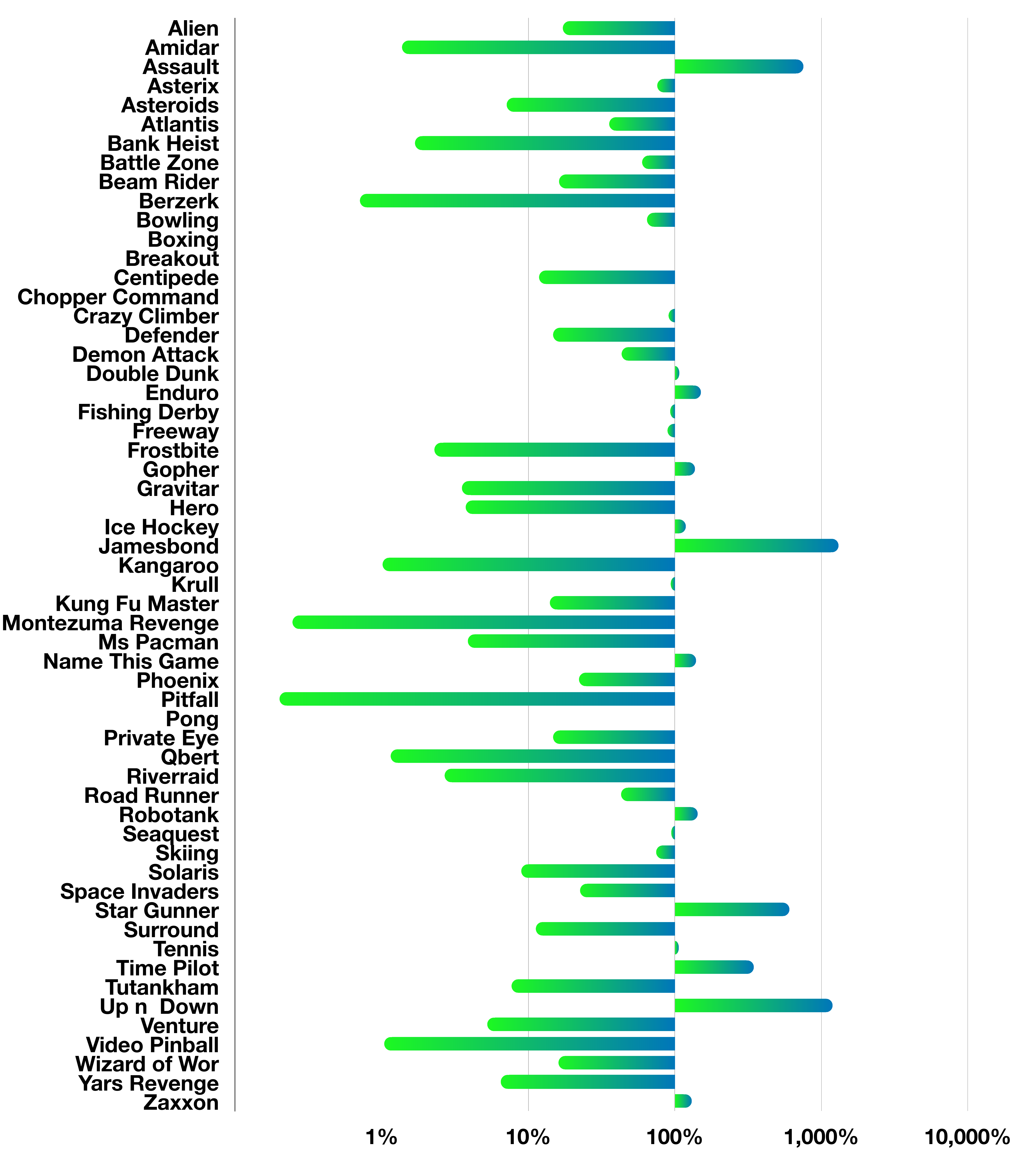}
	}
	\caption{HWRNS (\%) of Atari 57 games using GDI-I$^3$.}
	\label{fig: HWRNS of GDII}
\end{figure*}

\begin{figure*}[!ht]
	\subfigure{
		\includegraphics[width=0.8\textwidth]{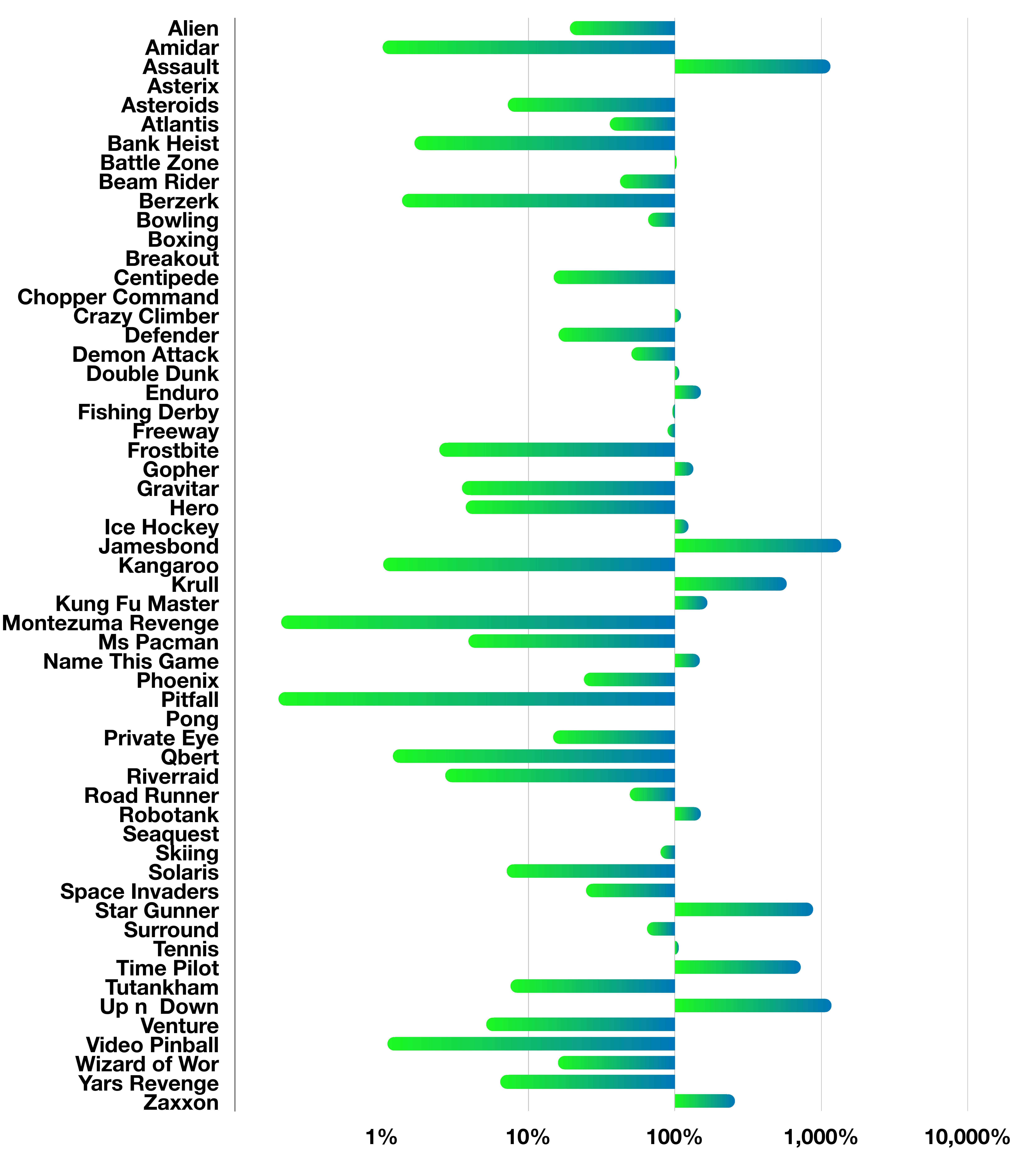}
	}
	\caption{HWRNS (\%) of Atari 57 games using GDI-H$^3$.}
	\label{fig: HWRNS of GDIH}
\end{figure*}

\clearpage

\subsection{Figure of SABER}
\label{app: Figure of SABER}

In this part, we  illustrate the SABER \citep{dreamerv2,atarihuman} using GDI-H$^3$ and  GDI-I$^3$ in all 57 games. The SABER histogram of GDI-I$^3$ is illustrated in Fig. \ref{fig: SABER of GDII}. The SABER histogram of GDI-H$^3$ is illustrated in Fig. \ref{fig: SABER of GDIH}.

\begin{figure*}[!ht]
	\subfigure{
		\includegraphics[width=0.8\textwidth]{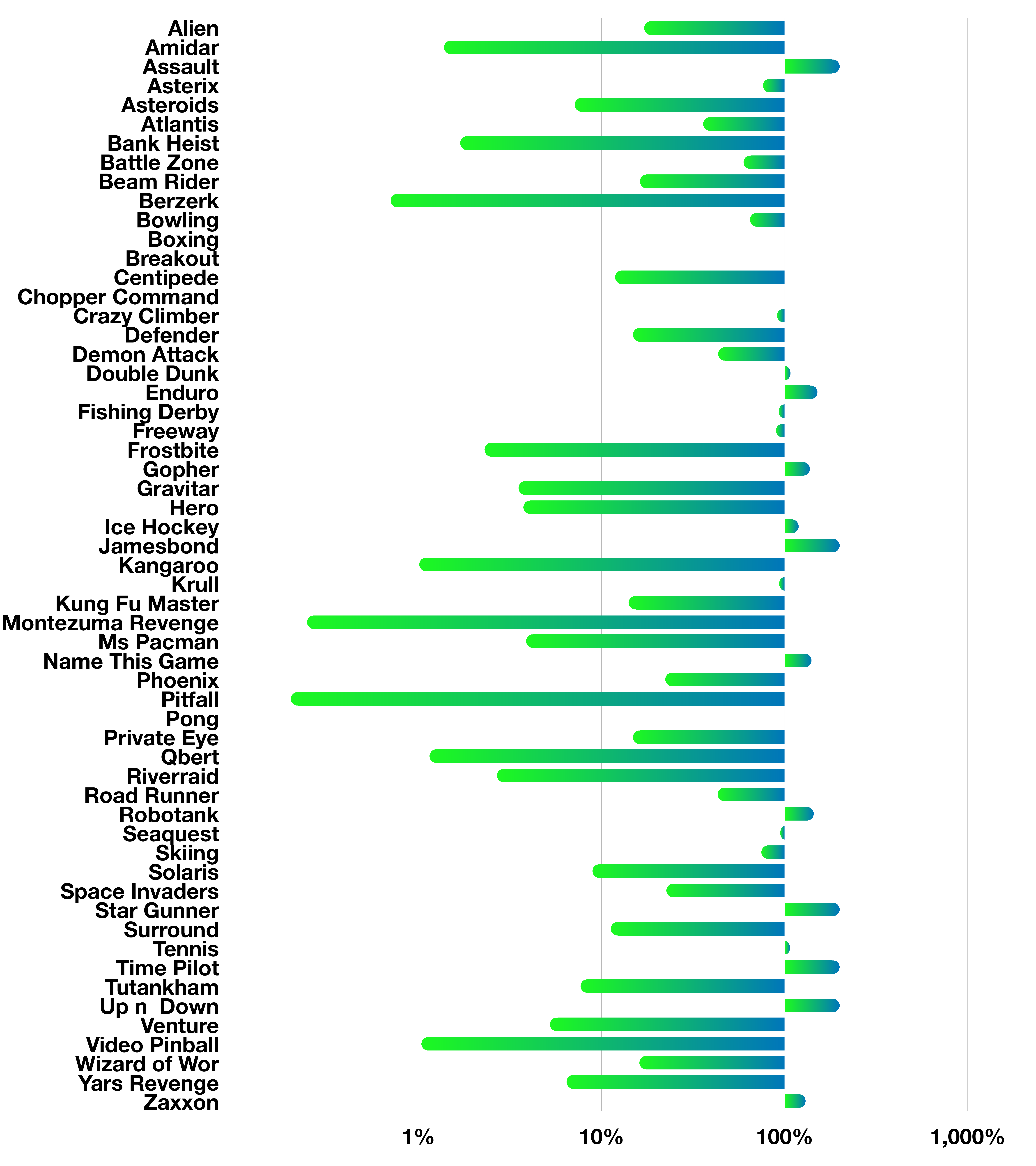}
	}
	\caption{SABER (\%) of Atari 57 games using GDI-I$^3$.}
	\label{fig: SABER of GDII}
\end{figure*}

\begin{figure*}[!ht]
	\subfigure{
		\includegraphics[width=0.8\textwidth]{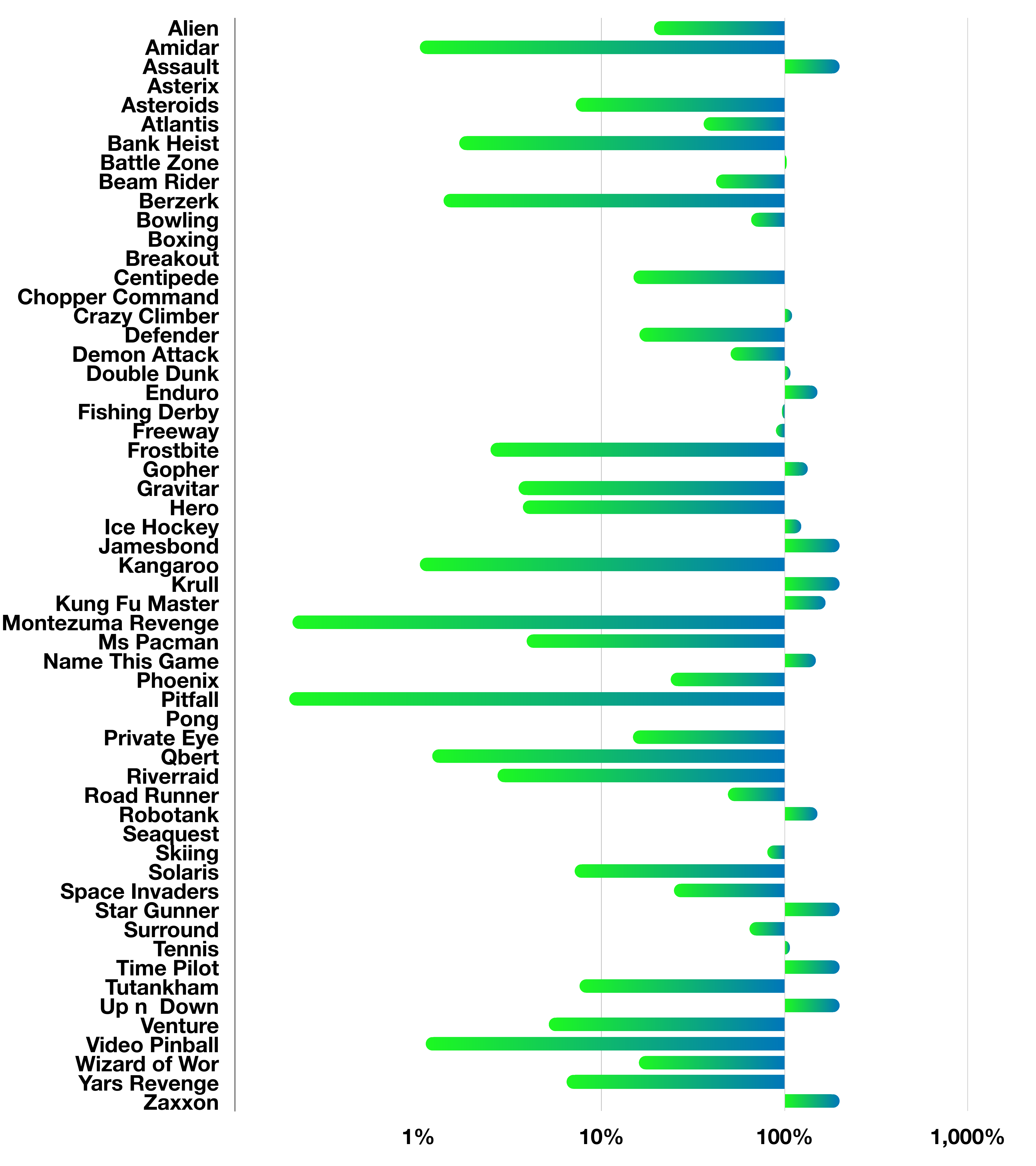}
	}
	\caption{SABER (\%) of Atari 57 games using GDI-H$^3$. }
	\label{fig: SABER of GDIH}
\end{figure*}
\clearpage

\subsection{Atari Games Table of Scores Based on Human Average Records}
\label{app: Atari Games Table of Scores Based on Human Average Records}
In this part, we detail the raw score of several representative SOTA algorithms , including the SOTA 200M model-free algorithms, SOTA 10B+ model-free algorithms, SOTA model-based algorithms and other SOTA algorithms.\footnote{200M and 10B+ represent the training scale.} Additionally, we calculate the Human Normalized Score (HNS) of each game with each algorithm. First of all, we demonstrate the sources of the scores that we used.
Random scores and average human's scores are from \citep{agent57}.
Rainbow's scores are from \citep{rainbow}.
IMPALA's scores are from \citep{impala}.
LASER's scores are from \citep{laser}, with no sweep at 200M.
As there are many versions of R2D2 and NGU, we use original papers'.
R2D2's scores are from \citep{r2d2}.
NGU's scores are from \citep{ngu}.
Agent57's scores are from \citep{agent57}.
MuZero's scores are from \citep{muzero}.
DreamerV2's scores are from \citep{dreamerv2}.
SimPLe's scores are from \citep{modelbasedatari}.
Go-Explore's scores are from \citep{goexplore}.
Muesli's scores are from \citep{muesli}.
In the following,  we detail the raw scores and HNS of each algorithm on 57 Atari games.

\begin{table}[!hb]
\footnotesize
\begin{center}
\caption{Score table of SOTA 200M model-free algorithms on HNS(\%) (GDI-I$^3$).}
\setlength{\tabcolsep}{1.0pt}
\begin{tabular}{ |c| c| c| c c| c c| c c| c c| }
\hline
Games & RND & HUMAN & RAINBOW & HNS & IMPALA & HNS & LASER & HNS & GDI-I$^3$ & HNS \\
\hline
Scale  &     &       & 200M   &       &  200M    &        & 200M   &         &  200M   &  \\
\hline
 Alien  & 227.8 & 7127.8            & 9491.7 & 134.26 & 15962.1  & 228.03 & 35565.9 & 512.15                        &43384             &625.45             \\
 Amidar & 5.8   & 1719.5            & \textbf{5131.2} & \textbf{299.08} & 1554.79  & 90.39  & 1829.2  & 106.4       &1442              &83.81                \\
 Assault & 222.4 & 742              & 14198.5 & 2689.78 & 19148.47 & 3642.43  & 21560.4 & 4106.62                   &63876             &12250.50          \\
 Asterix & 210   & 8503.3           & 428200 & 5160.67 & 300732   & 3623.67  & 240090  & 2892.46                    &759910            &9160.41            \\
 Asteroids & 719 & 47388.7          & 2712.8 & 4.27   & 108590.05 & 231.14  & 213025  &  454.91                     &751970            &1609.72          \\
 Atlantis & 12850 & 29028.1         & 826660 & 5030.32 & 849967.5 & 5174.39 & 841200 & 5120.19                      &3803000           &23427.66          \\
 Bank Heist & 14.2 & 753.1          & 1358   & 181.86  & 1223.15  & 163.61  & 569.4  & 75.14                        &\best{1401}       &\best{187.68}       \\
 Battle Zone & 236 & 37187.5        & 62010 & 167.18  & 20885    & 55.88  & 64953.3 & 175.14                        &478830            &1295.20            \\
 Beam Rider & 363.9 & 16926.5       & 16850.2 & 99.54 & 32463.47 & 193.81 & 90881.6 & 546.52                        &162100            &976.51             \\
 Berzerk & 123.7 & 2630.4           & 2545.6   & 96.62  & 1852.7   & 68.98  & \textbf{25579.5} & \textbf{1015.51}   &7607              &298.53              \\
 Bowling & 23.1 & 160.7             & 30   & 5.01        & 59.92    & 26.76  & 48.3    & 18.31                      &201.9             &129.94              \\
 Boxing  & 0.1  & 12.1              & 99.6 & 829.17      & 99.96    & 832.17 & \textbf{100} & \textbf{832.5}        &\best{100}        &\best{832.50}        \\
 Breakout & 1.7 & 30.5              & 417.5 & 1443.75    & 787.34   & 2727.92 & 747.9 & 2590.97                     &\best{864}        &\best{2994.10}      \\
 Centipede & 2090.9 & 12017         & 8167.3 & 61.22   & 11049.75 & 90.26   & \textbf{292792} & \textbf{2928.65}    &155830            &1548.84             \\
 Chopper Command & 811 & 7387.8     & 16654 & 240.89 & 28255  & 417.29  & 761699 & 11569.27                         &\best{999999}     &\best{15192.62}     \\
 Crazy Climber & 10780.5 & 36829.4  & 168788.5 & 630.80 & 136950 & 503.69 & 167820  & 626.93                        &201000            &759.39           \\
 Defender & 2874.5 & 18688.9        & 55105 & 330.27 & 185203 & 1152.93 & 336953  & 2112.50                         &893110            &5629.27           \\
 Demon Attack & 152.1 & 1971        & 111185 & 6104.40 & 132826.98 & 7294.24 & 133530 & 7332.89                     &675530                 &37131.12       \\
 Double Dunk & -18.6 & -16.4        & -0.3   & 831.82  & -0.33     & 830.45  & 14     & 1481.82                     &\best{24}         &\best{1936.36}      \\
 Enduro      & 0   & 860.5          & 2125.9 & 247.05  & 0       & 0.00     & 0    & 0.00                           &\best{14330}      &\best{1665.31 }    \\
 Fishing Derby & -91.7 & -38.8      & 31.3 & 232.51  & 44.85   & 258.13    & 45.2   & 258.79                        &59         &285.71                 \\
 Freeway       & 0     & 29.6       & \textbf{34} & \textbf{114.86}  & 0     & 0.00       & 0    & 0.00             &\best{34}          &\best{114.86   }  \\
 Frostbite     & 65.2  & 4334.7     & 9590.5 & 223.10 & 317.75 & 5.92     & 5083.5 & 117.54                         &10485              &244.05           \\
 Gopher  & 257.6 & 2412.5           & 70354.6 & 3252.91    & 66782.3 & 3087.14 & 114820.7 & 5316.40                 &\best{488830}     &\best{22672.63}    \\
 Gravitar & 173 & 3351.4            & 1419.3  & 39.21   & 359.5      & 5.87    & 1106.2   & 29.36                   &5905       &180.34                   \\
 Hero   & 1027 & 30826.4            & \textbf{55887.4} & \textbf{184.10}   & 33730.55  & 109.75  & 31628.7 & 102.69 &38330             &125.18              \\
 Ice Hockey & -11.2 & 0.9           & 1.1    & 101.65   & 3.48      & 121.32   & 17.4    & 236.36                   &44.94      &463.97   \\
 Jamesbond  & 29    & 302.8         & 19809 & 72.24   & 601.5     & 209.09   & 37999.8 & 13868.08                   &594500     &217118.70   \\
 Kangaroo   & 52    & 3035          & \textbf{14637.5} & \textbf{488.05} & 1632    & 52.97    & 14308   & 477.91    &14500             &484.34              \\
 Krull     & 1598   & 2665.5        & 8741.5  & 669.18 & 8147.4  & 613.53   & 9387.5  &  729.70                     &97575      &8990.82       \\
 Kung Fu Master & 258.5 & 22736.3   & 52181 & 230.99 & 43375.5 & 191.82 & 607443 & 2701.26        &140440            &623.64                   \\
 Montezuma Revenge&0&\textbf{4753.3}& 384   & 8.08   & 0       & 0.00   & 0.3    & 0.01                             &3000              &63.11           \\
 Ms Pacman  & 307.3 & 6951.6        & 5380.4  & 76.35   & 7342.32 & 105.88 & 6565.5   & 94.19                       &11536      &169.00       \\
 Name This Game & 2292.3 & 8049     & 13136 & 188.37   & 21537.2 & 334.30 & 26219.5 & 415.64                        &34434      &558.34        \\
 Phoenix & 761.5 & 7242.6  & 108529 & 1662.80   & 210996.45  & 3243.82 & 519304 & 8000.84                           &894460     &13789.30    \\
 Pitfall & -229.4 & \textbf{6463.7} & 0      & 3.43      & -1.66      & 3.40    & -0.6   & 3.42                     &0                 &3.43            \\
 Pong    & -20.7  & 14.6   & 20.9   & 117.85    & 20.98      & 118.07  & \textbf{21}     &  \textbf{118.13}         &\best{21   }      &\best{118.13}     \\
 Private Eye & 24.9&\textbf{69571.3}& 4234 & 6.05     & 98.5       & 0.11    & 96.3   & 0.10                        &15100             &21.68                 \\
 Qbert  & 163.9 & 13455.0 & 33817.5 & 253.20   & \textbf{351200.12}  & \textbf{2641.14} & 21449.6 & 160.15          &27800             &207.93         \\
 Riverraid & 1338.5 & 17118.0       & 22920.8 & 136.77 & 29608.05  & 179.15  & \textbf{40362.7} & \textbf{247.31}   &28075             &169.44             \\
 Road Runner & 11.5 & 7845          & 62041   & 791.85 & 57121     & 729.04  & 45289   & 578.00                     &878600            &11215.78         \\
 Robotank   & 2.2   & 11.9          & 61.4   & 610.31    & 12.96     & 110.93  & 62.1    & 617.53                   &108.2             &1092.78            \\
 Seaquest  & 68.4 & 42054.7         & 15898.9 & 37.70    & 1753.2    & 4.01    & 2890.3  & 6.72                     &943910	           &2247.98  \\
 Skiing & -17098  & \textbf{-4336.9}& -12957.8 & 32.44  & -10180.38 & 54.21   & -29968.4 & -100.86                  &-6774             &80.90          \\
 Solaris & 1236.3 & \textbf{12326.7}& 3560.3  & 20.96  & 2365      & 10.18   & 2273.5   & 9.35                      &11074             &88.70             \\
 Space Invaders & 148 & 1668.7      & 18789 & 1225.82 & 43595.78 & 2857.09 & 51037.4 & 3346.45                      &140460            &9226.80            \\
 Star Gunner & 664 & 10250          & 127029    & 1318.22 & 200625   & 2085.97 & 321528  & 3347.21                  &465750            &4851.72             \\
 Surround    & -10 & 6.5            & \textbf{9.7}       & \textbf{119.39}  & 7.56     & 106.42  & 8.4     & 111.52 &-7.8              &13.33           \\
 Tennis  & -23.8   & -8.3           & 0        & 153.55    & 0.55     & 157.10  & 12.2    & 232.26                  &\best{24       }  &\best{308.39   }   \\
 Time Pilot & 3568 & 5229.2         & 12926 & 563.36     & 48481.5  & 2703.84 & 105316  & 6125.34                   &216770     &12834.99           \\
 Tutankham  & 11.4 & 167.6          & 241   & 146.99     & 292.11   & 179.71  & 278.9   & 171.25                    &\best{423.9 }     &\best{264.08   }   \\
 Up N Down  & 533.4 & 11693.2       & 125755 & 1122.08 & 332546.75 & 2975.08 & 345727 & 3093.19                     &\best{986440}     &\best{8834.45 }   \\
 Venture    & 0     & 1187.5        & 5.5    & 0.46    & 0         & 0.00    & 0      & 0.00                        &\best{2035     }  &\best{171.37   }   \\
 Video Pinball & 0 & 17667.9        & 533936.5 & 3022.07 & 572898.27 & 3242.59 & 511835 & 2896.98                   &925830     &5240.18                   \\
 Wizard of Wor & 563.5 & 4756.5     & 17862.5 & 412.57 & 9157.5    & 204.96  & 29059.3 & 679.60                     &\best{64239 }     &\best{1519.90 }    \\
 Yars Revenge & 3092.9 & 54576.9    & 102557 & 193.19 & 84231.14  & 157.60 & 166292.3  & 316.99                     &\textbf{972000}     &\textbf{1881.96}       \\
 Zaxxon       & 32.5   & 9173.3     & 22209.5 & 242.62 & 32935.5   & 359.96 & 41118    & 449.47                     &109140     &1193.63       \\
\hline
MEAN HNS(\%)        &     0.00 & 100.00   &         & \rainbowmeanhns &         & \impalameanhns  &        & \lasermeanhns &      & \GDIImeanhns        \\
\hline
MEDIAN HNS(\%)      & 0.00   & 100.00   &         & \rainbowmedianhns &         &  \impalamedianhns  &        & \lasermedianhns  &      & \GDIImedianhns        \\
\hline
\end{tabular}
\end{center}
\end{table}
\clearpage

\begin{table}[!hb]
\footnotesize
\begin{center}
\caption{Score table of SOTA 200M model-free algorithms on HNS(\%) (GDI-H$^3$).}
\setlength{\tabcolsep}{1.0pt}
\begin{tabular}{ |c| c| c| c c| c c| c c| c c|}
\hline
Games & RND & HUMAN & RAINBOW & HNS & IMPALA & HNS & LASER & HNS  & GDI-H$^3$ & HNS\\
\hline
Scale  &     &       & 200M   &       &  200M    &        & 200M   &         &  200M   &\\
\hline
 Alien  & 227.8 & 7127.8            & 9491.7 & 134.26 & 15962.1  & 228.03 & 35565.9 & 512.15                                 &\textbf{48735}             &\textbf{703.00}      \\
 Amidar & 5.8   & 1719.5            & \textbf{5131.2} & \textbf{299.08} & 1554.79  & 90.39  & 1829.2  & 106.4                 &1065                       &61.81       \\
 Assault & 222.4 & 742              & 14198.5 & 2689.78 & 19148.47 & 3642.43  & 21560.4 & 4106.62                         &\textbf{97155}             &\textbf{18655.23}    \\
 Asterix & 210   & 8503.3           & 428200 & 5160.67 & 300732   & 3623.67  & 240090  & 2892.46                          &\textbf{999999}            &\textbf{12055.38}    \\
 Asteroids & 719 & 47388.7          & 2712.8 & 4.27   & 108590.05 & 231.14  & 213025  &  454.91                          &\textbf{760005 }           &\textbf{1626.94}     \\
 Atlantis & 12850 & 29028.1         & 826660 & 5030.32 & 849967.5 & 5174.39 & 841200 & 5120.19                            &\textbf{3837300}           &\textbf{23639.67}     \\
 Bank Heist & 14.2 & 753.1          & 1358   & 181.86  & 1223.15  & 163.61  & 569.4  & 75.14                         &1380                       &184.84       \\
 Battle Zone & 236 & 37187.5        & 62010 & 167.18  & 20885    & 55.88  & 64953.3 & 175.14                                &\textbf{824360}            &\textbf{2230.29}      \\
 Beam Rider & 363.9 & 16926.5       & 16850.2 & 99.54 & 32463.47 & 193.81 & 90881.6 & 546.52                                &\textbf{422890}            &\textbf{2551.09}      \\
 Berzerk & 123.7 & 2630.4           & 2545.6   & 96.62  & 1852.7   & 68.98  & \textbf{25579.5} & \textbf{1015.51}           &14649             &579.46       \\
 Bowling & 23.1 & 160.7             & 30   & 5.01        & 59.92    & 26.76  & 48.3    & 18.31                             &\textbf{205.2}             &\textbf{132.34}       \\
 Boxing  & 0.1  & 12.1              & 99.6 & 829.17      & 99.96    & 832.17 & \textbf{100} & \textbf{832.5}         &\textbf{100}               &\textbf{832.50}       \\
 Breakout & 1.7 & 30.5              & 417.5 & 1443.75    & 787.34   & 2727.92 & 747.9 & 2590.97                      &\textbf{864}               &\textbf{2994.10}      \\
 Centipede & 2090.9 & 12017         & 8167.3 & 61.22   & 11049.75 & 90.26   & \textbf{292792} & \textbf{2928.65}         &195630                     &1949.80      \\
 Chopper Command & 811 & 7387.8     & 16654 & 240.89 & 28255  & 417.29  & 761699 & 11569.27                         &\textbf{999999}            &\textbf{15192.62}     \\
 Crazy Climber & 10780.5 & 36829.4  & 168788.5 & 630.80 & 136950 & 503.69 & 167820  & 626.93                               &\textbf{241170}            &\textbf{919.76}  \\
 Defender & 2874.5 & 18688.9        & 55105 & 330.27 & 185203 & 1152.93 & 336953  & 2112.50                             &\textbf{970540}            &\textbf{6118.89}     \\
 Demon Attack & 152.1 & 1971        & 111185 & 6104.40 & 132826.98 & 7294.24 & 133530 & 7332.89                       &\textbf{787985}                     &\textbf{43313.70}     \\
 Double Dunk & -18.6 & -16.4        & -0.3   & 831.82  & -0.33     & 830.45  & 14     & 1481.82                      &\textbf{24}                &\textbf{1936.36}     \\
 Enduro      & 0   & 860.5          & 2125.9 & 247.05  & 0       & 0.00     & 0    & 0.00                           &14300                      &1661.82      \\
 Fishing Derby & -91.7 & -38.8      & 31.3 & 232.51  & 44.85   & 258.13    & 45.2   & 258.79                                  &\textbf{65}                &\textbf{296.22}   \\
 Freeway       & 0     & 29.6       & \textbf{34} & \textbf{114.86}  & 0     & 0.00       & 0    & 0.00             &\textbf{34}               &\textbf{114.86}   \\
 Frostbite     & 65.2  & 4334.7     & 9590.5 & 223.10 & 317.75 & 5.92     & 5083.5 & 117.54                                  &\textbf{11330}            &\textbf{263.84}   \\
 Gopher  & 257.6 & 2412.5           & 70354.6 & 3252.91    & 66782.3 & 3087.14 & 114820.7 & 5316.40                  &473560           &21964.01    \\
 Gravitar & 173 & 3351.4            & 1419.3  & 39.21   & 359.5      & 5.87    & 1106.2   & 29.36                                 &\textbf{5915}             &\textbf{180.66}   \\
 Hero   & 1027 & 30826.4            & \textbf{55887.4} & \textbf{184.10}   & 33730.55  & 109.75  & 31628.7 & 102.69          &38225            &124.83    \\
 Ice Hockey & -11.2 & 0.9           & 1.1    & 101.65   & 3.48      & 121.32   & 17.4    & 236.36                            &\textbf{481.90}   \\
 Jamesbond  & 29    & 302.8         & 19809 & 72.24   & 601.5     & 209.09   & 37999.8 & 13868.08                   &\textbf{620780}          &\textbf{226716.95}    \\
 Kangaroo   & 52    & 3035          & \textbf{14637.5} & \textbf{488.05} & 1632    & 52.97    & 14308   & 477.91              &14636           &488.00    \\
 Krull     & 1598   & 2665.5        & 8741.5  & 669.18 & 8147.4  & 613.53   & 9387.5  &  729.70                        &\textbf{594540}          &\textbf{55544.92}     \\
 Kung Fu Master & 258.5 & 22736.3   & 52181 & 230.99 & 43375.5 & 191.82 & 607443 & 2701.26                  &\textbf{1666665}          &\textbf{7413.57}         \\
 Montezuma Revenge&0&\textbf{4753.3}& 384   & 8.08   & 0       & 0.00   & 0.3    & 0.01                                       &2500            &52.60   \\
 Ms Pacman  & 307.3 & 6951.6        & 5380.4  & 76.35   & 7342.32 & 105.88 & 6565.5   & 94.19                         &\textbf{11573}           &\textbf{169.55}    \\
 Name This Game & 2292.3 & 8049     & 13136 & 188.37   & 21537.2 & 334.30 & 26219.5 & 415.64                          &\textbf{36296}           &\textbf{590.68}    \\
 Phoenix & 761.5 & 7242.6  & 108529 & 1662.80   & 210996.45  & 3243.82 & 519304 & 8000.84                            &\textbf{959580}          &\textbf{14794.07}  \\
 Pitfall & -229.4 & \textbf{6463.7} & 0      & 3.43      & -1.66      & 3.40    & -0.6   & 3.42                                 &-4.345            &3.36 \\
 Pong    & -20.7  & 14.6   & 20.9   & 117.85    & 20.98      & 118.07  & \textbf{21}     &  \textbf{118.13}             &\textbf{21}              &\textbf{118.13}    \\
 Private Eye & 24.9&\textbf{69571.3}& 4234 & 6.05     & 98.5       & 0.11    & 96.3   & 0.10                                  &15100           &21.68       \\
 Qbert  & 163.9 & 13455.0 & 33817.5 & 253.20   & \textbf{351200.12}  & \textbf{2641.14} & 21449.6 & 160.15                 &28657           &214.38    \\
 Riverraid & 1338.5 & 17118.0       & 22920.8 & 136.77 & 29608.05  & 179.15  & \textbf{40362.7} & \textbf{247.31}            &28349           &171.17    \\
 Road Runner & 11.5 & 7845          & 62041   & 791.85 & 57121     & 729.04  & 45289   & 578.00                            &\textbf{999999} &\textbf{12765.53} \\
 Robotank   & 2.2   & 11.9          & 61.4   & 610.31    & 12.96     & 110.93  & 62.1    & 617.53                          &\textbf{113.4}           &\textbf{1146.39}  \\
 Seaquest  & 68.4 & 42054.7         & 15898.9 & 37.70    & 1753.2    & 4.01    & 2890.3  & 6.72                     &\textbf{1000000}          &\textbf{2381.57}\\
 Skiing & -17098  & \textbf{-4336.9}& -12957.8 & 32.44  & -10180.38 & 54.21   & -29968.4 & -100.86                             &-6025	          &86.77   \\
 Solaris & 1236.3 & \textbf{12326.7}& 3560.3  & 20.96  & 2365      & 10.18   & 2273.5   & 9.35                                &9105            &70.95   \\
 Space Invaders & 148 & 1668.7      & 18789 & 1225.82 & 43595.78 & 2857.09 & 51037.4 & 3346.45                               &\textbf{154380} &\textbf{10142.17}   \\
 Star Gunner & 664 & 10250          & 127029    & 1318.22 & 200625   & 2085.97 & 321528  & 3347.21                           &\textbf{677590} &\textbf{7061.61}    \\
 Surround    & -10 & 6.5            & \textbf{9.7}       & \textbf{119.39}  & 7.56     & 106.42  & 8.4     & 111.52            &2.606           &76.40    \\
 Tennis  & -23.8   & -8.3           & 0        & 153.55    & 0.55     & 157.10  & 12.2    & 232.26                  &\textbf{24}              &\textbf{308.39}  \\
 Time Pilot & 3568 & 5229.2         & 12926 & 563.36     & 48481.5  & 2703.84 & 105316  & 6125.34                            &\textbf{450810}          &\textbf{26924.45}   \\
 Tutankham  & 11.4 & 167.6          & 241   & 146.99     & 292.11   & 179.71  & 278.9   & 171.25                    &418.2           &260.44  \\
 Up N Down  & 533.4 & 11693.2       & 125755 & 1122.08 & 332546.75 & 2975.08 & 345727 & 3093.19                       &966590          &8656.58    \\
 Venture    & 0     & 1187.5        & 5.5    & 0.46    & 0         & 0.00    & 0      & 0.00                         &2000            &168.42   \\
 Video Pinball & 0 & 17667.9        & 533936.5 & 3022.07 & 572898.27 & 3242.59 & 511835 & 2896.98                                  &\textbf{978190} &\textbf{5536.54}     \\
 Wizard of Wor & 563.5 & 4756.5     & 17862.5 & 412.57 & 9157.5    & 204.96  & 29059.3 & 679.60                       &63735           &1506.59     \\
 Yars Revenge & 3092.9 & 54576.9    & 102557 & 193.19 & 84231.14  & 157.60 & 166292.3  & 316.99                        &968090          &1874.36     \\
 Zaxxon       & 32.5   & 9173.3     & 22209.5 & 242.62 & 32935.5   & 359.96 & 41118    & 449.47                       &\textbf{216020} &\textbf{2362.89}     \\
\hline
MEAN HNS(\%)        &     0.00 & 100.00   &         & \rainbowmeanhns &         & \impalameanhns  &        & \lasermeanhns      &      & \textbf{\GDIHmeanhns} \\
\hline
MEDIAN HNS(\%)      & 0.00   & 100.00   &         & \rainbowmedianhns &         &  \impalamedianhns  &        & \lasermedianhns         &      & \textbf{\GDIHmedianhns} \\
\hline
\end{tabular}
\end{center}
\end{table}
\clearpage

\begin{table}[!hb]
\footnotesize
\begin{center}
\caption{Score table of 10B+ SOTA  model-free algorithms on HNS(\%).}
\label{Tab:Score table of SOTA  model-free algorithms on HNS.}
\setlength{\tabcolsep}{1.0pt}
\begin{tabular}{| c| c c| c c| c c| c c| c c|}
\hline
 Games & R2D2 & HNS & NGU & HNS & AGENT57 & HNS & GDI-I$^3$ & HNS & GDI-H$^3$ & HNS \\
\hline
Scale  & 10B   &        & 35B &         & 100B     &        & 200M &     &  200M   &\\
\hline
 Alien  & 109038.4 & 1576.97 & 248100 & 3592.35 & \textbf{297638.17} & \textbf{4310.30}             &43384             &625.45                &48735             &703.00       \\
 Amidar & 27751.24 & 1619.04 & 17800  & 1038.35 & \textbf{29660.08}  & \textbf{1730.42}             &1442              &83.81                 &1065              &61.81        \\
 Assault &  90526.44 & 17379.53 & 34800 & 6654.66 & 67212.67 & 12892.66            &63876           &12250.50          &\textbf{97155}        &\textbf{18655.23}     \\
 Asterix &  999080   & 12044.30 & 950700 & 11460.94 & 991384.42 & 11951.51         &759910          &9160.41           &\textbf{999999}       &\textbf{12055.38}     \\
 Asteroids & 265861.2 & 568.12 & 230500 & 492.36   & 150854.61 & 321.70                             &751970            &1609.72               &\textbf{760005}            &\textbf{1626.94}      \\
 Atlantis & 1576068   & 9662.56 & 1653600 & 10141.80 & 1528841.76 & 9370.64                         &3803000           &23427.66              &\textbf{3837300}           &\textbf{23639.67}     \\
 Bank Heist & \textbf{46285.6} & \textbf{6262.20} & 17400   & 2352.93  & 23071.5& 3120.49           &1401              &187.68                &1380              &184.84       \\
 Battle Zone & 513360 & 1388.64 & 691700  & 1871.27  & \textbf{934134.88}& \textbf{2527.36}         &478830            &1295.20               &824360            &2230.29      \\
 Beam Rider & 128236.08 & 772.05 & 63600  & 381.80   & 300509.8 & 1812.19         &162100            &976.51                &\textbf{422390}            &\textbf{2548.07}      \\
 Berzerk & 34134.8      & 1356.81 & 36200 & 1439.19  & \textbf{61507.83} & \textbf{2448.80}         &7607              &298.53                &14649             &579.46       \\
 Bowling & 196.36       & 125.92  & 211.9 & 137.21   & \textbf{251.18}   & \textbf{165.76}          &201.9             &129.94                &205.2             &132.34       \\
 Boxing  & 99.16        & 825.50  & 99.7  & 830.00   & \textbf{100}      & \textbf{832.50}          &\best{100}        &\best{832.50}         &\textbf{100}               &\textbf{832.50}       \\
 Breakout & 795.36      & 2755.76 & 559.2 & 1935.76  & 790.4 & 2738.54                  &\best{864}        &\best{2994.10}                    &\textbf{864}             &\textbf{2994.10}      \\
 Centipede & 532921.84  & 5347.83 & \textbf{577800} & \textbf{5799.95} & 412847.86& 4138.15         &155830            &1548.84               &195630            &1949.80\\
 Chopper Command&960648&14594.29&999900&15191.11&999900&15191.11                                    &\best{999999}&\best{15192.62}            &\textbf{999999}            &\textbf{15192.62}\\
 Crazy Climber & 312768   & 1205.59  & 313400 & 1208.11&\textbf{565909.85}&\textbf{2216.18}         &201000            &759.39                &241170            &919.76\\
 Defender & 562106        & 3536.22  & 664100 & 4181.16  & 677642.78 & 4266.80                      &893110     &5629.27                      &\textbf{970540}            &\textbf{6118.89}\\
 Demon Attack & 143664.6  & 7890.07  & 143500 & 7881.02  & 143161.44 & 7862.41                      &675530     &37131.12       &\textbf{787985}                     &\textbf{43313.70}\\
 Double Dunk & 23.12      & 1896.36  & -14.1  & 204.55   & 23.93& 1933.18         &\textbf{24}      &\textbf{1936.36}                         &\textbf{24}                &\textbf{1936.36}\\
 Enduro      & 2376.68    & 276.20   & 2000   & 232.42   & 2367.71   & 275.16                       &\best{14330}      &\best{1665.31}        &14300             &1661.82\\
 Fishing Derby & 81.96    & 328.28   & 32     & 233.84   & \textbf{86.97}& \textbf{337.75}          &59                &285.71                &65               &296.22\\
 Freeway       & \textbf{34}       & \textbf{114.86}   & 28.5   & 96.28    & 32.59& 110.10          &\best{34}         &\best{114.86}         &\textbf{34}               &\textbf{114.86}\\
 Frostbite    & 11238.4  & 261.70   & 206400 & 4832.76&\textbf{541280.88}&\textbf{12676.32}         &10485             &244.05                &11330	           &263.84\\
 Gopher  & 122196        & 5658.66  & 113400 & 5250.47  & 117777.08 & 5453.59                       &\best{488830}     &\best{22672.63}       &473560           &21964.01\\
 Gravitar & 6750         & 206.93   & 14200  & 441/32  &\textbf{19213.96}&\textbf{599.07}           &5905              &180.34                &5915             &180.66\\
 Hero   & 37030.4        & 120.82   & 69400  & 229.44&\textbf{114736.26}&\textbf{381.58}            &38330             &125.18                &38225	            &124.83\\
 Ice Hockey & \textbf{71.56}      & \textbf{683.97}   &-4.1   & 58.68    & 63.64& 618.51            &44.94             &463.97                &47.11           &481.90\\
 Jamesbond  & 23266      & 8486.85  & 26600  & 9704.53  & 135784.96 & 49582.16                      &594500            &217118.70             &\textbf{620780	}          &\textbf{226716.95}\\
 Kangaroo   & 14112      & 471.34   & \textbf{35100}  & \textbf{1174.92}&24034.16& 803.96           &14500             &484.34                &14636           &488.90\\
 Krull     & 145284.8    & 13460.12 & 127400&11784.73& 251997.31&23456.61         &97575             &8990.82                                 &\textbf{594540}          &\textbf{55544.92}\\
 Kung Fu Master & 200176 & 889.40   & 212100 & 942.45   & 206845.82 & 919.07      &140440            &623.64                &\textbf{1666665}	         &\textbf{7413.57}\\
 Montezuma Revenge & 2504 & 52.68   & \textbf{10400}  & \textbf{218.80} &9352.01& 196.75            &3000              &63.11                 &2500            &52.60\\
 Ms Pacman  & 29928.2     & 445.81  & 40800  & 609.44& \textbf{63994.44}&\textbf{958.52}            &11536             &169.00                &11573           &169.55\\
 Name This Game & 45214.8 & 745.61  & 23900  & 375.35&\textbf{54386.77}&\textbf{904.94}             &34434             &558.34                &36296           &590.68\\
 Phoenix & 811621.6       & 125.11  & 959100 &14786.66 &908264.15&14002.29         &894460            &13789.30              &\textbf{959580}	          &\textbf{14794.07}\\
 Pitfall & 0              & 3.43    & 7800   & 119.97&\textbf{18756.01}&\textbf{283.66}             &0                 &3.43                  &-4.3            &3.36\\
 Pong    & \textbf{21}             & \textbf{118.13}  & 19.6   & 114.16   & 20.67& 117.20           &\best{21}         &\best{118.13}         &\textbf{21}     &\textbf{118.13}\\
 Private Eye & 300        & 0.40    & \textbf{100000} & \textbf{143.75}& 79716.46&114.59            &15100             &21.68                 &15100           &21.68\\
 Qbert  & 161000          & 1210.10 & 451900 & 3398.79&\textbf{580328.14}&\textbf{4365.06}          &27800             &207.93                &28657           &214.38\\
 Riverraid & 34076.4      & 207.47  & 36700  & 224.10 & \textbf{63318.67}&\textbf{392.79}           &28075             &169.44                &28349           &171.17\\
 Road Runner & 498660     & 6365.59 & 128600 & 1641.52  & 243025.8&3102.24                          &878600            &11215.78       &\textbf{999999} &\textbf{12765.53}\\
 Robotank   & \textbf{132.4}       & \textbf{1342.27} & 9.1    & 71.13 &127.32 &1289.90             &108.2             &1092.78               &113.4           &1146.39\\
 Seaquest  & 999991.84    & 2381.55 & \textbf{1000000} & \textbf{2381.57}&999997.63&2381.56         &943910	           &2247.98        &\textbf{1000000}          &\textbf{2381.57}\\
 Skiing & -29970.32       & -100.87 & -22977.9 & -46.08 & \textbf{-4202.6}  &\textbf{101.05}        &-6774             &80.90                 &-6025	       &86.77\\
 Solaris & 4198.4         & 26.71   & 4700     & 31.23  & \textbf{44199.93}& \textbf{387.39}        &11074             &88.70                 &9105            &70.95\\
 Space Invaders & 55889   & 3665.48 & 43400    & 2844.22 & 48680.86 & 3191.48                       &140460            &9226.80               &\textbf{154380} &\textbf{10142.17}\\
 Star Gunner & 521728     & 5435.68 & 414600   &4318.13&\textbf{839573.53}&\textbf{8751.40}         &465750            &4851.72               &677590	         &7061.61\\
 Surround    & \textbf{9.96}       & \textbf{120.97}  & -9.6     & 2.42    & 9.5&118.18             &-7.8              &13.33                 &2.606           &76.40\\
 Tennis  & \textbf{24}             & \textbf{308.39}  & 10.2     & 219.35 & 23.84& 307.35           &\best{24}         &\best{308.39}         &\textbf{24}     &\textbf{308.39}             \\
 Time Pilot & 348932    & 20791.28 & 344700 & 20536.51&405425.31&24192.24         &216770            &12834.99              &\textbf{450810}	          &\textbf{26924.45}\\
 Tutankham  & 393.64    & 244.71   & 191.1   & 115.04   & \textbf{2354.91}&\textbf{1500.33}         &423.9             &264.08                &418.2           &260.44\\
 Up N Down  & 542918.8  & 4860.17  & 620100  & 5551.77  & 623805.73 & 5584.98                       &\best{986440}     &\best{8834.45}        &966590          &8656.58\\
 Venture    & 1992      & 167.75   & 1700    & 143.16   &\textbf{2623.71}  &\textbf{220.94}         &2035              &171.37                &2000	            &168.42\\
 Video Pinball & 483569.72 & 2737.00 & 965300 & 5463.58 &\textbf{992340.74}&\textbf{5616.63}        &925830            &5240.18               &978190          &5536.54\\
 Wizard of Wor & 133264 & 3164.81  & 106200  & 2519.35  &\textbf{157306.41}&\textbf{3738.20}        &64293             &1519.90               &63735           &1506.59\\
 Yars Revenge & 918854.32 & 1778.73 & 986000 & 1909.15  &\textbf{998532.37}&\textbf{1933.49}        &972000            &1881.96               &968090          &1874.36\\
 Zaxxon & 181372        & 1983.85  & 111100  & 1215.07  &\textbf{249808.9} &\textbf{2732.54}        &109140            &1193.63               &216020	         &2362.89\\
\hline
MEAN HNS(\%)            &               & \rtdtmeanhns  &         &  \ngumeanhns &           & \agentmeanhns  &     & \GDIImeanhns &      & \textbf{\GDIHmeanhns} \\
\hline
MEDIAN HNS(\%) &            &\rtdtmedianhns   &         & \ngumedianhns  &           & \textbf{\agentmedianhns}  &     & \GDIImedianhns &      & \GDIHmedianhns \\
\hline
\end{tabular}
\end{center}
\end{table}
\clearpage

\begin{table}[!hb]
\footnotesize
\begin{center}
\caption{Score table of SOTA model-based algorithms on HNS(\%). SimPLe \citep{modelbasedatari} and DreamerV2\citep{dreamerv2} haven't evaluated all 57 Atari Games in their paper. For fairness, we set the score on those games as N/A, which will not be considered when calculating the median and mean HNS.}
\label{Tab:Score table of SOTA model-based algorithms on HNS.}
\setlength{\tabcolsep}{1.0pt}
\begin{tabular}{|c |c c| c c| c c| c c| c c|}
\hline
 Games              & MuZero         & HNS      & DreamerV2 & HNS    & SimPLe             & HNS          & GDI-I$^3$     & HNS & GDI-H$^3$ & HNS \\
\hline
Scale               & 20B            &              & 200M      &            & 1M               &                  & 200M     &      &  200M   &\\
\hline  
 Alien              & \textbf{741812.63}      & \textbf{10747.61}     &3483       & 47.18      &616.9     & 5.64    & 43384       & 625.45                    &48735	             &703.00      \\
 Amidar             & \textbf{28634.39 }      & \textbf{1670.57    }  &2028       & 118.00     &74.3      & 4.00    & 1442        & 83.81                     &1065              &61.81 \\
 Assault            & \textbf{143972.03}      & \textbf{27665.44}     &7679       & 1435.07    &527.2     & 58.66   & 63876       & 12250.50                  &97155	             &18655.23\\
 Asterix            & 998425                  & 12036.40     &25669      & 306.98     &1128.3    & 11.07   & 759910      & 9160.41                   &\textbf{999999}   &\textbf{12055.38} \\
 Asteroids          & 678558.64             & 1452.42      &3064       & 5.02       &793.6     & 0.16               &751970& 1609.72           &\textbf{760005}            &\textbf{1626.94}  \\
 Atlantis           & 1674767.2             & 10272.64     &989207     & 6035.05    &20992.5   & 50.33              &3803000&23427.66          &\textbf{3837300}           &\textbf{23639.67}   \\
 Bank Heist         & 1278.98               & 171.17       &1043       & 139.23     &34.2      & 2.71               &\best{1401}  & \best{187.68  }           &1380              &184.84 \\
 Battle Zone        & \textbf{848623         }& \textbf{2295.95}      &31225      & 83.86      &4031.2    & 10.27   & 478830      & 1295.20                   &824360            &2230.29\\
 Beam Rider         & \textbf{454993.53}      & \textbf{2744.92}      &12413      & 72.75      &621.6     & 1.56    & 162100      & 976.51                    &422390            &2548.07\\
 Berzerk            & \textbf{85932.6        }& \textbf{3423.18}      &751        & 25.02      &N/A       & N/A     & 7607        & 298.53                    &14649             &579.46\\
 Bowling            & \textbf{260.13         }& \textbf{172.26 }      &48         & 18.10      &30        & 5.01    & 202         & 129.94                    &205.2             &132.34\\
 Boxing             & \textbf{100}                   & \textbf{832.50}       &87         & 724.17     &7.8       & 64.17   & \best{100}  & \best{832.50  }    &\textbf{100}      &\textbf{832.50}  \\
 Breakout           & \textbf{864}                   & \textbf{2994.10}      &350        & 1209.38    &16.4      & 51.04   & \best{864}  & \best{2994.10    } &\textbf{864}      &\textbf{2994.10}\\
 Centipede          & \textbf{1159049.27}     & \textbf{11655.72}     &6601       & 45.44      &N/A       & N/A     & 155830      & 1548.84                   &195630            &1949.80\\
 Chopper Command    & 991039.7              & 15056.39     &2833       & 30.74      & 979.4    & 2.56    & \best{999999}& \best{15192.62}                     &\textbf{999999}   &\textbf{15192.62}\\
 Crazy Climber      & \textbf{458315.4}       & \textbf{1786.64    }  &141424     & 521.55     & 62583.6  & 206.81  & 201000      & 759.39                    &241170	            &919.76\\
 Defender           & 839642.95             & 5291.18      & N/A       & N/A        & N/A      & N/A     & 893110      & 5629.27               &\textbf{970540}   &\textbf{6118.89}\\
 Demon Attack       & 143964.26             & 7906.55      & 2775     &144.20      & 208.1    & 3.08    & 675530      & 37131.12                &\textbf{787985}                     &\textbf{43313.70}\\
 Double Dunk        & 23.94          & 1933.64      & 22        &1845.45     & N/A      & N/A     & \textbf{24}          & \textbf{1936.36}                   &\textbf{24 }      &\textbf{1936.36}\\
 Enduro             & 2382.44               & 276.87       & 2112     &245.44      & N/A      & N/A     & \best{14330}       & \best{1665.31}                 &14300             &1661.82\\
 Fishing Derby      & \textbf{91.16}          & \textbf{345.67     }  & 60        &286.77      &-90.7     & 1.89    & 59          & 285.71                    &65               &296.22\\
 Freeway            & 33.03                 & 111.59       & \textbf{34}        &\textbf{114.86}      &16.7      & 56.42   & \best{34}      & \best{114.86 }  &\textbf{34}      &\textbf{114.86}\\
 Frostbite          & \textbf{631378.53}      & \textbf{14786.59}     & 15622    &364.37      &236.9     & 4.02    & 10485       & 244.05                     &11330	            &263.84\\
 Gopher             & 130345.58             & 6036.85      & 53853    &2487.14     &596.8     & 15.74   & \best{488830}      & \best{22672.6}                 &473560           &21964.01\\
 Gravitar           & \textbf{6682.7     }    & \textbf{204.81     }  & 3554     &106.37      &173.4     & 0.01    & 5905        & 180.34                     &5915             &180.66\\
 Hero               & \textbf{49244.11}       & \textbf{161.81     }  & 30287    &98.19       &2656.6    & 5.47    & 38330       & 125.18                     &38225	            &124.83\\
 Ice Hockey         & \textbf{67.04      }    & \textbf{646.61     }  & 29        &332.23      &-11.6     & -3.31   & 44.94       & 463.97                    &47.11           &481.90\\
 Jamesbond          & 41063.25              & 14986.94     &  9269     &3374.73     &100.5     & 26.11   & 594500      & 217118.70              &\textbf{620780	}  &\textbf{226716.95}\\
 Kangaroo           & \textbf{16763.6        }& \textbf{560.23     }  & 11819     &394.47      &51.2      & -0.03   & 14500       & 484.34                    &14636           &488.90\\
 Krull              & 269358.27      & 25082.93     & 9687     &757.75      &2204.8    & 56.84   & 97575       & 8990.82                    &\textbf{594540}          &\textbf{55544.92}\\
 Kung Fu Master     & 204824         & 910.08       & 66410    &294.30      &14862.5   & 64.97   & 140440      & 623.64                     &\textbf{1666665}	          &\textbf{7413.57}\\
 Montezuma Revenge  & 0                     & 0.00         & 1932     &40.65       &N/A       & N/A     & \best{3000}        & \best{63.11  }                 &2500            &52.60\\
 Ms Pacman          & \textbf{243401.1 }      & \textbf{3658.68    }  & 5651     &80.43       &1480      & 17.65   & 11536       & 169.00                     &11573           &169.55\\
 Name This Game     & \textbf{157177.85}      & \textbf{2690.53    }  & 14472    &211.57      &2420.7    & 2.23    & 34434       & 558.34                     &36296           &590.68\\
 Phoenix            & 955137.84      & 14725.53     & 13342     &194.11      &N/A       & N/A     & 894460      & 13789.30                  &\textbf{959580 }         &	\textbf{14794.07}\\
 Pitfall            &\textbf{ 0}                     & \textbf{3.43}         & -1        &3.41        &N/A       & N/A     & \best{0}       & \best{3.43  }   &-4.3            &3.36\\
 Pong               & \textbf{21}                    & \textbf{118.13}       & 19        &112.46      & 12.8     & 94.90   & \best{21}      & \best{118.13}   &\textbf{21}     &\textbf{118.13}   \\
 Private Eye        & \textbf{15299.98 }      & \textbf{21.96  }      & 158       &0.19        & 35       & 0.01    & 15100       & 21.68                     &15100           &21.68\\
 Qbert              & \textbf{72276          }& \textbf{542.56 }      & 162023    &1217.80     & 1288.8   & 8.46    & 27800       & 207.93                    &28657           &214.38\\
 Riverraid          & \textbf{323417.18}      & \textbf{2041.12}      & 16249    &94.49       & 1957.8   & 3.92    & 28075       & 169.44                     &28349           &171.17\\
 Road Runner        & 613411.8              & 7830.48      & 88772    &1133.09     & 5640.6   & 71.86   & 878600      & 11215.78                              &\textbf{999999	} &\textbf{12765.53}\\
 Robotank           & \textbf{131.13}         & \textbf{1329.18}      & 65        &647.42      & N/A      & N/A     & 108         & 1092.78                   &113.4           &1146.39\\
 Seaquest           & 999976.52             & 2381.51      & 45898    &109.15      & 683.3    & 1.46    &943910	           &2247.98             &\textbf{1000000}          &\textbf{2381.57}\\
 Skiing             & -29968.36      & -100.86     & -8187    &69.83       & N/A      & N/A     & -6774       & 80.90                       &\textbf{-6025}  &\textbf{86.77}\\
 Solaris            & 56.62                 & -10.64       & 883       &-3.19       & N/A      & N/A     & \best{11074}       & \best{88.70   }               &9105            &70.95\\
 Space Invaders     & 74335.3               & 4878.50      & 2611      &161.96      & N/A      & N/A     & 140460     & 9226.80                 &\textbf{154380}          &\textbf{10142.17}\\
 Star Gunner        & 549271.7       & 5723.01      & 29219    &297.88      & N/A      & N/A     & 465750      & 4851.72                    &\textbf{677590}          &\textbf{7061.61}\\
 Surround           & \textbf{9.99       }    & \textbf{121.15     }  & N/A       &N/A         & N/A      & N/A     & -7.8          & 13.33                   &2.606           &76.40 \\
 Tennis             & 0       & 153.55  & 23        &301.94      & N/A      & N/A     & \textbf{24}          & \textbf{308.39}                                &\textbf{24}              &\textbf{308.39}  \\
 Time Pilot         & \textbf{476763.9}       & \textbf{28486.90}     & 32404    &1735.96     & N/A      & N/A     & 216770      & 12834.99                   &450810	          &26924.45 \\
 Tutankham          & \textbf{491.48     }    & \textbf{307.35     }  & 238       &145.07      & N/A      & N/A     & 424         & 264.08                    &418.2           &260.44 \\
 Up N Down          & 715545.61             & 6407.03      & 648363   &5805.03     & 3350.3   & 25.24   & \best{986440}      & \best{8834.45}                 &966590          &8656.58\\
 Venture            & 0.4                   & 0.03         & 0         &0.00        & N/A      & N/A     & \best{2035}        & \best{171.37 }                &2000	            &168.42\\
 Video Pinball      & \textbf{981791.88}      & \textbf{5556.92}      & 22218    &125.75      & N/A      & N/A     & 925830      & 5240.18                    &978190          &5536.54\\
 Wizard of Wor      & \textbf{197126         }& \textbf{4687.87}      & 14531    &333.11      & N/A      & N/A     & 64439       & 1523.38                    &63735           &1506.59\\
 Yars Revenge       & 553311.46             & 1068.72      & 20089    &33.01       & 5664.3   & 4.99    & \best{972000}      & \best{1881.96}                 &968090          &1874.36\\
 Zaxxon             & \textbf{725853.9}       & \textbf{7940.46}      & 18295    &199.79      & N/A      & N/A     & 109140      & 1193.63                    &216020	          &2362.89\\
\hline    
MEAN HNS(\%)        &                & \muzeromeanhns     &           &  \dreamermeanhns   &           &\simplemeanhns    &             & \GDIImeanhns&      & \textbf{\GDIHmeanhns} \\
\hline
MEDIAN HNS(\%)      &                & \textbf{\muzeromedianhns}      &           & \dreamermedianhns    &           & \simplemedianhns    &             & \GDIImedianhns &      & \GDIHmedianhns \\
\hline
\end{tabular}
\end{center}
\end{table}
\clearpage

\begin{table}[!hb]
\footnotesize
\begin{center}
\caption{Score table of other SOTA algorithms on HNS(\%). Go-Explore \citep{goexplore} and Muesli \citep{muesli}.}
\label{Tab:Score table of other SOTA algorithms on HNS.}
\setlength{\tabcolsep}{1.0pt}
\begin{tabular}{|c |c c| c c| c c |c c|}            
\hline
 Games        & Muesli & HNS      & Go-Explore              & HNS                     & GDI-I$^3$ & HNS               & GDI-H$^3$ & HNS \\
\hline
Scale         &  200M   &            & 10B                     &                             & 200M              &                &  200M   &\\
\hline
 Alien        &139409          &2017.12                   &\textbf{959312}       &\textbf{13899.77}              & 43384             &625.45            &48735	             &703.00              \\
 Amidar       &\textbf{21653}  &\textbf{1263.18}          &19083                 &1113.22                        & 1442              &83.81             &1065              &61.81           \\
 Assault      &36963           &7070.94                   &30773                 &5879.64                        & 63876      &12250.50   &\textbf{97155	}    &\textbf{18655.23}       \\
 Asterix      &316210          &3810.30                   &999500       &12049.37              & 759910            &9160.41           &\textbf{999999}    &\textbf{12055.38}\\
 Asteroids    &484609          &1036.84                   &112952                &240.48                         & 751970     &1609.72    &\textbf{760005}            &\textbf{1626.94}       \\
 Atlantis     &1363427         &8348.18                   &286460                &1691.24                        & 3803000    &23427.66   &\textbf{3837300}           &\textbf{23639.67}       \\
 Bank Heist   &1213            &162.24                    &\textbf{3668}         &\textbf{494.49}                & 1401              &187.68            &1380              &184.84\\
 Battle Zone  &414107          &1120.04                   &\textbf{998800}       &\textbf{2702.36}               & 478830            &1295.20           &824360            &2230.29\\
 Beam Rider   &288870          &1741.91                   &371723       &2242.15               & 162100            &976.51            &\textbf{422390}   &\textbf{2548.07}\\
 Berzerk      &44478           &1769.43                   &\textbf{131417}       &\textbf{5237.69}               & 7607              &298.53            &14649             &579.46\\
 Bowling      &191             &122.02                    &\textbf{247}           &\textbf{162.72}                & 202               &129.94           &205.2             &132.34\\
 Boxing       &99              &824.17                    &91                     &757.50                         & \best{100}        &\best{832.50}    &\textbf{100}      &\textbf{832.50}        \\
 Breakout     &791             &2740.63                   &774                    &2681.60                        & \best{864}        &\best{2994.10}   &\textbf{864}      &\textbf{2994.10}        \\
 Centipede    &\textbf{869751} &\textbf{8741.20}          &613815                &6162.78               & 155830            &1548.84                    &195630            &1949.80\\
 Chopper Command &101289       &1527.76            &996220                &15135.16                       & \best{999999}     &\best{15192.62}          &\textbf{999999}   &\textbf{15192.62}\\
 Crazy Climber   &175322       &656.88             &235600       &897.52                & 201000            &759.39                   &\textbf{241170}	            &\textbf{919.76}\\
 Defender        &629482       &3962.26            &N/A                    &N/A                            & 893110     &5629.27                        &\textbf{970540}   &\textbf{6118.89}\\
 Demon Attack    &129544       &7113.74            &239895                 &13180.65                       & 675530     &37131.12         &\textbf{787985}                     &\textbf{43313.70}\\
 Double Dunk     &-3           &709.09             &\textbf{24}                     &\textbf{1936.36}                        & \best{24}         &\best{1936.36}          &\textbf{24}       &\textbf{1936.36}\\
 Enduro          &2362         &274.49             &1031                   &119.81                         & \best{14330}      &\best{1665.31}          &14300             &1661.82\\
 Fishing Derby   &51           &269.75             &\textbf{67}            &\textbf{300.00}                & 59                &285.71                  &65               &296.22\\
 Freeway         &33           &111.49             &\textbf{34}            &\textbf{114.86}                & \best{34}         &\best{114.86}           &\textbf{34}        &\textbf{114.86}\\
 Frostbite       &301694       &7064.73            &\textbf{999990}       &\textbf{23420.19}              & 10485             &244.05                   &11330	            &263.84\\
 Gopher          &104441       &4834.72            &134244                &6217.75                        & \best{488830}     &\best{22672.63}          &473560           &21964.01\\
 Gravitar        &11660        &361.41             &\textbf{13385}        &\textbf{415.68}                & 5905              &180.34                   &5915             &180.66\\
 Hero            &37161        &121.26            &37783                  &123.34                         & \textbf{38330}      &\textbf{125.18}            &38225	   &124.83\\
 Ice Hockey      &25           &299.17             &33                     &365.29                         & 44.94         &463.97        &\textbf{47.11}           &\textbf{481.90}    \\
 Jamesbond       &19319        &7045.29            &200810                &73331.26                       & 594500     &217118.70         &\textbf{620780	}          &\textbf{226716.95}\\
 Kangaroo        &14096        &470.80             &\textbf{24300}        &\textbf{812.87}                & 14500             &484.34                   &14636           &488.90\\
 Krull           &34221        &3056.02            &63149                 &5765.90                        & 97575      &8990.82           &\textbf{594540}          &\textbf{55544.92}\\
 Kung Fu Master  &134689       &598.06             &24320                 &107.05                         & 140440     &623.64            &\textbf{1666665	}          &\textbf{7413.57}\\
 Montezuma Revenge  &2359      &49.63               &\textbf{24758}        &\textbf{520.86}                & 3000              &63.11                   &2500            &52.60\\
 Ms Pacman          &65278     &977.84              &\textbf{456123}       &\textbf{6860.25}               & 11536             &169.00                  &11573           &169.55\\
 Name This Game     &105043    &1784.89              &\textbf{212824}       &\textbf{3657.16}               & 34434             &558.34                 &36296           &590.68\\
 Phoenix        &805305        &12413.69                    &19200                 &284.50                  & 894460     &13789.30 &\textbf{959580	}          &\textbf{14794.07}   \\
 Pitfall        &0             &3.43                    &\textbf{7875}          &\textbf{121.09}                & 0                 &3.43               &-4.3            &3.36\\
 Pong           &20            &115.30                  &\textbf{21}            &\textbf{118.13}                & \best{21}         &\best{118.13}      &\textbf{21}              &\textbf{118.13}      \\
 Private Eye    &10323         &14.81                   &\textbf{69976}        &\textbf{100.58}                & 15100             &21.68               &15100           &21.68\\
 Qbert          &157353        &1182.66                 &\textbf{999975}       &\textbf{7522.41}               & 27800             &207.93              &28657           &214.38\\
 Riverraid      &\textbf{47323}&\textbf{291.42}         &35588                 &217.05                & 28075             &169.44                       &28349           &171.17\\
 Road Runner    &327025        &4174.55                 &999900        &12764.26              & 878600            &11215.78           &\textbf{999999}	          &\textbf{12765.53}\\
 Robotank       &59            &585.57                  &\textbf{143}           &\textbf{1451.55}               & 108               &1092.78            &113.4           &1146.39\\
 Seaquest       &815970        &1943.26                 &539456                &1284.68               &943910	           &2247.98               &\textbf{1000000}          &\textbf{2381.57}\\
 Skiing         &-18407        &-10.26                  &\textbf{-4185}        &\textbf{101.19}                & -6774             &80.90               &-6025	         &86.77\\
 Solaris        &3031          &16.18                   &\textbf{20306}        &\textbf{171.95}                & 11074             &88.70               &9105            &70.95\\
 Space Invaders &59602         &3909.65                &93147                 &6115.54                        & 140460     &9226.80       &\textbf{154380}          &\textbf{10142.17}\\
 Star Gunner    &214383        &2229.49                &609580       &6352.14               & 465750     &4851.72                     &\textbf{677590}	          &\textbf{7061.61}\\
 Surround       &\textbf{9}    &\textbf{115.15}                 &N/A                    &N/A                  & -8        &13.33                        &2.606           &76.40\\
 Tennis         &12            &230.97                 &\best{24}              &\best{308.39}                  & \best{24}         &\best{308.39}       &\textbf{24}    &\textbf{308.39}     \\
 Time Pilot     &\textbf{359105} &\textbf{21403.71}                   &183620                &10839.32       & 216770     &12834.99                     &450810	          &26924.45\\
 Tutankham      &252           &154.03                 &\textbf{528}           &\textbf{330.73}                & 424               &264.08              &418.2           &260.44\\
 Up N Down      &649190        &5812.44                &553718                &4956.94                        & \best{986440}     &\best{8834.45}       &966590          &8656.58    \\
 Venture        &2104          &177.18                 &\textbf{3074}         &\textbf{258.86}                & 2035              &171.37               &2000	            &168.42\\
 Video Pinball  &685436        &3879.56                &\textbf{999999}       &\textbf{5659.98}               & 925830            &5240.18              &978190          &5536.54\\
 Wizard of Wor  &93291         &2211.48                &\textbf{199900}       &\textbf{4754.03}               & 64293             &1519.90              &63735           &1506.59\\
 Yars Revenge   &557818        &1077.47                &\textbf{999998}       &\textbf{1936.34}               & 972000            &1881.96              &968090          &1874.36\\
 Zaxxon         &65325         &714.30                 &18340                 &200.28                         & 109140     &1193.63       &\textbf{216020	}          &\textbf{2362.89}    \\
\hline    
MEAN HNS(\%)        & & \mueslimeanhns           &                       & \goexploremeanhns                       &            &  \GDIImeanhns &      & \textbf{\GDIHmeanhns} \\
\hline
MEDIAN HNS(\%)      & & \mueslimedianhns           &                       & \textbf{\goexploremedianhns}              &            & \GDIImedianhns   &      & \GDIHmedianhns \\
\hline
\end{tabular}
\end{center}
\end{table}

\clearpage

\subsection{Atari Games Table of Scores Based on Human World Records}
\label{app: Atari Games Table of Scores Based on Human World Records}
In this part, we detail the raw score of several representative SOTA algorithms , including the SOTA 200M model-free algorithms, SOTA 10B+ model-free algorithms, SOTA model-based algorithms and other SOTA algorithms.\footnote{200M and 10B+ represent the training scale.} Additionally, we calculate the human world records normalized world score (HWRNS) of each game with each algorithm. First of all, we demonstrate the sources of the scores that we used.
Random scores  are from \citep{agent57}.
Human world records (HWR) are from \citep{dreamerv2,atarihuman}.
Rainbow's scores are from \citep{rainbow}.
IMPALA's scores are from \citep{impala}.
LASER's scores are from \citep{laser}, with no sweep at 200M.
As there are many versions of R2D2 and NGU, we use original papers'.
R2D2's scores are from \citep{r2d2}.
NGU's scores are from \citep{ngu}.
Agent57's scores are from \citep{agent57}.
MuZero's scores are from \citep{muzero}.
DreamerV2's scores are from \citep{dreamerv2}.
SimPLe's scores are from \citep{modelbasedatari}.
Go-Explore's scores are from \citep{goexplore}.
Muesli's scores are from \citep{muesli}.
In the following,  we detail the raw scores and HWRNS of each algorithm on 57 Atari games.

\begin{table}[!hb]
\footnotesize
\begin{center}
\caption{Score table of SOTA 200M model-free algorithms on HWRNS(\%)  (GDI-I$^3$).}
\setlength{\tabcolsep}{1.0pt}
\begin{tabular}{|c| c| c| c c| c c| c c| c c| }
\hline
Games               & RND       & HWR       & RAINBOW  & HWRNS & IMPALA  & HWRNS & LASER  & HWRNS & GDI-I$^3$ & HWRNS  \\
\hline
Scale               &           &           & 200M     &           &  200M   &            & 200M    &           &  200M    &            \\
\hline
 Alien              & 227.8     & \textbf{251916}    & 9491.7   &3.68    & 15962.1    & 6.25       & 976.51  & 14.04     &43384             &17.15     \\
 Amidar             & 5.8       & \textbf{104159}    & 5131.2   &4.92    & 1554.79    & 1.49       & 1829.2  & 1.75      &1442              &1.38           \\
 Assault            & 222.4     & 8647             & 14198.5  &165.90  & 19148.47   & 224.65     & 21560.4 & 253.28    &63876        &755.57  \\
 Asterix            & 210       & \textbf{1000000}   & 428200   &42.81   & 300732     & 30.06      & 240090  & 23.99     &759910            &75.99   \\
 Asteroids          & 719       & \textbf{10506650}  & 2712.8   &0.02    & 108590.05  & 1.03       & 213025  & 2.02      &751970            &7.15   \\
 Atlantis           & 12850     & \textbf{10604840}  & 826660   &7.68    & 849967.5   & 7.90       & 841200  & 7.82      &3803000           &35.78  \\
 Bank Heist         & 14.2      & \textbf{82058}     & 1358     &1.64    & 1223.15    & 1.47       & 569.4   & 0.68      &1401              &1.69    \\
 Battle Zone        & 236       & 801000    & 62010    &7.71    & 20885      & 2.58       & 64953.3 & 8.08      &478830            &59.77 \\
 Beam Rider         & 363.9     & \textbf{999999}    & 16850.2  &1.65    & 32463.47   & 3.21       & 90881.6 & 9.06      &162100            &16.18    \\
 Berzerk            & 123.7     & \textbf{1057940}   & 2545.6   &0.23    & 1852.7     & 0.16       & 25579.5 & 2.41      &7607                       &0.71            \\
 Bowling            & 23.1      & \textbf{300}       & 30       &2.49    & 59.92      & 13.30      & 48.3    & 9.10      &201.9             &64.57    \\
 Boxing             & 0.1       & \textbf{100}       & 99.6     &99.60   & 99.96      & 99.96      & \textbf{100}     & \textbf{100.00}    &\best{100}        &\best{100.00 }     \\
 Breakout           & 1.7       & \textbf{864}       & 417.5    &48.22   & 787.34     & 91.11      & 747.9   & 86.54     &\best{864}        &\best{100.00 }  \\
 Centipede          & 2090.9    & \textbf{1301709}   & 8167.3   &0.47    & 11049.75   & 0.69       & 292792  & 22.37     &155830            &11.83          \\
 Chopper Command    & 811       & \textbf{999999}    & 16654    &1.59    & 28255      & 2.75       & 761699  & 76.15     &\best{999999}     &\best{100.00 } \\
 Crazy Climber      & 10780.5   & 219900    & 168788.5 &75.56   & 136950     & 60.33      & 167820  & 75.10     &201000            &90.96          \\
 Defender           & 2874.5    & \textbf{6010500}   & 55105    &0.87    & 185203     & 3.03       & 336953  & 5.56      &893110            &14.82         \\
 Demon Attack       & 152.1     & \textbf{1556345}   & 111185   &7.13    & 132826.98  & 8.53       & 133530  & 8.57      &675530            &43.40          \\
 Double Dunk        & -18.6     & 21        & -0.3     &46.21   & -0.33      & 46.14      & 14      & 82.32     &\best{24}                  &\best{107.58 }   \\
 Enduro             & 0         & 9500      & 2125.9   &22.38   & 0          & 0.00       & 0       & 0.00      &\best{14330}               &\best{150.84  } \\
 Fishing Derby      & -91.7     & \textbf{71}        & 31.3     &75.60   & 44.85      & 83.93      & 45.2    & 84.14     &59                &92.89         \\
 Freeway            & 0         & \textbf{38}        & 34       &89.47   & 0          & 0.00       & 0       & 0.00      &34                &89.47          \\
 Frostbite          & 65.2      & \textbf{454830}    & 9590.5   &2.09    & 317.75     & 0.06       & 5083.5  & 1.10      &10485             &2.29           \\
 Gopher             & 257.6     & 355040    & 70354.6  &19.76   & 66782.3    & 18.75      & 114820.7& 32.29     &\best{488830}              &\best{137.71 } \\
 Gravitar           & 173       & \textbf{162850}    & 1419.3   &0.77    & 359.5      & 0.11       & 1106.2  & 0.57      &5905              &3.52           \\
 Hero               & 1027      & \textbf{1000000}   & 55887.4  &5.49    & 33730.55   & 3.27       & 31628.7 & 3.06      &38330                      &3.73           \\
 Ice Hockey         & -11.2     & 36        & 1.1      &26.06   & 3.48       & 31.10      & 17.4    & 60.59     &44.92              &118.94       \\
 Jamesbond          & 29        & 45550     & 19809    &43.45   & 601.5      & 1.26       & 37999.8 & 83.41     &594500              &1305.93 \\
 Kangaroo           & 52        & \textbf{1424600}   & 14637.5  &1.02    & 1632       & 0.11       & 14308   & 1.00      &14500                      &1.01            \\
 Krull              & 1598      & 104100    & 8741.5   &6.97    & 8147.4     & 6.39       & 9387.5  & 7.60      &97575             &93.63          \\
 Kung Fu Master     & 258.5     & 1000000   & 52181    &5.19    & 43375.5    & 4.31       & 607443  & 60.73     &140440            &14.02          \\
 Montezuma Revenge  &0          & \textbf{1219200}   & 384      &0.03    & 0          & 0.00       & 0.3     & 0.00      &3000              &0.25           \\
 Ms Pacman          & 307.3     & \textbf{290090}    & 5380.4   &1.75    & 7342.32    & 2.43       & 6565.5  & 2.16      &11536             &3.87           \\
 Name This Game     & 2292.3    & 25220     & 13136    &47.30   & 21537.2    & 83.94      & 26219.5 & 104.36    &34434               &140.19     \\
 Phoenix            & 761.5     & \textbf{4014440}   & 108529   &2.69    & 210996.45  & 5.24       & 519304  & 12.92     &894460            &22.27         \\
 Pitfall            & -229.4    & \textbf{114000}    & 0        &0.20    & -1.66      & 0.20       & -0.6    & 0.20      &\best{0    }      &0.20           \\
 Pong               & -20.7     & \textbf{21}        & 20.9     &99.76   & 20.98      & 99.95      & \textbf{21}      & \textbf{100.00}    &\best{21   }      &\best{100.00 }    \\
 Private Eye        & 24.9      & \textbf{101800}    & 4234     &4.14    & 98.5       & 0.07       & 96.3    & 0.07      &15100             &14.81          \\
 Qbert              & 163.9     & \textbf{2400000}   & 33817.5  &1.40    & 351200.12  & 14.63      & 21449.6 & 0.89      &27800             &1.15          \\
 Riverraid          & 1338.5    & \textbf{1000000}   & 22920.8  &2.16    & 29608.05   & 2.83       & 40362.7 & 3.91      &28075             &2.68           \\
 Road Runner        & 11.5      & \textbf{2038100}   & 62041    &3.04    & 57121      & 2.80       & 45289   & 2.22      &878600            &43.11          \\
 Robotank           & 2.2       & 76        & 61.4     &80.22   & 12.96      & 14.58      & 62.1    & 81.17     &108.2               &143.63  \\
 Seaquest           & 68.4      & 999999    & 15898.9  &1.58    & 1753.2     & 0.17       & 2890.3  & 0.28      &943910	             &94.39\\
 Skiing             & -17098    & \textbf{-3272}     & -12957.8 &29.95   & -10180.38  & 50.03      & -29968.4& -93.09    &-6774             &74.67        \\
 Solaris            & 1236.3    & \textbf{111420}    & 3560.3   &2.11    & 2365       & 1.02       & 2273.5  & 0.94      &11074             &8.93          \\
 Space Invaders     & 148       & \textbf{621535 }   & 18789    &3.00    & 43595.78   & 6.99       & 51037.4 & 8.19      &140460            &22.58          \\
 Star Gunner        & 664       & 77400     & 127029   &164.67  & 200625     & 260.58     & 321528  & 418.14    &465750              &606.09  \\
 Surround           & -10       & 9.6       & \textbf{9.7}      &\textbf{100.51}  & 7.56       & 89.59      & 8.4     & 93.88  &-7.8        &11.22          \\
 Tennis             & -23.8     & 21        & 0        &53.13   & 0.55       & 54.35      & 12.2    & 80.36     &\best{24       }           &\best{106.70    }\\
 Time Pilot         & 3568      & 65300     & 12926    &15.16   & 48481.5    & 72.76      & 105316  & 164.82    &216770              &345.37         \\
 Tutankham          & 11.4      & \textbf{5384}      & 241      &4.27    & 292.11     & 5.22       & 278.9   & 4.98      &423.9             &7.68           \\
 Up N Down          & 533.4     & 82840     & 125755   &152.14  & 332546.75  & 403.39     & 345727  & 419.40    &\best{986440}              &\best{1197.85  } \\
 Venture            & 0         & \textbf{38900}     & 5.5      &0.01    & 0          & 0.00       & 0       & 0.00      &2000              &5.23           \\
 Video Pinball      & 0         & \textbf{89218328}  & 533936.5 &0.60    & 572898.27  & 0.64       & 511835  & 0.57      &925830            &1.04          \\
 Wizard of Wor      & 563.5     & \textbf{395300}    & 17862.5  &4.38    & 9157.5     & 2.18       & 29059.3 & 7.22      &64439             &16.14         \\
 Yars Revenge       & 3092.9    & \textbf{15000105}  & 102557   &0.66    & 84231.14   & 0.54       & 166292.3& 1.09      &972000            &6.46           \\
 Zaxxon             & 32.5      & 83700     & 22209.5  &26.51   & 32935.5    & 39.33      & 41118   & 49.11     &109140              &130.41  \\
\hline
MEAN HWRNS(\%)      & 0.00      & 100.00    &          & \rainbowmeanHWRNS  &            & \impalameanHWRNS  &        & \lasermeanHWRNS &      & \GDIImeanHWRNS   \\
\hline   
MEDIAN HWRNS(\%)    & 0.00      & \textbf{100.00}    &          & \rainbowmedianHWRNS   &            & \impalamedianHWRNS &        & \lasermedianHWRNS &      &  \GDIImedianHWRNS  \\
\hline
\end{tabular}
\end{center}
\end{table}
\clearpage

\begin{table}[!hb]
\footnotesize
\begin{center}
\caption{Score table of SOTA 200M model-free algorithms on HWRNS(\%)  (GDI-H$^3$). }
\setlength{\tabcolsep}{1.0pt}
\begin{tabular}{|c| c| c| c c| c c| c c| c c| c c|}
\hline
Games               & RND       & HWR       & RAINBOW  & HWRNS & IMPALA  & HWRNS & LASER  & HWRNS  & GDI-H$^3$ & HWRNS \\
\hline
Scale               &           &           & 200M     &           &  200M   &            & 200M    &                      &    200M   &\\
\hline
 Alien              & 227.8     & \textbf{251916}    & 9491.7   &3.68    & 15962.1    & 6.25       & 976.51  & 14.04      &48735             &19.27   \\
 Amidar             & 5.8       & \textbf{104159}    & 5131.2   &4.92    & 1554.79    & 1.49       & 1829.2  & 1.75        &1065              &1.02          \\
 Assault            & 222.4     & 8647             & 14198.5  &165.90  & 19148.47   & 224.65     & 21560.4 & 253.28      &\textbf{97155}             &\textbf{1150.59} \\
 Asterix            & 210       & \textbf{1000000}   & 428200   &42.81   & 300732     & 30.06      & 240090  & 23.99     &999999            &100.00 \\
 Asteroids          & 719       & \textbf{10506650}  & 2712.8   &0.02    & 108590.05  & 1.03       & 213025  & 2.02        &760005            &7.23\\
 Atlantis           & 12850     & \textbf{10604840}  & 826660   &7.68    & 849967.5   & 7.90       & 841200  & 7.82      &3837300           &36.11\\
 Bank Heist         & 14.2      & \textbf{82058}     & 1358     &1.64    & 1223.15    & 1.47       & 569.4   & 0.68       &1380              &1.66  \\
 Battle Zone        & 236       & 801000    & 62010    &7.71    & 20885      & 2.58       & 64953.3 & 8.08       &\textbf{824360}            &102.92 \\
 Beam Rider         & 363.9     & \textbf{999999}    & 16850.2  &1.65    & 32463.47   & 3.21       & 90881.6 & 9.06        &422390            &42.22   \\
 Berzerk            & 123.7     & \textbf{1057940}   & 2545.6   &0.23    & 1852.7     & 0.16       & 25579.5 & 2.41        &14649             &1.37          \\
 Bowling            & 23.1      & \textbf{300}       & 30       &2.49    & 59.92      & 13.30      & 48.3    & 9.10      &205.2             &65.76   \\
 Boxing             & 0.1       & \textbf{100}       & 99.6     &99.60   & 99.96      & 99.96      & \textbf{100}     & \textbf{100.00}     &\textbf{100}               &\textbf{100.00}    \\
 Breakout           & 1.7       & \textbf{864}       & 417.5    &48.22   & 787.34     & 91.11      & 747.9   & 86.54     &\textbf{864}             &\textbf{100.00 }  \\
 Centipede          & 2090.9    & \textbf{1301709}   & 8167.3   &0.47    & 11049.75   & 0.69       & 292792  & 22.37            &195630            &14.89 \\
 Chopper Command    & 811       & \textbf{999999}    & 16654    &1.59    & 28255      & 2.75       & 761699  & 76.15     &\textbf{999999}            &\textbf{100.00}\\
 Crazy Climber      & 10780.5   & 219900    & 168788.5 &75.56   & 136950     & 60.33      & 167820  & 75.10            &\textbf{241170}            &\textbf{110.17}\\
 Defender           & 2874.5    & \textbf{6010500}   & 55105    &0.87    & 185203     & 3.03       & 336953  & 5.56              &970540            &16.11\\
 Demon Attack       & 152.1     & \textbf{1556345}   & 111185   &7.13    & 132826.98  & 8.53       & 133530  & 8.57            &\textbf{787985}   &\textbf{50.63}\\
 Double Dunk        & -18.6     & 21        & -0.3     &46.21   & -0.33      & 46.14      & 14      & 82.32     &\textbf{24}                &\textbf{107.58}  \\
 Enduro             & 0         & 9500      & 2125.9   &22.38   & 0          & 0.00       & 0       & 0.00      &14300             &150.53  \\
 Fishing Derby      & -91.7     & \textbf{71}        & 31.3     &75.60   & 44.85      & 83.93      & 45.2    & 84.14         &65               &96.31\\
 Freeway            & 0         & \textbf{38}        & 34       &89.47   & 0          & 0.00       & 0       & 0.00            &34               &89.47\\
 Frostbite          & 65.2      & \textbf{454830}    & 9590.5   &2.09    & 317.75     & 0.06       & 5083.5  & 1.10              &11330            &2.48 \\
 Gopher             & 257.6     & 355040    & 70354.6  &19.76   & 66782.3    & 18.75      & 114820.7& 32.29      &473560           &133.41 \\
 Gravitar           & 173       & \textbf{162850}    & 1419.3   &0.77    & 359.5      & 0.11       & 1106.2  & 0.57               &5915             &3.53\\
 Hero               & 1027      & \textbf{1000000}   & 55887.4  &5.49    & 33730.55   & 3.27       & 31628.7 & 3.06               &38225            &3.72  \\
 Ice Hockey         & -11.2     & 36        & 1.1      &26.06   & 3.48       & 31.10      & 17.4    & 60.59          &\textbf{47.11}           &\textbf{123.54} \\
 Jamesbond          & 29        & 45550     & 19809    &43.45   & 601.5      & 1.26       & 37999.8 & 83.41     &\textbf{620780}          &\textbf{1363.66}\\
 Kangaroo           & 52        & \textbf{1424600}   & 14637.5  &1.02    & 1632       & 0.11       & 14308   & 1.00              &14636           &1.02  \\
 Krull              & 1598      & 104100    & 8741.5   &6.97    & 8147.4     & 6.39       & 9387.5  & 7.60               &\textbf{594540}          &\textbf{578.47}\\
 Kung Fu Master     & 258.5     & 1000000   & 52181    &5.19    & 43375.5    & 4.31       & 607443  & 60.73             &\textbf{1666665}          &\textbf{166.68}\\
 Montezuma Revenge  &0          & \textbf{1219200}   & 384      &0.03    & 0          & 0.00       & 0.3     & 0.00              &2500            &0.21\\
 Ms Pacman          & 307.3     & \textbf{290090}    & 5380.4   &1.75    & 7342.32    & 2.43       & 6565.5  & 2.16           &11573           &3.89\\
 Name This Game     & 2292.3    & 25220     & 13136    &47.30   & 21537.2    & 83.94      & 26219.5 & 104.36      &\textbf{36296}  &\textbf{148.31}    \\
 Phoenix            & 761.5     & \textbf{4014440}   & 108529   &2.69    & 210996.45  & 5.24       & 519304  & 12.92            &959580          &23.89\\
 Pitfall            & -229.4    & \textbf{114000}    & 0        &0.20    & -1.66      & 0.20       & -0.6    & 0.20                &-4.3            &0.20\\
 Pong               & -20.7     & \textbf{21}        & 20.9     &99.76   & 20.98      & 99.95      & \textbf{21}      & \textbf{100.00}    &\textbf{21}     &\textbf{100.00}    \\
 Private Eye        & 24.9      & \textbf{101800}    & 4234     &4.14    & 98.5       & 0.07       & 96.3    & 0.07              &15100           &14.81\\
 Qbert              & 163.9     & \textbf{2400000}   & 33817.5  &1.40    & 351200.12  & 14.63      & 21449.6 & 0.89              &28657           &1.19\\
 Riverraid          & 1338.5    & \textbf{1000000}   & 22920.8  &2.16    & 29608.05   & 2.83       & 40362.7 & 3.91              &28349           &2.70\\
 Road Runner        & 11.5      & \textbf{2038100}   & 62041    &3.04    & 57121      & 2.80       & 45289   & 2.22              &999999          &49.06\\
 Robotank           & 2.2       & 76        & 61.4     &80.22   & 12.96      & 14.58      & 62.1    & 81.17      &\textbf{113.4}           &\textbf{150.68}\\
 Seaquest           & 68.4      & 999999    & 15898.9  &1.58    & 1753.2     & 0.17       & 2890.3  & 0.28      &\textbf{1000000}          &\textbf{100.00}\\
 Skiing             & -17098    & \textbf{-3272}     & -12957.8 &29.95   & -10180.38  & 50.03      & -29968.4& -93.09            &-6025	          &86.77\\
 Solaris            & 1236.3    & \textbf{111420}    & 3560.3   &2.11    & 2365       & 1.02       & 2273.5  & 0.94                &9105            &7.14\\
 Space Invaders     & 148       & \textbf{621535 }   & 18789    &3.00    & 43595.78   & 6.99       & 51037.4 & 8.19            &154380          &24.82\\
 Star Gunner        & 664       & 77400     & 127029   &164.67  & 200625     & 260.58     & 321528  & 418.14    &\textbf{677590} &\textbf{882.15}\\
 Surround           & -10       & 9.6       & \textbf{9.7}      &\textbf{100.51}  & 7.56       & 89.59      & 8.4     & 93.88           &2.606           &64.32\\
 Tennis             & -23.8     & 21        & 0        &53.13   & 0.55       & 54.35      & 12.2    & 80.36     &\textbf{24}  &\textbf{106.70}\\
 Time Pilot         & 3568      & 65300     & 12926    &15.16   & 48481.5    & 72.76      & 105316  & 164.82          &\textbf{450810} &\textbf{724.49}\\
 Tutankham          & 11.4      & \textbf{5384}      & 241      &4.27    & 292.11     & 5.22       & 278.9   & 4.98            &418.2           &7.57\\
 Up N Down          & 533.4     & 82840     & 125755   &152.14  & 332546.75  & 403.39     & 345727  & 419.40    &966590          &1173.73  \\
 Venture            & 0         & \textbf{38900}     & 5.5      &0.01    & 0          & 0.00       & 0       & 0.00              &2000            &5.14\\
 Video Pinball      & 0         & \textbf{89218328}  & 533936.5 &0.60    & 572898.27  & 0.64       & 511835  & 0.57              &978190          &1.10\\
 Wizard of Wor      & 563.5     & \textbf{395300}    & 17862.5  &4.38    & 9157.5     & 2.18       & 29059.3 & 7.22              &63735           &16.00\\
 Yars Revenge       & 3092.9    & \textbf{15000105}  & 102557   &0.66    & 84231.14   & 0.54       & 166292.3& 1.09              &968090          &6.43\\
 Zaxxon             & 32.5      & 83700     & 22209.5  &26.51   & 32935.5    & 39.33      & 41118   & 49.11      &\textbf{216020}          &\textbf{258.15}\\
\hline
MEAN HWRNS(\%)      & 0.00      & 100.00    &          & \rainbowmeanHWRNS  &            & \impalameanHWRNS  &        & \lasermeanHWRNS  &      & \textbf{\GDIHmeanHWRNS} \\
\hline   
MEDIAN HWRNS(\%)    & 0.00      & \textbf{100.00}    &          & \rainbowmedianHWRNS   &            & \impalamedianHWRNS &        & \lasermedianHWRNS  &      & \GDIHmedianHWRNS  \\
\hline   
\end{tabular}
\end{center}
\end{table}
\clearpage

\begin{table}[!hb]
\footnotesize
\begin{center}
\caption{Score table of SOTA 10B+ model-free algorithms on HWRNS(\%).}
\label{Tab:Score table of SOTA 10B+ model-free algorithms on HWRNS.}
\setlength{\tabcolsep}{1.0pt}
\begin{tabular}{|c| c c| c c| c c| c c| c c| }
\hline
 Games & R2D2 & HWRNS & NGU & HWRNS & AGENT57 & HWRNS & GDI-I$^3$ & HWRNS & GDI-H$^3$ & HWRNS \\
\hline
Scale  & 10B   &        & 35B &         & 100B     &        & 200M &  &    200M   &\\
\hline
 Alien              & 109038.4          & 43.23       & 248100          & 98.48          & \textbf{297638.17}   &\textbf{118.17}         &43384             &17.15   &48735             &19.27    \\
 Amidar             & 27751.24          & 26.64       & 17800           & 17.08          & \textbf{29660.08}    &\textbf{28.47}          &1442              &1.38    &1065              &1.02    \\
 Assault            & 90526.44          & 1071.91     & 34800           & 410.44         & 67212.67             &795.17         &63876             &755.57  &\textbf{97155}             &\textbf{1150.59}    \\
 Asterix            & 999080   & 99.91       & 950700          & 95.07          & 991384.42            &99.14          &759910            &75.99   &\textbf{999999}            &\textbf{100.00}    \\
 Asteroids          & 265861.2          & 2.52        & 230500          & 2.19           & 150854.61            &1.43           &751970     &7.15    &\textbf{760005}            &\textbf{7.23}    \\
 Atlantis           & 1576068           & 14.76       & 1653600         & 15.49          & 1528841.76           &14.31          &3803000    &35.78   &\textbf{3837300}           &\textbf{36.11}    \\
 Bank Heist         & \textbf{46285.6}  & \textbf{56.40}       & 17400           & 21.19          & 23071.5              &28.10          &1401              &1.69    &1380              &1.66    \\
 Battle Zone        & 513360            & 64.08       & 691700          & 86.35          & \textbf{934134.88}   &\textbf{116.63}         &478830            &59.77   &824360            &102.92    \\
 Beam Rider         & 128236.08         & 12.79       & 63600           & 6.33           & 300509.8    &30.03          &162100            &16.18   &\textbf{422390}            &\textbf{42.22}    \\
 Berzerk            & 34134.8           & 3.22        & 36200           & 3.41           & \textbf{61507.83}    &\textbf{5.80 }          &7607              &0.71    &14649             &1.37    \\
 Bowling            & 196.36            & 62.57       & 211.9           & 68.18          & \textbf{251.18}      &\textbf{82.37}          &201.9             &64.57   &205.2             &65.76    \\
 Boxing             & 99.16             & 99.16       & 99.7            & 99.70          & \textbf{100}         &\textbf{100.00}         &\best{100}        &\textbf{100.00} &\textbf{100}       &\textbf{100.00}     \\
 Breakout           & 795.36            & 92.04       & 559.2           & 64.65          & 790.4                &91.46          &\best{864}        &\textbf{100.00}  &\textbf{864}             &\textbf{100.00}    \\
 Centipede          & 532921.84         & 40.85       & \textbf{577800} & \textbf{44.30}          & 412847.86            &31.61          &155830            &11.83   &195630            &14.89    \\
 Chopper Command    &960648             & 96.06       &999900           & 99.99          &999900                &99.99          &\best{999999}     &\textbf{100.00}  &\textbf{999999}            &\textbf{100.00}    \\
 Crazy Climber      & 312768            & 144.41      & 313400          & 144.71         &\textbf{565909.85}    &\textbf{265.46}         &201000            &90.96   &241170            &110.17    \\
 Defender           & 562106            & 9.31        & 664100          & 11.01          & 677642.78            &11.23          &893110     &14.82   &\textbf{970540}            &\textbf{16.11}    \\
 Demon Attack       & 143664.6          & 9.22        & 143500          & 9.21           & 143161.44            &9.19           &675530     &43.40    &\textbf{787985}   &\textbf{50.63}   \\
 Double Dunk        & 23.12             & 105.35      & -14.1           & 11.36          & 23.93       &107.40         &\textbf{24}                &\textbf{107.58}  &\textbf{24}                &\textbf{107.58}    \\
 Enduro             & 2376.68           & 25.02       & 2000            & 21.05          & 2367.71              &24.92          &\best{14330}      &\textbf{150.84}  &14300             &150.53    \\
 Fishing Derby      & 81.96             & 106.74      & 32              & 76.03          & \textbf{86.97}       &\textbf{109.82}         &59                &92.89   &65                &96.31    \\
 Freeway            & \textbf{34}       & \textbf{89.47}       & 28.5            & 75.00          & 32.59                &85.76          &\best{34}         &\textbf{89.47} &\textbf{34}         &\textbf{89.47}       \\
 Frostbite          & 11238.4           & 2.46        & 206400          & 45.37          &\textbf{541280.88}    &\textbf{119.01}         &10485             &2.29    &11330            &2.48    \\
 Gopher             & 122196            & 34.37       & 113400          & 31.89          & 117777.08            &33.12          &\best{488830}     &\textbf{137.71}  &473560           &133.41    \\
 Gravitar           & 6750              & 4.04        & 14200           & 8.62           &\textbf{19213.96}     &\textbf{11.70}          &5905              &3.52    &5915             &3.53    \\
 Hero               & 37030.4           & 3.60        & 69400           & 6.84           &\textbf{114736.26}    &\textbf{11.38}          &38330             &3.73    &38225            &3.72    \\
 Ice Hockey         & \textbf{71.56}    & \textbf{175.34}      &-4.1             & 15.04          & 63.64                &158.56         &37.89             &118.94  &47.11           &123.54    \\
 Jamesbond          & 23266             & 51.05       & 26600           & 58.37          & 135784.96            &298.23         &594500              &1305.93 &\textbf{620780}          &\textbf{1363.66}    \\
 Kangaroo           & 14112             & 0.99        & \textbf{35100}  & \textbf{2.46}           &24034.16              &1.68           &14500             &1.01    &14636           &1.02    \\
 Krull              & 145284.8          & 140.18      & 127400          & 122.73         & 251997.31   &244.29         &97575             &93.63   &\textbf{594540}          &\textbf{578.47}    \\
 Kung Fu Master     & 200176            & 20.00       & 212100          & 21.19          & 206845.82            &20.66          &140440            &14.02            &\textbf{1666665}	          &\textbf{166.68}\\
 Montezuma Revenge  & 2504              & 0.21        & \textbf{10400}  & \textbf{0.85}           &9352.01               &0.77           &3000              &0.25    &2500            &0.21    \\
 Ms Pacman          & 29928.2           & 10.22       & 40800           & 13.97          & \textbf{63994.44}    &\textbf{21.98}          &11536             &3.87    &11573           &3.89    \\
 Name This Game     & 45214.8           & 187.21      & 23900           & 94.24          &\textbf{54386.77}     &\textbf{227.21}         &34434             &140.19  &36296           &148.31    \\
 Phoenix            & 811621.6          & 20.20       & \textbf{959100} & \textbf{23.88}          &908264.15             &22.61          &894460            &22.27   &959580          &23.89    \\
 Pitfall            & 0                 & 0.20        & 7800            & 7.03           &\textbf{18756.01}     &\textbf{16.62}          &0                 &0.20    &-4.3            &0.20    \\
 Pong               & \textbf{21}       & \textbf{100.00}      & 19.6            & 96.64          & 20.67                &99.21          &\best{21}         &\textbf{100.00} &\textbf{21}      &\textbf{100.00}      \\
 Private Eye        & 300               & 0.27        & \textbf{100000} & \textbf{98.23}          & 79716.46             &78.30          &15100             &14.81   &15100           &14.81    \\
 Qbert              & 161000            & 6.70        & 451900          & 18.82          &\textbf{580328.14}    &\textbf{24.18}          &27800             &1.15    &28657           &1.19   \\
 Riverraid          & 34076.4           & 3.28        & 36700           & 3.54           & \textbf{63318.67}    &\textbf{6.21}           &28075             &2.68    &28349           &2.70    \\
 Road Runner        & 498660            & 24.47       & 128600          & 6.31           & 243025.8             &11.92          &878600     &43.11   &\textbf{999999}          &\textbf{49.06}    \\
 Robotank           & \textbf{132.4}    & \textbf{176.42}      & 9.1             & 9.35           &127.32                &169.54         &108               &143.63  &113.4           &150.68    \\
 Seaquest           & 999991.84         & 100.00      & \textbf{1000000}& \textbf{100.00}         &999997.63             &100.00         &943910	             &94.39 &\textbf{1000000}  &\textbf{100.00}    \\
 Skiing             & -29970.32         & -93.10      & -22977.9        & -42.53         & \textbf{-4202.6}     &\textbf{93.27}          &-6774             &74.67           &-6025	          &86.77\\
 Solaris            & 4198.4            & 2.69        & 4700            & 3.14           & \textbf{44199.93}    &\textbf{38.99}          &11074             &8.93            &9105            &7.14\\
 Space Invaders     & 55889             & 8.97        & 43400           & 6.96           & 48680.86             &7.81           &140460     &22.58           &\textbf{154380} &\textbf{24.82}\\
 Star Gunner        & 521728            & 679.03      & 414600          & 539.43         &\textbf{839573.53}    &\textbf{1093.24}        &465750            &606.09          &677590          &882.15\\
 Surround           & \textbf{9.96}     & \textbf{101.84}      & -9.6            & 2.04           & 9.5                  &99.49          &-7.8              &11.22           &2.606           &64.32\\
 Tennis             & \textbf{24}       & \textbf{106.70}      & 10.2            & 75.89          & 23.84                &106.34         &\textbf{24}                &\textbf{106.70}          &\textbf{24}              &\textbf{106.70}\\
 Time Pilot         & 348932            & 559.46      & 344700          & 552.60         &\textbf{405425.31}    &\textbf{650.97}         &216770            &345.37          &450810          &724.49\\
 Tutankham          & 393.64            & 7.11        & 191.1           & 3.34           & \textbf{2354.91}     &\textbf{43.62}          &423.9             &7.68            &418.2           &7.57\\
 Up N Down          & 542918.8          & 658.98      & 620100          & 752.75         & 623805.73            &757.26         &\best{986440}     &\textbf{1197.85}         &966590          &1173.73\\
 Venture            & 1992              & 5.12        & 1700            & 4.37           &\textbf{2623.71}      &\textbf{6.74}           &2000              &5.23            &2000            &5.14\\
 Video Pinball      & 483569.72         & 0.54        & 965300          & 1.08           &\textbf{992340.74}    &\textbf{1.11}           &925830            &1.04            &978190          &1.10\\
 Wizard of Wor      & 133264            & 33.62       & 106200          & 26.76          &\textbf{157306.41}    &\textbf{39.71}          &64439             &16.14           &63735           &16.00\\
 Yars Revenge       & 918854.32         & 6.11        & 986000          & 6.55           &\textbf{998532.37}    &\textbf{6.64}           &972000            &6.46            &968090          &6.43\\
 Zaxxon             & 181372            & 216.74      & 111100          & 132.75         &\textbf{249808.9}     &\textbf{298.53}         &109140            &130.41          &216020          &258.15\\
\hline
MEAN HWRNS(\%)        &                   & \rtdtmeanHWRNS       &                 &  \ngumeanHWRNS         &                      & \agentmeanHWRNS           &                  & \GDIImeanHWRNS        &         & \textbf{\GDIHmeanHWRNS} \\
\hline
MEDIAN HWRNS(\%)      &                   &  \rtdtmedianHWRNS       &                 &  \ngumedianHWRNS      &                      &\agentmedianHWRNS &                  & \GDIImedianHWRNS          &         & \textbf{\GDIHmedianHWRNS} \\
\hline
\end{tabular}
\end{center}
\end{table}
\clearpage

\begin{table}[!hb]
\footnotesize
\begin{center}
\caption{Score table of SOTA model-based algorithms on HWRNS(\%). SimPLe \citep{modelbasedatari} and DreamerV2\citep{dreamerv2} haven't evaluated all 57 Atari Games in their paper. For fairness, we set the score on those games as N/A, which will not be considered when calculating the median and mean HWRNS and human world record breakthrough (HWRB). }
\label{Tab:Score table of SOTA model-based algorithms on HWRNS.}
\setlength{\tabcolsep}{1.0pt}
\begin{tabular}{| c |c c| c c| c c| c c| c c| }
\hline
 Games              & MuZero         & HWRNS      & DreamerV2 & HWRNS   & SimPLe             & HWRNS          & GDI-I$^3$     & HWRNS & GDI-H$^3$ & HWRNS \\
\hline
Scale               & 20B            &              & 200M      &            & 1M               &                  & 200M     & &    200M   &\\
\hline
 Alien              & \textbf{741812.63}      & \textbf{294.64  }     &3483       & 1.29     &616.9     & 0.15    & 43384       &  17.15            &48735             &19.27       \\
 Amidar             & \textbf{28634.39 }      & \textbf{27.49   }  &2028       & 1.94     &74.3      & 0.07    & 1442        &  1.38                &1065              &1.02     \\
 Assault            & \textbf{143972.03}      & \textbf{1706.31 }     &7679       & 88.51       &527.2     & 3.62       & 63876       &  755.57     &97155             &1150.59    \\
 Asterix            & 998425         & 99.84        &25669      & 2.55     &1128.3    & 0.09       & 759910      &  75.99         &\textbf{999999}            &\textbf{100.00} \\
 Asteroids          & 678558.64               & 6.45        &3064                & 0.02    &793.6              & 0.00    &751970         & 7.15   &\textbf{760005}            &\textbf{7.23}  \\
 Atlantis           & 1674767.2               & 15.69           &989207              & 9.22       &20992.5            & 0.08       &3803000        & 35.78  &\textbf{3837300}           &\textbf{36.11}  \\
 Bank Heist         & 1278.98                 & 1.54        &1043                & 1.25    &34.2               & 0.02    &\best{1401}           & \best{1.69  }  &1380              &1.66 \\
 Battle Zone        & \textbf{848623         }& \textbf{105.95}      &31225      & 3.87     &4031.2    & 0.47       & 478830      &  59.77      &824360            &102.92     \\
 Beam Rider         & \textbf{454993.53}      & \textbf{45.48}      &12413      & 1.21     &621.6     & 0.03    & 162100      &  16.18          &422390            &42.22 \\
 Berzerk            & \textbf{85932.6        }& \textbf{8.11}      &751        & 0.06     &N/A       & N/A     & 7607        &  0.71            &14649             &1.37 \\
 Bowling            & \textbf{260.13         }& \textbf{85.60}      &48         & 8.99     &30        & 2.49    & 202         &  64.57          &205.2             &65.76   \\
 Boxing             & \textbf{100}                     & \textbf{100.00}       &87                  & 86.99       &7.8         & 7.71       & \best{100}  & \best{100.00 } &\textbf{100}               &\textbf{100.00} \\
 Breakout           & \textbf{864}                     & \textbf{100.00}          &350                 & 40.39       &16.4     & 1.70       & \best{864}  & \best{100.00}  &\textbf{864}             &100.00 \\
 Centipede          & \textbf{1159049.27}     & \textbf{89.02}     &6601       & 0.35     &N/A       & N/A     & 155830      & 11.83                     &195630            &14.89\\
 Chopper Command    & 991039.7                & 99.10     &2833                & 0.20     & 979.4             & 0.02    & \best{999999}        & \best{100.00} &\textbf{999999}            &\textbf{100.00}  \\
 Crazy Climber      & \textbf{458315.4}       & \textbf{214.01    }  &141424     & 62.47       & 62583.6  & 24.77   & 201000      & 90.96                      &241170            &110.17 \\
 Defender           & 839642.95               & 13.93      & N/A                & N/A        & N/A               & N/A      & 893110 & 14.82 &\textbf{970540}            &\textbf{16.11}   \\
 Demon Attack       & 143964.26               & 9.24      & 2775              &0.17      & 208.1             & 0.00    & 675530      & 43.40  &\textbf{787985}   &\textbf{50.63} \\
 Double Dunk        & 23.94          & 107.42  & 22        &102.53        & N/A      & N/A     & \textbf{24}          & \textbf{107.58}                    &\textbf{24}                &\textbf{107.58}    \\
 Enduro             & 2382.44                 & 25.08       & 2112              &22.23         & N/A               & N/A     & \best{14330} & \best{150.84}&14300             &150.53   \\
 Fishing Derby      & \textbf{91.16}          & \textbf{112.39     }              &93.24          &286.77      &-90.7     & 0.61    & 59          & 92.89  &65               &96.31        \\
 Freeway            & 33.03                   & 86.92       & \textbf{34}                 &\textbf{89.47}          &16.7               & 43.95   & \best{34}      & \best{89.47} &\textbf{34}               &\textbf{89.47}   \\
 Frostbite          & \textbf{631378.53}      & \textbf{138.82}     & 15622    &3.42       &236.9     & 0.04    & 10485       & 2.29                           &11330            &2.48\\
 Gopher             & 130345.58               & 36.67      & 53853             &15.11         &596.8              & 0.10       & \best{488830} & \best{137.71} &473560           &133.41  \\
 Gravitar           & \textbf{6682.7     }    & \textbf{4.00    }  & 3554     &2.08      &173.4     & 0.00    & 5905        & 3.52     &5915             &3.53       \\
 Hero               & \textbf{49244.11}       & \textbf{4.83    }  & 30287    &2.93      &2656.6    & 0.16        & 38330       & 3.73 &38225            &3.72           \\
 Ice Hockey         & \textbf{67.04      }    & \textbf{165.76  }  & 29        &85.17         &-11.6     & -0.85       & 38          & 118.94 &47.11           &123.54         \\
 Jamesbond          & 41063.25                & 90.14     & 9269              &20.30         &100.5     & 0.16    &594500              &1305.93 &\textbf{620780}          &\textbf{ 1363.66}  \\
 Kangaroo           & \textbf{16763.6        }& \textbf{1.17}  & 11819     &0.83      &51.2      & 0.00    & 14500       & 1.01          &14636           &1.02   \\
 Krull              & 269358.27               & 261.22     & 9687     &7.89       &2204.8    & 0.59    & 97575       & 93.63    &\textbf{594540}          &\textbf{578.47}       \\
 Kung Fu Master     & 204824         & 20.46  & 66410    &6.62       &14862.5   & 1.46    & 140440      & 14.02        &\textbf{1666665}          &\textbf{166.68 } \\
 Montezuma Revenge  & 0                       & 0.00         & 1932              &0.16       &N/A       & N/A     & \best{3000}          & \best{0.25  } &2500            &0.21  \\
 Ms Pacman          & \textbf{243401.1 }      & \textbf{83.89   }  & 5651     &1.84       &1480      & 0.40    & 11536       & 3.87       &11573           &3.89     \\
 Name This Game     & \textbf{157177.85}      & \textbf{675.54  }  & 14472    &53.12          &2420.7    & 0.56    & 34434       & 140.19 &36296           &148.31        \\
 Phoenix            & \textbf{955137.84}      & \textbf{23.78   }  & 13342     &0.31       &N/A       & N/A     & 894460      & 22.27     &959580          &23.89     \\
 Pitfall            & \textbf{0}                       & \textbf{0.20}               & -1                 &0.20       &N/A       & N/A     & \best{0} & \best{0.20} &-4.3            &0.20     \\
 Pong               & \textbf{21}                      & 100.00             & 19                 &95.20          & 12.8     & 80.34   & \best{21} & \best{100.00} &\textbf{21}              &\textbf{100.00}   \\
 Private Eye        & \textbf{15299.98 }      & \textbf{15.01}     & 158       &0.13       & 35       & 0.01    & 15100       & 14.81             &15100           &14.81     \\
 Qbert              & \textbf{72276          }& \textbf{3.00}      & 162023    &6.74       & 1288.8   & 0.05    & 27800       & 1.15              &28657           &1.19 \\
 Riverraid          & \textbf{323417.18}      & \textbf{32.25}     & 16249    &1.49       & 1957.8   & 0.06    & 28075       & 2.68               &28349           &2.70\\
 Road Runner        & 613411.8                & 30.10              & 88772             &4.36       & 5640.6   & 0.28       & 878600 & 43.11   &\textbf{999999}          &\textbf{49.06}   \\
 Robotank           & \textbf{131.13}         & \textbf{174.70}    & 65        &85.09          & N/A      & N/A     & 108         & 143.63        &113.4           &150.68 \\
 Seaquest           & 999976.52               & 100.00             & 45898             &4.58       & 683.3             & 0.06    &943910	             &94.39 &\textbf{1000000}          &\textbf{100.00}\\
 Skiing             & -29968.36      & -93.09      & -8187    &64.45          & N/A      & N/A     & -6774       & 74.67        &\textbf{-6025}	         &\textbf{86.77} \\
 Solaris            & 56.62                   & -1.07              & 883                &-0.32          & N/A               & N/A     & \best{11074}         & \best{8.93 } &9105            &7.14\\
 Space Invaders     & 74335.3                 & 11.94                 & 2611               &0.40       & N/A               & N/A     & 140460        & 22.58  &\textbf{154380}          &\textbf{24.82}   \\
 Star Gunner        & 549271.7       & 714.93     & 29219    &37.21          & N/A      & N/A     & 465750      & 606.09      &\textbf{677590}          &\textbf{882.15}\\
 Surround           & \textbf{9.99       }    & \textbf{101.99}     & N/A       &N/A         & N/A      & N/A     & -8          & 11.22         &2.606           &64.32 \\
 Tennis             & 0                       & 53.13      & 23        &104.46        & N/A      & N/A     & \textbf{24}          & \textbf{106.70} &\textbf{24}              &\textbf{106.70}      \\
 Time Pilot         & \textbf{476763.9}       & \textbf{766.53}         & 32404    &46.71         & N/A      & N/A     & 216770      & 345.37       &450810          &724.49   \\
 Tutankham          & \textbf{491.48     }    & \textbf{8.94}   & 238       &4.22         & N/A      & N/A     & 424         & 7.68                 &418.2           &7.57 \\
 Up N Down          & 715545.61               & 868.72            & 648363            &787.09        & 3350.3            & 3.42   & \best{986440}   & \best{1197.85} &966590          &1173.73    \\
 Venture            & 0.4                     & 0.00        & 0                  &0.00       & N/A               & N/A     & \best{2030}          & \best{5.23}      &2000            &5.14  \\
 Video Pinball      & \textbf{981791.88}      & \textbf{1.10}      & 22218    &0.02     & N/A      & N/A     & 925830      & 1.04                                    &978190          &1.10\\
 Wizard of Wor      & \textbf{197126         }& \textbf{49.80}      & 14531    &3.54     & N/A      & N/A     & 64439       & 16.14                                  &63735           &16.00\\
 Yars Revenge       & 553311.46               & 3.67      & 20089             &0.11     & 5664.3            & 0.02    & \best{972000}        & \best{6.46}           &968090          &6.43\\
 Zaxxon             & \textbf{725853.9}       & \textbf{867.51}      & 18295    &21.83          & N/A      & N/A     & 109140      & 130.41                          &216020          &258.15\\
\hline
MEAN HWRNS(\%) &               &\muzeromeanHWRNS  &         &  \dreamermedianHWRNS &                                          & \simplemeanHWRNS  &     & \GDIImeanHWRNS &                  & \textbf{\GDIHmeanHWRNS} \\
\hline
MEDIAN HWRNS(\%) &            & \muzeromedianHWRNS &         &\dreamermedianHWRNS  &                                          & \simplemedianHWRNS &     & \textbf{\GDIImedianHWRNS} &                  & \GDIHmedianHWRNS \\
\hline
\end{tabular}
\end{center}
\end{table}
\clearpage

\begin{table}[!hb] 
\footnotesize
\begin{center}
\caption{Score table of other SOTA  algorithms on HWRNS(\%). Go-Explore \citep{goexplore} and Muesli \citep{muesli}.}
\label{Tab: Score table of other SOTA  algorithms on HWRNS.}
\setlength{\tabcolsep}{1.0pt}
\begin{tabular}{| c | c c |c c |c c |c c |}
\hline
 Games      & Muesli              & HWRNS        & Go-Explore              & HWRNS                   & GDI-I$^3$ & HWRNS & GDI-H$^3$ & HWRNS\\
\hline
Scale       &                     & 200M        & 10B                     &                             & 200M      &    &    200M   &\\
\hline    
 Alien        &139409          &55.30               &\textbf{959312}       &\textbf{381.06}                & 43384             &17.15     &48735             &19.27            \\
 Amidar       &\textbf{21653}  &\textbf{20.78}      &19083        &18.32                & 1442              &1.38                         &1065              &1.02          \\
 Assault      &36963           &436.11           &30773                 &362.64                         & 63876      &755.57&\textbf{97155}    &\textbf{1150.59}           \\
 Asterix      &316210          &31.61            &999500       &99.95                 & 759910            &75.99        &\textbf{999999}            &\textbf{100.00}    \\
 Asteroids    &484609          &4.61             &112952                &1.07                           & 751970     &7.15  &\textbf{760005}            &\textbf{7.23}        \\
 Atlantis     &1363427         &12.75            &286460                &2.58                           & 3803000    &35.78 &\textbf{3837300}           &\textbf{36.11}         \\
 Bank Heist   &1213            &1.46          &\textbf{3668}         &\textbf{4.45  }                & 1401              &1.69            &1380              &1.66    \\
 Battle Zone  &414107          &51.68            &\textbf{998800}       &\textbf{124.70}                & 478830            &59.77        &824360            &102.92     \\
 Beam Rider   &288870          &28.86         &371723       &37.15                & 162100            &16.18           &\textbf{422390}            &\textbf{42.22}     \\
 Berzerk      &44478           &4.19             &\textbf{131417}      &\textbf{12.41 }                & 7607              &0.71          &14649             &1.37       \\
 Bowling      &191             &60.64        &\textbf{247}           &\textbf{80.86 }                & 202               &64.57            &205.2             &65.76 \\
 Boxing       &99              &99.00        &91                     &90.99                          & \best{100}        &\best{100.00}    &\textbf{100}      &\textbf{100.00}        \\
 Breakout     &791             &91.53          &774                    &89.56                          & \best{864}        &\best{100.00}  &\textbf{864}              &\textbf{100.00}       \\
 Centipede    &\textbf{869751} &\textbf{66.76}          &613815      &47.07                 & 155830            &11.83                    &195630            &14.89  \\
 Chopper Command &101289       &10.06       &996220                &99.62                          & \best{999999}     &\best{100.00}     &\textbf{999999}            &\textbf{100.00}    \\
 Crazy Climber   &175322       &78.68     &235600       &107.51                & 201000            &90.96               &\textbf{241170}            &\textbf{110.17}\\
 Defender        &629482       &10.43        &N/A                    &N/A                            & 893110     &14.82    &\textbf{970540}                &\textbf{16.11}      \\
 Demon Attack    &129544       &8.31        &239895                 &15.41                          & 675530     &43.40     &\textbf{787985}   &\textbf{50.63}    \\
 Double Dunk     &-3           &39.39       &\textbf{24}                     &\textbf{107.58}       & \best{24}       &\best{107.58}      &\textbf{24}      &\textbf{107.58}      \\
 Enduro          &2362         &24.86                   &1031                   &10.85              & \best{14330}      &\best{150.84}    &14300             &150.53        \\
 Fishing Derby   &51           &87.71       &\textbf{67}            &\textbf{97.54 }                & 59                &92.89            &65               &96.31\\
 Freeway         &33           &86.84                   &\textbf{34}            &\textbf{89.47 }    & \best{34}         &\best{89.47}     &\textbf{34}      &\textbf{89.47}      \\
 Frostbite       &301694       &66.33            &\textbf{999990}       &\textbf{219.88}                & 10485             &2.29         &11330            &2.48   \\
 Gopher          &104441       &29.37            &134244                &37.77                          & \best{488830}     &\best{137.71} &473560           &133.41       \\
 Gravitar        &11660        &7.06           &\textbf{13385}        &\textbf{8.12}                  & 5905              &3.52           &5915             &3.53  \\
 Hero            &37161        &3.62                     &37783                  &3.68                           & \best{38330}      &\best{3.73}   &38225           &3.72           \\
 Ice Hockey      &25           &76.69          &33                     &93.64                          & 45         &118.94           &\textbf{47.11}  &\textbf{123.54} \\
 Jamesbond       &19319        &42.38            &200810                &441.07                         &594500              &1305.93         &\textbf{620780}       &\textbf{ 1363.66}\\
 Kangaroo        &14096        &0.99                   &\textbf{24300}        &\textbf{1.70}                  & 14500             &1.01             &14636           &1.02\\
 Krull           &34221        &31.83                     &63149                 &60.05                          & 97575      &93.63  &\textbf{594540}          &\textbf{578.47}        \\
 Kung Fu Master  &134689       &13.45      &24320                 &2.41                           & 140440     &14.02                 &\textbf{1666665}          &\textbf{166.68}\\
 Montezuma Revenge  &2359      &0.19             &\textbf{24758}        &\textbf{2.03}                  & 3000              &0.25                   &2500            &0.21\\
 Ms Pacman          &65278     &22.42     &\textbf{456123}       &\textbf{157.30}                & 11536             &3.87                          &11573           &3.89\\
 Name This Game     &105043    &448.15     &\textbf{212824}       &\textbf{918.24}               & 34434             &140.19                        &36296           &148.31\\
 Phoenix        &805305        &20.05                                &19200                 &0.46                           & 894460     &22.27 &\textbf{959580}          &\textbf{23.89}   \\
 Pitfall        &0             &0.20                   &\textbf{7875}          &\textbf{7.09   }               & 0                 &0.2             &-4.3            &0.20 \\
 Pong           &20            &97.60                       &\textbf{21}            &\textbf{100.00 }               & \best{21}         &\best{100} &\textbf{21}              &\textbf{100.00}          \\
 Private Eye    &10323         &10.12                &\textbf{69976}        &\textbf{68.73  }               & 15100             &14.81               &15100           &14.81 \\
 Qbert          &157353        &6.55                            &\textbf{999975}       &\textbf{41.66  }               & 27800             &1.15     &28657           &1.19       \\
 Riverraid      &\textbf{47323}&\textbf{4.60}               &35588        &3.43               & 28075             &2.68                              &28349           &2.70\\
 Road Runner    &327025        &16.05                        &999900        &49.06                 & 878600           &43.11       &\textbf{999999}          &\textbf{49.06}    \\
 Robotank       &59            &76.96                             &\textbf{143}           &\textbf{190.79 }               & 108         &143.63      &113.4           &150.68          \\
 Seaquest       &815970        &81.60                &539456       &53.94                 &943910	             &94.39       &\textbf{1000000}          &\textbf{100.00}    \\
 Skiing         &-18407        &-9.47                      &\textbf{-4185}        &\textbf{93.40  }               & -6774             &74.67         &-6025	          &86.77      \\
 Solaris        &3031          &1.63                         &\textbf{20306}        &\textbf{17.31  }               & 11074             &8.93        &9105            &7.14        \\
 Space Invaders &59602         &9.57                 &93147                 &14.97                          & 140460     &22.58        &\textbf{154380}          &\textbf{24.82} \\
 Star Gunner    &214383        &278.51   &609580                 &793.52                         &465750     &606.09                  &\textbf{677590}          &\textbf{882.15}\\
 Surround       &\textbf{9}    &\textbf{96.94}             &N/A                    &N/A                            & -8         &11.22               &2.606           &64.32\\
 Tennis         &12            &79.91                           &\best{24}              &\best{106.7}                   & \best{24}         &\best{106.70   }  &\textbf{24}              &\textbf{106.70}          \\
 Time Pilot     &359105 &575.94    &183620                &291.67                         & 216770     & 345.37                    &\textbf{450810}          &\textbf{724.49}\\
 Tutankham      &252           &4.48         &\textbf{528}           &\textbf{9.62}                  & 424               &7.68                       &418.2           &7.57\\
 Up N Down      &649190        &788.10                &553718                &672.10                         & \best{986440}     &\best{1197.85}     &966590          &1173.73      \\
 Venture        &2104          &5.41            &\textbf{3074}         &\textbf{7.90}                & 2035              &5.23                       &2000            &5.14\\
 Video Pinball  &685436        &0.77                 &\textbf{999999}       &\textbf{1.12}               & 925830            &1.04                   &978190          &1.10\\
 Wizard of Wor  &93291         &23.49                &\textbf{199900}       &\textbf{50.50}               & 64293             &16.14                 &63735           &16.00\\
 Yars Revenge   &557818        &3.70               &\textbf{999998}       &\textbf{6.65}               & 972000            &6.46                     &968090          &6.43\\
 Zaxxon         &65325         &78.04                 &18340                 &21.88                        & 109140     &130.41        &\textbf{216020} &\textbf{258.15}  \\
\hline    
MEAN HWRNS(\%)  &    & \mueslimeanHWRNS &                       & \goexploremeanHWRNS                       &            &  \GDIImeanHWRNS  &                  & \textbf{\GDIHmeanHWRNS} \\
\hline
MEDIAN HWRNS(\%)&   & \mueslimedianHWRNS  &                       & \goexploremedianHWRNS              &            & \GDIImedianHWRNS   &                  & \textbf{\GDIHmedianHWRNS} \\
\hline
\end{tabular}
\end{center}
\end{table}

\subsection{Atari Games Table of Scores Based on SABER}
\label{app: Atari Games Table of Scores Based on SABER}
In this part, we detail the raw score of several representative SOTA algorithms , including the SOTA 200M model-free algorithms, SOTA 10B+ model-free algorithms, SOTA model-based algorithms and other SOTA algorithms.\footnote{200M and 10B+ represent the training scale.} Additionally, we calculate the capped human world records normalized world score (CHWRNS) or called SABER \citep{atarihuman} of each game with each algorithm. First of all, we demonstrate the sources of the scores that we used.
Random scores  are from \citep{agent57}.
Human world records (HWR) are from \citep{dreamerv2,atarihuman}.
Rainbow's scores are from \citep{rainbow}.
IMPALA's scores are from \citep{impala}.
LASER's scores are from \citep{laser}, with no sweep at 200M.
As there are many versions of R2D2 and NGU, we use original papers'.
R2D2's scores are from \citep{r2d2}.
NGU's scores are from \citep{ngu}.
Agent57's scores are from \citep{agent57}.
MuZero's scores are from \citep{muzero}.
DreamerV2's scores are from \citep{dreamerv2}.
SimPLe's scores are from \citep{modelbasedatari}.
Go-Explore's scores are from \citep{goexplore}.
Muesli's scores are from \citep{muesli}.
In the following,  we detail the raw scores and SABER of each algorithm on 57 Atari games.
\clearpage

\begin{table}[!hb]
\footnotesize
\begin{center}
\caption{Score table of SOTA 200M model-free algorithms on SABER(\%)  (GDI-I$^3$).}
\setlength{\tabcolsep}{1.0pt}
\begin{tabular}{ |c |c |c| c c| c c|  c c |c c |c c |}
\hline
Games & RND & HWR & RAINBOW & SABER & IMPALA & SABER & LASER & SABER & GDI-I$^3$ & SABER \\
\hline
Scale  &     &       & 200M   &       &  200M    &        & 200M   & &  200M   & \\
\hline
 Alien              & 227.8     & \textbf{251916}    & 9491.7   &3.68    & 15962.1    & 6.25       & 976.51  & 14.04     &43384      &17.15   \\
 Amidar             & 5.8       & \textbf{104159}    & 5131.2   &4.92    & 1554.79    & 1.49       & 1829.2  & 1.75      &1442              &1.38           \\
 Assault            & 222.4     & 8647               & 14198.5  &165.90  & 19148.47   & 200.00     & 21560.4 & 200.00    &63876      &200.00   \\
 Asterix            & 210       & \textbf{1000000}   & 428200   &42.81   & 300732     & 30.06      & 240090  & 23.99     &759910     &75.99   \\
 Asteroids          & 719       & \textbf{10506650}  & 2712.8   &0.02    & 108590.05  & 1.03       & 213025  & 2.02      &751970     &7.15    \\
 Atlantis           & 12850     & \textbf{10604840}  & 826660   &7.68    & 849967.5   & 7.90       & 841200  & 7.82      &3803000    &35.78   \\
 Bank Heist         & 14.2      & \textbf{82058}     & 1358     &1.64    & 1223.15    & 1.47       & 569.4   & 0.68      &1401       &1.69     \\
 Battle Zone        & 236       &801000    & 62010    &7.71    & 20885      & 2.58       & 64953.3 & 8.08      &478830     &59.77    \\
 Beam Rider         & 363.9     & \textbf{999999}    & 16850.2  &1.65    & 32463.47   & 3.21       & 90881.6 & 9.06      &162100     &16.18    \\
 Berzerk            & 123.7     & \textbf{1057940}            & 2545.6   &0.23    & 1852.7     & 0.16       & 25579.5 & 2.41      &7607              &0.71              \\
 Bowling            & 23.1      & \textbf{300}       & 30       &2.49    & 59.92      & 13.30      & 48.3    & 9.10      &201.9      &64.57  \\
 Boxing             & 0.1       & \textbf{100}                & 99.6     &99.60   & 99.96      & 99.96      & \textbf{100}     & \textbf{100.00}    &\best{100}        &\best{100.00 }    \\
 Breakout           & 1.7       & \textbf{864}                & 417.5    &48.22   & 787.34     & 91.11      & 747.9   & 86.54     &\best{864}        &\best{100.00 }   \\
 Centipede          & 2090.9    & \textbf{1301709}   & 8167.3   &0.47    & 11049.75   & 0.69       & 292792  & 22.37     &155830            &11.83         \\
 Chopper Command    & 811       & \textbf{999999}             & 16654    &1.59    & 28255      & 2.75       & 761699  & 76.15     &\best{999999}     &\best{100.00 } \\
 Crazy Climber      & 10780.5   & 219900    & 168788.5 &75.56   & 136950     & 60.33      & 167820  & 75.10     &201000     &90.96                 \\
 Defender           & 2874.5    & \textbf{6010500}   & 55105    &0.87    & 185203     & 3.03       & 336953  & 5.56      &893110     &14.82                \\
 Demon Attack       & 152.1     & \textbf{1556345}   & 111185   &7.13    & 132826.98  & 8.53       & 133530  & 8.57      &675530     &43.10                 \\
 Double Dunk        & -18.6     & 21                 & -0.3     &46.21   & -0.33      & 46.14      & 14      & 82.32     &\best{24}      &\best{107.58} \\
 Enduro             & 0         & 9500               & 2125.9   &22.38   & 0          & 0.00       & 0       & 0.00      &\best{14330}      &\best{150.84  }\\
 Fishing Derby      & -91.7     & \textbf{71}        & 31.3     &75.60   & 44.85      & 83.93      & 45.2    & 84.14     &59                &95.08         \\
 Freeway            & 0         & \textbf{38}        & 34       &89.47   & 0          & 0.00       & 0       & 0.00      &34                &89.47          \\
 Frostbite          & 65.2      & \textbf{454830}    & 9590.5   &2.09    & 317.75     & 0.06       & 5083.5  & 1.10      &10485      &2.29                  \\          
 Gopher             & 257.6     & 355040             & 70354.6  &19.76   & 66782.3    & 18.75      & 114820.7& 32.29     &\best{488830}     &\best{137.71 } \\
 Gravitar           & 173       & \textbf{162850}    & 1419.3   &0.77    & 359.5      & 0.11       & 1106.2  & 0.57      &5905       &3.52                  \\
 Hero               & 1027      & \textbf{1000000}            & 55887.4  &5.49    & 33730.55   & 3.27       & 31628.7 & 3.06      &38330             &3.73           \\
 Ice Hockey         & -11.2     & 36                 & 1.1      &26.06   & 3.48       & 31.10      & 17.4    & 60.59     &44.92              &118.94      \\
 Jamesbond          & 29        & 45550              & 19809    &43.45   & 601.5      & 1.26       & 37999.8 & 83.41     &594500              &200.00 \\
 Kangaroo           & 52        & \textbf{1424600}   & 14637.5  &1.02    & 1632       & 0.11       & 14308   & 1.00      &14500             &1.01         \\
 Krull              & 1598      & 104100    & 8741.5   &6.97    & 8147.4     & 6.39       & 9387.5  & 7.60      &97575     &93.63                  \\
 Kung Fu Master     & 258.5     & 1000000   & 52181    &5.19    & 43375.5    & 4.31       & 607443  & 60.73     &140440            &14.02          \\
 Montezuma Revenge  &0          & \textbf{1219200}   & 384      &0.03    & 0          & 0.00       & 0.3     & 0.00      &3000              &0.25           \\
 Ms Pacman          & 307.3     & \textbf{290090}    & 5380.4   &1.75    & 7342.32    & 2.43       & 6565.5  & 2.16      &11536              &3.87          \\
 Name This Game     & 2292.3    & 25220              & 13136    &47.30   & 21537.2    & 83.94      & 26219.5 & 104.36    &34434      &140.19 \\
 Phoenix            & 761.5     & \textbf{4014440}   & 108529   &2.69    & 210996.45  & 5.24       & 519304  & 12.92     &894460         &22.27             \\
 Pitfall            & -229.4    & \textbf{114000}    & \textbf{0}        &\textbf{0.20}    & -1.66      & 0.20       & -0.6    & 0.20      &\best{0    }      &0.20          \\
 Pong               & -20.7     & \textbf{21}                 & 20.9     &99.76   & 20.98      & 99.95      & \textbf{21}      & \textbf{100.00}    &\best{21   }      &\best{100.00 }     \\
 Private Eye        & 24.9      & \textbf{101800}    & 4234     &4.14    & 98.5       & 0.07       & 96.3    & 0.07      &15100      &14.81                 \\
 Qbert              & 163.9     & \textbf{2400000}   & 33817.5  &1.40    & 351200.12  & 14.63      & 21449.6 & 0.89      &27800             &1.03            \\
 Riverraid          & 1338.5    & \textbf{1000000}   & 22920.8  &2.16    & 29608.05   & 2.83       & 40362.7 & 3.91      &28075             &2.68           \\
 Road Runner        & 11.5      & \textbf{2038100}   & 62041    &3.04    & 57121      & 2.80       & 45289   & 2.22      &878600    &43.11                  \\
 Robotank           & 2.2       & 76                 & 61.4     &80.22   & 12.96      & 14.58      & 62.1    & 81.17     &108.2      &143.63   \\
 Seaquest           & 68.4      & 999999             & 15898.9  &1.58    & 1753.2     & 0.17       & 2890.3  & 0.28      &943910	             &94.39 \\
 Skiing             & -17098    & \textbf{-3272}     & -12957.8 &29.95   & -10180.38  & 50.03      & -29968.4& -93.09    &-6774             &74.67            \\
 Solaris            & 1236.3    & \textbf{111420}    & 3560.3   &2.11    & 2365       & 1.02       & 2273.5  & 0.94      &11074             &8.93           \\
 Space Invaders     & 148       & \textbf{621535 }   & 18789    &3.00    & 43595.78   & 6.99       & 51037.4 & 8.19      &140460    &22.58                  \\
 Star Gunner        & 664       & 77400              & 127029   &164.67  & 200625     & 200.00     & 321528  & 418.14    &465750     &200.00    \\
 Surround           & -10       & 9.6                & \textbf{9.7}      &\textbf{100.51}  & 7.56       & 89.59      & 8.4     & 93.88     &-7.8    &11.22  \\
 Tennis             & -23.8     & 21                 & 0        &53.13   & 0.55       & 54.35      & 12.2    & 80.36     &\best{24       }  &\best{106.70    }    \\
 Time Pilot         & 3568      & 65300              & 12926    &15.16   & 48481.5    & 72.76      & 105316  & 164.82    &216770     &200.00  \\
 Tutankham          & 11.4      & \textbf{5384}      & 241      &4.27    & 292.11     & 5.22       & 278.9   & 4.98      &423.9     &7.68                   \\
 Up N Down          & 533.4     & 82840              & 125755   &152.14  & 332546.75  & 200.00     & 345727  & 200.00    &\best{986440}     &\best{200.00  } \\
 Venture            & 0         & \textbf{38900}     & 5.5      &0.01    & 0          & 0.00       & 0       & 0.00      &2000    &5.14                     \\
 Video Pinball      & 0         & \textbf{89218328}  & 533936.5 &0.60    & 572898.27  & 0.64       & 511835  & 0.57      &925830  &1.04                      \\\
 Wizard of Wor      & 563.5     & \textbf{395300}    & 17862.5  &4.38    & 9157.5     & 2.18       & 29059.3 & 7.22      &64439   &16.18                     \\
 Yars Revenge       & 3092.9    & \textbf{15000105}  & 102557   &0.66    & 84231.14   & 0.54       & 166292.3& 1.09      &972000  &6.46                      \\
 Zaxxon             & 32.5      & 83700              & 22209.5  &26.51   & 32935.5    & 39.33      & 41118   & 49.11     &109140     &130.41   \\
\hline
MEAN SABER(\%) &     0.00 & \textbf{100.00}   &         & \rainbowmeanSABER &         & \impalameanSABER  &        & \lasermeanSABER &      & \GDIImeanSABER\\
\hline
MEDIAN SABER(\%) & 0.00   & \textbf{100.00}   &         &\rainbowmedianSABER &         & \impalamedianSABER  &        & \lasermedianSABER  &      & \GDIImedianSABER  \\
\hline
\end{tabular}
\end{center}
\end{table}
\clearpage

\begin{table}[!hb]
\footnotesize
\begin{center}
\caption{Score table of SOTA 200M model-free algorithms on SABER(\%)  (GDI-H$^3$).}
\setlength{\tabcolsep}{1.0pt}
\begin{tabular}{ |c |c |c| c c| c c|  c c |c c |c c |}
\hline
Games & RND & HWR & RAINBOW & SABER & IMPALA & SABER & LASER & SABER  & GDI-H$^3$ & SABER\\
\hline
Scale  &     &       & 200M   &       &  200M    &        & 200M   & &  200M   & \\
\hline
 Alien              & 227.8     & \textbf{251916}    & 9491.7   &3.68    & 15962.1    & 6.25       & 976.51  & 14.04       &48735             &19.27  \\
 Amidar             & 5.8       & \textbf{104159}    & 5131.2   &4.92    & 1554.79    & 1.49       & 1829.2  & 1.75        &1065              &1.02           \\
 Assault            & 222.4     & 8647               & 14198.5  &165.90  & 19148.47   & 200.00     & 21560.4 & 200.00    &\textbf{97155}     &\textbf{200.00} \\
 Asterix            & 210       & \textbf{1000000}   & 428200   &42.81   & 300732     & 30.06      & 240090  & 23.99     &999999            &100.00\\
 Asteroids          & 719       & \textbf{10506650}  & 2712.8   &0.02    & 108590.05  & 1.03       & 213025  & 2.02         &760005            &7.23\\
 Atlantis           & 12850     & \textbf{10604840}  & 826660   &7.68    & 849967.5   & 7.90       & 841200  & 7.82       &3837300           &36.11\\
 Bank Heist         & 14.2      & \textbf{82058}     & 1358     &1.64    & 1223.15    & 1.47       & 569.4   & 0.68        &1380              &1.66 \\
 Battle Zone        & 236       &801000    & 62010    &7.71    & 20885      & 2.58       & 64953.3 & 8.08        &\textbf{824360}            &\textbf{102.92} \\
 Beam Rider         & 363.9     & \textbf{999999}    & 16850.2  &1.65    & 32463.47   & 3.21       & 90881.6 & 9.06       &422390            &42.22 \\
 Berzerk            & 123.7     & \textbf{1057940}            & 2545.6   &0.23    & 1852.7     & 0.16       & 25579.5 & 2.41      &14649             &1.37             \\
 Bowling            & 23.1      & \textbf{300}       & 30       &2.49    & 59.92      & 13.30      & 48.3    & 9.10         &205.2             &65.76\\
 Boxing             & 0.1       & \textbf{100}                & 99.6     &99.60   & 99.96      & 99.96      & \textbf{100}     & \textbf{100.00}    &\textbf{100}       &\textbf{100.00}    \\
 Breakout           & 1.7       & \textbf{864}                & 417.5    &48.22   & 787.34     & 91.11      & 747.9   & 86.54      &\textbf{864}     &\textbf{100.00}   \\
 Centipede          & 2090.9    & \textbf{1301709}   & 8167.3   &0.47    & 11049.75   & 0.69       & 292792  & 22.37              &195630    &14.89\\
 Chopper Command    & 811       & \textbf{999999}             & 16654    &1.59    & 28255      & 2.75       & 761699  & 76.15     &\textbf{999999}    &\textbf{100.00}\\
 Crazy Climber      & 10780.5   & 219900    & 168788.5 &75.56   & 136950     & 60.33      & 167820  & 75.10                      &\textbf{241170}    &\textbf{110.17}\\
 Defender           & 2874.5    & \textbf{6010500}   & 55105    &0.87    & 185203     & 3.03       & 336953  & 5.56                  &970540    &16.11\\
 Demon Attack       & 152.1     & \textbf{1556345}   & 111185   &7.13    & 132826.98  & 8.53       & 133530  & 8.57                       &787985   &50.63\\
 Double Dunk        & -18.6     & 21                 & -0.3     &46.21   & -0.33      & 46.14      & 14      & 82.32      &\textbf{24}        &\textbf{107.58}\\
 Enduro             & 0         & 9500               & 2125.9   &22.38   & 0          & 0.00       & 0       & 0.00      &14300     &150.53\\
 Fishing Derby      & -91.7     & \textbf{71}        & 31.3     &75.60   & 44.85      & 83.93      & 45.2    & 84.14             &65        &96.31\\
 Freeway            & 0         & \textbf{38}        & 34       &89.47   & 0          & 0.00       & 0       & 0.00           &34        &89.47\\
 Frostbite          & 65.2      & \textbf{454830}    & 9590.5   &2.09    & 317.75     & 0.06       & 5083.5  & 1.10                      &11330     &2.48\\          
 Gopher             & 257.6     & 355040             & 70354.6  &19.76   & 66782.3    & 18.75      & 114820.7& 32.29     &473560    &133.41\\
 Gravitar           & 173       & \textbf{162850}    & 1419.3   &0.77    & 359.5      & 0.11       & 1106.2  & 0.57                      &5915      &3.53\\
 Hero               & 1027      & \textbf{1000000}            & 55887.4  &5.49    & 33730.55   & 3.27       & 31628.7 & 3.06               &38225     &3.72\\
 Ice Hockey         & -11.2     & 36                 & 1.1      &26.06   & 3.48       & 31.10      & 17.4    & 60.59           &\textbf{47.11}           &\textbf{123.54}\\
 Jamesbond          & 29        & 45550              & 19809    &43.45   & 601.5      & 1.26       & 37999.8 & 83.41      &\textbf{620780}          &\textbf{200.00}\\
 Kangaroo           & 52        & \textbf{1424600}   & 14637.5  &1.02    & 1632       & 0.11       & 14308   & 1.00          &14636           &1.02\\
 Krull              & 1598      & 104100    & 8741.5   &6.97    & 8147.4     & 6.39       & 9387.5  & 7.60                    &\textbf{594540}          &\textbf{200.00}\\
 Kung Fu Master     & 258.5     & 1000000   & 52181    &5.19    & 43375.5    & 4.31       & 607443  & 60.73         &\textbf{1666665}          &\textbf{166.68}\\
 Montezuma Revenge  &0          & \textbf{1219200}   & 384      &0.03    & 0          & 0.00       & 0.3     & 0.00                &2500            &0.21\\
 Ms Pacman          & 307.3     & \textbf{290090}    & 5380.4   &1.75    & 7342.32    & 2.43       & 6565.5  & 2.16        &11573           &3.89\\
 Name This Game     & 2292.3    & 25220              & 13136    &47.30   & 21537.2    & 83.94      & 26219.5 & 104.36      &\textbf{36296}  &\textbf{148.31}\\
 Phoenix            & 761.5     & \textbf{4014440}   & 108529   &2.69    & 210996.45  & 5.24       & 519304  & 12.92                &959580          &23.89\\
 Pitfall            & -229.4    & \textbf{114000}    & \textbf{0}        &\textbf{0.20}    & -1.66      & 0.20       & -0.6    & 0.20                &-4.3            &0.20\\
 Pong               & -20.7     & \textbf{21}                 & 20.9     &99.76   & 20.98      & 99.95      & \textbf{21}      & \textbf{100.00}    &\textbf{21}     &\textbf{100.00}    \\
 Private Eye        & 24.9      & \textbf{101800}    & 4234     &4.14    & 98.5       & 0.07       & 96.3    & 0.07                  &15100           &14.81\\
 Qbert              & 163.9     & \textbf{2400000}   & 33817.5  &1.40    & 351200.12  & 14.63      & 21449.6 & 0.89               &28657           &1.19  \\
 Riverraid          & 1338.5    & \textbf{1000000}   & 22920.8  &2.16    & 29608.05   & 2.83       & 40362.7 & 3.91                &28349           &2.70  \\
 Road Runner        & 11.5      & \textbf{2038100}   & 62041    &3.04    & 57121      & 2.80       & 45289   & 2.22                     &999999          &49.06\\
 Robotank           & 2.2       & 76                 & 61.4     &80.22   & 12.96      & 14.58      & 62.1    & 81.17      &\textbf{113.4}  &\textbf{150.68}  \\
 Seaquest           & 68.4      & 999999             & 15898.9  &1.58    & 1753.2     & 0.17       & 2890.3  & 0.28     &\textbf{1000000}          &\textbf{100.00}  \\
 Skiing             & -17098    & \textbf{-3272}     & -12957.8 &29.95   & -10180.38  & 50.03      & -29968.4& -93.09             &-6025	         &86.77   \\
 Solaris            & 1236.3    & \textbf{111420}    & 3560.3   &2.11    & 2365       & 1.02       & 2273.5  & 0.94               &9105            &7.14  \\
 Space Invaders     & 148       & \textbf{621535 }   & 18789    &3.00    & 43595.78   & 6.99       & 51037.4 & 8.19                      &154380          &24.82\\
 Star Gunner        & 664       & 77400              & 127029   &164.67  & 200625     & 200.00     & 321528  & 418.14      &\textbf{677590} &\textbf{200.00}  \\
 Surround           & -10       & 9.6                & \textbf{9.7}      &\textbf{100.51}  & 7.56       & 89.59      & 8.4     & 93.88       &2.606           &64.32\\
 Tennis             & -23.8     & 21                 & 0        &53.13   & 0.55       & 54.35      & 12.2    & 80.36      &\textbf{24} &\textbf{106.70}  \\
 Time Pilot         & 3568      & 65300              & 12926    &15.16   & 48481.5    & 72.76      & 105316  & 164.82      &\textbf{450810}          &\textbf{200.00}\\
 Tutankham          & 11.4      & \textbf{5384}      & 241      &4.27    & 292.11     & 5.22       & 278.9   & 4.98                         &418.2           &7.57\\
 Up N Down          & 533.4     & 82840              & 125755   &152.14  & 332546.75  & 200.00     & 345727  & 200.00     &966590        &200.00\\
 Venture            & 0         & \textbf{38900}     & 5.5      &0.01    & 0          & 0.00       & 0       & 0.00                       &2000            &5.14\\
 Video Pinball      & 0         & \textbf{89218328}  & 533936.5 &0.60    & 572898.27  & 0.64       & 511835  & 0.57                      &978190          &1.10\\\
 Wizard of Wor      & 563.5     & \textbf{395300}    & 17862.5  &4.38    & 9157.5     & 2.18       & 29059.3 & 7.22                          &63735           &16.00\\
 Yars Revenge       & 3092.9    & \textbf{15000105}  & 102557   &0.66    & 84231.14   & 0.54       & 166292.3& 1.09                          &968090          &6.43\\
 Zaxxon             & 32.5      & 83700              & 22209.5  &26.51   & 32935.5    & 39.33      & 41118   & 49.11       &\textbf{216020}          &\textbf{200.00}\\
\hline
MEAN SABER(\%) &     0.00 & \textbf{100.00}   &         & \rainbowmeanSABER &         & \impalameanSABER  &        & \lasermeanSABER &            &\GDIHmeanSABER\\
\hline
MEDIAN SABER(\%) & 0.00   & \textbf{100.00}   &         &\rainbowmedianSABER &         & \impalamedianSABER  &        & \lasermedianSABER  &           & \GDIHmedianSABER  \\
\hline
\end{tabular}
\end{center}
\end{table}
\clearpage

\begin{table}[!hb]
\footnotesize
\begin{center}
\caption{Score table of SOTA  10B+ model-free algorithms on SABER(\%).}
\label{Tab:Score table of SOTA  10B+ model-free algorithms on SABER.}
\setlength{\tabcolsep}{1.0pt}
\begin{tabular}{ |c| c c| c c| c c| c c| c c| }
\hline
 Games & R2D2 & SABER & NGU & SABER & AGENT57 & SABER & GDI-I$^3$ & SABER & GDI-H$^3$ & SABER\\
\hline
Scale  & 10B   &        & 35B &         & 100B     &        & 200M & &  200M   & \\
\hline
 Alien              & 109038.4          & 43.23             & 248100          & 98.48          & \textbf{297638.17}   &\textbf{118.17}           &43384             &17.15              &48735  &19.27        \\
 Amidar             & 27751.24          & 26.64             & 17800           & 17.08          & \textbf{29660.08}    &\textbf{28.47}            &1442              &1.38               &1065              &1.02     \\
 Assault            & 90526.44          & 200.00            & 34800           & 200.00         & 67212.67             &200.00                    &63876             &200.00             &\textbf{97155}     &\textbf{ 200.00}  \\
 Asterix            & 999080            & 99.91             & 950700          & 95.07          & 991384.42            &99.14                     &759910            &75.99              &\textbf{999999}     &\textbf{100.00}\\
 Asteroids          & 265861.2          & 2.52              & 230500          & 2.19           & 150854.61            &1.43                      &751970            &7.15               &\textbf{760005}     &\textbf{7.23}    \\
 Atlantis           & 1576068           & 14.76             & 1653600         & 15.49          & 1528841.76           &14.31                     &3803000           &35.78              &\textbf{3837300}    &\textbf{36.11}            \\
 Bank Heist         & \textbf{46285.6}  & \textbf{56.40}    & 17400           & 21.19          & 23071.5              &28.10                     &1401              &1.69               &1380       &1.66\\
 Battle Zone        & 513360            & 64.08             & 691700          & 86.35          & \textbf{934134.88}   &\textbf{116.63}           &478830            &59.77              &824360     &102.92  \\
 Beam Rider         & 128236.08         & 12.79             & 63600           & 6.33           & 300509.8             &30.03                     &162100            &16.18              &\textbf{422390}     &\textbf{42.22}            \\
 Berzerk            & 34134.8           & 3.22              & 36200           & 3.41           & \textbf{61507.83}    &\textbf{5.80}             &7607              &0.71               &14649      &1.37         \\
 Bowling            & 196.36            & 62.57             & 211.9           & 68.18          & \textbf{251.18}      &\textbf{82.37}            &201.9             &64.57              &205.2      &65.76   \\
 Boxing             & 99.16             & 99.16             & 99.7            & 99.70          & \textbf{100}         &\textbf{100.00}           &\best{100}        &\textbf{100.00}    &\textbf{100} &\textbf{100.00}      \\
 Breakout           & 795.36            & 92.04             & 559.2           & 64.65          & 790.4                &91.46                     &\best{864}        &\textbf{100.00}    &\textbf{864}     &\textbf{100}          \\
 Centipede          & 532921.84         & 40.85             & \textbf{577800} & 44.30          & 412847.86            &31.61                     &155830            &11.83              &195630    &14.89\\
 Chopper Command    &960648             & 96.06             &999900           & 99.99          &999900                &99.99                     &\best{999999}     &\textbf{100.00}    &\textbf{999999}  &\textbf{100.00} \\
 Crazy Climber      & 312768            & 144.41            & 313400          & 144.71         &\textbf{565909.85}    &\textbf{200.00}           &201000            &90.96              &241170    &110.17      \\
 Defender           & 562106            & 9.31              & 664100          & 11.01          & 677642.78            &11.23                     &893110            &14.82              &\textbf{970540}&\textbf{16.11}          \\
 Demon Attack       & 143664.6          & 9.22              & 143500          & 9.21           & 143161.44            &9.19                      &675530            &43.10              &\textbf{787985}   &\textbf{50.63}           \\
 Double Dunk        & 23.12             & 105.35            & -14.1           & 11.36          & 23.93                &107.40                    &\textbf{24}       &\textbf{107.58}    &\textbf{24}&\textbf{107.58}     \\
 Enduro             & 2376.68           & 25.02             & 2000            & 21.05          & 2367.71              &24.92                     &\best{14330}      &\textbf{150.84}    &14300     &150.53          \\
 Fishing Derby      & 81.96             & 106.74            & 32              & 76.03          & \textbf{86.97}       &\textbf{109.82}           &59                &95.08   &65        &96.31           \\
 Freeway            & \textbf{34}       & \textbf{89.47}    & 28.5            & 75.00          & 32.59                &85.76                     &\best{34}         &\textbf{89.47}     &\textbf{34} &\textbf{89.47}           \\
 Frostbite          & 11238.4           & 2.46              & 206400          & 45.37          &\textbf{541280.88}    &\textbf{119.01}           &10485             &2.29               &11330     &2.48            \\
 Gopher             & 122196            & 34.37             & 113400          & 31.89          & 117777.08            &33.12                     &\best{488830}     &\textbf{137.71}    &473560    &133.41          \\
 Gravitar           & 6750              & 4.04              & 14200           & 8.62           &\textbf{19213.96}     &\textbf{11.70}            &5905              &3.52               &5915      &3.53    \\
 Hero               & 37030.4           & 3.60              & 69400           & 6.84           &\textbf{114736.26}    &\textbf{11.38}            &38330             &3.73               &38225     &3.72    \\
 Ice Hockey         & \textbf{71.56}    & \textbf{175.34}   &-4.1             & 15.04          & 63.64                &158.56                    &44.92             &118.94             &\textbf{47.11}  &\textbf{123.54}\\
 Jamesbond          & 23266             & 51.05             & 26600           & 58.37          & 135784.96            &200.00                    &594500            &200.00             &\textbf{620780} &\textbf{200.00}   \\
 Kangaroo           & 14112             & 0.99              & \textbf{35100}  & 2.46           &24034.16              &1.68                      &14500             &1.01               &14636           &1.02\\
 Krull              & 145284.8          & 140.18            & 127400          & 122.73         & 251997.31            &200.00                    &97575             &93.63              &\textbf{594540} &\textbf{200.00}      \\
 Kung Fu Master     & 200176            & 20.00             & 212100          & 21.19          & 206845.82            &20.66                     &140440            &14.02              &\textbf{1666665}&\textbf{166.68}\\
 Montezuma Revenge  & 2504              & 0.21              & \textbf{10400}  & 0.85           &9352.01               &0.77                      &3000              &0.25               &2500            &0.21   \\
 Ms Pacman          & 29928.2           & 10.22             & 40800           & 13.97          & \textbf{63994.44}    &\textbf{21.98}            &11536             &3.87               &11573           &3.89 \\
 Name This Game     & 45214.8           & 187.21            & 23900           & 94.24          &\textbf{54386.77}     &\textbf{200.00}           &34434             &140.19             &36296           &148.31\\
 Phoenix            & 811621.6          & 20.20             & 959100          & 23.88          &908264.15             &22.61                     &894460            &22.27              &\textbf{959580} &\textbf{23.89}\\
 Pitfall            & \textbf{0}        & \textbf{0.20}     & 7800            & 7.03           &\textbf{18756.01}     &\textbf{16.62}            &0                 &0.20               &-4.3            &0.20\\
 Pong               & \textbf{21}       & \textbf{100.00}   & 19.6            & 96.64          & 20.67                &99.21                     &\best{21}         &\textbf{100.00}    &\textbf{21}     &\textbf{100.00}\\
 Private Eye        & 300               & 0.27              & \textbf{100000} & 98.23          & 79716.46             &78.30                     &15100             &14.81              &15100           &14.81\\
 Qbert              & 161000            & 6.70              & 451900          & 18.82          &\textbf{580328.14}    &\textbf{24.18}            &27800             &1.03               &28657           &1.19\\
 Riverraid          & 34076.4           & 3.28              & 36700           & 3.54           & \textbf{63318.67}    &\textbf{6.21}             &28075             &2.68               &28349           &2.70\\
 Road Runner        & 498660            & 24.47             & 128600          & 6.31           & 243025.8             &11.92                     &\best{878600}     &\textbf{43.11}     &999999          &49.06\\
 Robotank           & \textbf{132.4}    & \textbf{176.42}   & 9.1             & 9.35           &127.32                &169.54                    &108               &143.63             &113.4           &150.68\\
 Seaquest           & 999991.84         & 100.00            & \textbf{1000000}& 100.00         &999997.63             &100.00                    &943910	        &94.39              &\textbf{1000000}&\textbf{100.00}\\
 Skiing             & -29970.32         & -93.10            & -22977.9        & -42.53         & \textbf{-4202.6}     &\textbf{93.27}            &-6774             &74.67              &-6025	         &86.77\\
 Solaris            & 4198.4            & 2.69              & 4700            & 3.14           & \textbf{44199.93}    &\textbf{38.99}            &11074             &8.93               &9105            &7.14\\
 Space Invaders     & 55889             & 8.97              & 43400           & 6.96           & 48680.86             &7.81                      &140460            &22.58              &\textbf{154380} &\textbf{24.82}\\
 Star Gunner        & 521728            & 200.00            & 414600          & 200.00         &\textbf{839573.53}    &\textbf{200.00}           &465750            &200.00             &677590          &200.00\\
 Surround           & \textbf{9.96}     & \textbf{101.84}   & -9.6            & 2.04           & 9.5                  &99.49                     &-7.8              &11.22              &2.606           &64.32\\
 Tennis             & \textbf{24}       & \textbf{106.70}   & 10.2            & 75.89          & 23.84                &106.34                    &\textbf{24}       &\textbf{106.70}    &\textbf{24}     &\textbf{106.70}\\
 Time Pilot         & 348932            & 200.00            & 344700          & 200.00         &405425.31             &200.00                    &216770            &200.00             &\textbf{450810} &\textbf{200.00}\\
 Tutankham          & 393.64            & 7.11              & 191.1           & 3.34           & \textbf{2354.91}     &\textbf{43.62}            &423.9             &7.68               &418.2           &7.57\\
 Up N Down          & 542918.8          & 200.00            & 620100          & 200.00         & 623805.73            &200.00                    &\best{986440}     &\textbf{200.00}    &966590          &200.00\\
 Venture            & 1992              & 5.12              & 1700            & 4.37           &\textbf{2623.71}      &\textbf{6.74}             &2000              &5.14               &2000            &5.14\\
 Video Pinball      & 483569.72         & 0.54              & 965300          & 1.08           &\textbf{992340.74}    &\textbf{1.11}             &925830            &1.04               &978190          &1.10\\
 Wizard of Wor      & 133264            & 33.62             & 106200          & 26.76          &\textbf{157306.41}    &\textbf{39.71}            &64439             &16.18              &63735           &16.00\\
 Yars Revenge       & 918854.32         & 6.11              & 986000          & 6.55           &\textbf{998532.37}    &\textbf{6.64}             &972000            &6.46               &968090          &6.43\\
 Zaxxon             & 181372            & 200.00            & 111100          & 132.75         &\textbf{249808.9}     &\textbf{200.00}           &109140            &130.41             &216020          &200.00\\
\hline
MEAN SABER(\%)        &                   & \rtdtmeanSABER            &                 &  \ngumeanSABER         &                      & \textbf{\agentmeanSABER}  &                  & \GDIImeanSABER &      & \GDIHmeanSABER \\
\hline
MEDIAN SABER(\%)      &                   & \rtdtmedianSABER            &                 & \ngumedianSABER          &                      & \agentmedianSABER  &                  & \GDIImedianSABER &      & \textbf{\GDIHmedianSABER}  \\
\hline
\end{tabular}
\end{center}
\end{table}
\clearpage

\begin{table}[!hb]
\footnotesize
\begin{center}
\caption{Score table of  SOTA  model-based algorithms on SABER(\%). SimPLe \citep{modelbasedatari} and DreamerV2 \citep{dreamerv2} haven't evaluated all 57 Atari Games in their paper. For fairness, we set the score on those games as N/A, which will not be considered when calculating the median and mean SABER.}
\label{Tab: Score table of  SOTA  model-based algorithms on SABER.}
\setlength{\tabcolsep}{1.0pt}
\begin{tabular}{|c |c c| c c| c c| c c| c c| }
\hline
 Games              & MuZero         & SABER      & DreamerV2 & SABER   & SimPLe             & SABER          & GDI-I$^3$     & SABER & GDI-H$^3$ & SABER(\%)\\
\hline
Scale               & 20B            &              & 200M      &            & 1M               &                  & 200M     & &  200M   & \\
\hline
 Alien              & \textbf{741812.63}      & \textbf{200.00  }     &3483       & 1.29     &616.9     & 0.15    & 43384                       &  17.15      &48735    &19.27   \\
 Amidar             & \textbf{28634.39 }      & \textbf{27.49   }  &2028       & 1.94     &74.3      & 0.07    & 1442                           &  1.38       &1065     &1.02          \\
 Assault            & \textbf{143972.03}      & \textbf{200.00 }     &7679       & 88.51       &527.2     & 3.62       & 63876                  &  200.00     &97155     &200.00    \\
 Asterix            & 998425         & 99.84        &25669      & 2.55     &1128.3    & 0.09       & 759910                     &  75.99      &\textbf{999999}            &\textbf{100.00}    \\
 Asteroids          & 678558.64               & 6.45        &3064                & 0.02    &793.6              & 0.00           &751970         & 7.15   &\textbf{760005}            &\textbf{7.23}   \\
 Atlantis           & 1674767.2               & 15.69           &989207              & 9.22       &20992.5            & 0.08    &3803000    & 35.78  &\textbf{3837300}           &\textbf{36.11} \\
 Bank Heist         & 1278.98                 & 1.54        &1043                & 1.25    &34.2               & 0.02    &\best{1401}           & \best{1.69  } &1380              &1.66   \\
 Battle Zone        & \textbf{848623         }& \textbf{105.95}      &31225      & 3.87     &4031.2    & 0.47       & 478830                    &  59.77        &824360            &102.92  \\
 Beam Rider         & \textbf{454993.53}      & \textbf{45.48}      &12413      & 1.21     &621.6     & 0.03    & 162100                        &  16.18        &422390            &42.22  \\
 Berzerk            & \textbf{85932.6        }& \textbf{8.11}      &751        & 0.06     &N/A       & N/A     & 7607                           &  0.71         &14649             &1.37  \\
 Bowling            & \textbf{260.13         }& \textbf{85.60}      &48         & 8.99     &30        & 2.49    & 202                           &  64.57        &205.2             &65.76 \\
 Boxing             & \textbf{100}                     & \textbf{100.00}       &87                  & 86.99       &7.8         & 7.71       & \best{100}          & \best{100.00 }&\textbf{100}       &\textbf{100.00}\\
 Breakout           & \textbf{864}                     & \textbf{100.00}          &350                 & 40.39       &16.4     & 1.70       & \best{864}          & \best{100.00} &\textbf{864}     &100.00\\
 Centipede          & \textbf{1159049.27}     & \textbf{89.02}     &6601       & 0.35     &N/A       & N/A     & 155830                         & 11.83         &195630    &14.89\\
 Chopper Command    & 991039.7                & 99.10     &2833                & 0.20     & 979.4             & 0.02    & \best{999999}         & \best{100.00} &\textbf{999999}    &\textbf{100.00}\\
 Crazy Climber      & \textbf{458315.4}       & \textbf{200.00    }  &141424     & 62.47       & 62583.6  & 24.77   & 201000                    & 90.96         &241170    &110.17\\
 Defender           & 839642.95               & 13.93      & N/A                & N/A        & N/A               & N/A      & 893110     &14.82  &\textbf{970540}    &\textbf{16.11}\\
 Demon Attack       & 143964.26               & 9.24      & 2775              &0.17      & 208.1             & 0.00    & 675530         & 43.40  &\textbf{787985}   &\textbf{50.63}\\
 Double Dunk        & 23.94          & 107.42     & 22        &102.53        & N/A      & N/A                                               & \textbf{24}& \textbf{107.58} &\textbf{24}&\textbf{107.58} \\
 Enduro             & 2382.44                 & 25.08       & 2112              &22.23         & N/A               & N/A     & \best{14330}    & \best{150.84}  &14300     &150.53\\
 Fishing Derby      & \textbf{91.16}          & \textbf{112.39     }              &93.24          &200.00      &-90.7     & 0.61    & 59        & 92.89         &65        &96.31\\
 Freeway            & 33.03                   & 86.92       & \textbf{34}                 &\textbf{89.47}          &16.7               & 43.95   & \best{34}      & \best{89.47}  &\textbf{34}        &\textbf{89.47}\\
 Frostbite          & \textbf{631378.53}      & \textbf{138.82}     & 15622    &3.42       &236.9     & 0.04    & 10485                        & 2.29           &11330     &2.48\\
 Gopher             & 130345.58               & 36.67      & 53853             &15.11         &596.8              & 0.10       & \best{488830} & \best{137.71}  &473560    &133.41\\
 Gravitar           & \textbf{6682.7     }    & \textbf{4.00    }  & 3554     &2.08      &173.4     & 0.00    & 5905                           & 3.52           &5915      &3.53\\
 Hero               & \textbf{49244.11}       & \textbf{4.83    }  & 30287    &2.93      &2656.6    & 0.16        & 38330                      & 3.73           &38225     &3.72\\
 Ice Hockey         & \textbf{67.04      }    & \textbf{165.76  }  & 29        &85.17         &-11.6     & -0.85       &44.92              &118.94        &47.11           &123.54\\
 Jamesbond          & 41063.25                & 90.14     & 9269              &20.30         &100.5     & 0.16    &594500              &200.00  &\textbf{620780}          &\textbf{200.00}\\
 Kangaroo           & \textbf{16763.6        }& \textbf{1.17}  & 11819     &0.83      &51.2      & 0.00    & 14500                              & 1.01          &14636           &1.02\\
 Krull              & 269358.27      & 200.00 & 9687     &7.89       &2204.8    & 0.59    & 97575                        & 93.63          &\textbf{594540}          &\textbf{200.00}\\
 Kung Fu Master     & \textbf{204824         }& \textbf{20.46}  & 66410    &6.62       &14862.5   & 1.46    & 140440                           & 14.02          &1666665          &166.68\\
 Montezuma Revenge  & 0                       & 0.00         & 1932              &0.16       &N/A       & N/A     & \best{3000}                & \best{0.25  }  &2500            &0.21\\
 Ms Pacman          & \textbf{243401.1 }      & \textbf{83.89   }  & 5651     &1.84       &1480      & 0.40    & 11536                         & 3.87           &11573           &3.89\\
 Name This Game     & \textbf{157177.85}      & \textbf{200.00  }  & 14472    &53.12          &2420.7    & 0.56    & 34434                     & 140.19         &36296           &148.31\\
 Phoenix            & \textbf{955137.84}      & \textbf{23.78   }  & 13342     &0.31       &N/A       & N/A     & 894460                        & 22.27         &959580          &23.89\\
 Pitfall            & \textbf{0}              & \textbf{0.20}               & -1                 &0.20       &N/A       & N/A     & \best{0}             & \best{0.20}   &-4.3            &0.20\\
 Pong               & \textbf{21}             & \textbf{100.00}             & 19                 &95.20          & 12.8     & 80.34   & \best{21}        & \best{100.00} &\textbf{21}   &\textbf{100.00}\\
 Private Eye        & \textbf{15299.98 }      & \textbf{15.01}     & 158       &0.13       & 35       & 0.01    & 15100                         & 14.81         &15100           &14.81\\
 Qbert              & \textbf{72276          }& \textbf{3.00}      & 162023    &6.74       & 1288.8   & 0.05    & 27800                         & 1.15          &28657           &1.19\\
 Riverraid          & \textbf{323417.18}      & \textbf{32.25}     & 16249    &1.49       & 1957.8   & 0.06    & 28075                         & 2.68           &28349           &2.70\\
 Road Runner        & 613411.8                & 30.10              & 88772             &4.36       & 5640.6   & 0.28       & 878600     & 43.11   &\textbf{999999}          &\textbf{49.06}\\
 Robotank           & \textbf{131.13}         & \textbf{174.70}    & 65        &85.09          & N/A      & N/A     & 108                       & 143.63        &113.4           &150.68\\
 Seaquest           & 999976.52               & 100.00             & 45898             &4.58       & 683.3             & 0.06 &943910	             &94.39&\textbf{1000000}          &\textbf{100.00}\\
 Skiing             & \textbf{-29968.36}      & \textbf{-93.09}      & -8187    &64.45          & N/A      & N/A     & -6774                   & 74.67          &-6025	          &86.77\\
 Solaris            & 56.62                   & -1.07              & 883                &-0.32          & N/A               & N/A   & \best{11074}  & \best{8.93 }&9105            &7.14\\
 Space Invaders     & 74335.3                 & 11.94                 & 2611               &0.40       & N/A               & N/A    & 140460 & 22.58&\textbf{154380}          &\textbf{24.82}    \\
 Star Gunner        &549271.7       & 200.00     & 29219    &37.21          & N/A      & N/A     & 465750      & 200.00                         &\textbf{677590}          &\textbf{200.00}\\
 Surround           & \textbf{9.99       }    & \textbf{101.99}     & N/A       &N/A         & N/A      & N/A     & -8          & 11.22                            &2.606           &64.32\\
 Tennis             & 0       & 53.13      & 23        &104.46        & N/A      & N/A     & \textbf{24}          & \textbf{106.70}       &\textbf{24}           &\textbf{106.70}\\
 Time Pilot         & \textbf{476763.9}       & \textbf{200.00}         & 32404    &46.71         & N/A      & N/A     & 216770      & 200.00    &450810          &200.00     \\
 Tutankham          & \textbf{491.48     }    & \textbf{8.94}   & 238       &4.22         & N/A      & N/A     & 424         & 7.68              &418.2           &7.57\\
 Up N Down          & 715545.61               & 200.00            & 648363            &200.00        & 3350.3            & 3.42   & \best{986440}        & \best{200.00} &966590        &200.00    \\
 Venture            & 0.4                     & 0.00        & 0                  &0.00       & N/A               & N/A     & \best{2000}          & \best{5.23}   &2000            &5.14    \\
 Video Pinball      & \textbf{981791.88}      & \textbf{1.10}      & 22218    &0.02     & N/A      & N/A     & 925830      & 1.04                                &978190          &1.10\\
 Wizard of Wor      & \textbf{197126         }& \textbf{49.80}      & 14531    &3.54     & N/A      & N/A     & 64439       & 16.14                             &63735           &16.00\\
 Yars Revenge       & 553311.46               & 3.67      & 20089             &0.11     & 5664.3            & 0.02    & \best{972000}        & \best{6.46}      &968090          &6.43\\
 Zaxxon             & \textbf{725853.9}       & \textbf{200.00}      & 18295    &21.83          & N/A      & N/A     & 109140      & 130.41                     &216020          &200.00\\
\hline
MEAN SABER(\%) &               &\textbf{\muzeromeanSABER}  &         &  \dreamermeanSABER &                                          & \simplemeanSABER  &     & \GDIImeanSABER &      &\GDIHmeanSABER\\
\hline
MEDIAN SABER(\%) &            &\muzeromedianSABER   &         & \dreamermedianSABER  &                                          & \simplemedianSABER  &     & \GDIImedianSABER &      & \textbf{\GDIHmedianSABER}  \\
\hline
\end{tabular}
\end{center}
\end{table}
\clearpage

\begin{table}[!hb]
\footnotesize
\begin{center}
\caption{Score table of other SOTA  algorithms on SABER(\%). Go-Explore \citep{goexplore} and Muesli \citep{muesli}.}
\label{tab: Score table of other SOTA  algorithms on SABER.}
\setlength{\tabcolsep}{1.0pt}
\begin{tabular}{| c | c c |  c c| c c| c c|}
\hline
 Games           & Muesli              & SABER          & Go-Explore              & SABER                     & GDI-I$^3$ & SABER        & GDI-H$^3$ & SABER         \\
\hline
Scale            & 200M           &           & 10B                     &                             & 200M      &        & 200M      &           \\
\hline    
 Alien           &139409          &55.30               &\textbf{959312}       &\textbf{200.00}                & 43384             &17.15     &48735             &19.27         \\
 Amidar          &\textbf{21653}  &\textbf{20.78}      &19083                 &18.32                          & 1442              &1.38      &1065              &1.02          \\
 Assault         &36963           &200.00              &30773                 &200.00                         & 63876             &200.00  &\textbf{97155}&\textbf{200.00}        \\
 Asterix         &316210          &31.61               &999500       &99.95                 & 759910            &75.99      &\textbf{999999}            &\textbf{100.00}     \\
 Asteroids       &484609          &4.61                &112952                &1.07                           & 751970     &7.15&\textbf{760005}            &\textbf{7.23}           \\
 Atlantis        &1363427         &12.75               &286460                &2.58                           & 3803000    &35.78&\textbf{3837300}           &\textbf{36.11}          \\
 Bank Heist      &1213            &1.46                &\textbf{3668}         &\textbf{4.45  }                & 1401              &1.69        &1380              &1.66      \\
 Battle Zone     &414107          &51.68               &\textbf{998800}       &\textbf{124.70}                & 478830            &59.77       &824360            &102.92      \\
 Beam Rider      &288870          &28.86               &371723       &37.15                 & 162100            &16.18       &\textbf{422390}            &\textbf{42.22}      \\
 Berzerk         &44478           &4.19                &\textbf{131417}       &\textbf{12.41 }                & 7607              &0.71        &14649             &1.37      \\
 Bowling         &191             &60.64               &\textbf{247}           &\textbf{80.86 }                & 202               &64.57      &205.2             &65.76       \\
 Boxing          &99              &99.00               &91                     &90.99                          & \best{100}        &\best{100.00} &\textbf{100}       &\textbf{100.00}           \\
 Breakout        &791             &91.53               &774                    &89.56                          & \best{864}        &\best{100.00} &\textbf{864}     &\textbf{100.00}          \\
 Centipede       &\textbf{869751} &\textbf{66.76}      &613815                &47.07                          & 155830            &11.83        &195630    &14.89  \\
 Chopper Command &101289       &10.06               &996220                &99.62                          & \best{999999}     &\best{100.00}   &\textbf{999999}    &\textbf{100.00}       \\
 Crazy Climber   &175322       &78.68               &235600       &107.51                & 201000            &90.96           &\textbf{241170}    &\textbf{110.17} \\
 Defender        &629482       &10.43               &N/A                    &N/A                            & 893110     &14.82   &\textbf{970540}    &\textbf{16.11}        \\
 Demon Attack    &129544       &8.31                &239895                 &15.41                          & 675530     &43.40   &\textbf{787985}   &\textbf{50.63}       \\
 Double Dunk     &-3           &39.39               &\textbf{24}            &\textbf{107.58}                         & \best{24}         &\best{107.58}  &\textbf{24}        &\textbf{107.58}         \\
 Enduro          &2362         &24.86               &1031                   &10.85                          & \best{14330}      &\best{150.84}  &14300     &150.53         \\
 Fishing Derby   &51           &87.71               &\textbf{67}            &\textbf{97.54 }                & 59                &92.89          &65        &96.31  \\
 Freeway         &33           &86.84               &\textbf{34}            &\textbf{89.47 }                & \best{34}         &\best{89.47}   &\textbf{34}        &\textbf{89.47}         \\
 Frostbite       &301694       &66.33               &\textbf{999990}       &\textbf{200.00}                & 10485             &2.29            &11330     &2.48\\
 Gopher          &104441       &29.37               &134244                &37.77                          & \best{488830}     &\best{137.71}   &473560    &133.41       \\
 Gravitar        &11660        &7.06                &\textbf{13385}        &\textbf{8.12}                  & 5905              &3.52            &5915      &3.53\\
 Hero            &37161        &3.62                &37783                  &3.68                           &38330      &3.73    &\textbf{38225}     &\textbf{3.72}       \\
 Ice Hockey      &25           &76.69               &33                     &93.64                          &44.92              &118.94       &\textbf{47.11}           &\textbf{123.54}          \\
 Jamesbond       &19319        &42.38               &200810                &200.00                         &594500              &200.00   &\textbf{620780}          &\textbf{200.00}     \\
 Kangaroo        &14096        &0.99                &\textbf{24300}        &\textbf{1.70}                  & 14500             &1.01            &14636              &1.02\\
 Krull           &34221        &31.83               &63149                 &60.05                          & 97575      &93.63    &\textbf{594540}    &\textbf{200.00}       \\
 Kung Fu Master  &134689       &13.45               &24320                 &2.41                           & 140440     &14.02    &\textbf{1666665}          &\textbf{166.68}        \\
 Montezuma Revenge  &2359      &0.19                &\textbf{24758}        &\textbf{2.03}                  & 3000              &0.25            &2500            &0.21\\
 Ms Pacman          &65278     &22.42               &\textbf{456123}       &\textbf{157.30}                & 11536             &3.87            &11573           &3.89\\
 Name This Game     &105043    &200.00              &\textbf{212824}       &\textbf{200.00}                & 34434             &140.19          &36296           &148.31\\
 Phoenix        &805305        &20.05               &19200                 &0.46                           & 894460     &22.27    &\textbf{959580}          &\textbf{23.89}\\
 Pitfall        &0             &0.20                &\textbf{7875}          &\textbf{7.09   }               & 0                 &0.2            &-4.3            &0.20 \\
 Pong           &20            &97.60               &\textbf{21}            &\textbf{100.00 }               & \best{21}         &\best{100}     &\textbf{21}              &\textbf{100.00}       \\
 Private Eye    &10323         &10.12               &\textbf{69976}        &\textbf{68.73  }               & 15100             &14.81           &15100           &14.81  \\
 Qbert          &157353        &6.55                &\textbf{999975}       &\textbf{41.66  }               & 27800             &1.15            &28657           &1.19\\
 Riverraid      &\textbf{47323}&\textbf{4.60}       &35588                 &3.43                           & 28075             &2.68            &28349           &2.70\\
 Road Runner    &327025        &16.05               &999900        &49.06                 & 878600            &43.11          &\textbf{999999}          &\textbf{49.06}\\
 Robotank       &59            &76.96               &\textbf{143}           &\textbf{190.79 }               & 108               &143.63         &113.4           &150.68\\
 Seaquest       &815970        &81.60               &539456                 &53.94                 &943910	             &94.39   &\textbf{1000000}          &\textbf{100.00}\\
 Skiing         &-18407        &-9.47               &\textbf{-4185}        &\textbf{93.40  }               & -6774             &74.67           &-6025	         &86.77\\
 Solaris        &3031          &1.63                &\textbf{20306}        &\textbf{17.31  }               & 11074             &8.93            &9105            &7.14\\
 Space Invaders &59602         &9.57                &93147                 &14.97                          & 140460     &22.58    &\textbf{154380}          &\textbf{24.82}\\
 Star Gunner    &214383        &200.00              &609580      &200.00                         & 465750     &200.00&\textbf{677590}          &\textbf{200.00}     \\
 Surround       &\textbf{9}    &\textbf{96.94}      &N/A                    &N/A                            & -8         &11.22                 &2.606           &64.32\\
 Tennis         &12            &79.91               &\best{24}              &\best{106.7}                   & \best{24}         &\best{106.70   }&\textbf{24}           &\textbf{106.70}            \\
 Time Pilot     &359105 &200.00   &183620                &200.00                         & 216770     & 200.00               &\textbf{450810}          &\textbf{200.00}\\
 Tutankham      &252           &4.48                &\textbf{528}           &\textbf{9.62}                  & 424               &7.68           &418.2           &7.57\\
 Up N Down      &649190        &200.00              &553718                &200.00                         & \best{986440}     &\best{11.9785}  &966590        &200.00         \\
 Venture        &2104          &5.41                &\textbf{3074}         &\textbf{7.90}                  & 2035              &5.23            &2000            &5.14\\
 Video Pinball  &685436        &0.77                &\textbf{999999}       &\textbf{1.12}                  & 925830            &1.04            &978190          &1.10\\
 Wizard of Wor  &93291         &23.49               &\textbf{199900}       &\textbf{50.50}                 & 64293             &16.14           &63735           &16.00\\
 Yars Revenge   &557818        &3.70                &\textbf{999998}       &\textbf{6.65}                  & 972000            &6.46            &968090          &6.43\\
 Zaxxon         &65325         &78.04               &18340                 &21.88                          & 109140         &130.41   &\textbf{216020} &\textbf{200.00}        \\
\hline    
MEAN SABER(\%)  &              & \mueslimeanSABER              &                       & \textbf{\goexploremeanSABER}                &                   & \GDIImeanSABER &      &\GDIHmeanSABER\\
\hline
MEDIAN SABER(\%)&              &  \mueslimedianSABER             &                       &\goexploremedianSABER                 &                   & \GDIImedianSABER &      & \textbf{\GDIHmedianSABER}  \\
\hline
\end{tabular}
\end{center}
\end{table}

\clearpage

\subsection{Atari Games Learning Curves}
\label{app: Atari Games Learning Curves}

\subsubsection{Atari Games Learning Curves of GDI-I$^3$}
\renewcommand{\thesubfigure}{\arabic{subfigure}.}
\begin{figure}[!ht] 
    \subfigure[Alien]{
    \includegraphics[width=0.3\textwidth]{./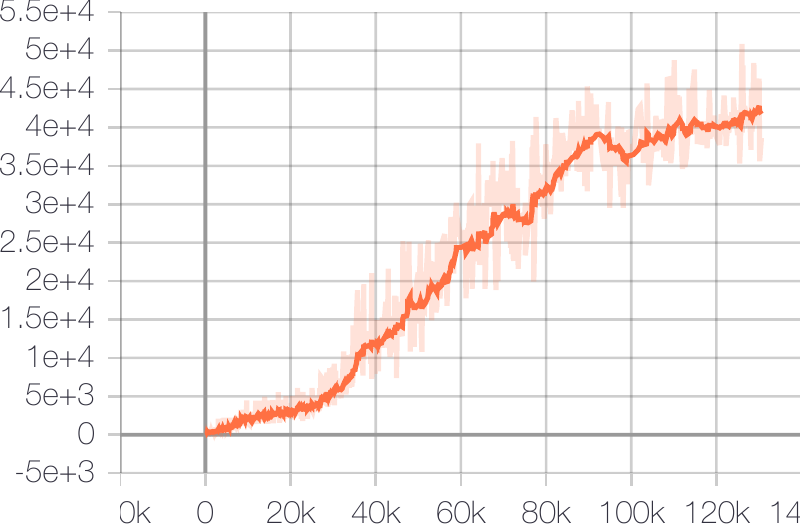}
    }
    \subfigure[Amidar]{
    \includegraphics[width=0.3\textwidth]{./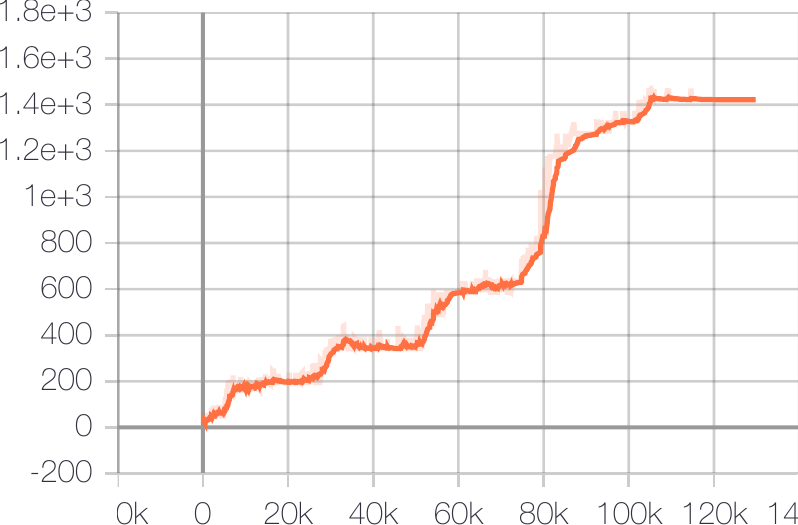}
    }
    \subfigure[Assault]{
    \includegraphics[width=0.3\textwidth]{./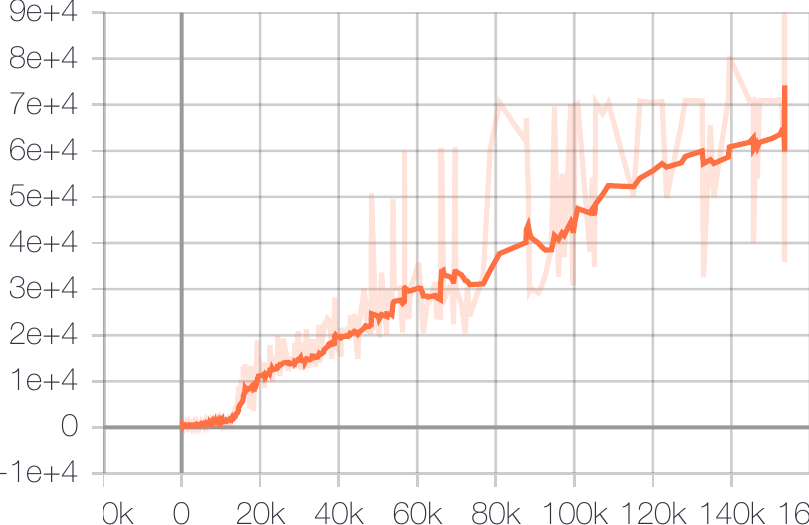}
    }
\end{figure}

\begin{figure}[!ht]
    \subfigure[Asterix]{
    \includegraphics[width=0.3\textwidth]{./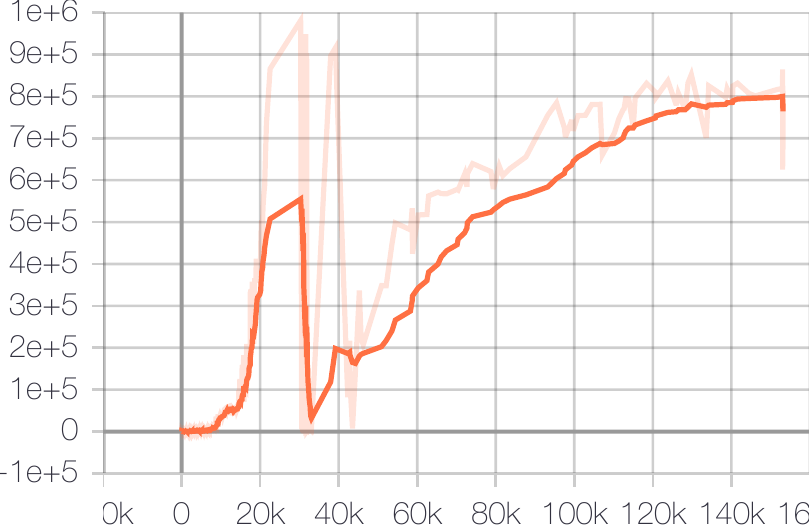}
    }
    \subfigure[Asteroids]{
    \includegraphics[width=0.3\textwidth]{./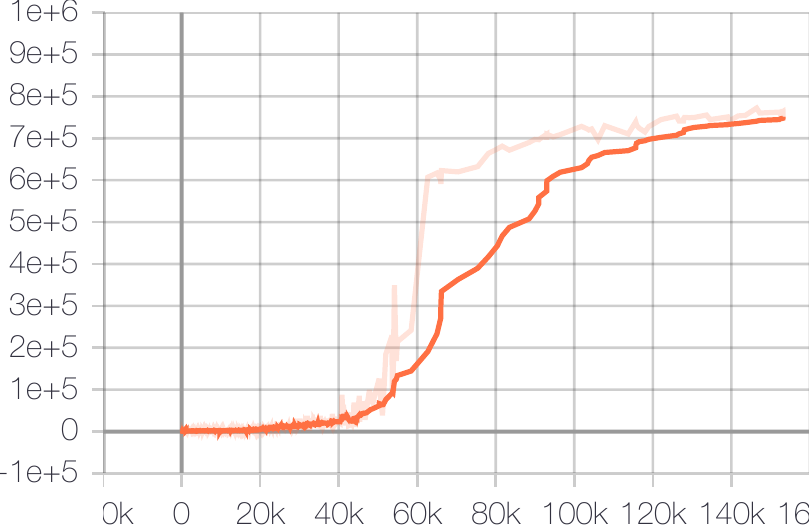}
    }
    \subfigure[Atlantis]{
    \includegraphics[width=0.3\textwidth]{./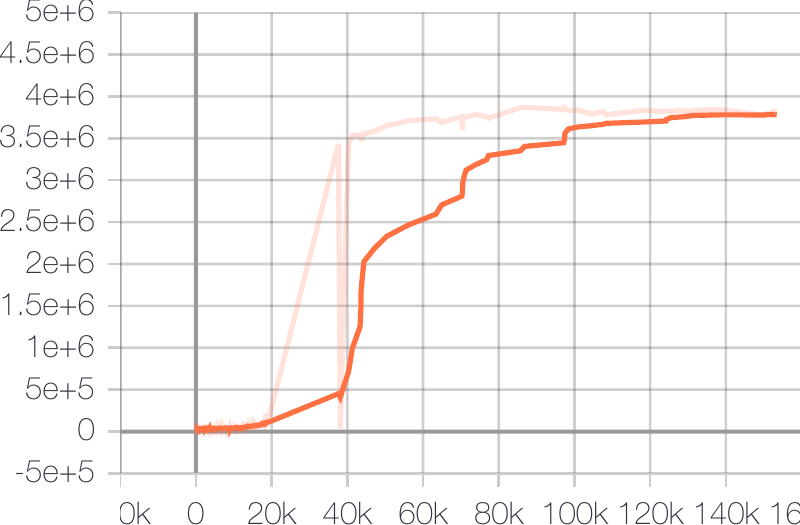}
    }
\end{figure}

\begin{figure}[!ht]
    \subfigure[Bank\_Heist]{
    \includegraphics[width=0.3\textwidth]{./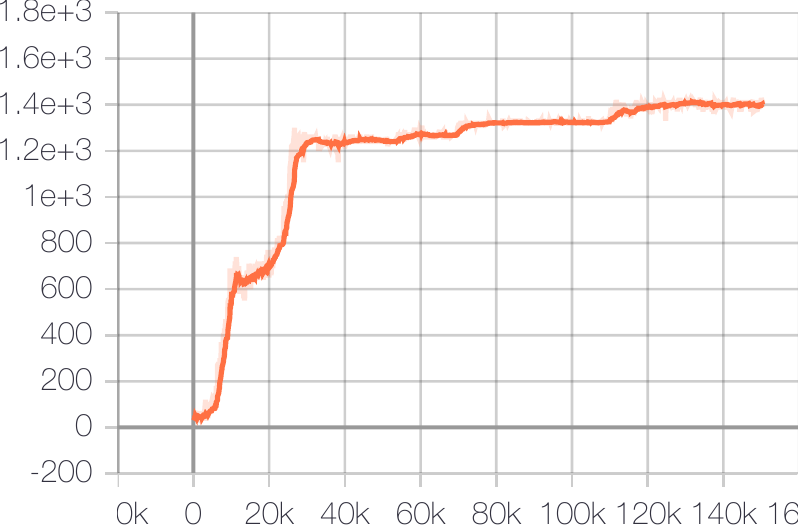}
    }
    \subfigure[Battle\_Zone]{
    \includegraphics[width=0.3\textwidth]{./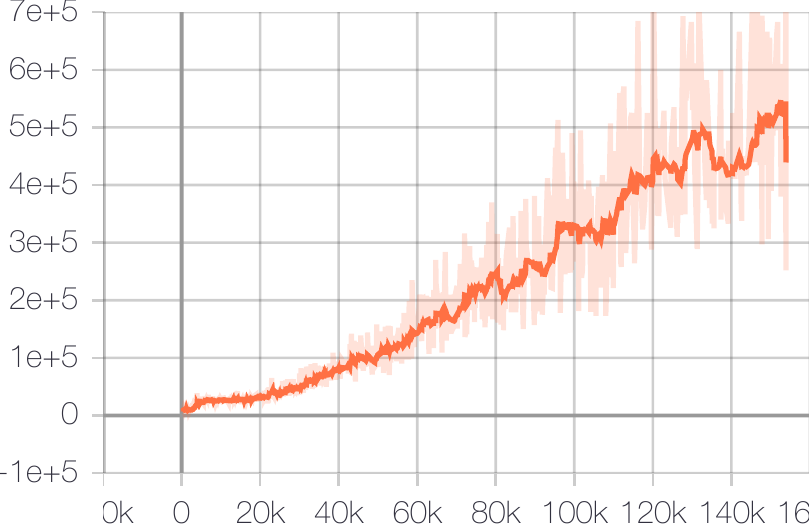}
    }
    \subfigure[Beam\_Rider]{
    \includegraphics[width=0.3\textwidth]{./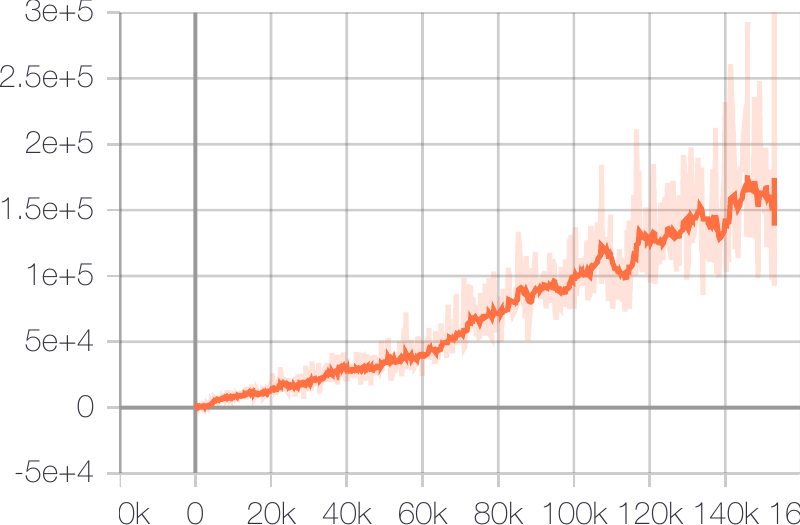}
    }
\end{figure}

\begin{figure}[!ht]
    \subfigure[Berzerk]{
    \includegraphics[width=0.3\textwidth]{./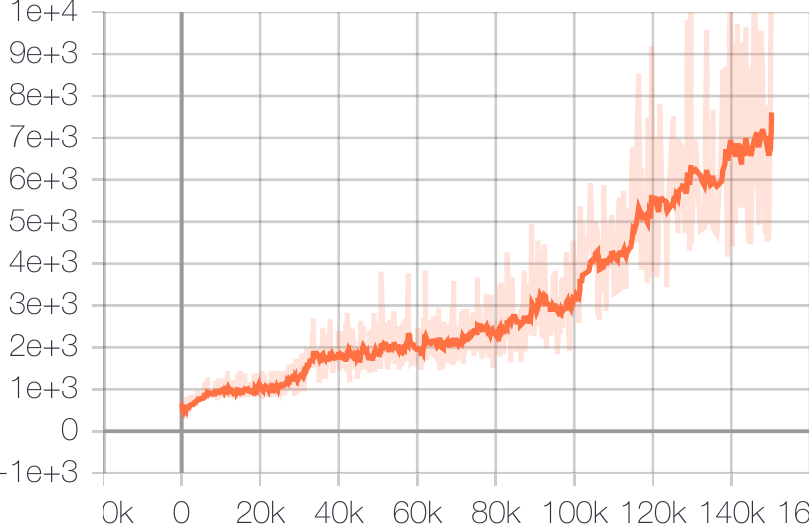}
    }
    \subfigure[Bowling]{
    \includegraphics[width=0.3\textwidth]{./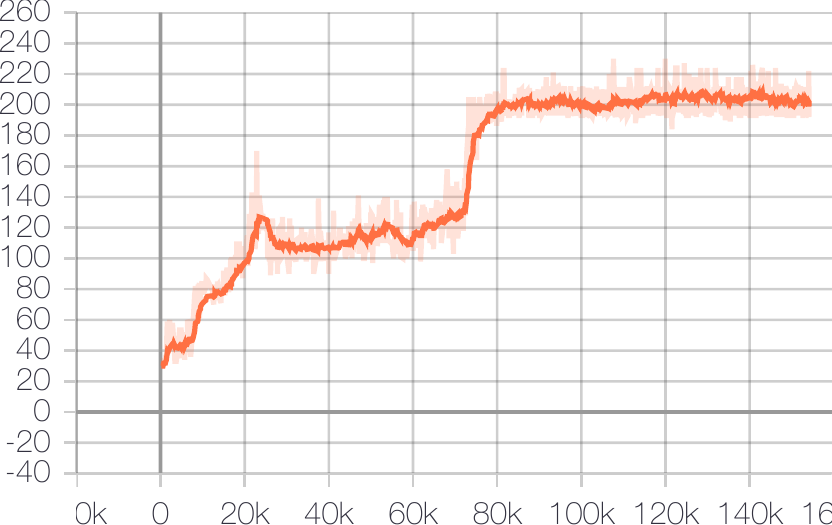}
    }
    \subfigure[Boxing]{
    \includegraphics[width=0.3\textwidth]{./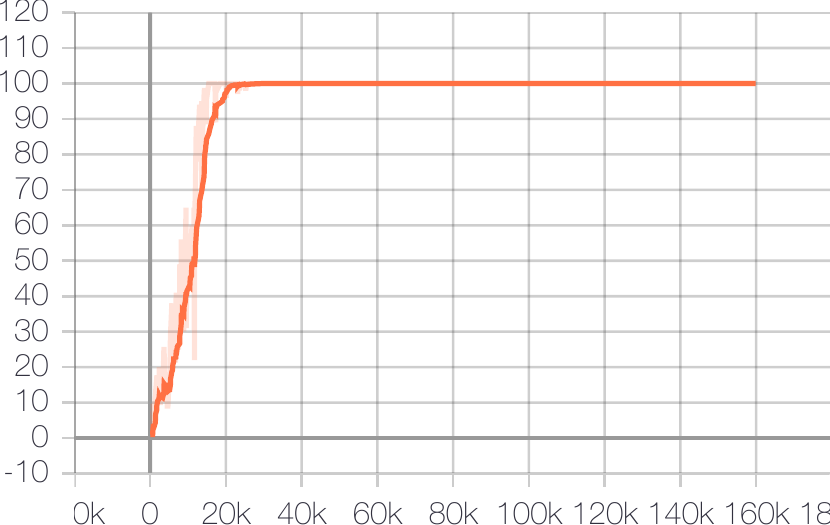}
    }
\end{figure}

\begin{figure}[!ht]
    \subfigure[Breakout]{
    \includegraphics[width=0.3\textwidth]{./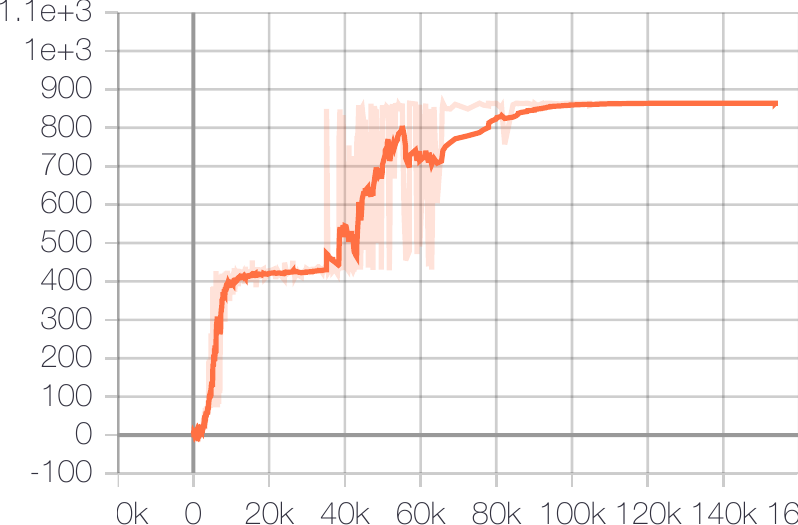}
    }
    \subfigure[Centipede]{
    \includegraphics[width=0.3\textwidth]{./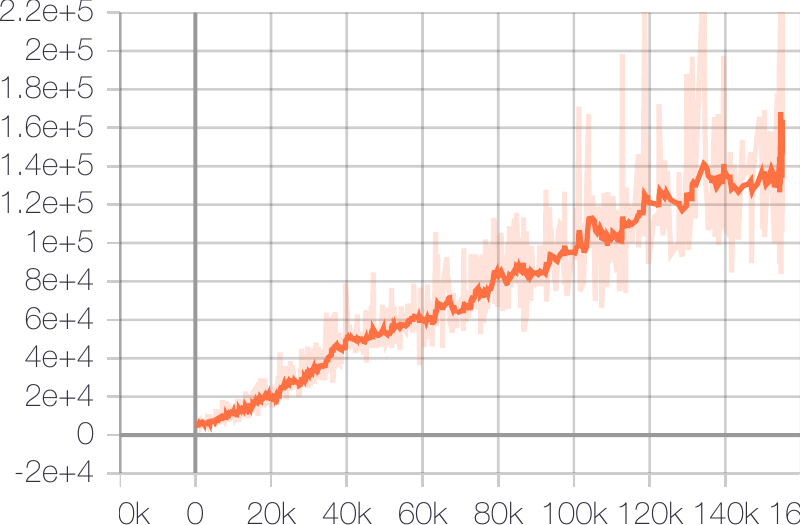}
    }
    \subfigure[Chopper\_Command]{
    \includegraphics[width=0.3\textwidth]{./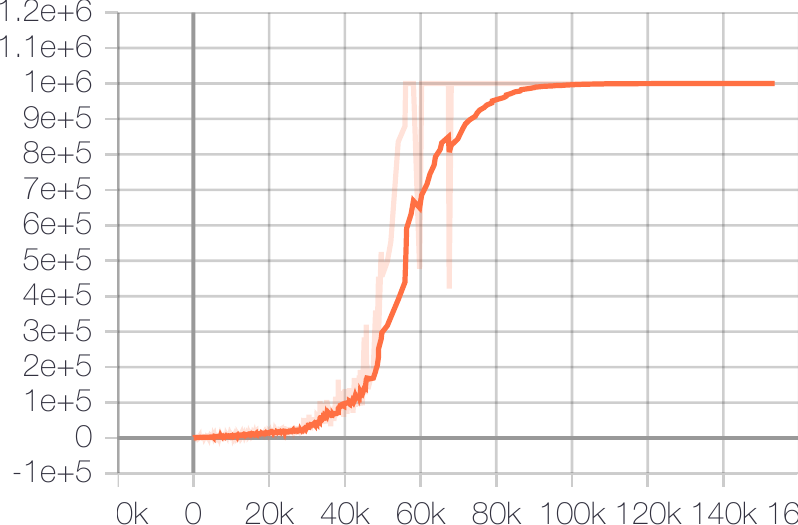}
    }
\end{figure}

\begin{figure}[!ht]
    \subfigure[Crazy\_Climber]{
    \includegraphics[width=0.3\textwidth]{./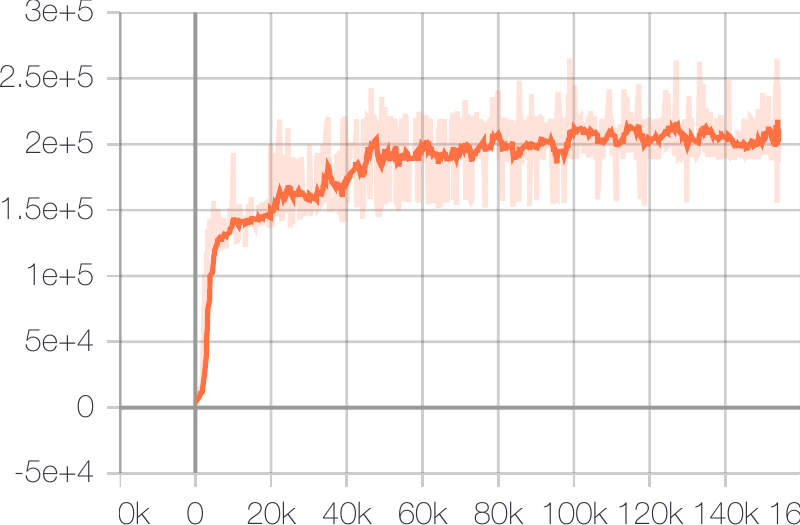}
    }
    \subfigure[Defender]{
    \includegraphics[width=0.3\textwidth]{./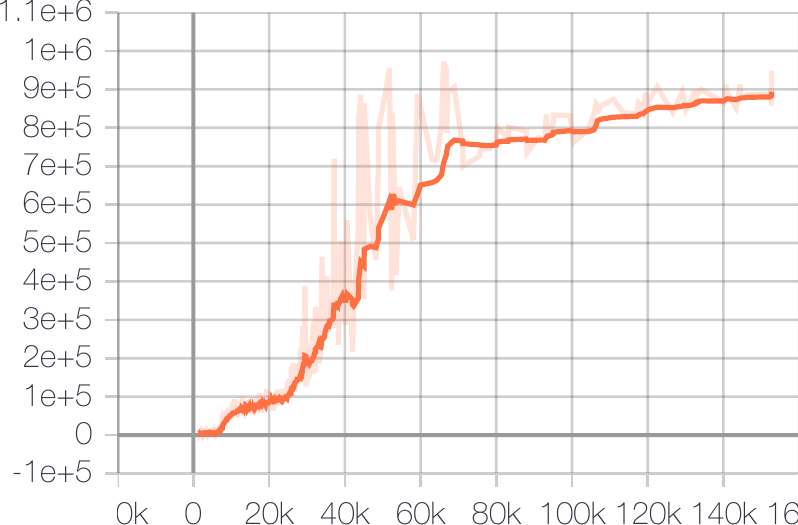}
    }
    \subfigure[Demon\_Attack]{
    \includegraphics[width=0.3\textwidth]{./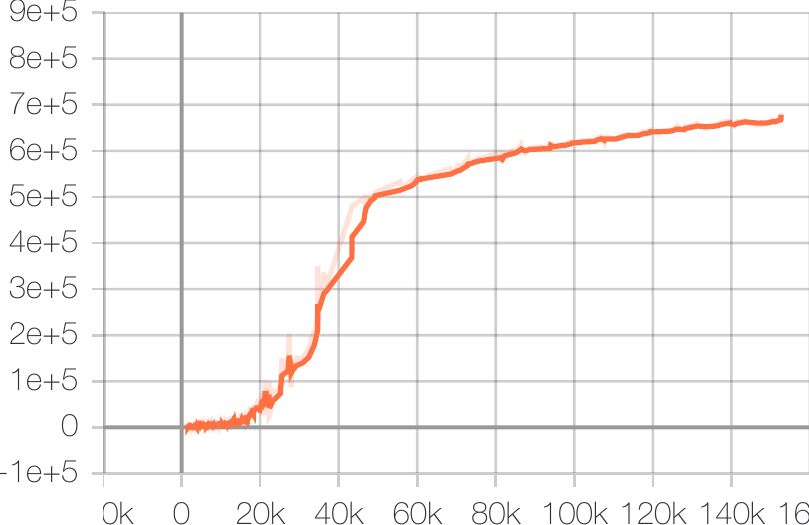}
    }
\end{figure}

\begin{figure}[!ht]
    \subfigure[Double\_Dunk]{
    \includegraphics[width=0.3\textwidth]{./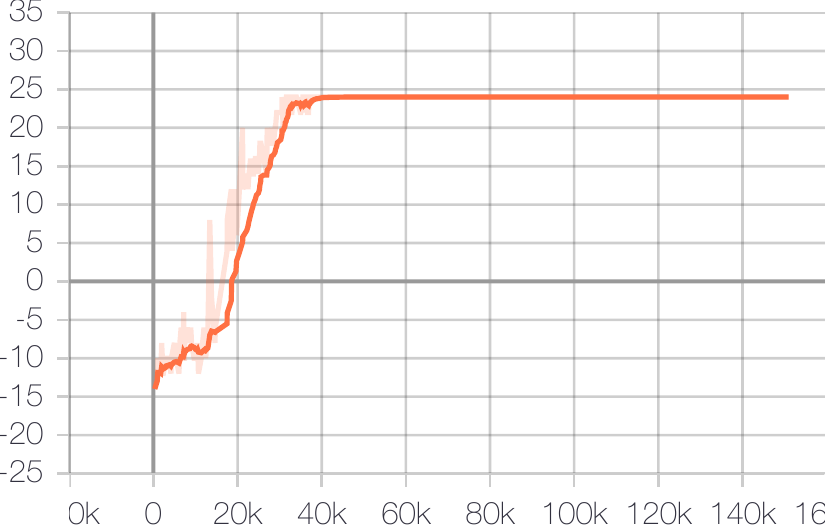}
    }
    \subfigure[Enduro]{
     \includegraphics[width=0.3\textwidth]{./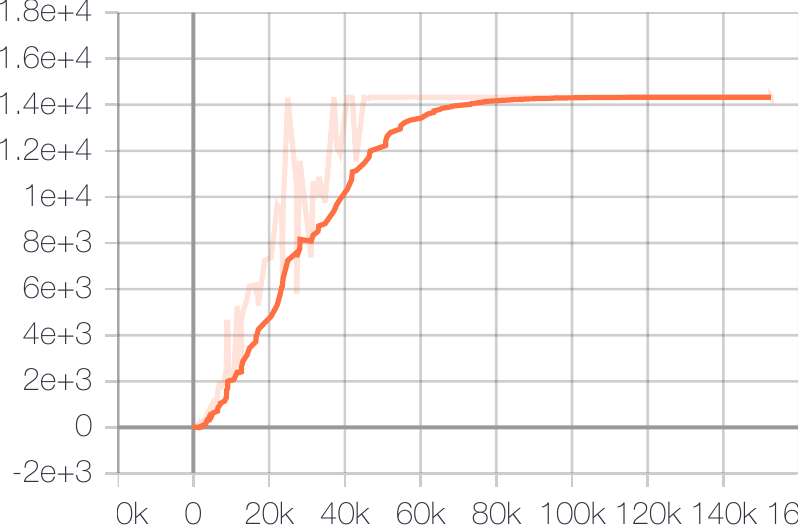}
    }
    \subfigure[Fishing\_Derby]{
    \includegraphics[width=0.3\textwidth]{./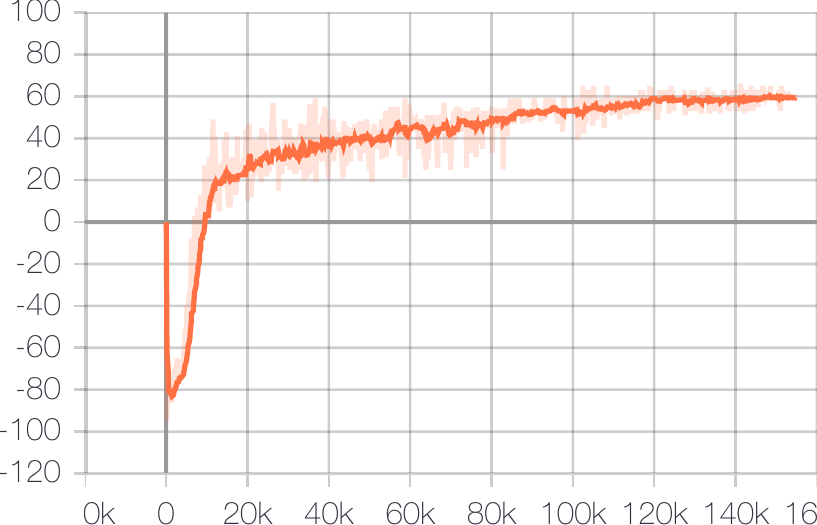}
    }
\end{figure}

\begin{figure}[!ht]
    \subfigure[Freeway]{
    \includegraphics[width=0.3\textwidth]{./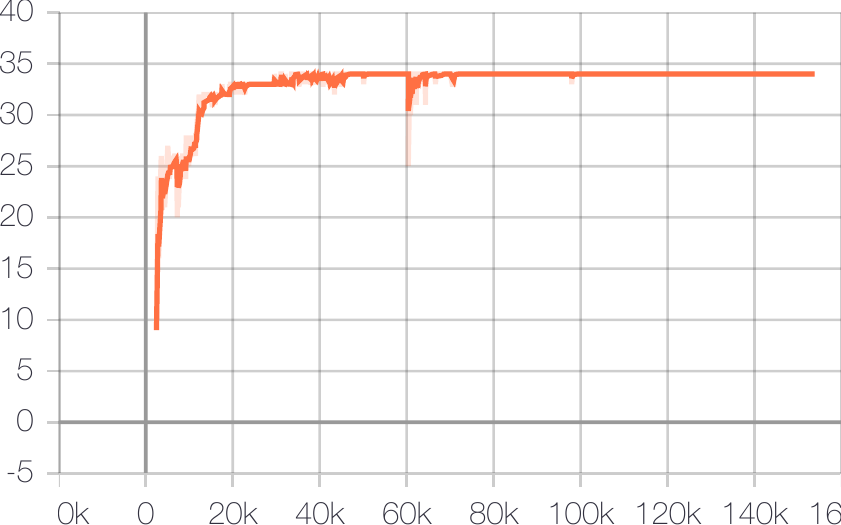}
    }
    \subfigure[Frostbite]{
    \includegraphics[width=0.3\textwidth]{./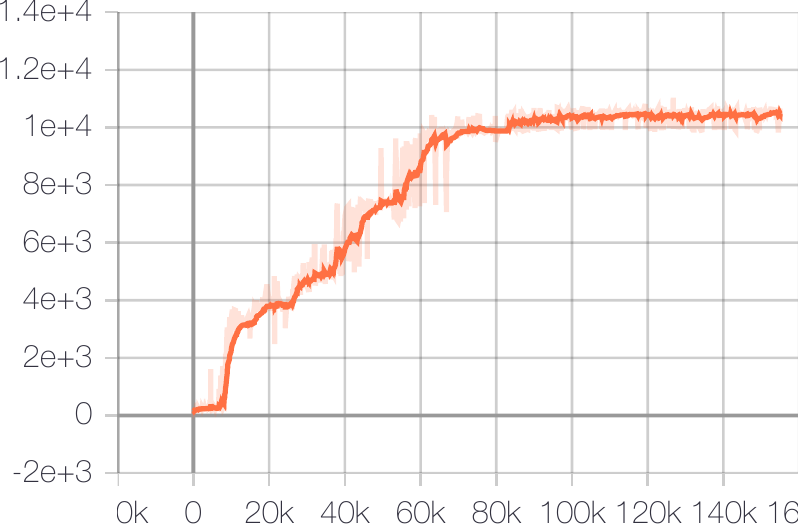}
    }
    \subfigure[Gopher]{
    \includegraphics[width=0.3\textwidth]{./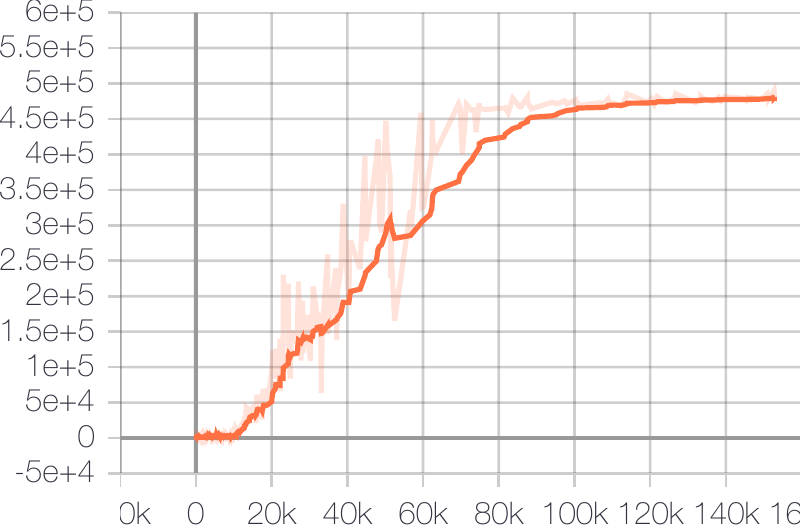}
    }
\end{figure}

\begin{figure}[!ht]
    \subfigure[Gravitar]{
    \includegraphics[width=0.3\textwidth]{./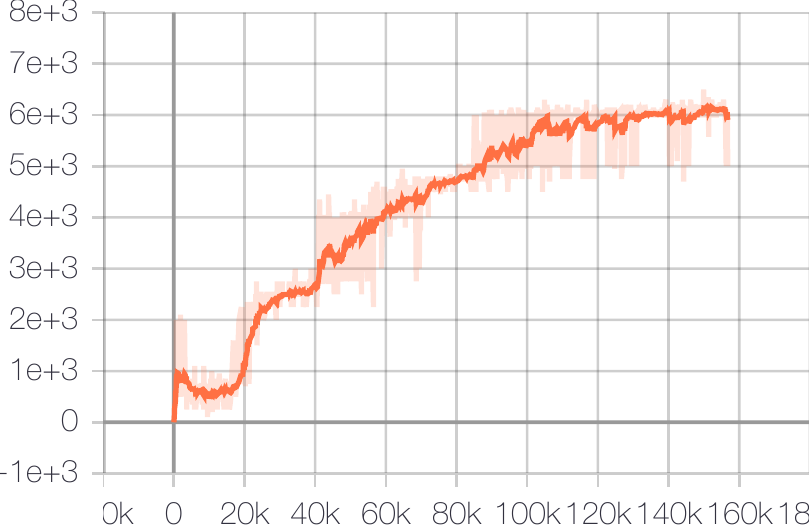}
    }
    \subfigure[Hero]{
    \includegraphics[width=0.3\textwidth]{./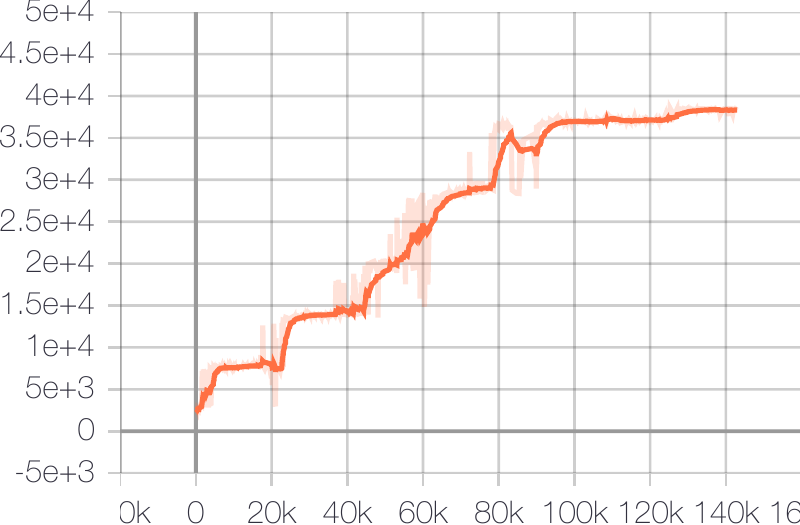}
    }
    \subfigure[Ice\_Hockey]{
    \includegraphics[width=0.3\textwidth]{./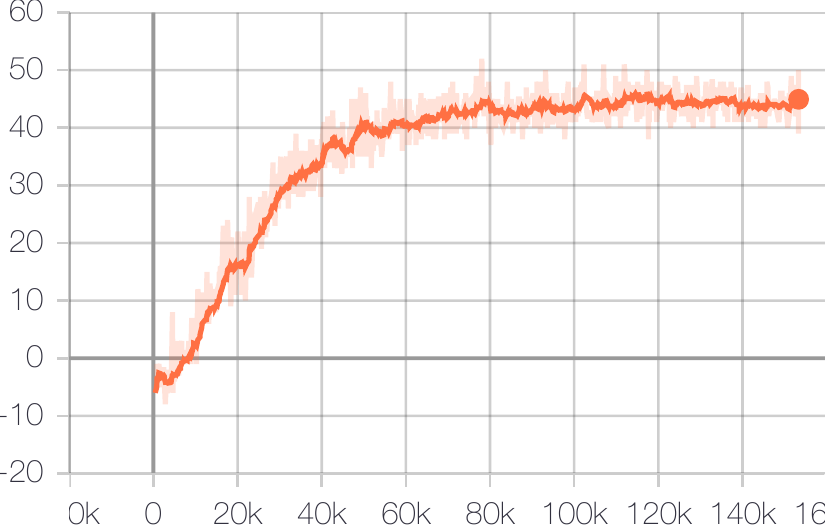}
    }
\end{figure}

\begin{figure}[!ht]
    \subfigure[Jamesbond]{
    \includegraphics[width=0.3\textwidth]{./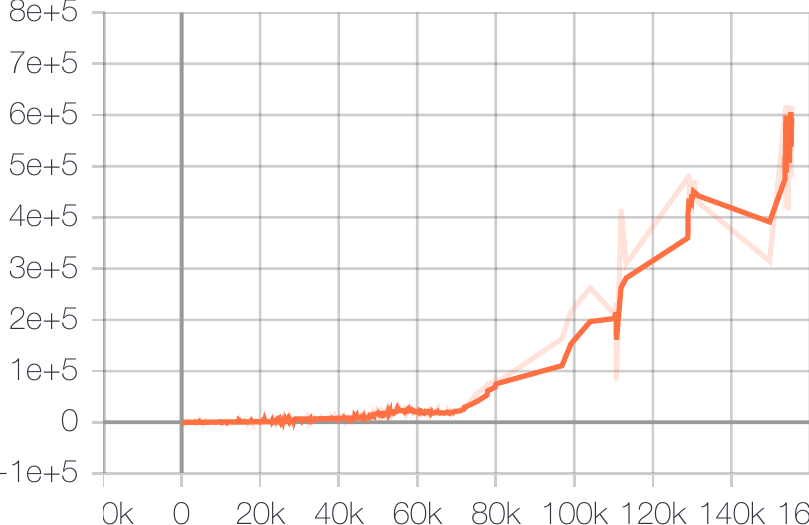}
    }
    \subfigure[Kangaroo]{
    \includegraphics[width=0.3\textwidth]{./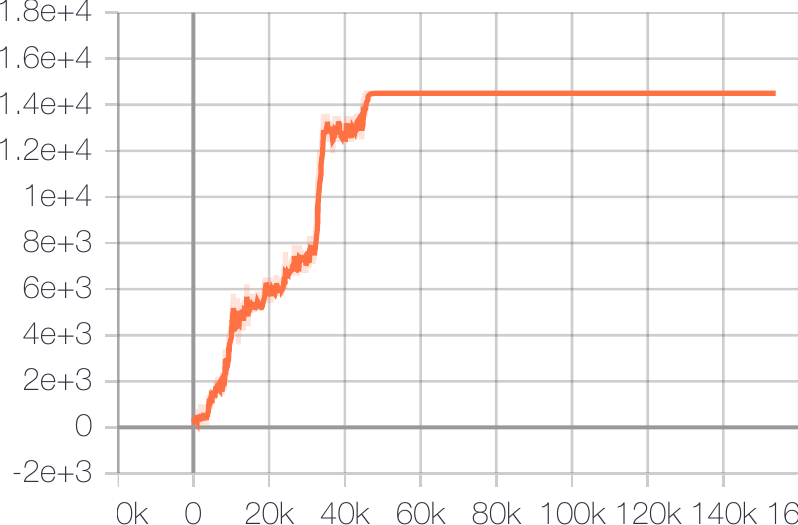}
    }
    \subfigure[Krull]{
    \includegraphics[width=0.3\textwidth]{./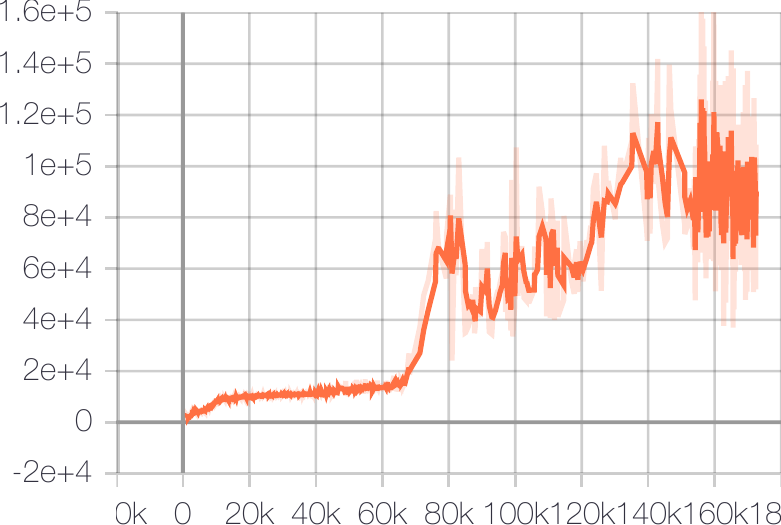}
    }
\end{figure}

\begin{figure}[!ht]
    \subfigure[Kung\_Fu\_Master]{
    \includegraphics[width=0.3\textwidth]{./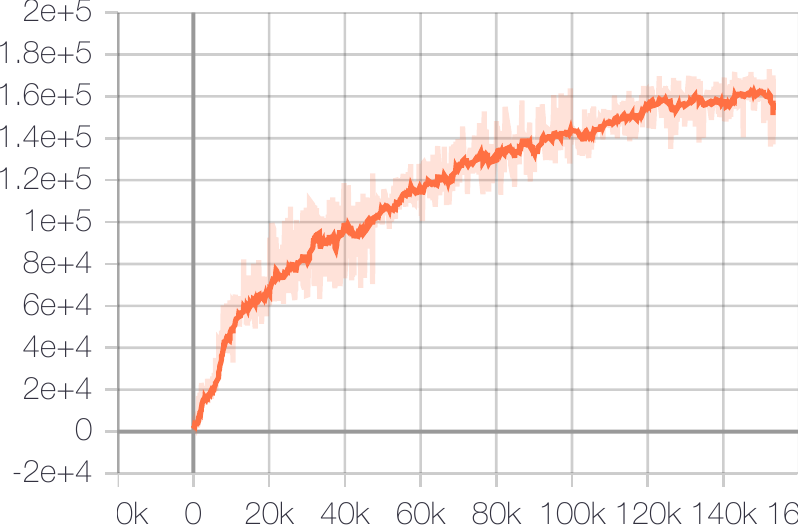}
    }
    \subfigure[Montezuma\_Revenge]{
    \includegraphics[width=0.3\textwidth]{./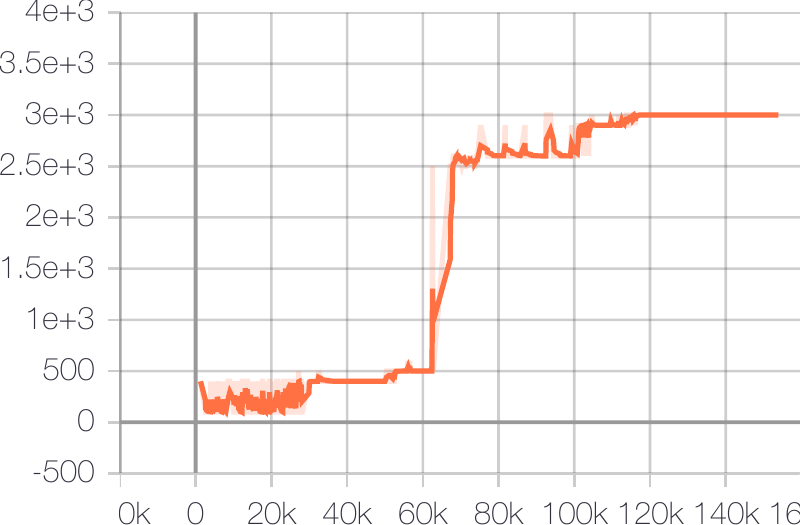}
    }
    \subfigure[Ms\_Pacman]{
    \includegraphics[width=0.3\textwidth]{./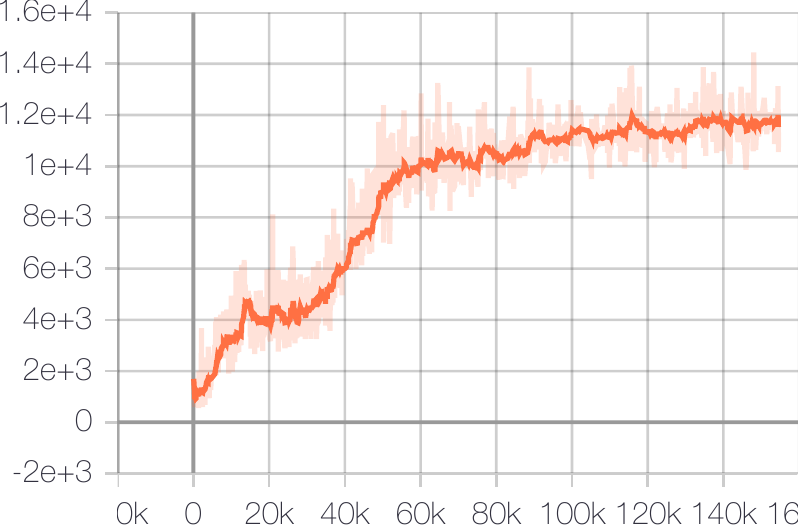}
    }
\end{figure}

\begin{figure}[!ht]
    \subfigure[Name\_This\_Game]{
    \includegraphics[width=0.3\textwidth]{./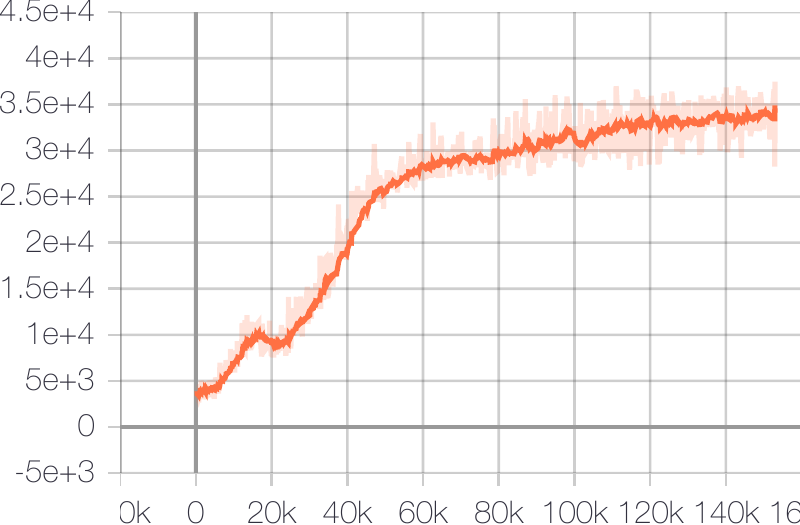}
    }
    \subfigure[Phoenix]{
    \includegraphics[width=0.3\textwidth]{./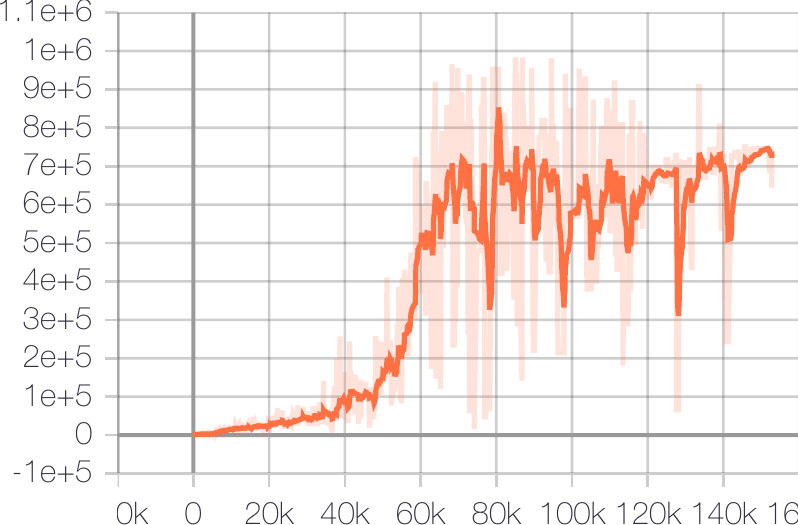}
    }
    \subfigure[Pitfall]{
    \includegraphics[width=0.3\textwidth]{./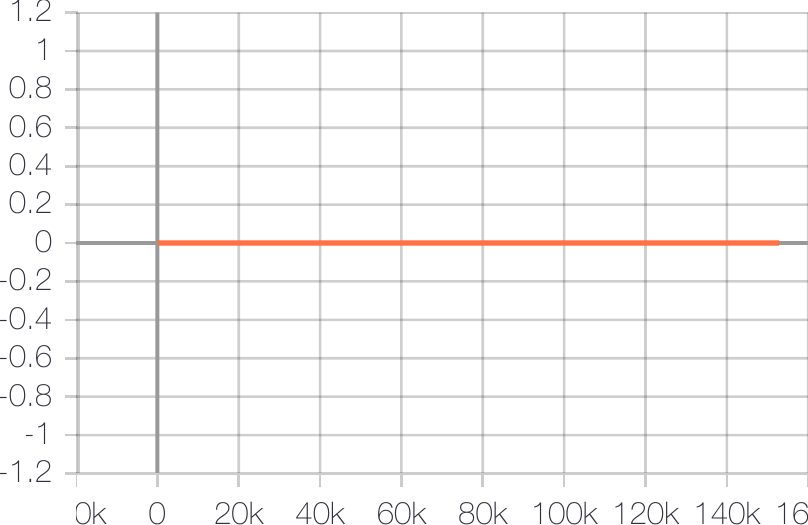}
    }
\end{figure}

\begin{figure}[!ht]
    \subfigure[Pong]{
    \includegraphics[width=0.3\textwidth]{./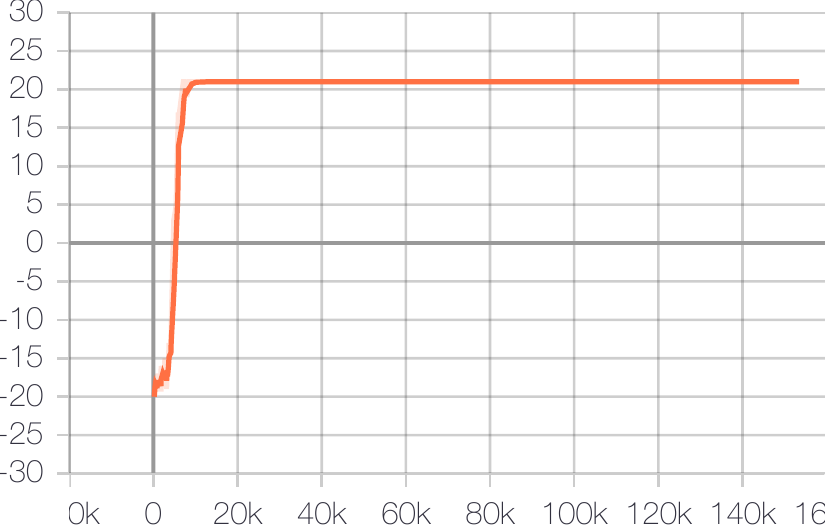}
    }
    \subfigure[Private\_Eye]{
    \includegraphics[width=0.3\textwidth]{./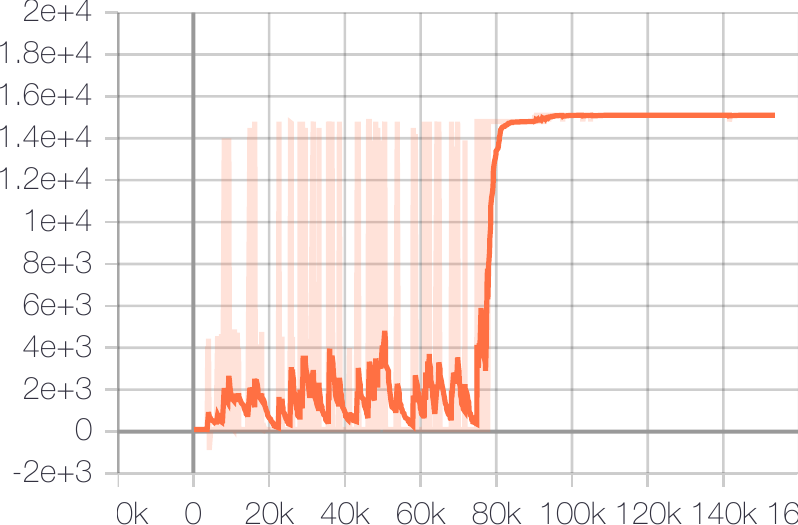}
    }
    \subfigure[Qbert]{
    \includegraphics[width=0.3\textwidth]{./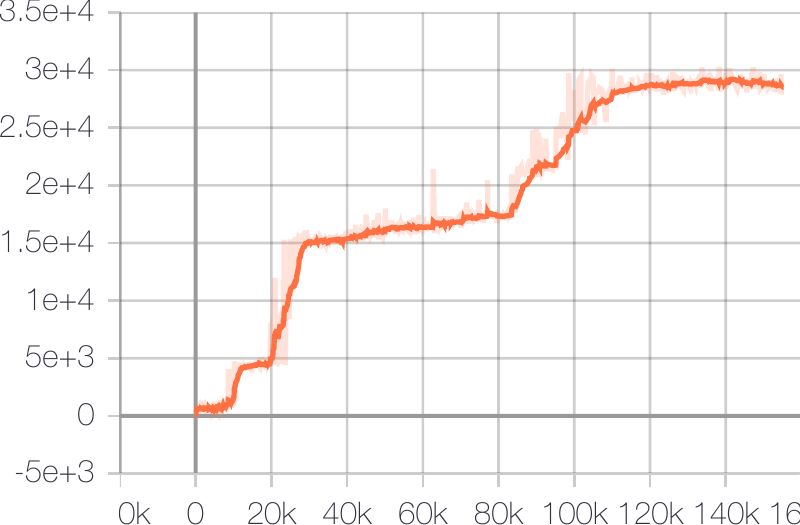}
    }
\end{figure}

\begin{figure}[!ht]
    \subfigure[Riverraid]{
    \includegraphics[width=0.3\textwidth]{./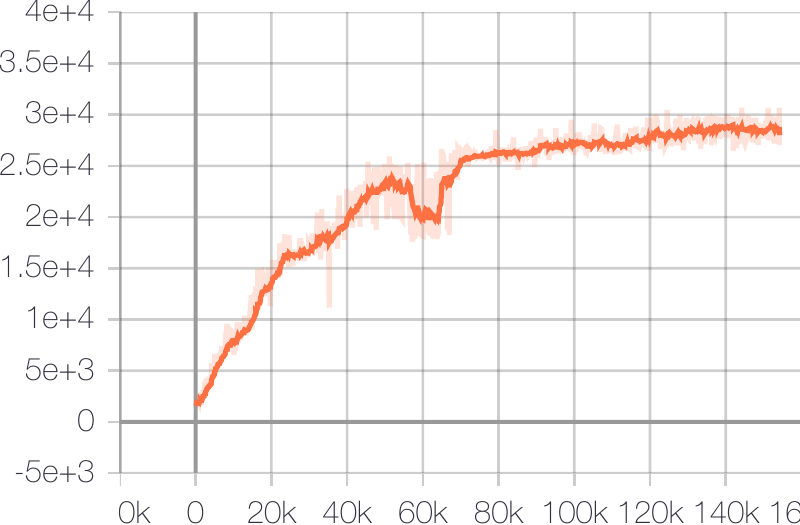}
    }
    \subfigure[Road\_Runner]{
    \includegraphics[width=0.3\textwidth]{./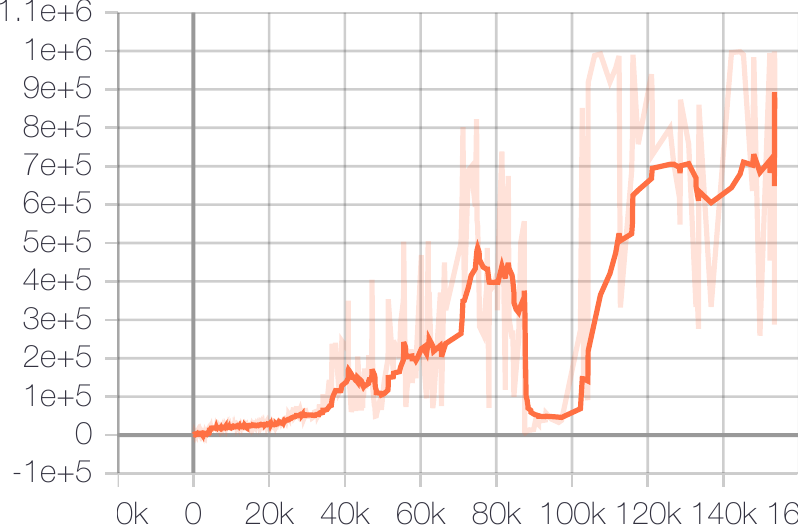}
    }
    \subfigure[Robotank]{
    \includegraphics[width=0.3\textwidth]{./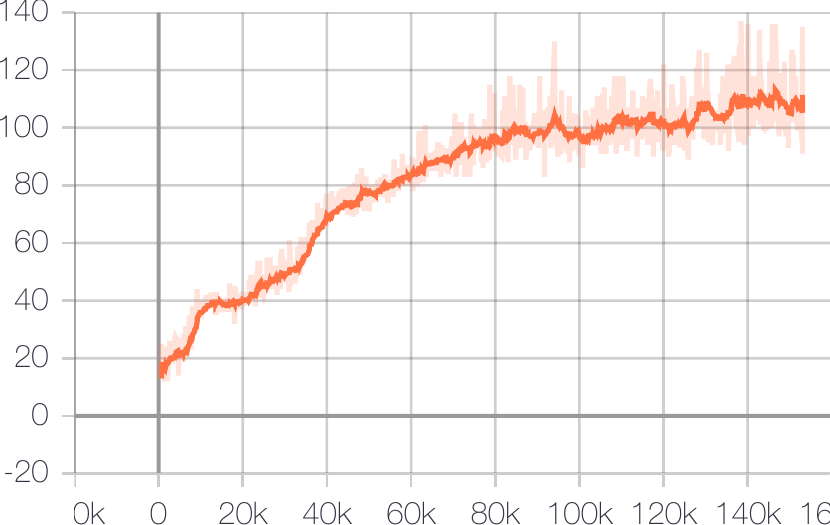}
    }
\end{figure}

\begin{figure}[!ht]
    \subfigure[Seaquest]{
    \includegraphics[width=0.3\textwidth]{./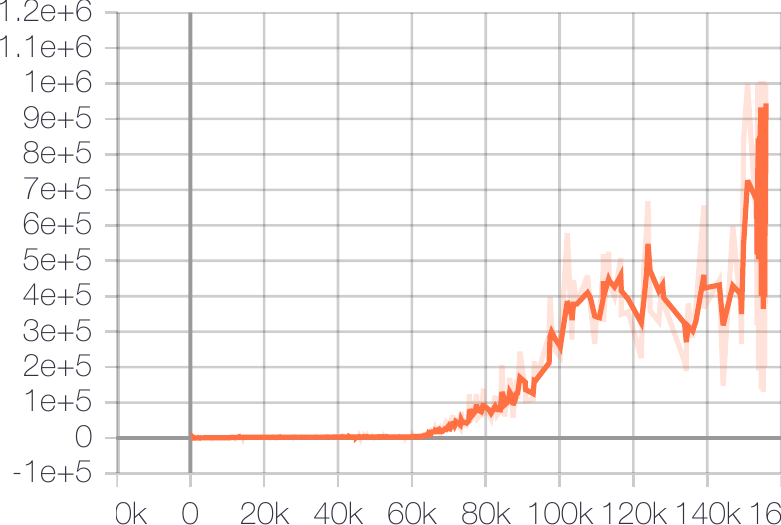}
    }
    \subfigure[Skiing]{
    \includegraphics[width=0.3\textwidth]{./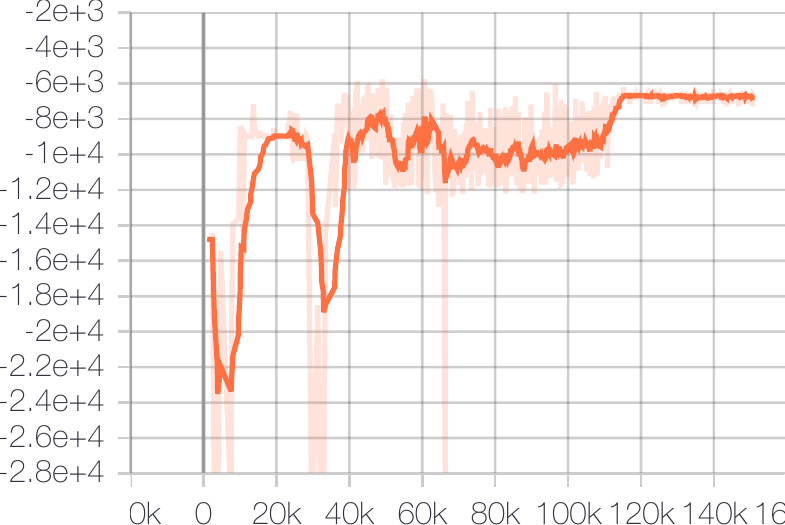}
    }
    \subfigure[Solaris]{
    \includegraphics[width=0.3\textwidth]{./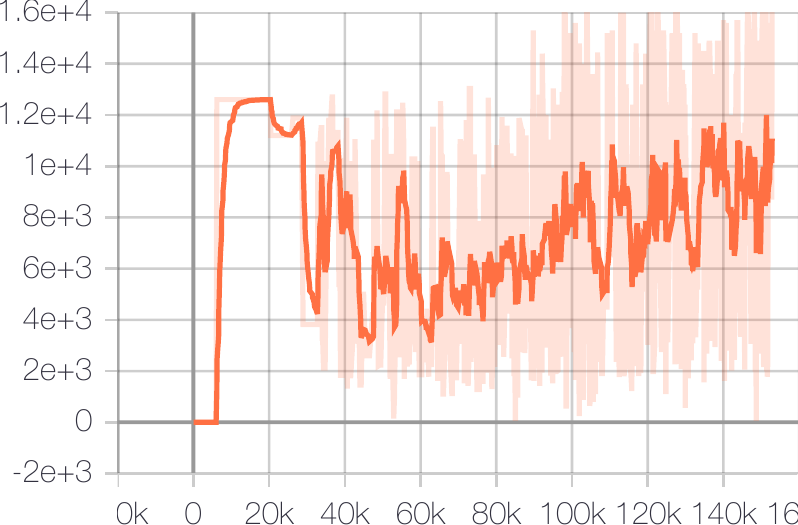}
    }
\end{figure}

\begin{figure}[!ht]
    \subfigure[Space\_Invaders]{
    \includegraphics[width=0.3\textwidth]{./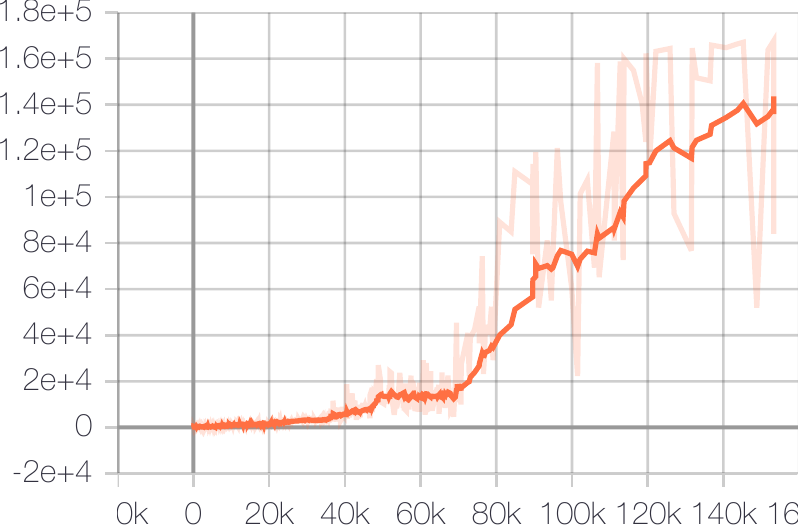}
    }
    \subfigure[Star\_Gunner]{
    \includegraphics[width=0.3\textwidth]{./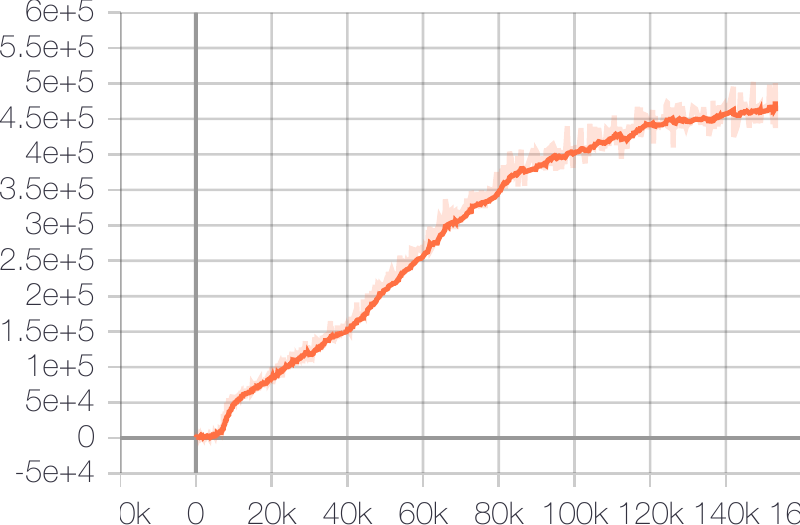}
    }
    \subfigure[Surround]{
    \includegraphics[width=0.3\textwidth]{./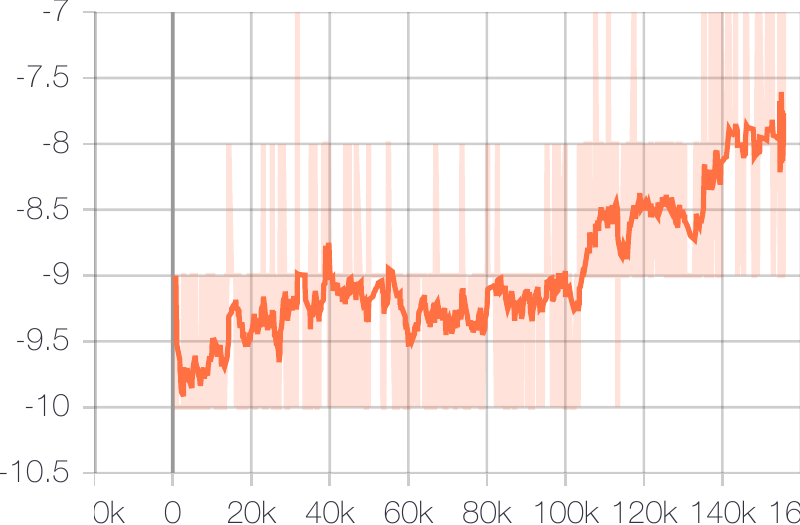}
    }
\end{figure}

\begin{figure}[!ht]
    \subfigure[Tennis]{
    \includegraphics[width=0.3\textwidth]{./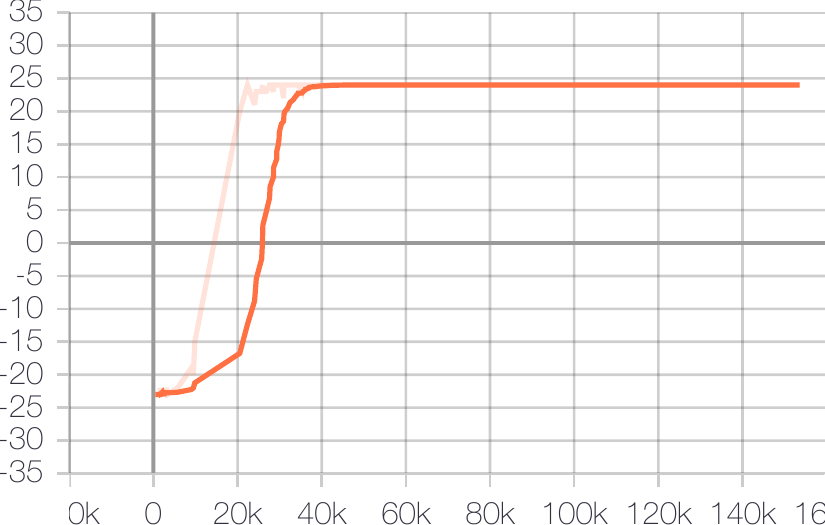}
    }
    \subfigure[Time\_Pilot]{
    \includegraphics[width=0.3\textwidth]{./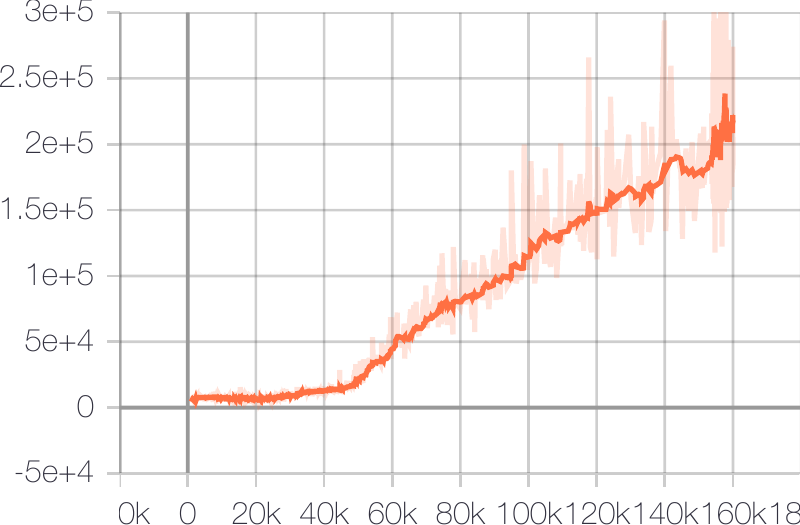}
    }
    \subfigure[Tutankham]{
    \includegraphics[width=0.3\textwidth]{./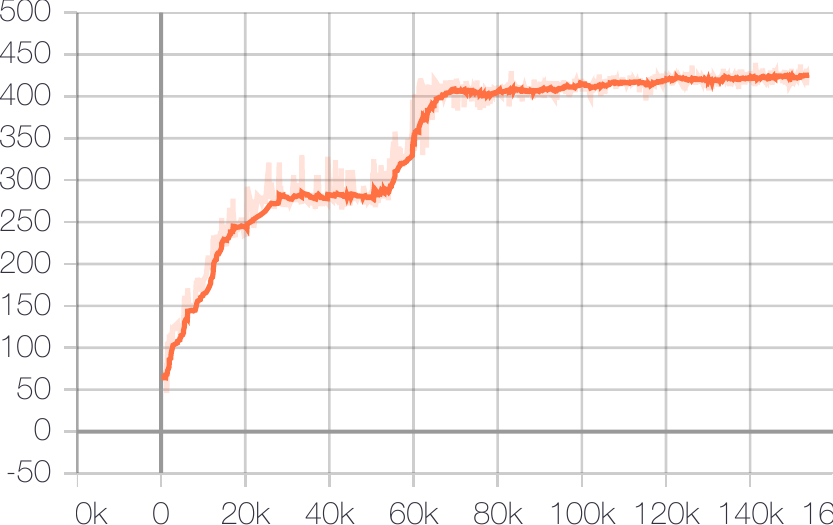}
    }
\end{figure}

\begin{figure}[!ht]
    \subfigure[Up\_N\_Down]{
    \includegraphics[width=0.3\textwidth]{./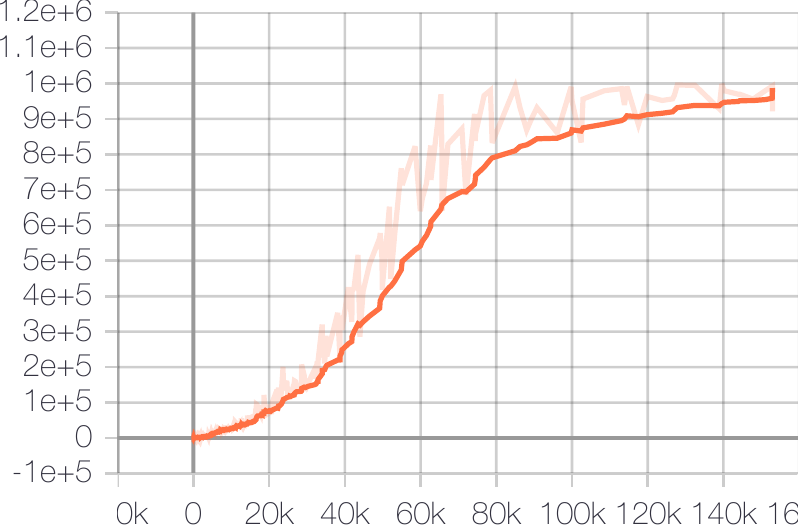}
    }
    \subfigure[Venture]{
    \includegraphics[width=0.3\textwidth]{./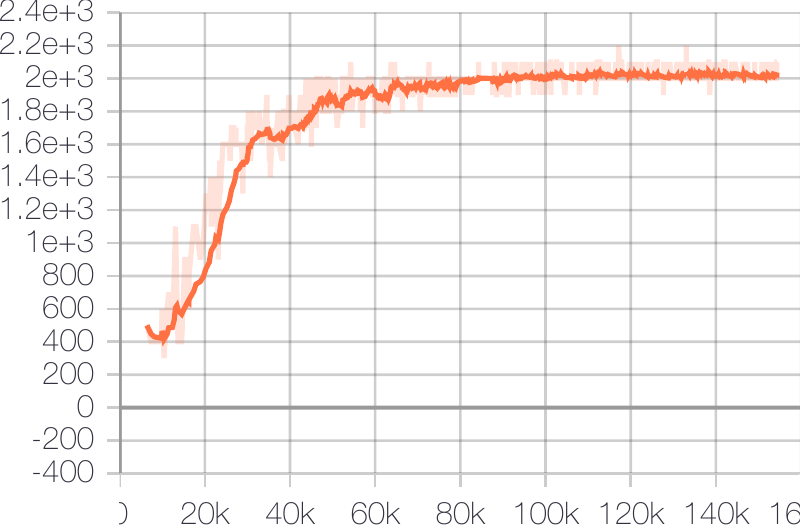}
    }
    \subfigure[Video\_Pinball]{
    \includegraphics[width=0.3\textwidth]{./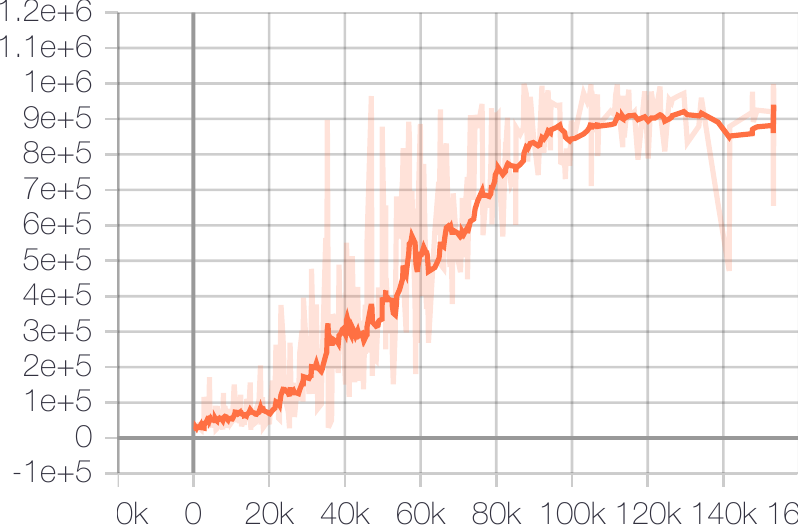}
    }
\end{figure}

\begin{figure}[!ht]
    \subfigure[Wizard\_of\_Wor]{
    \includegraphics[width=0.3\textwidth]{./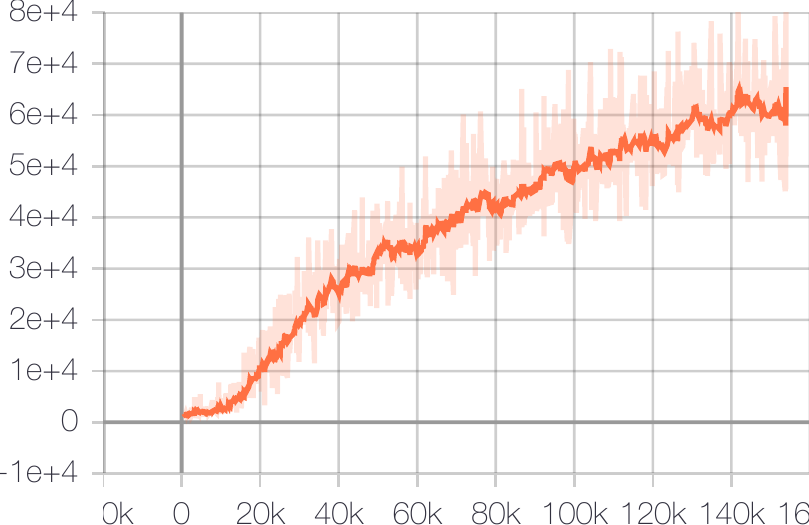}
    }
    \subfigure[Yars\_Revenge]{
    \includegraphics[width=0.3\textwidth]{./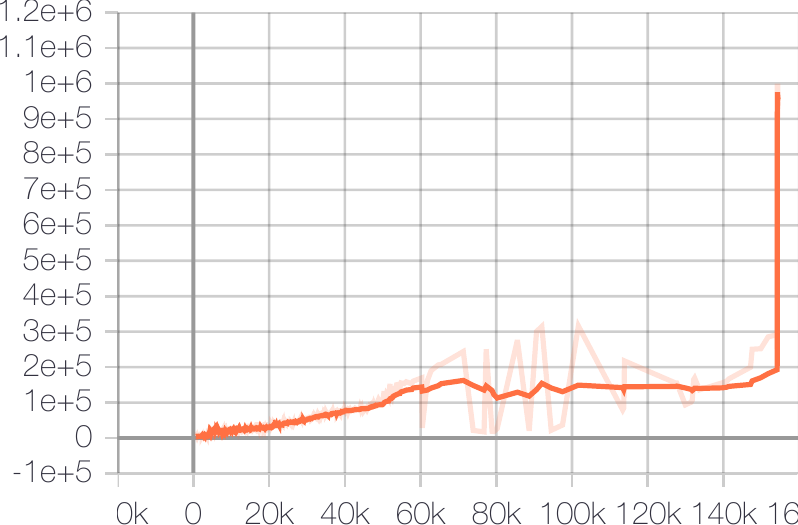}
    }
    \subfigure[Zaxxon]{
    \includegraphics[width=0.3\textwidth]{./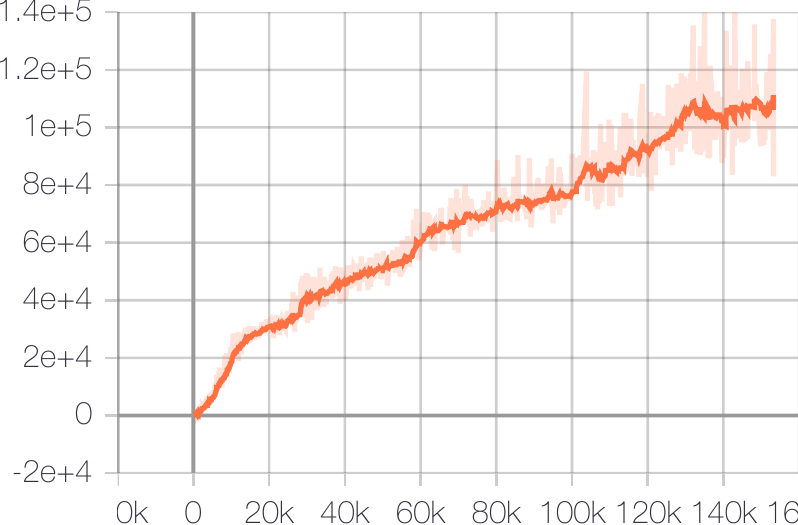}
    }
\end{figure}

\clearpage

\subsubsection{Atari Games Learning Curves of GDI-H$^3$}

\setcounter{subfigure}{0}

\begin{figure}[!ht] 
    \subfigure[alien]{
    \includegraphics[width=0.3\textwidth]{./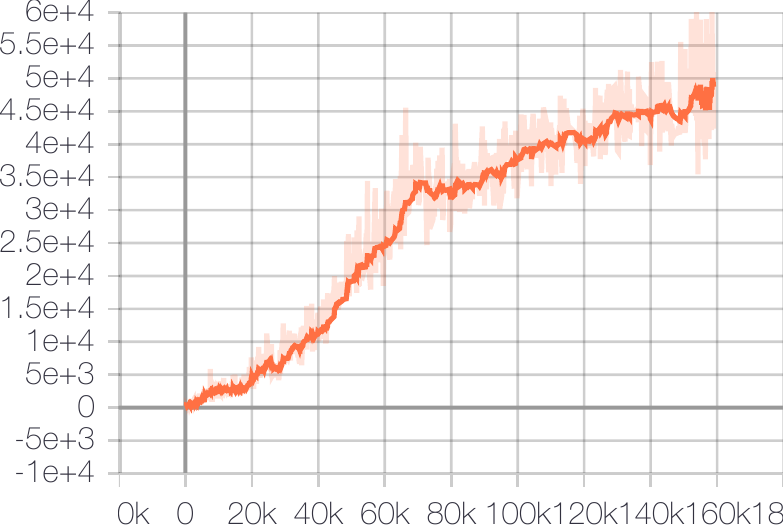}
    }
    \subfigure[amidar]{
    \includegraphics[width=0.3\textwidth]{./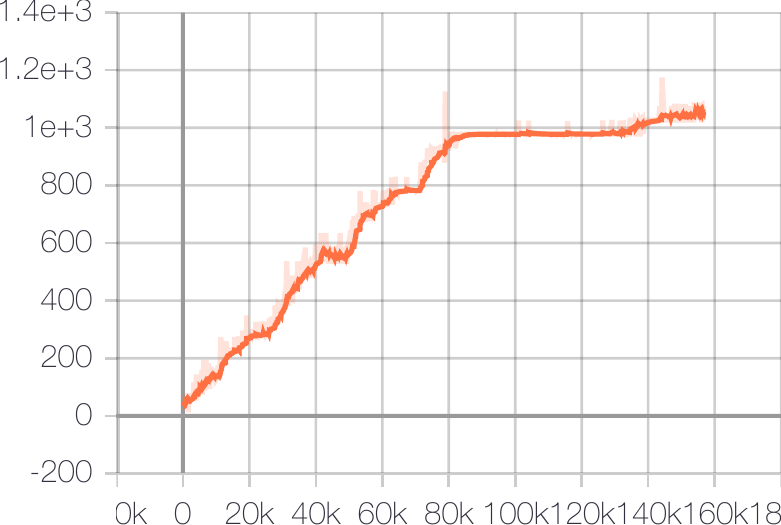}
    }
    \subfigure[assault]{
    \includegraphics[width=0.3\textwidth]{./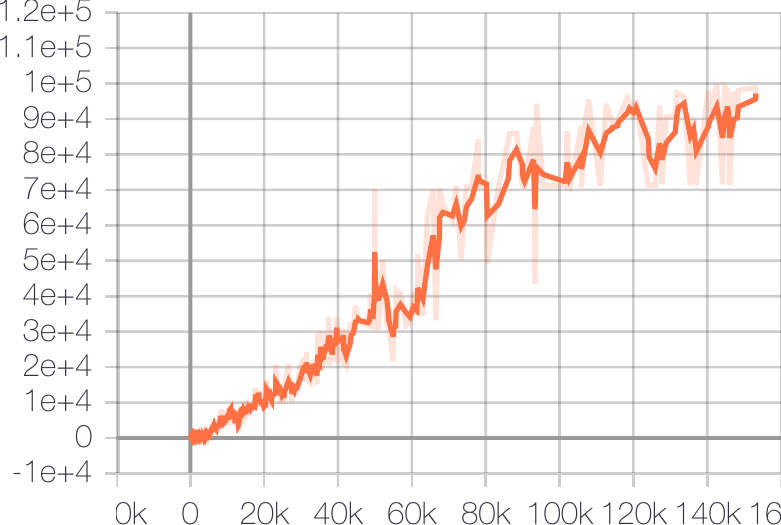}
    }
\end{figure}

\begin{figure}[!ht]
    \subfigure[asterix]{
    \includegraphics[width=0.3\textwidth]{./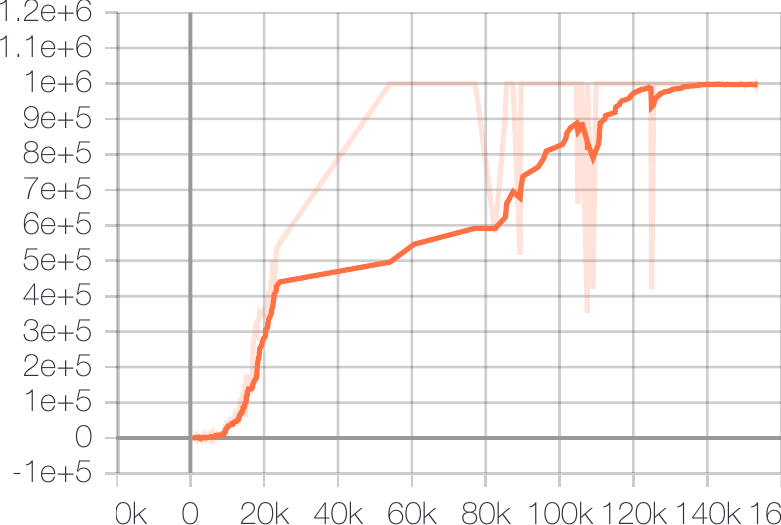}
    }
    \subfigure[asteroids]{
    \includegraphics[width=0.3\textwidth]{./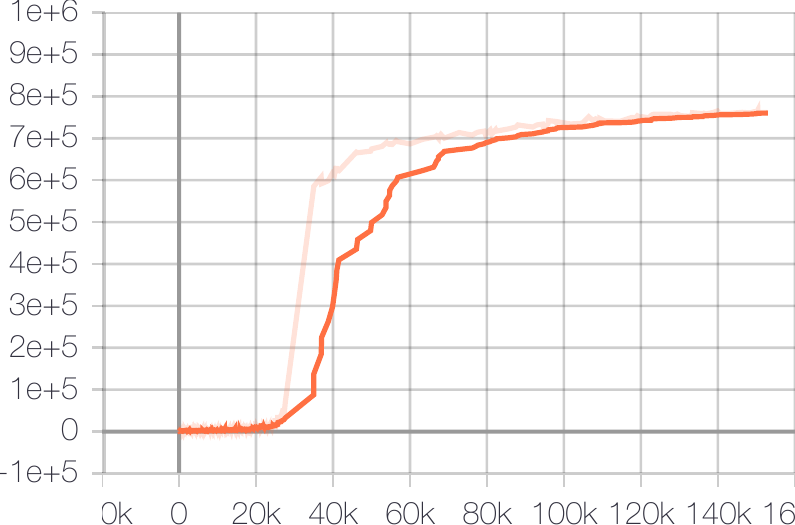}
    }
    \subfigure[atlantis]{
    \includegraphics[width=0.3\textwidth]{./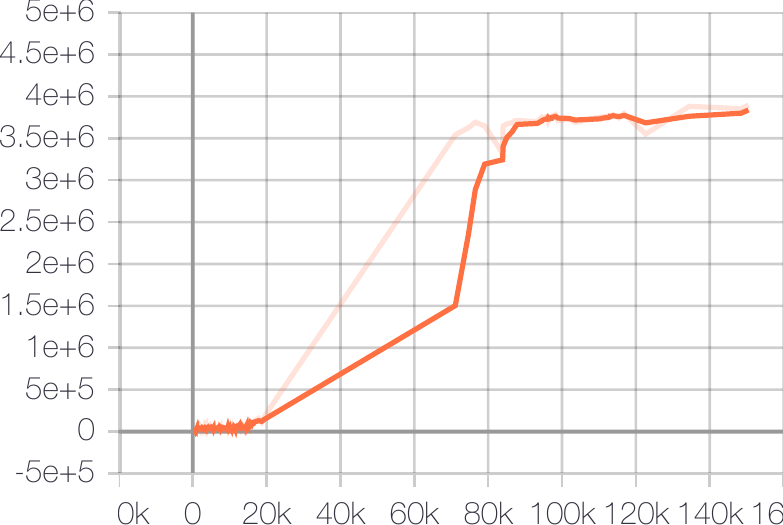}
    }
\end{figure}

\begin{figure}[!ht]
    \subfigure[bank\_heist]{
    \includegraphics[width=0.3\textwidth]{./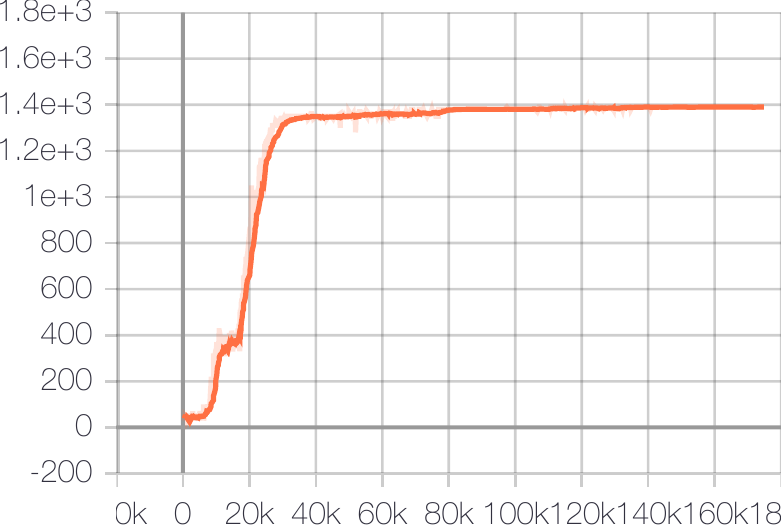}
    }
    \subfigure[battle\_zone]{
    \includegraphics[width=0.3\textwidth]{./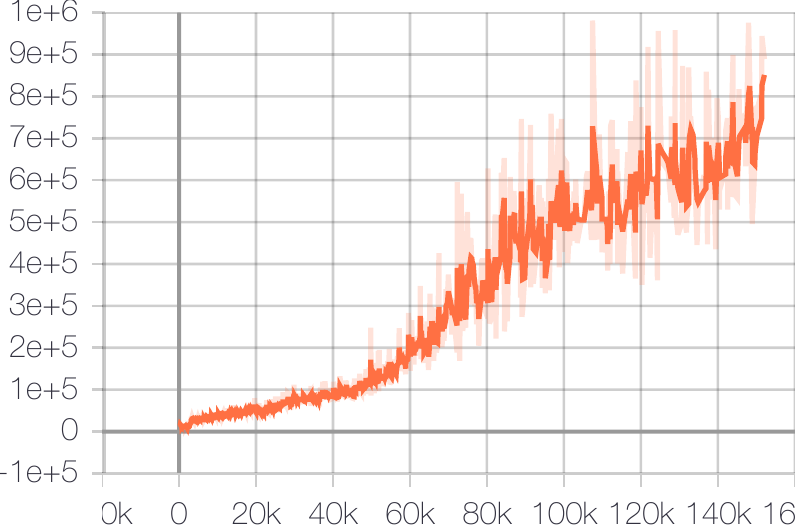}
    }
    \subfigure[beam\_rider]{
    \includegraphics[width=0.3\textwidth]{./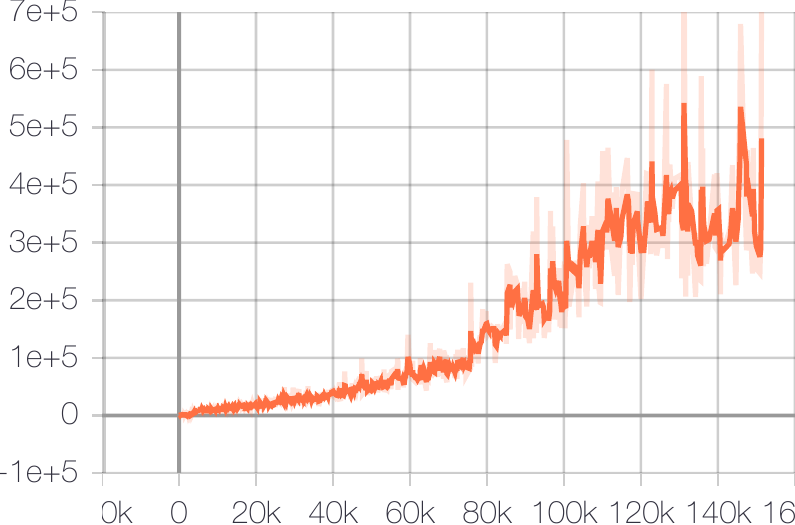}
    }
\end{figure}

\begin{figure}[!ht]
    \subfigure[berzerk]{
    \includegraphics[width=0.3\textwidth]{./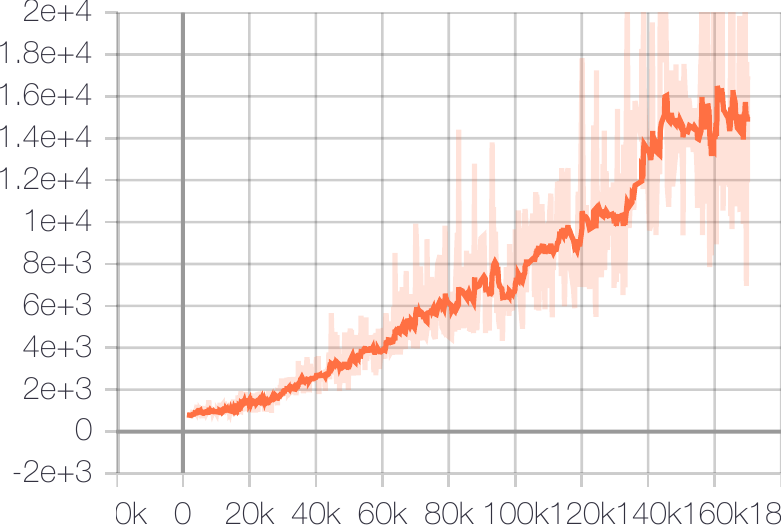}
    }
    \subfigure[bowling]{
    \includegraphics[width=0.3\textwidth]{./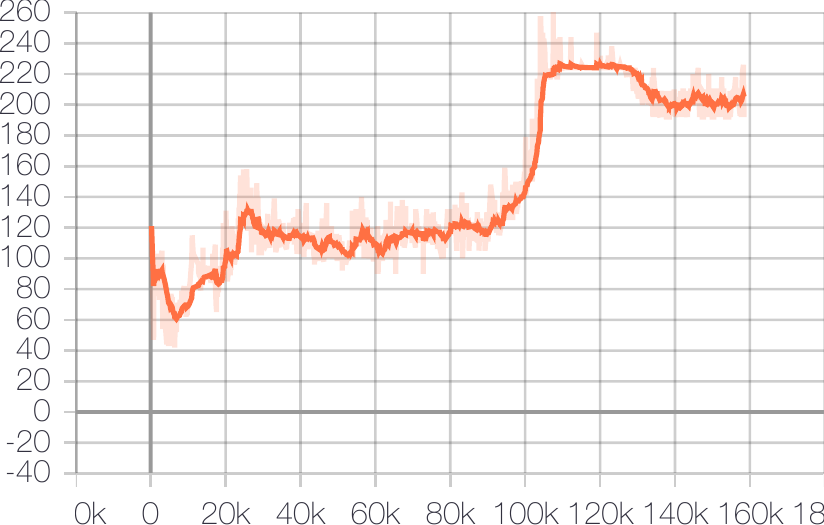}
    }
    \subfigure[boxing]{
    \includegraphics[width=0.3\textwidth]{./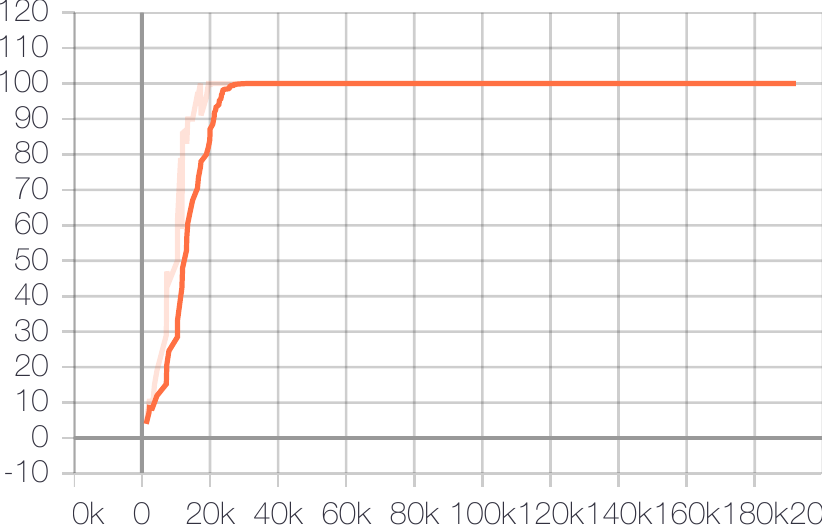}
    }
\end{figure}

\begin{figure}[!ht]
    \subfigure[breakout]{
    \includegraphics[width=0.3\textwidth]{./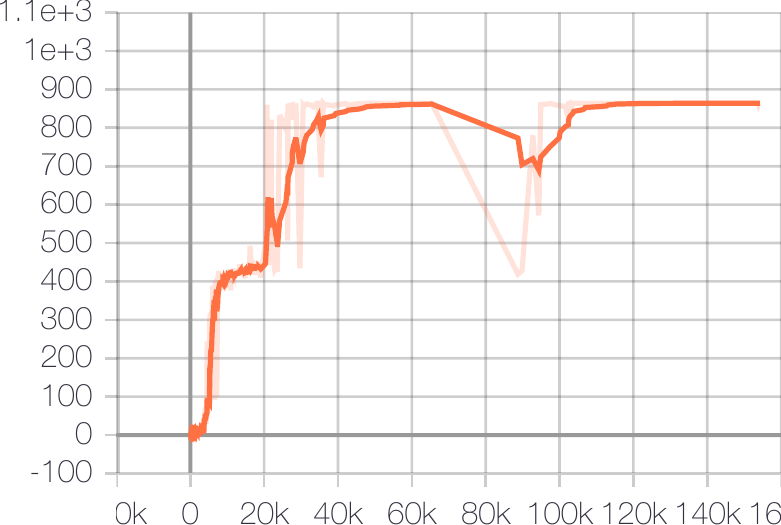}
    }
    \subfigure[centipede]{
    \includegraphics[width=0.3\textwidth]{./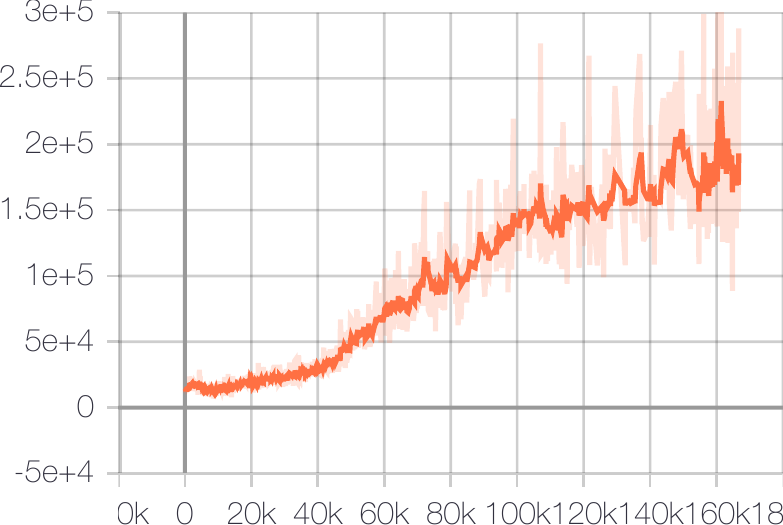}
    }
    \subfigure[chopper\_command]{
    \includegraphics[width=0.3\textwidth]{./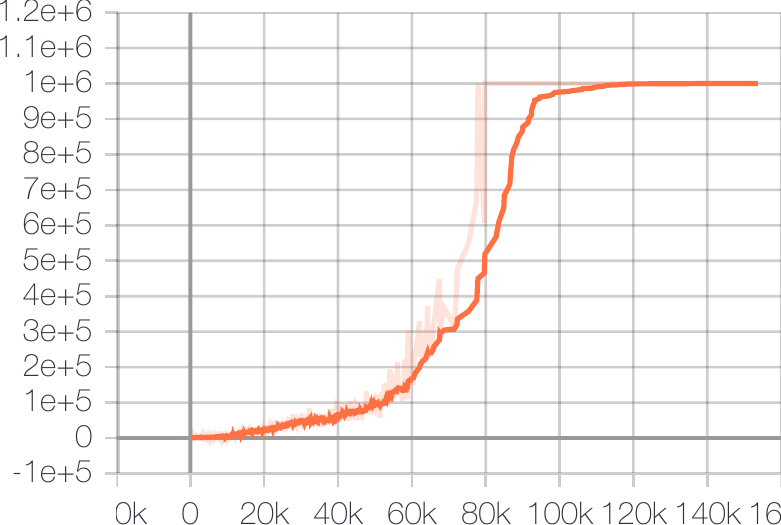}
    }
\end{figure}

\begin{figure}[!ht]
    \subfigure[crazy\_climber]{
    \includegraphics[width=0.3\textwidth]{./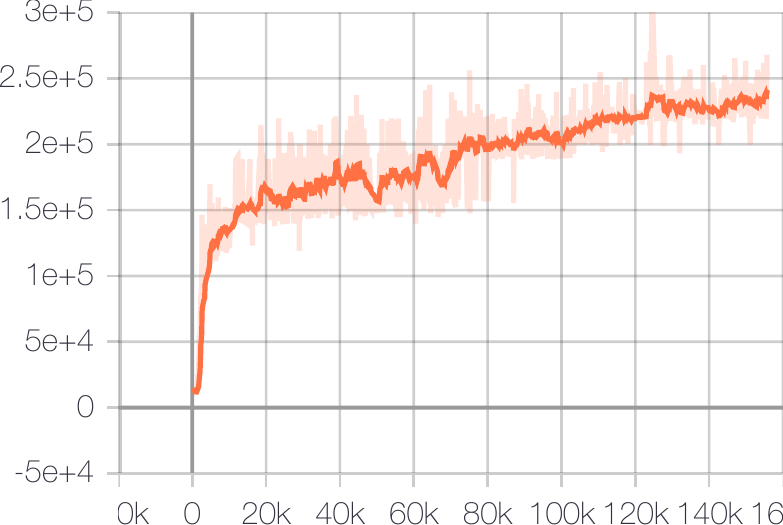}
    }
    \subfigure[defender]{
    \includegraphics[width=0.3\textwidth]{./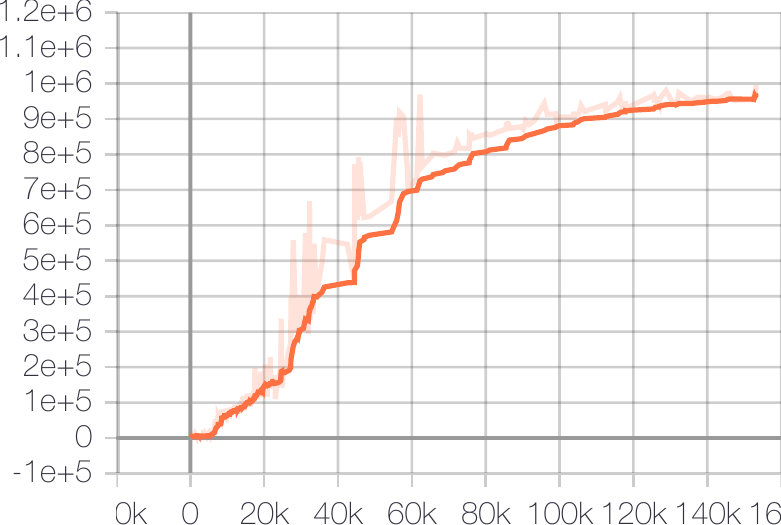}
    }
    \subfigure[demon\_attack]{
    \includegraphics[width=0.3\textwidth]{./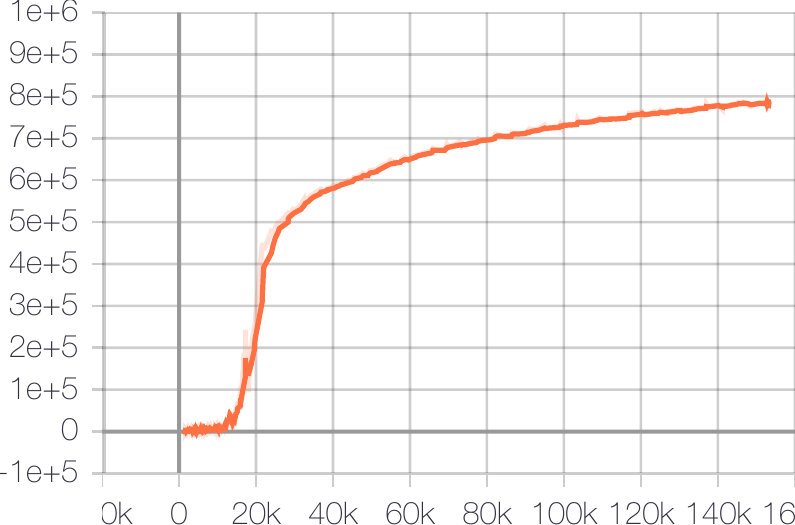}
    }
\end{figure}

\begin{figure}[!ht]
    \subfigure[double\_dunk]{
    \includegraphics[width=0.3\textwidth]{./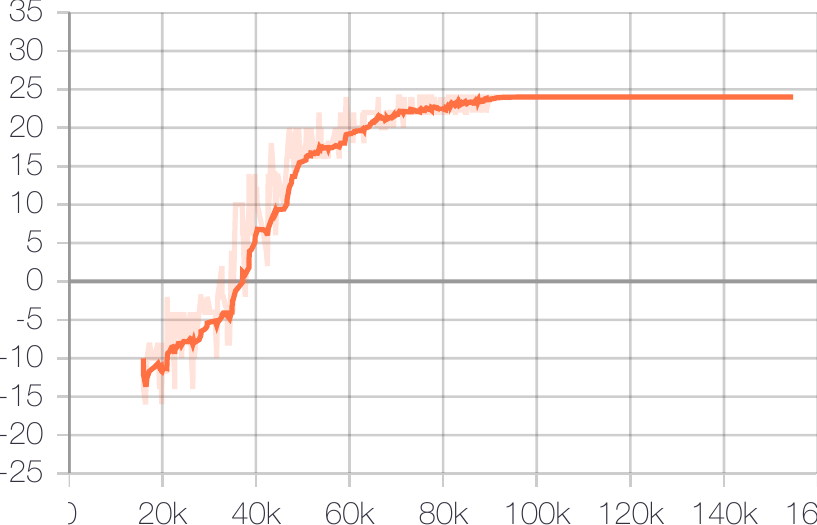}
    }
    \subfigure[enduro]{
    \includegraphics[width=0.3\textwidth]{./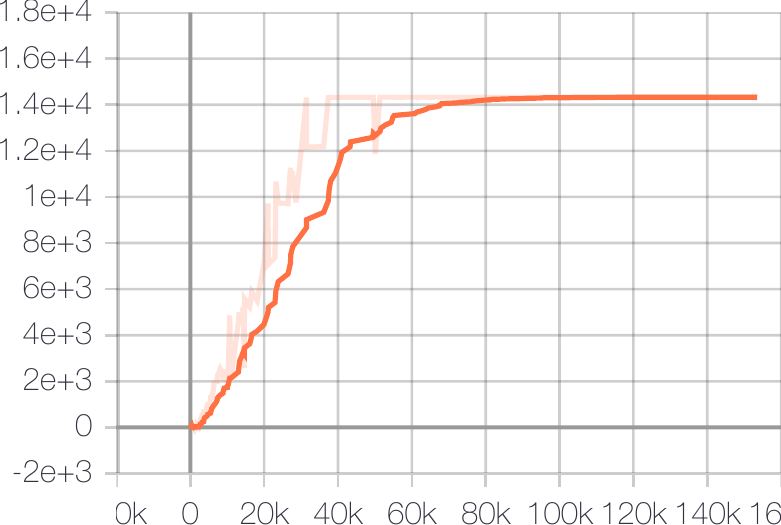}
    }
    \subfigure[fishing\_derby]{
    \includegraphics[width=0.3\textwidth]{./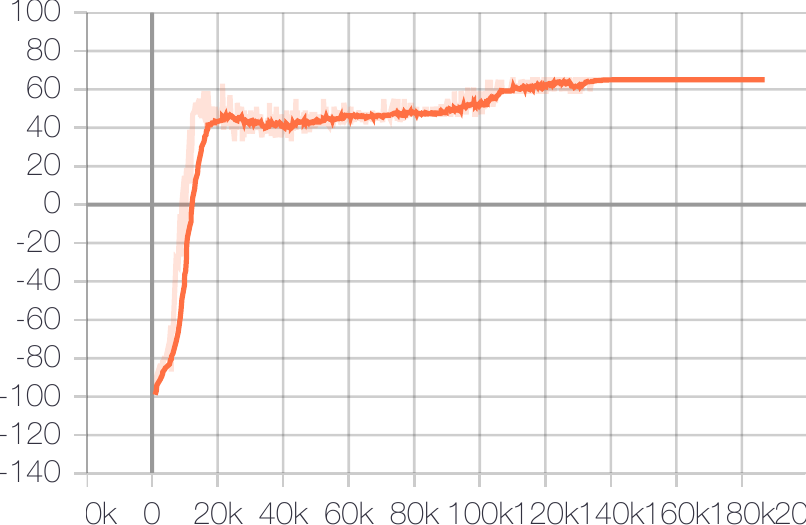}
    }
\end{figure}

\begin{figure}[!ht]
    \subfigure[freeway]{
    \includegraphics[width=0.3\textwidth]{./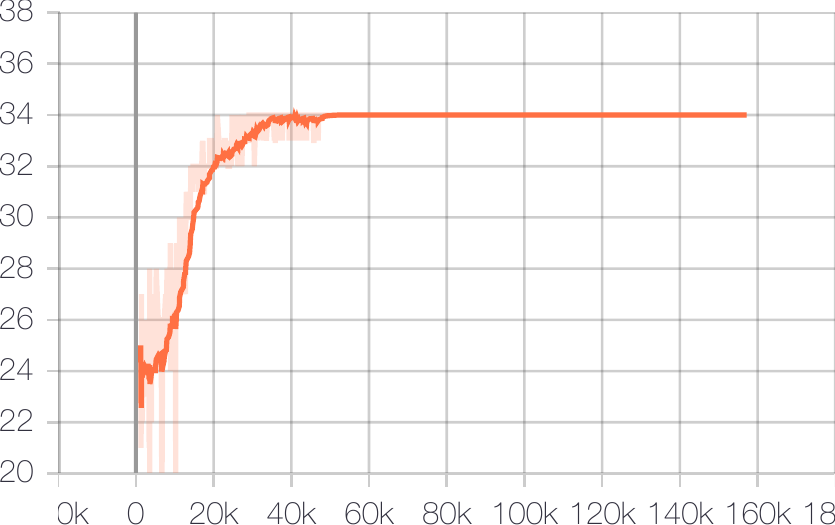}
    }
    \subfigure[frostbite]{
    \includegraphics[width=0.3\textwidth]{./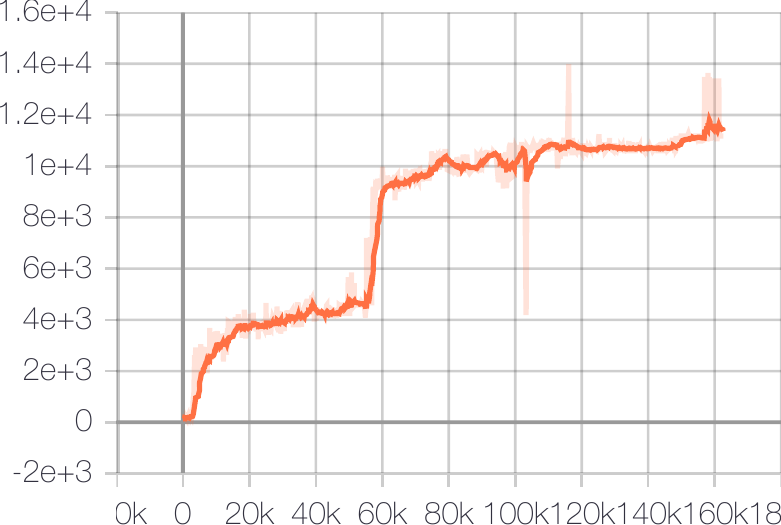}
    }
    \subfigure[gopher]{
    \includegraphics[width=0.3\textwidth]{./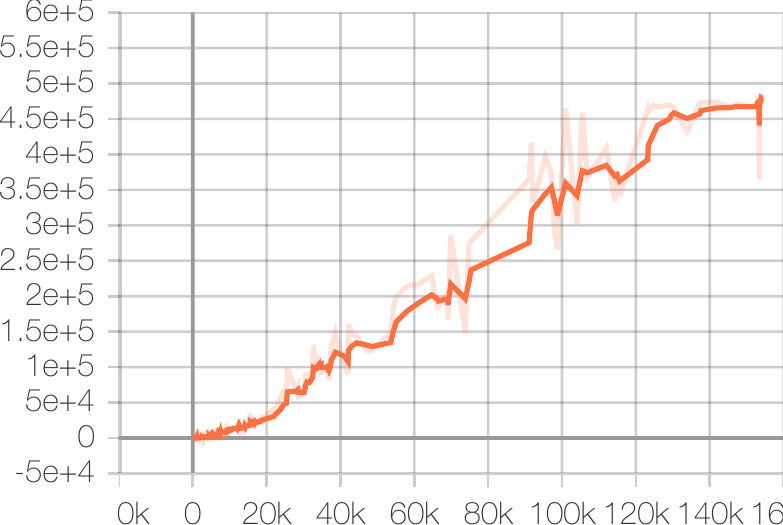}
    }
\end{figure}

\begin{figure}[!ht]
    \subfigure[gravitar]{
    \includegraphics[width=0.3\textwidth]{./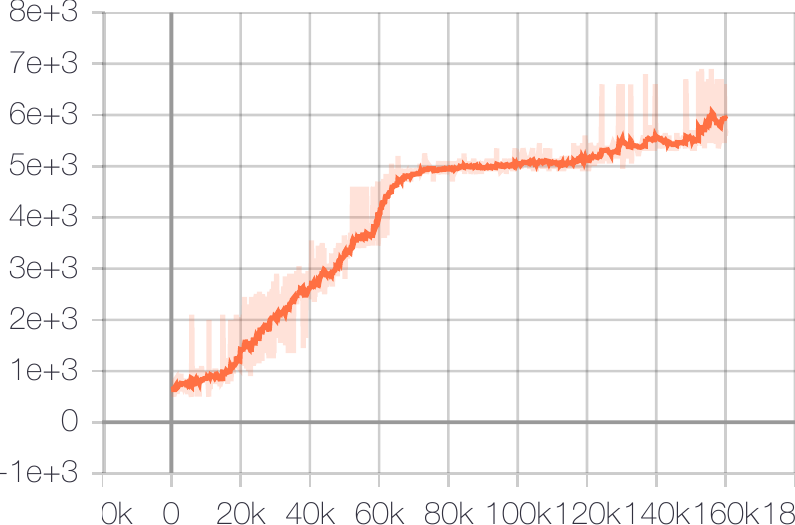}
    }
    \subfigure[hero]{
    \includegraphics[width=0.3\textwidth]{./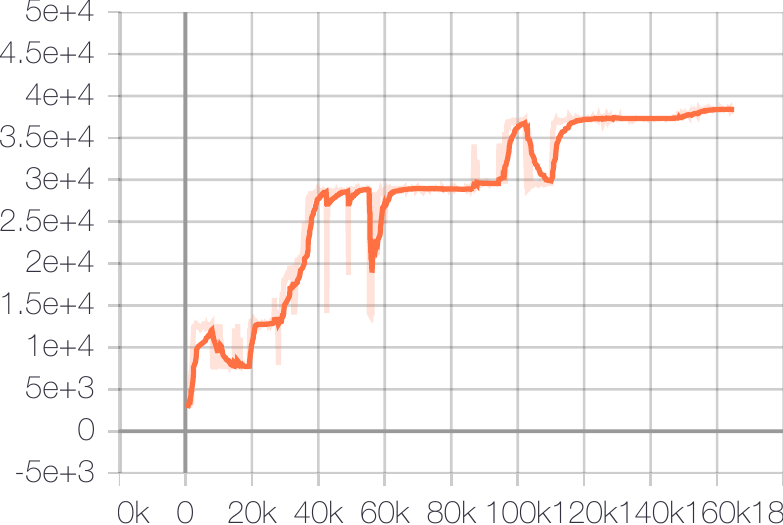}
    }
    \subfigure[ice\_hockey]{
    \includegraphics[width=0.3\textwidth]{./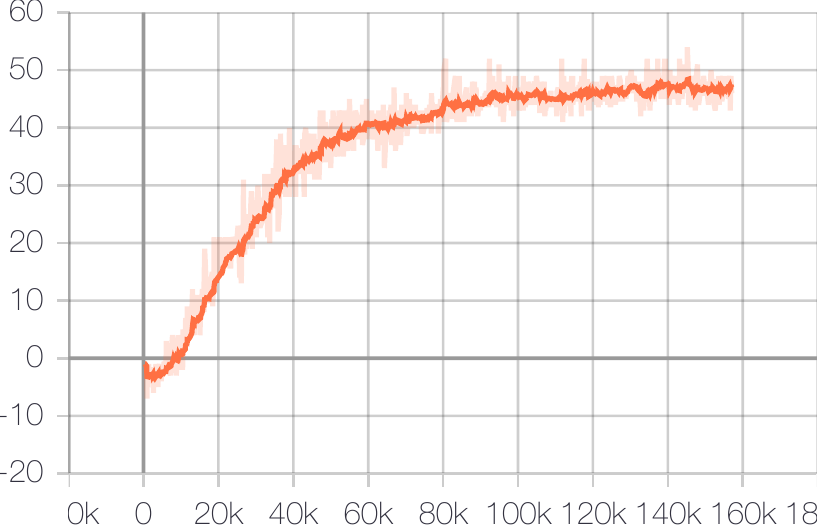}
    }
\end{figure}

\begin{figure}[!ht]
    \subfigure[jamesbond]{
    \includegraphics[width=0.3\textwidth]{./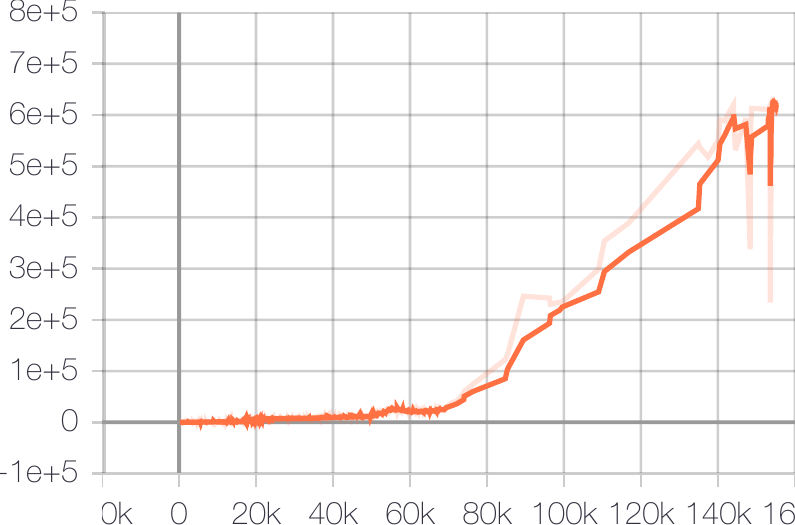}
    }
    \subfigure[kangaroo]{
    \includegraphics[width=0.3\textwidth]{./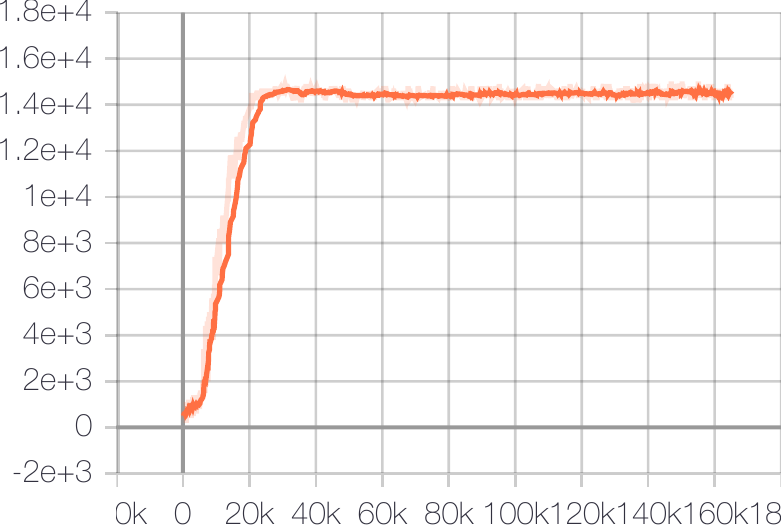}
    }
    \subfigure[krull]{
    \includegraphics[width=0.3\textwidth]{./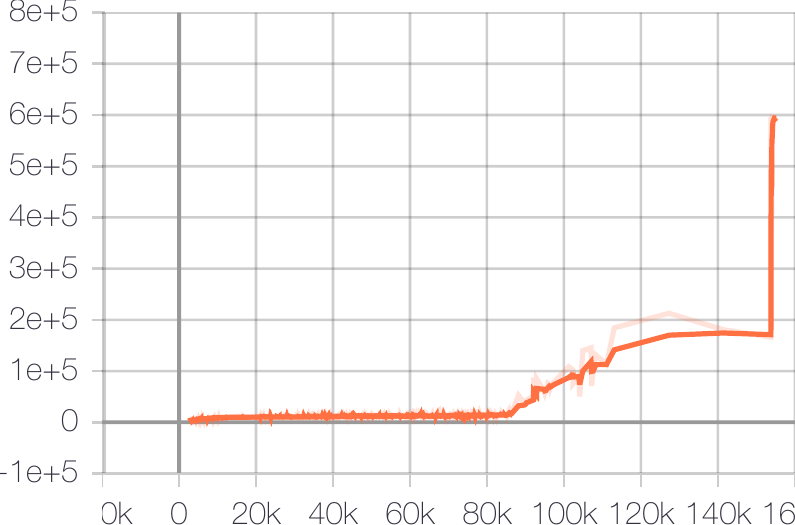}
    }
\end{figure}

\begin{figure}[!ht]
    \subfigure[kung\_fu\_master]{
    \includegraphics[width=0.3\textwidth]{./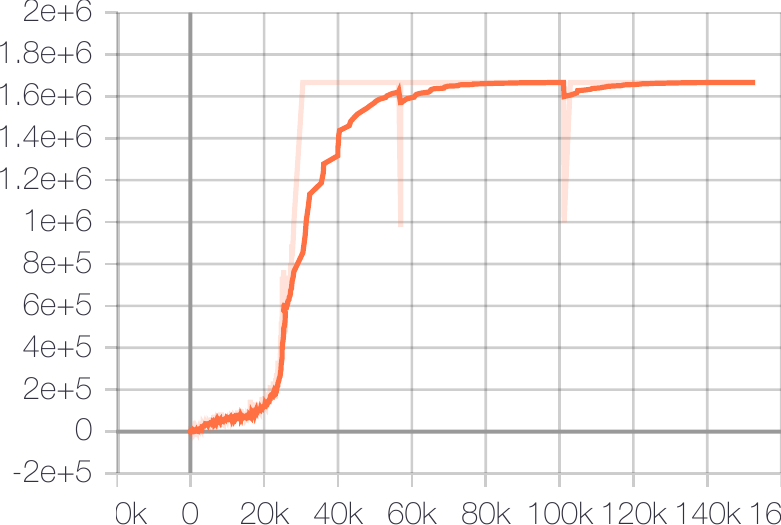}
    }
    \subfigure[montezuma\_revenge]{
     \includegraphics[width=0.3\textwidth]{./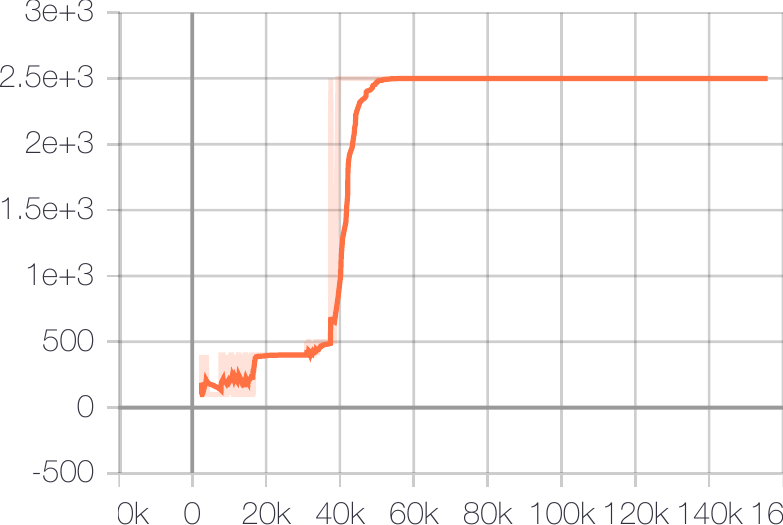}
    }
    \subfigure[ms\_pacman]{
    \includegraphics[width=0.3\textwidth]{./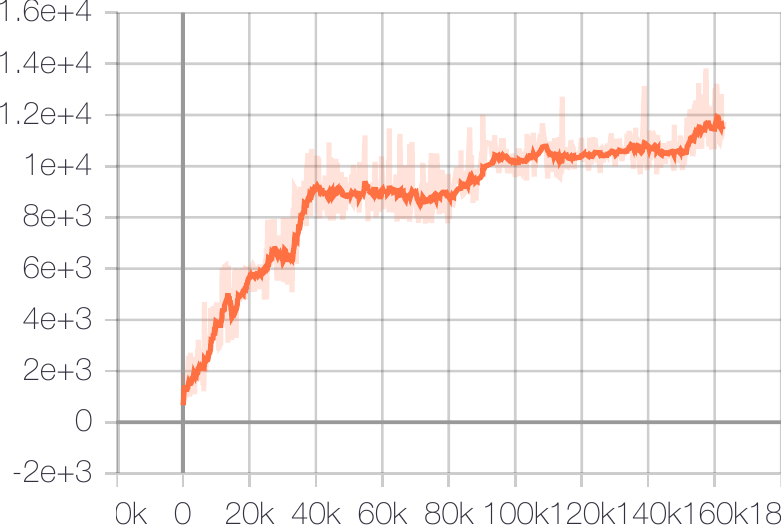}
    }
\end{figure}
 
\begin{figure}[!ht]
    \subfigure[name\_this\_game]{
    \includegraphics[width=0.3\textwidth]{./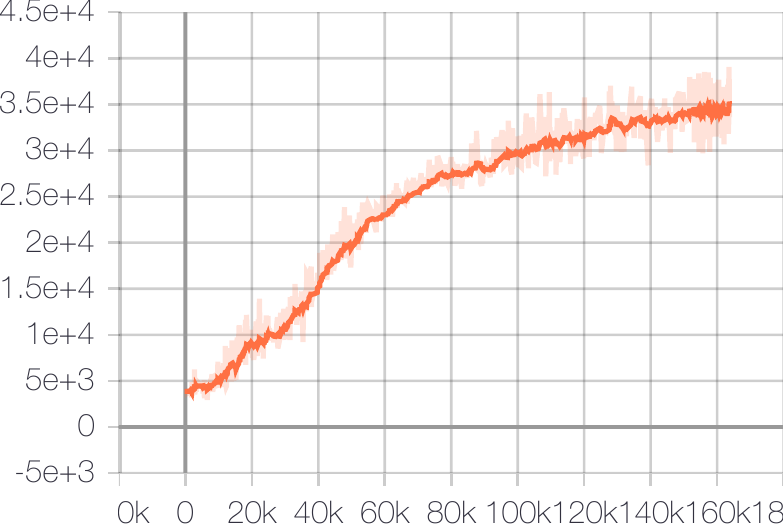}
    }
    \subfigure[phoenix]{
     \includegraphics[width=0.3\textwidth]{./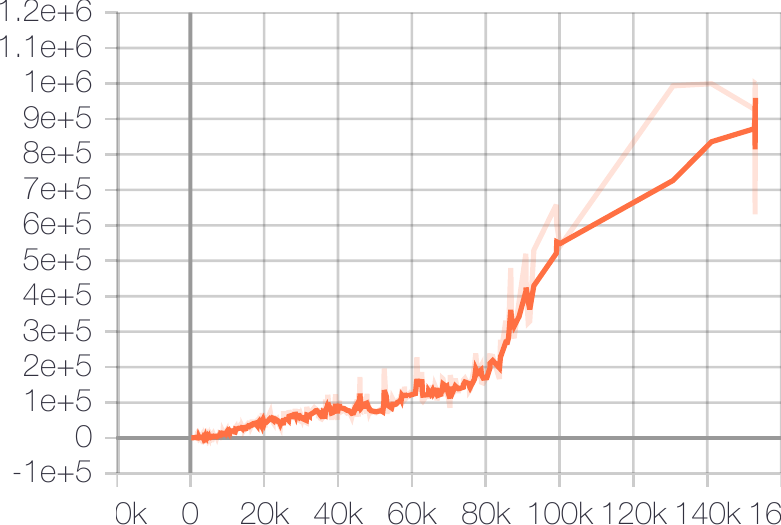}
    }
    \subfigure[pitfall]{
    \includegraphics[width=0.3\textwidth]{./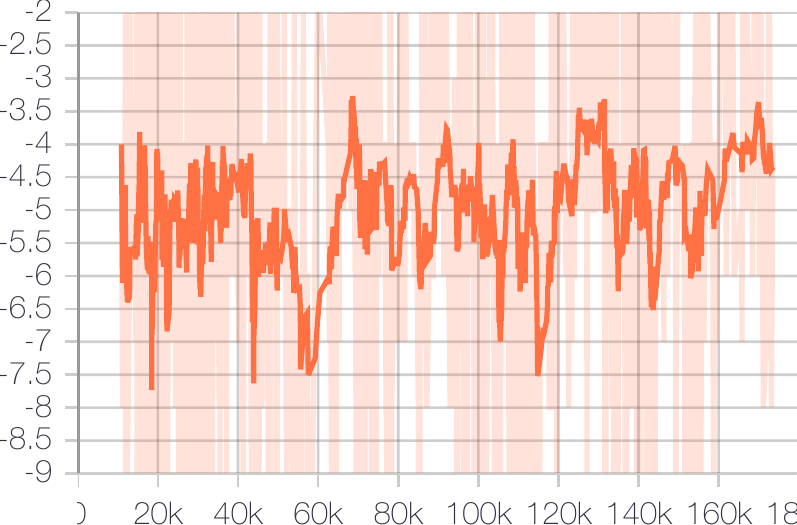}
    }
\end{figure}

\begin{figure}[!ht]
    \subfigure[pong]{
    \includegraphics[width=0.3\textwidth]{./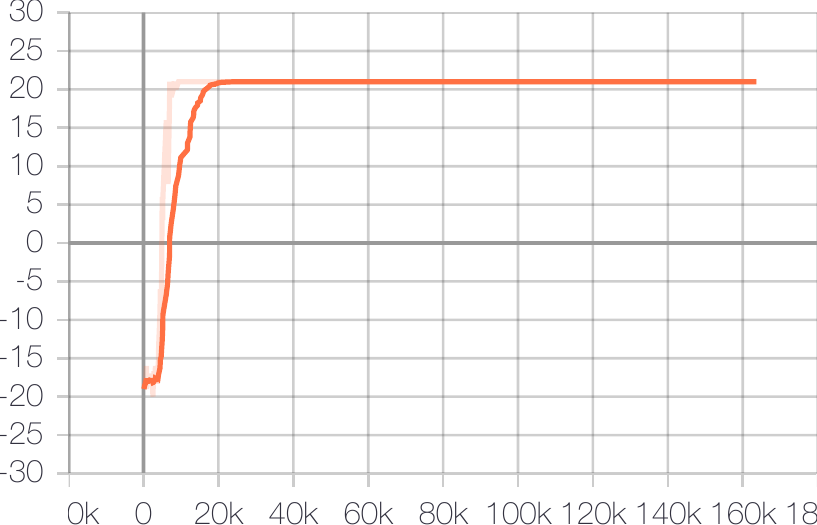}
    }
    \subfigure[private\_eye]{
    \includegraphics[width=0.3\textwidth]{./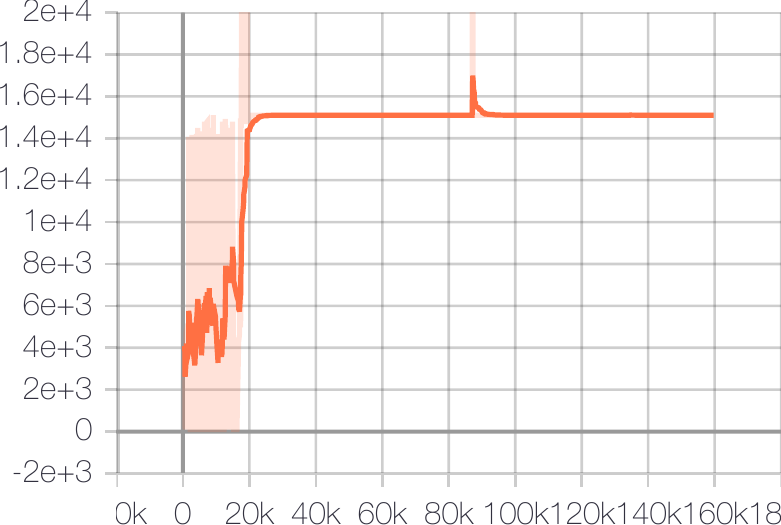}
    }
    \subfigure[qbert]{
     \includegraphics[width=0.3\textwidth]{./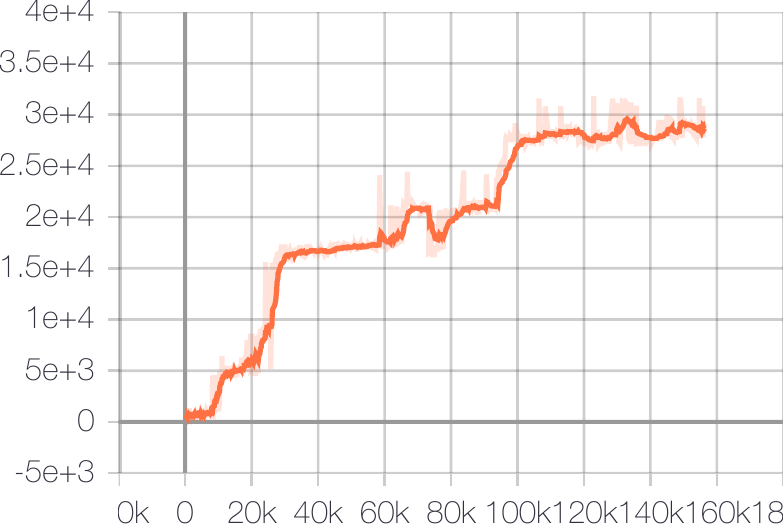}
    }
\end{figure}

\begin{figure}[!ht]
    \subfigure[riverraid]{
    \includegraphics[width=0.3\textwidth]{./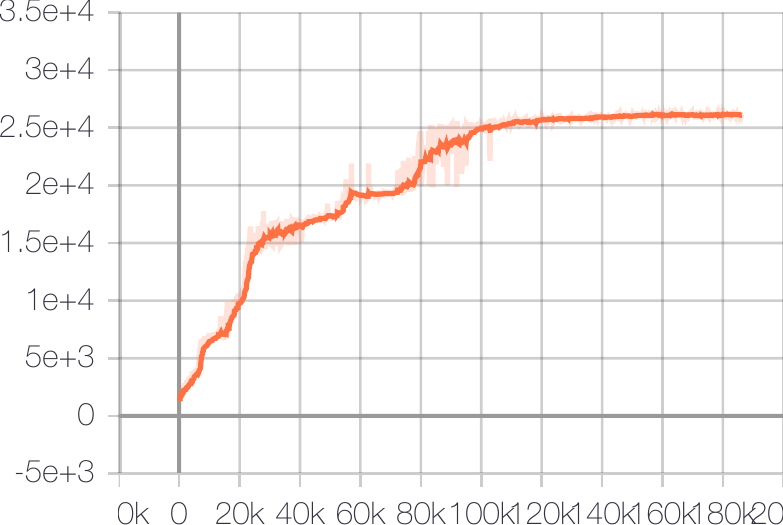}
    }
    \subfigure[road\_runner]{
    \includegraphics[width=0.3\textwidth]{./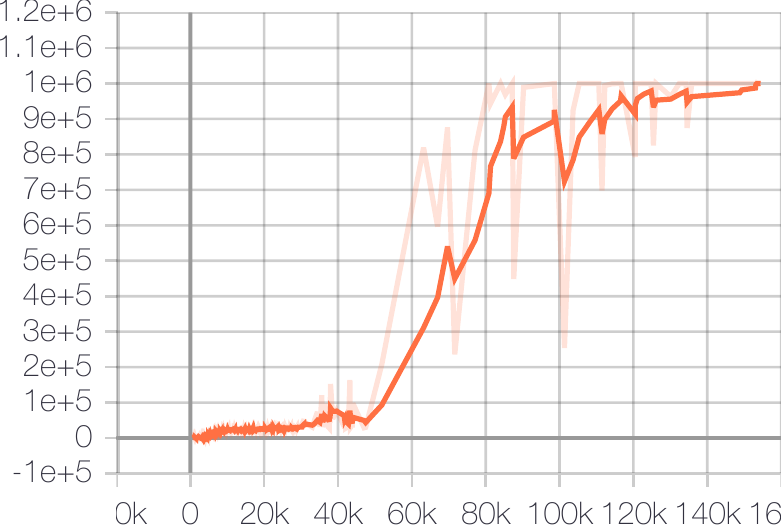}
    }
    \subfigure[robotank]{
    \includegraphics[width=0.3\textwidth]{./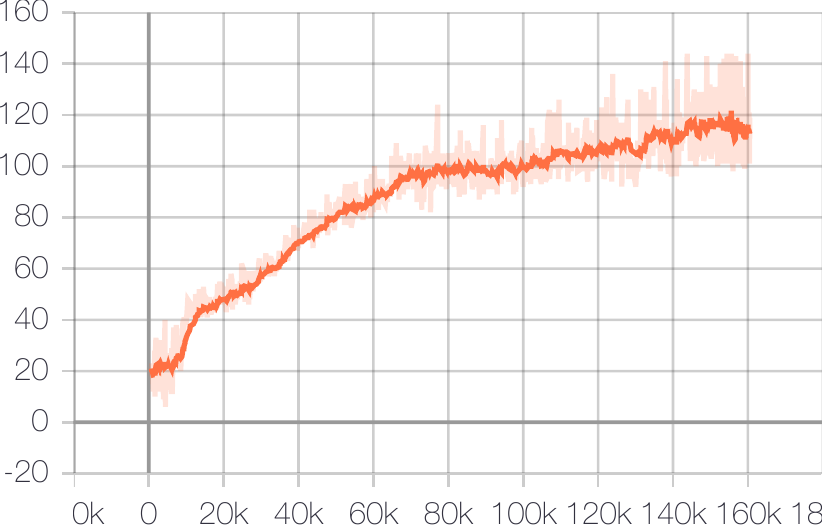}
    }
\end{figure}

\begin{figure}[!ht]
    \subfigure[seaquest]{
    \includegraphics[width=0.3\textwidth]{./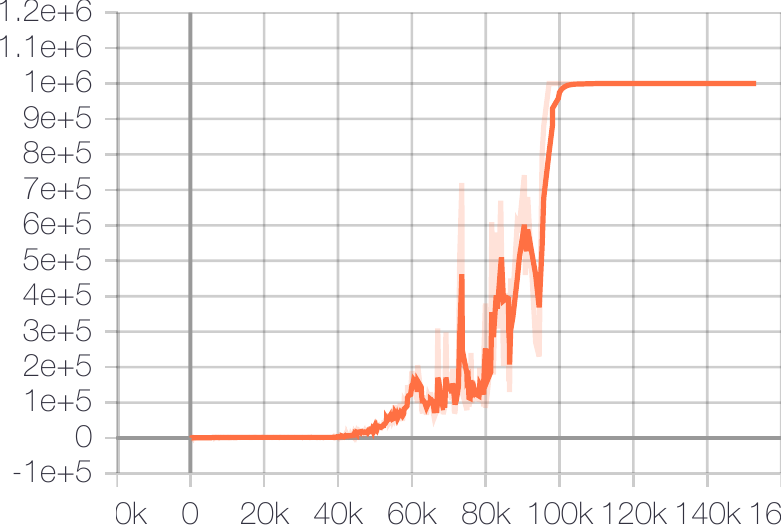}
    }
    \subfigure[skiing]{
    \includegraphics[width=0.3\textwidth]{./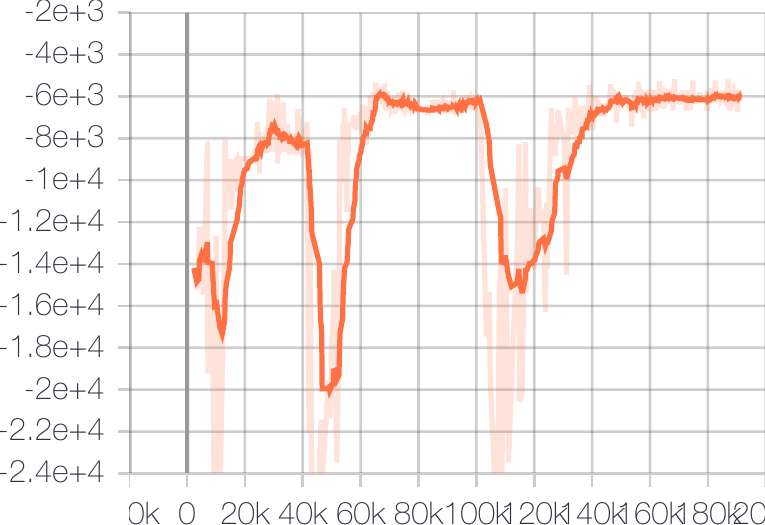}
    }
    \subfigure[solaris]{
    \includegraphics[width=0.3\textwidth]{./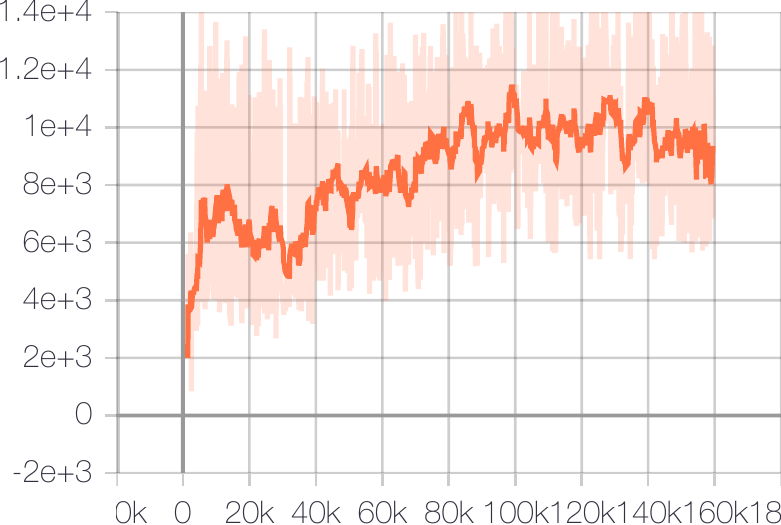}
    }
\end{figure}

\begin{figure}[!ht]
    \subfigure[space\_invaders]{
     \includegraphics[width=0.3\textwidth]{./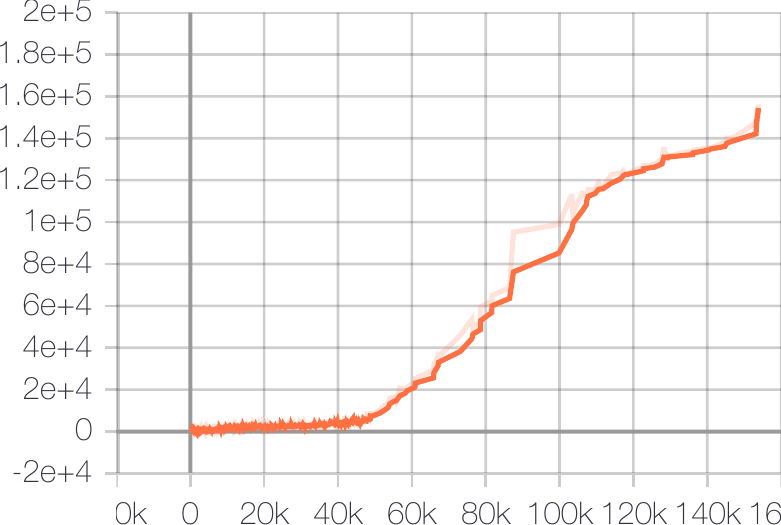}
    }
    \subfigure[star\_gunner]{
    \includegraphics[width=0.3\textwidth]{./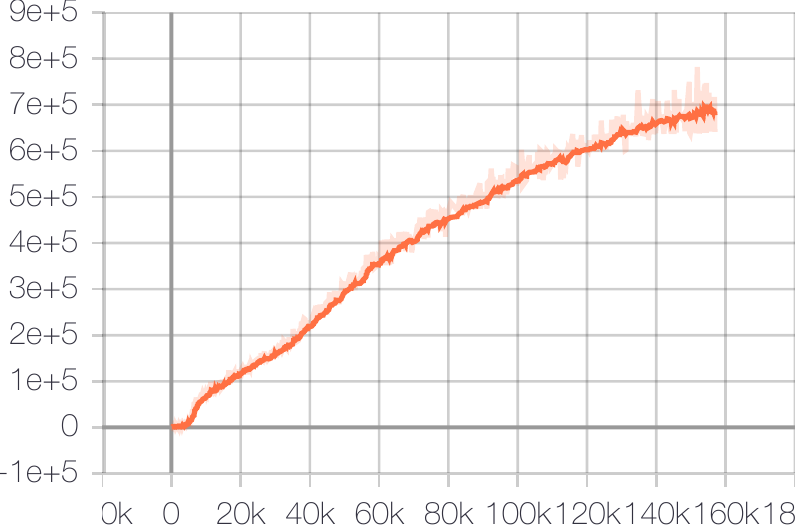}
    }
    \subfigure[surround]{
    \includegraphics[width=0.3\textwidth]{./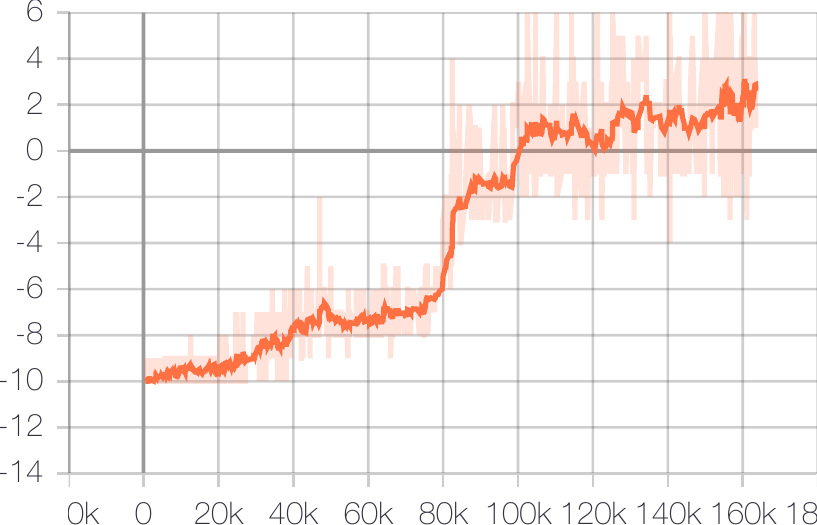}
    }
\end{figure}

\begin{figure}[!ht]
    \subfigure[tennis]{
    \includegraphics[width=0.3\textwidth]{./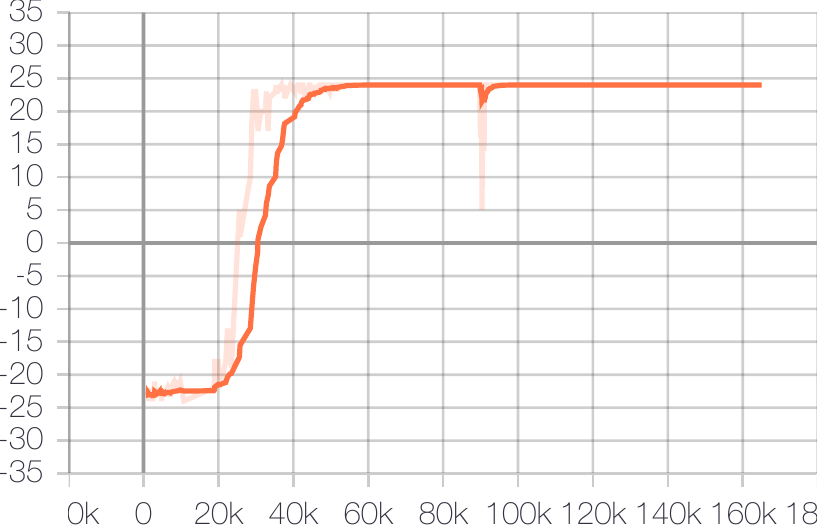}
    }
    \subfigure[time\_pilot]{
    \includegraphics[width=0.3\textwidth]{./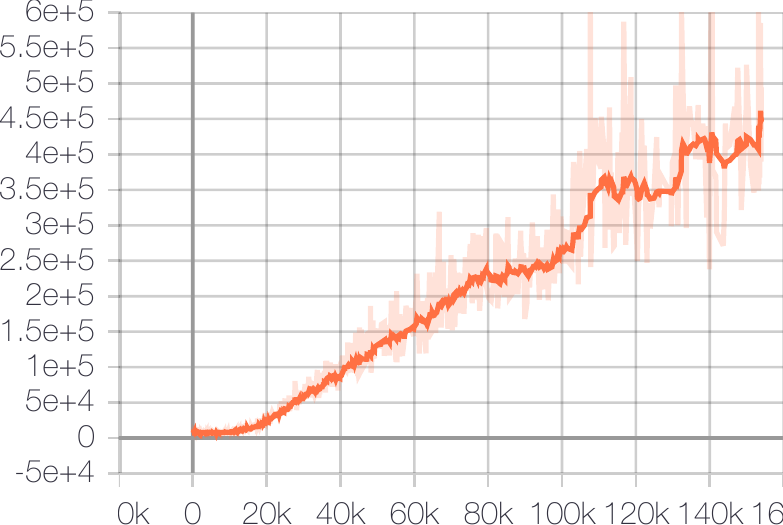}
    }
    \subfigure[tutankham]{
    \includegraphics[width=0.3\textwidth]{./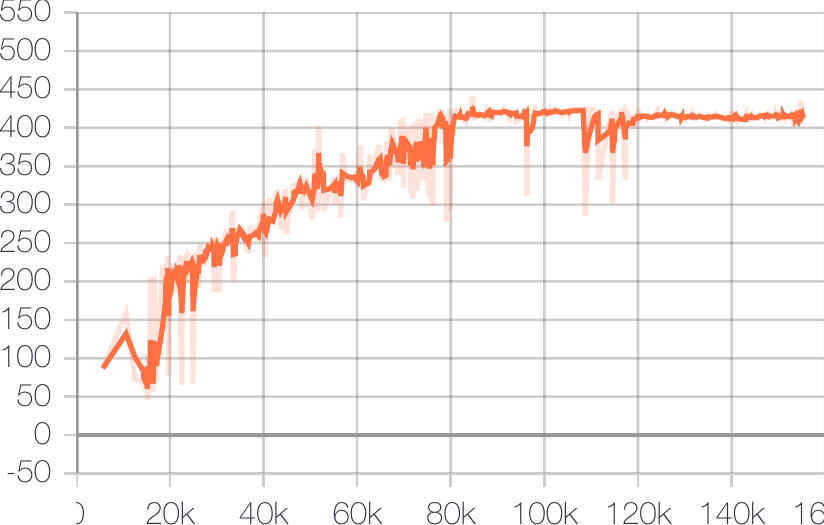}
    }
\end{figure}

\clearpage

\begin{figure}[!ht]
    \subfigure[up\_n\_down]{
    \includegraphics[width=0.3\textwidth]{./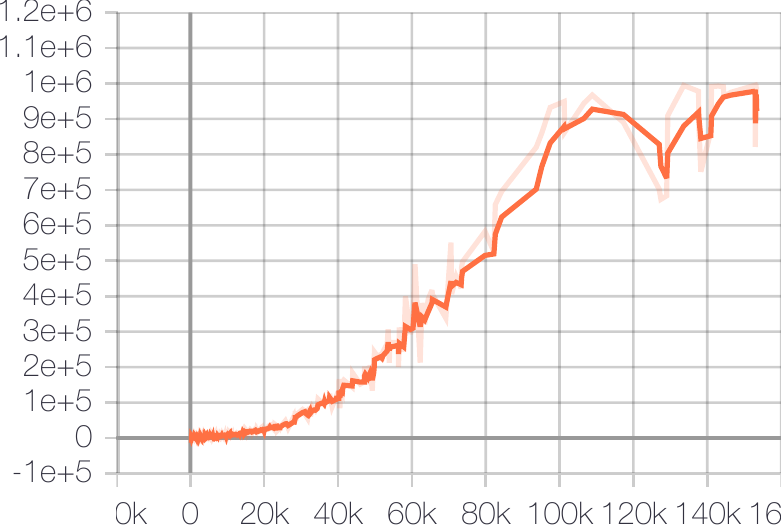}
    }
    \subfigure[venture]{
    \includegraphics[width=0.3\textwidth]{./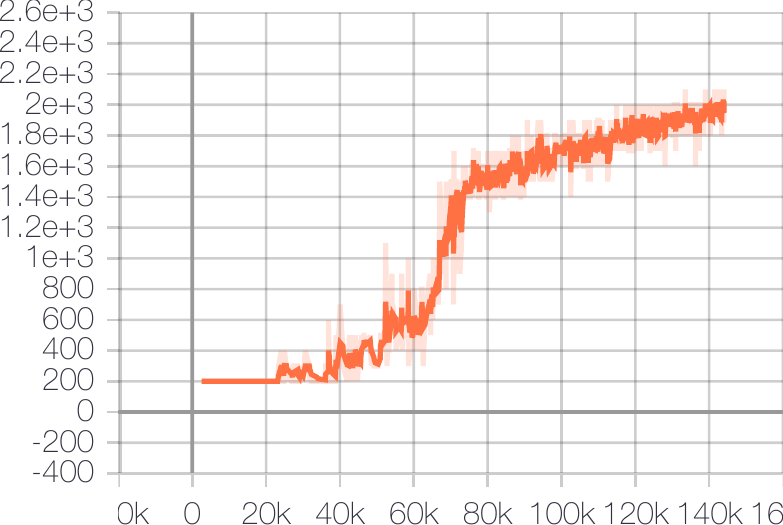}
    }
    \subfigure[video\_pinball]{
    \includegraphics[width=0.3\textwidth]{./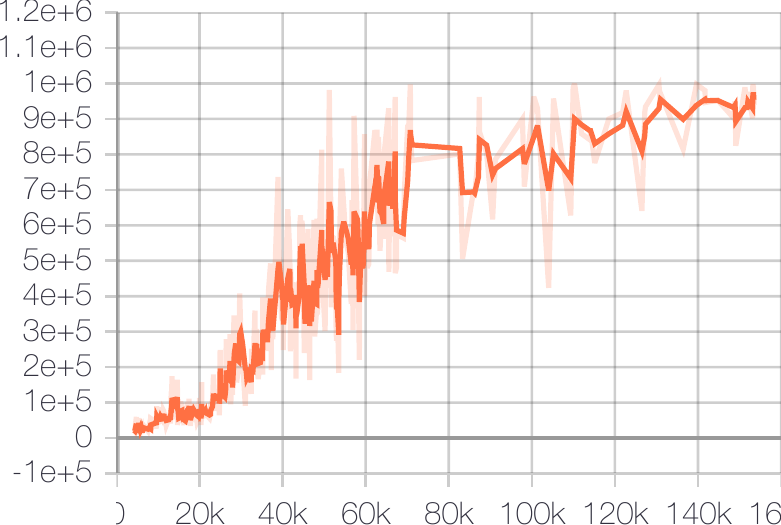}
    }
\end{figure}

\begin{figure}[!ht]
    \subfigure[wizard\_of\_wor]{
    \includegraphics[width=0.3\textwidth]{./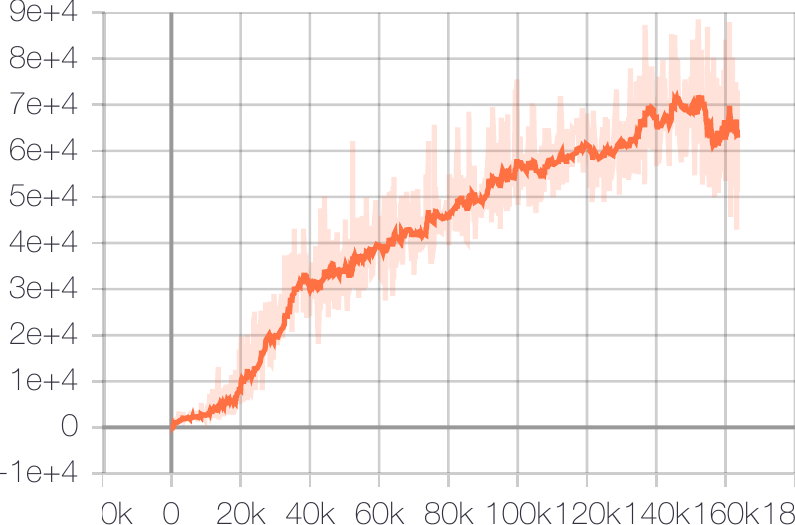}
    }
    \subfigure[yars\_revenge]{
    \includegraphics[width=0.3\textwidth]{./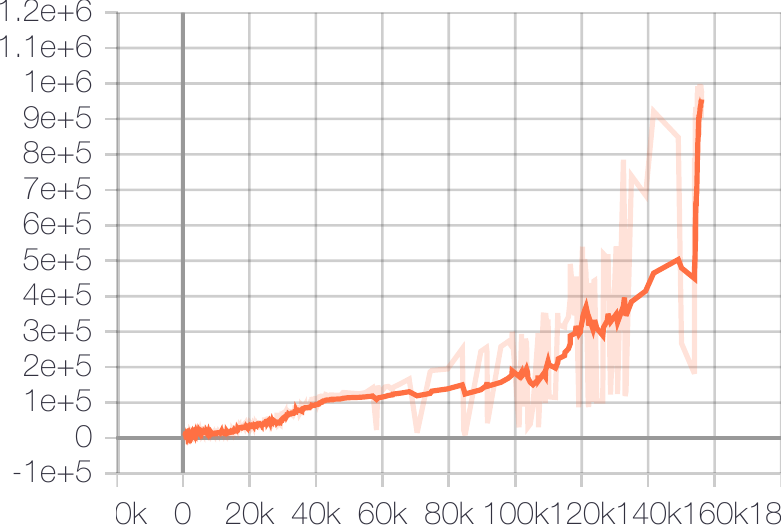}
    }
    \subfigure[zaxxon]{
    \includegraphics[width=0.3\textwidth]{./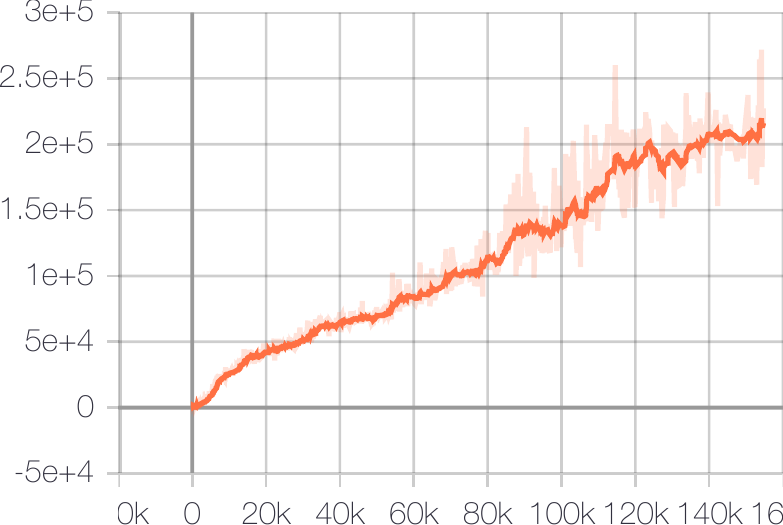}
    }
\end{figure}

\clearpage

\section{Ablation Study}
\label{Sec: appendix Ablation Study}

In this section, we firstly demonstrate the settings of our ablation studies. Then, we offer the t-SNE of three Atari games as a study case to further show the data richness among different capacities of the policy space via t-SNE.

\begin{table}[H]
\begin{center}
\caption{Summary of Algorithms of Ablation Study. The behavior policies are sampled from the policy space $\{\pi_{\theta_{\lambda}} | \lambda \in \Lambda\}$ which is parameterized by the policy network and indexed by the index set $\Lambda$ via a sampling distribution $P_{\Lambda}$. Wherein the sampling distribution is iteratively optimized via a data distribution optimization $\mathcal{E}$.}
\begin{tabular}{|c| c| c| c| c| c|}
\hline
\textbf{Name}               & \textbf{Category}          &$\pi_{\theta_{\lambda}}$       & $\Lambda$                                                  & $P_{\Lambda}^{(0)}$ & $\mathcal{E}$  \\
\hline
GDI-I$^{3}$                 &   GDI-I$^3$              & \makecell[c]{$\epsilon \cdot \operatorname{Softmax}\left(\frac{A_{\theta}}{\tau_{1}}\right)+$\\$(1-\epsilon) \cdot \operatorname{Softmax}\left(\frac{A_{\theta}}{\tau_{2}}\right)$} &$ \{\lambda| \lambda =(\epsilon,\tau_1,\tau_2)\}$  & Uniform              & MAB\\
\hline
GDI-H$^{3}$                 &   GDI-H$^3$              &  \makecell[c]{$\epsilon \cdot \operatorname{Softmax}\left(\frac{A_{\theta_1}}{\tau_{1}}\right)+$\\$(1-\epsilon) \cdot \operatorname{Softmax}\left(\frac{A_{\theta_2}}{\tau_{2}}\right)$}         &$ \{\lambda| \lambda =(\epsilon,\tau_1,\tau_2)\}$  & Uniform              & MAB\\
\hline
GDI-I$^0$ w/o  $\mathcal{E}$&   GDI-I$^0$              & \makecell[c]{$\epsilon \cdot \operatorname{Softmax}\left(\frac{A_{\theta}}{\tau_{1}}\right)+$\\$(1-\epsilon) \cdot \operatorname{Softmax}\left(\frac{A_{\theta}}{\tau_{2}}\right)$}   &$ \{\lambda| \lambda =(\epsilon,\tau_1,\tau_2)\}$  & One Point           & Identical Mapping\\
\hline
GDI-I$^1$         &   GDI-I$^1$                        & $  \operatorname{Softmax}\left(\frac{A_{\theta}}{\tau}\right)$ &$ \{\lambda| \lambda =(\tau)\}$                    & Uniform              & MAB\\
\hline
\end{tabular}
\end{center}
\label{tab:Summary of Algorithms of Ablation Study}
\end{table}

\begin{table}[H]
\begin{center}
\caption{Summary of the ablation  groups in the Ablation Study. The corresponding algorithms can see Tab. \ref{tab:Summary of Algorithms of Ablation Study}. We investigate the effects of several properties of GDI via ablating the Ablation Variables (e.g., removing the meta-controller from GDI-I$^3$, and explore the impact of the  meta-controller) and keeping the Control Variables (e.g., Hyperparameters) remains the same. }
\begin{tabular}{|c| c| c| c| c| c|}
\hline
\makecell[c]{\textbf{Group}\\ \textbf{Name}} & \makecell[c]{\textbf{Ablation}\\ \textbf{Variable}} & \makecell[c]{\textbf{Control} \\ \textbf{Variable}} &  \makecell[c]{\textbf{Corresponding} \\ \textbf{Algorithm}}& \makecell[c]{\textbf{Corresponding}\\ \textbf{Problem}} & \makecell[c]{\textbf{Corresponding}\\ \textbf{Results}} \\
\hline
GDI-I$^3$   &  N/A             & $\pi_{\theta}$, $\Lambda$ , $\mathcal{E}$ & GDI-I$^3$  & Baseline Group & N/A\\
\hline
w/o $\mathcal{E}$   &  $\mathcal{E}$             & $\Lambda$, $\pi_{\theta}$ & GDI-I$^0$ w/o  $\mathcal{E}$ & \makecell[c]{Exploration-Exploitation \\ Trade-off} & Fig. \ref{fig:ablation study}\\
\hline
GDI-I$^1$   &  $\Lambda$             & $\pi_{\theta}$, $\mathcal{E}$ & GDI-I$^1$  & \makecell[c]{Data Richness} & \makecell[c]{
Fig. \ref{fig:ablation study}\\
Fig. \ref{Fig: t-SNE of Seaquest}\\
Fig. \ref{Fig: t-SNE of ChopperCommand}\\
Fig. \ref{Fig: t-SNE of Krull}\\
Fig. \ref{Fig: Overview of t-SNE in Atari games.}\\
}\\
\hline
GDI-H$^3$   &  $\pi_{\theta}$             & $\Lambda$ , $\mathcal{E}$ & GDI-H$^3$  & \makecell[c]{Data Richness}& \makecell[c]{
Fig. \ref{fig:ablation study}\\
Fig. \ref{Fig: t-SNE of Krull}\\
Fig. \ref{Fig: Overview of t-SNE in Atari games.}\\
}\\
\hline
\end{tabular}
\end{center}
\label{tab:Summary of the ablation experimental groups in Ablation Study}
\end{table}

\subsection{Ablation Study Design}
\label{app: Ablation Study Design}

To prove the effectiveness of capacity and diversity control and the data distribution optimization operator $\mathcal{E}$. All the implemented algorithms in the ablation study have been summarized in Tab. \ref{tab:Summary of Algorithms of Ablation Study}.

 We have summarized all the ablation experimental groups of the ablation study in Tab. \ref{tab:Summary of the ablation experimental groups in Ablation Study}. The operator $\mathcal{E}$ is achieved with MAB (see App. \ref{Sec: appendix MAB}).The operator $\mathcal{T}$ is achieved with Vtrace, Retrace and policy gradient. Except for the Control Variable listed in the Tab. \ref{tab:Summary of the ablation experimental groups in Ablation Study}, other settings and the shared hyperparameters remain the same in all ablation  groups. The hyperparameters can see App. \ref{Sec: appendix hyperparameters}.

\subsection{t-SNE}
\label{app: tsne}
In all the t-SNE, we mark the state generated by GDI-I$^3$ as A$_i$ and mark the state generated by GDI-I$^1$ as B$_i$, where i = 1, 2, 3 represents three stages of the training process.

\setcounter{subfigure}{0}
\begin{figure}[!ht]
	\begin{center}
	    \subfigure[Early stage of GDI-I$^3$]{
		\includegraphics[width=0.4\textwidth,height=0.25\textheight]{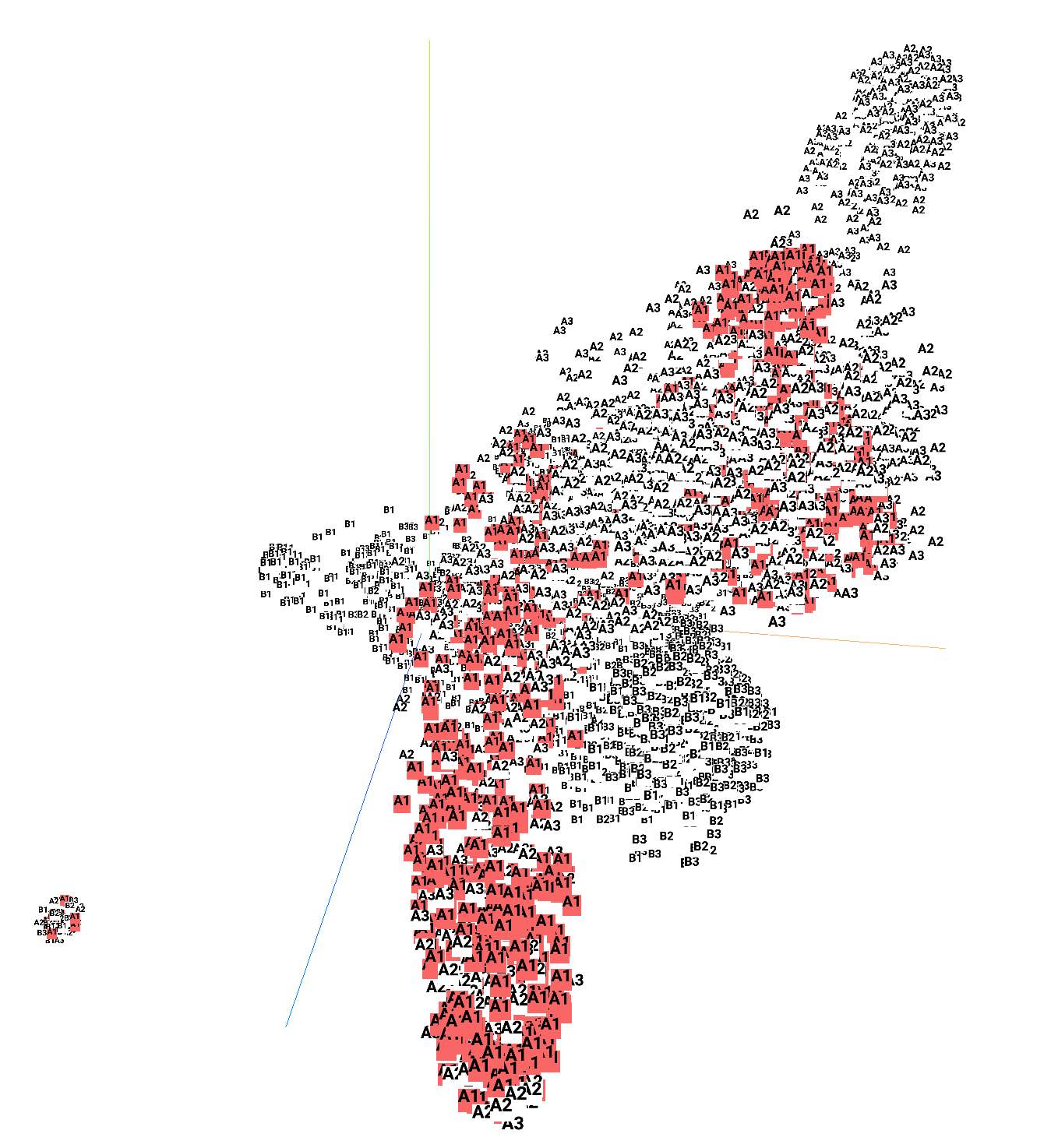}
	   }
	\subfigure[Early stage of GDI-I$^1$]{
		\includegraphics[width=0.4\textwidth,height=0.25\textheight]{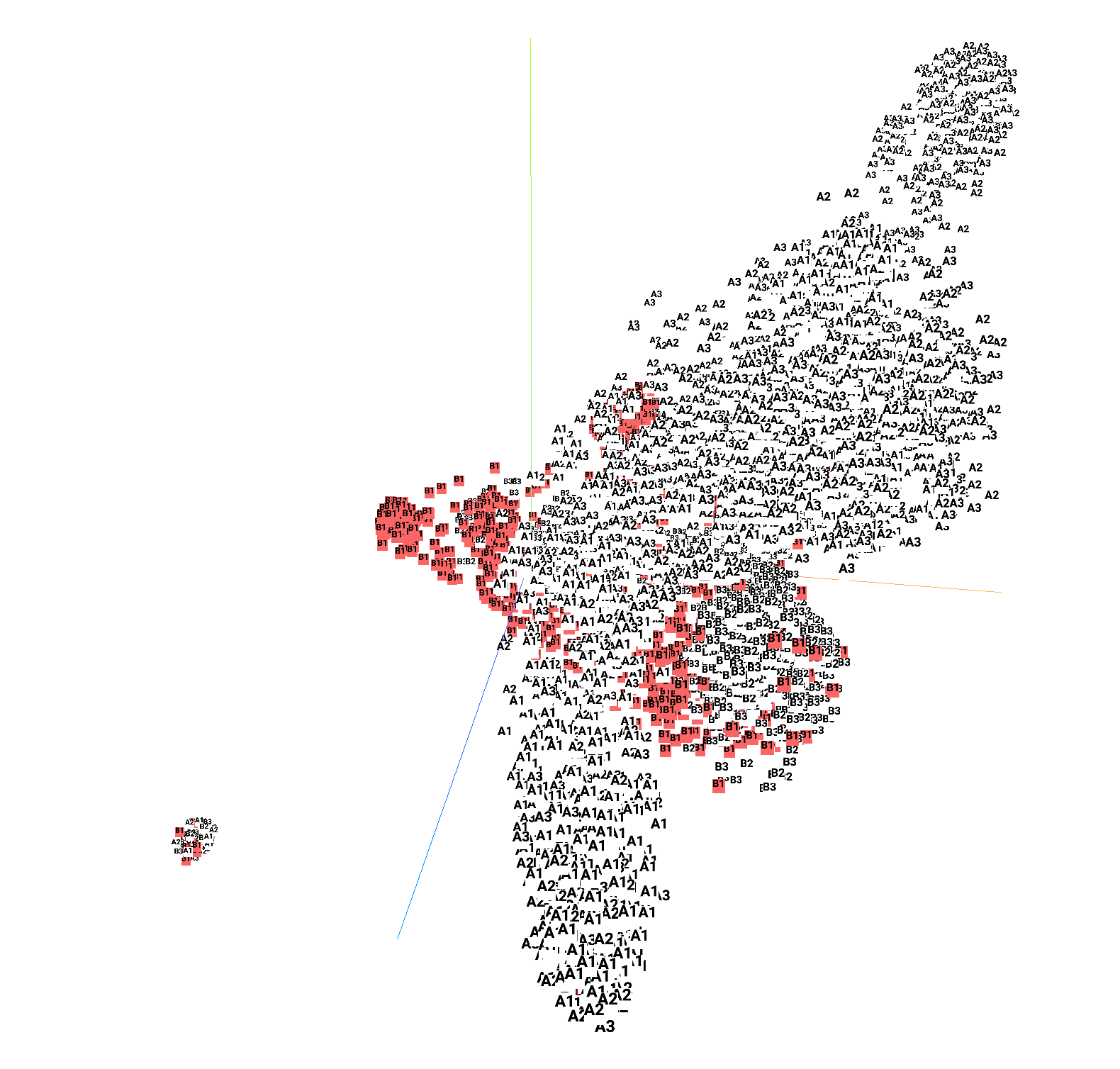}
	}
	
	\subfigure[Middle stage of GDI-I$^3$]{
		\includegraphics[width=0.4\textwidth,height=0.25\textheight]{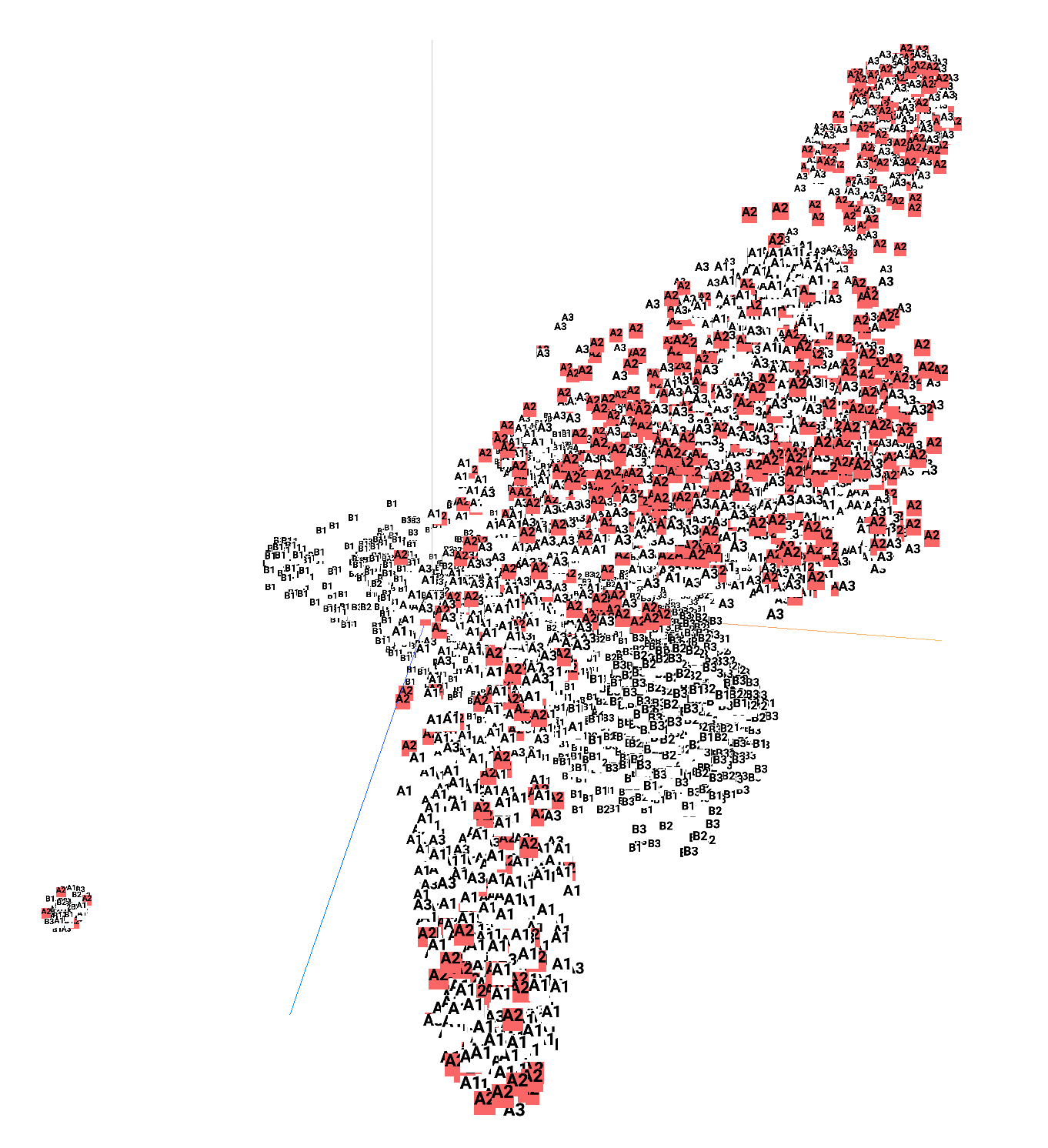}
	}
	\subfigure[Middle stage of GDI-I$^1$]{
		\includegraphics[width=0.4\textwidth,height=0.25\textheight]{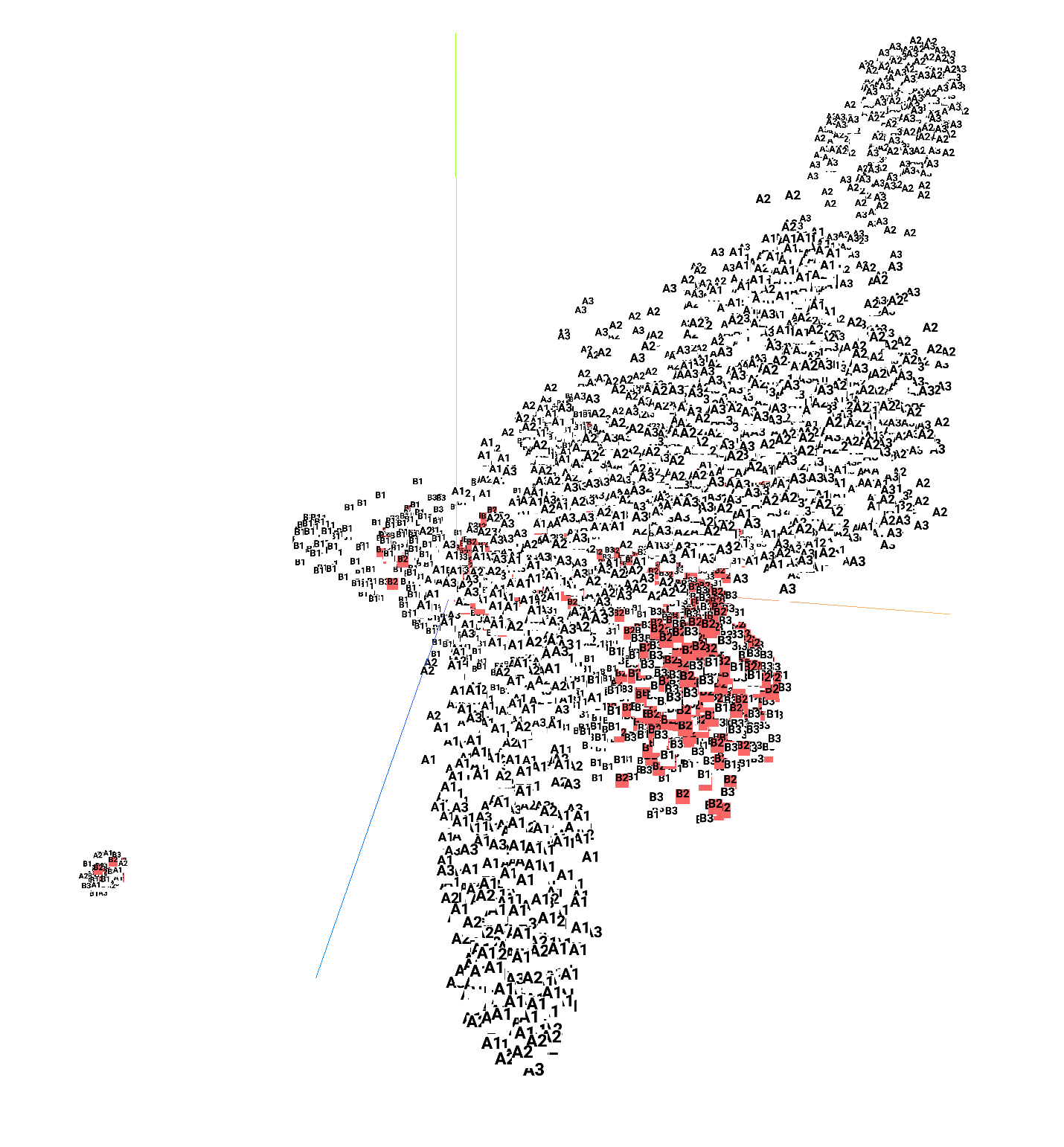}
	}
	
	\subfigure[Later stage of GDI-I$^3$]{
		\includegraphics[width=0.4\textwidth,height=0.25\textheight]{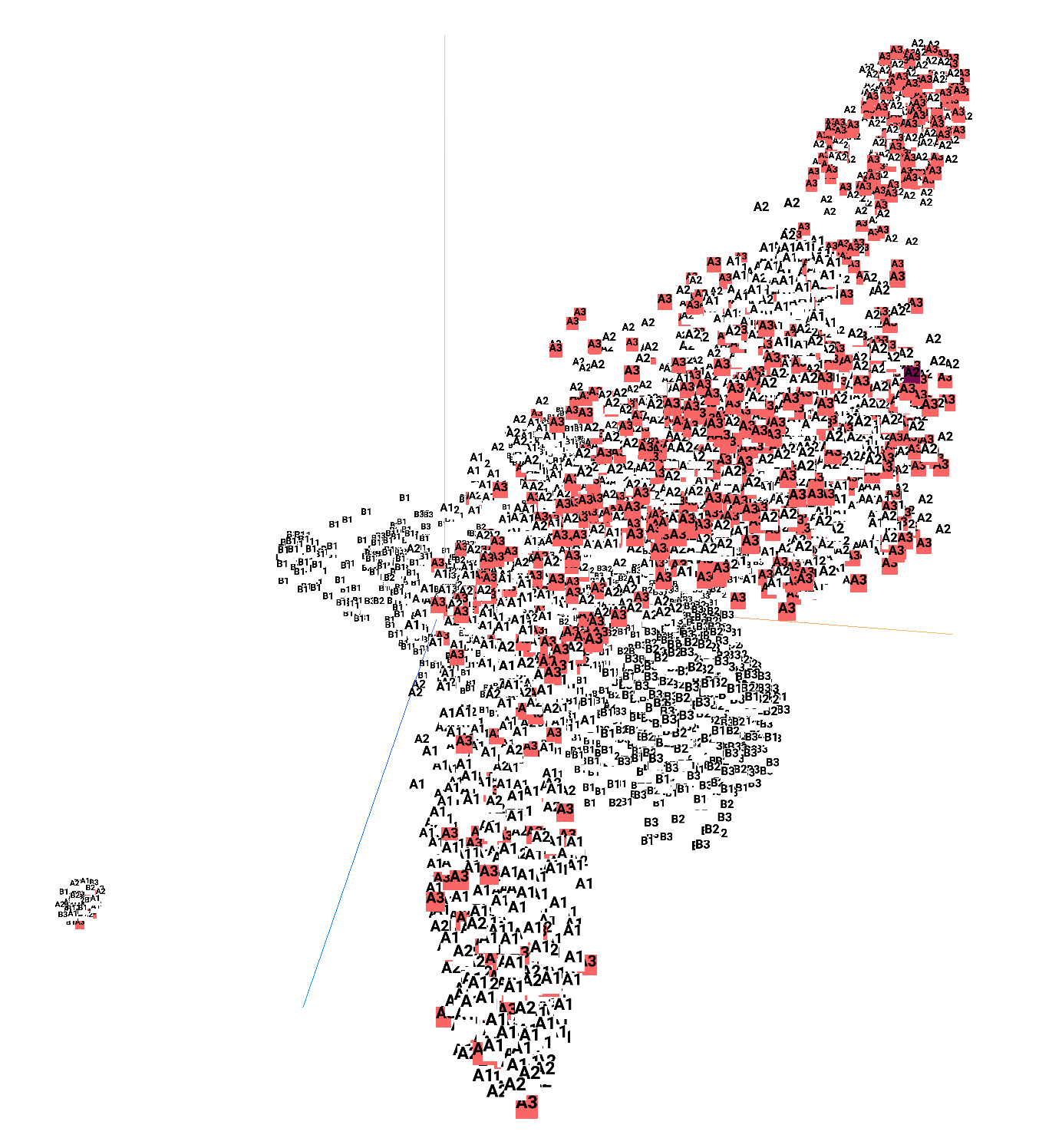}
	}
	\subfigure[Later stage of GDI-I$^1$]{
		\includegraphics[width=0.4\textwidth,height=0.25\textheight]{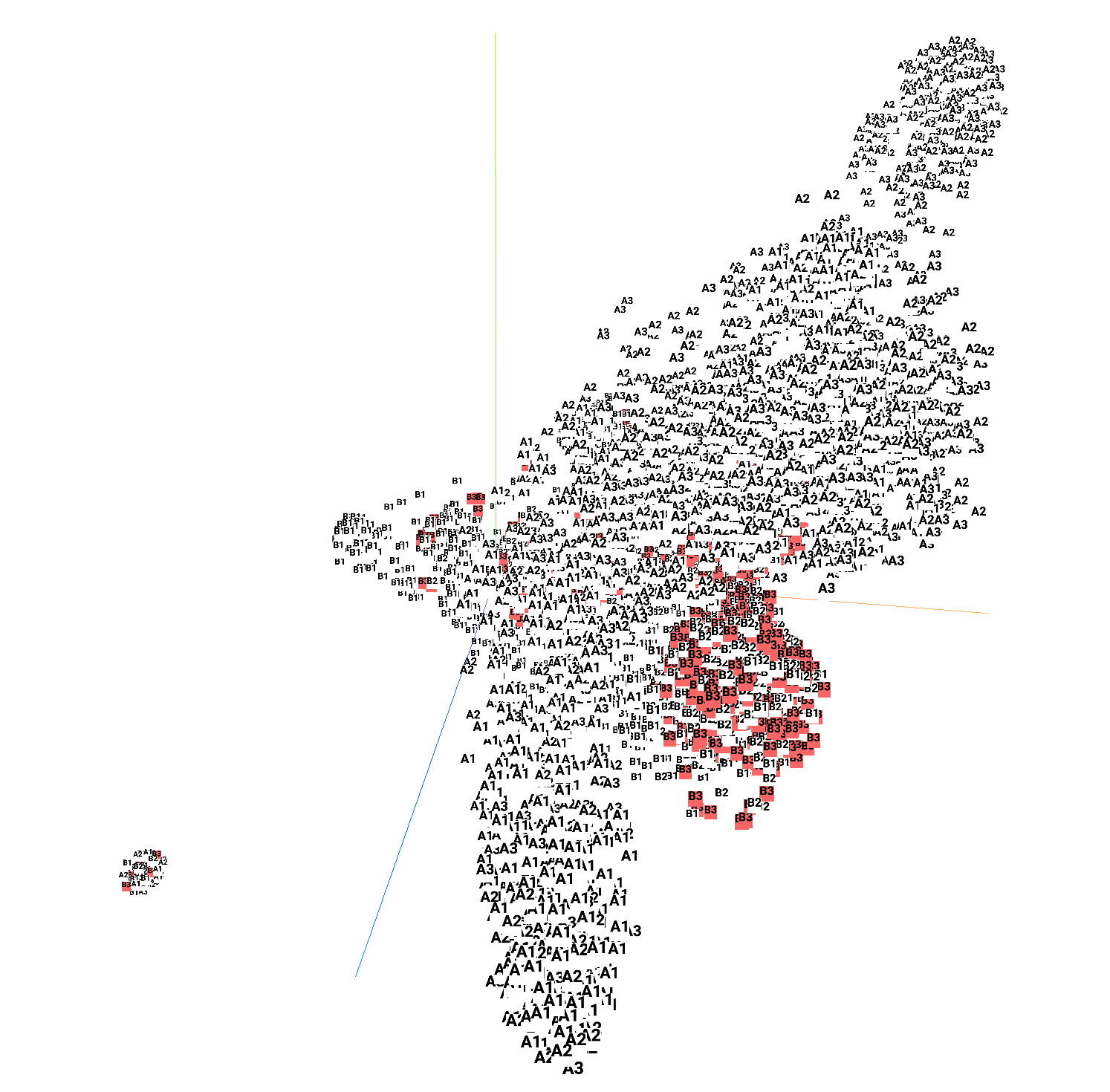}
	}
	\end{center}
	\caption{t-SNE of Seaquest. 
t-SNE is drawn from 6k states.
	We sample 1k states from each stage of GDI-I$^3$ and GDI-I$^1$.
	We highlight 1k states of each stage of GDI-I$^3$ and GDI-I$^1$.}
	\label{Fig: t-SNE of Seaquest}
\end{figure}

\begin{figure}[!ht]
	\begin{center}
	    \subfigure[Early stage of GDI-I$^3$]{
		\includegraphics[width=0.4\textwidth]{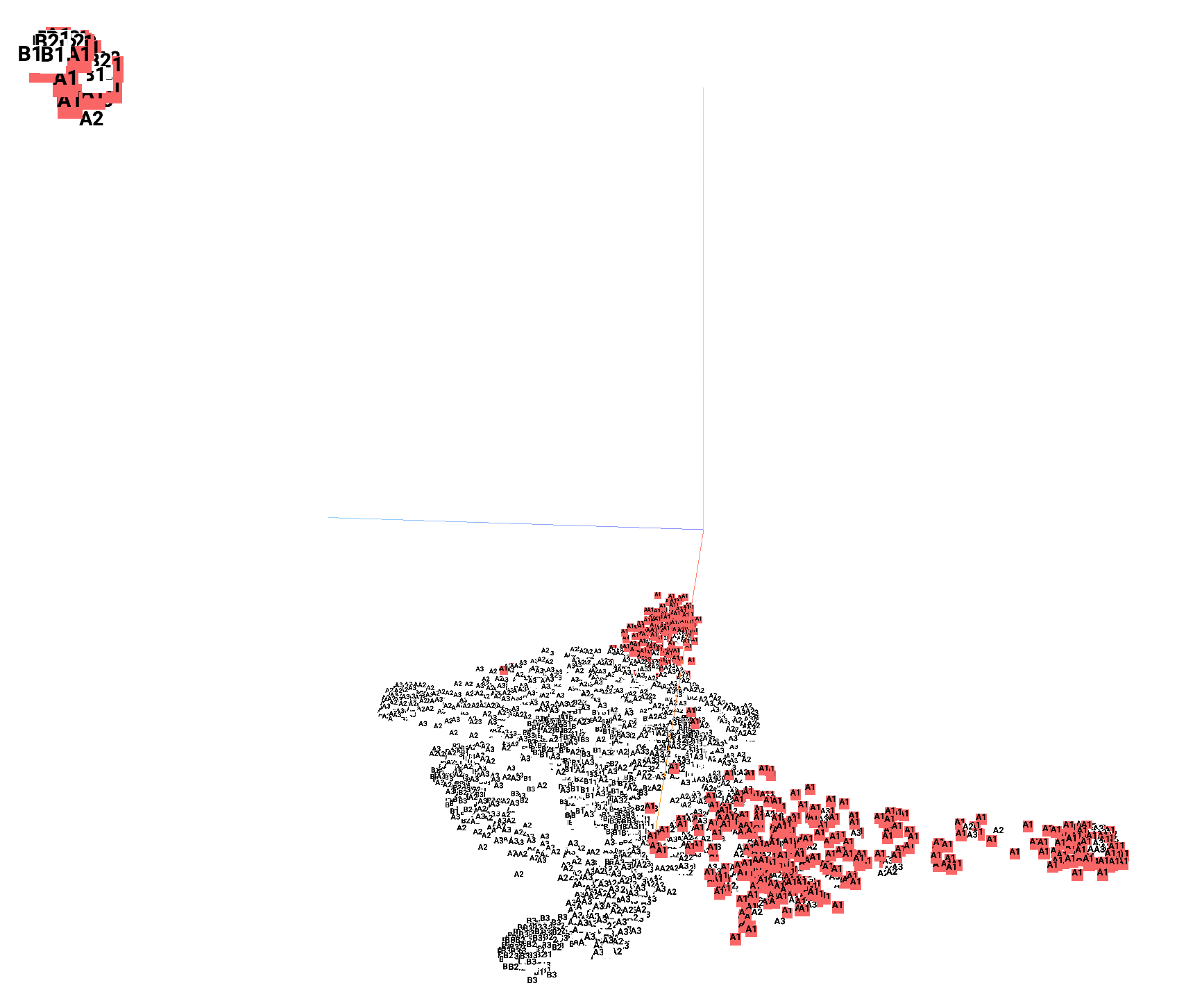}
	   }
	\subfigure[Early stage of GDI-I$^1$]{
		\includegraphics[width=0.4\textwidth]{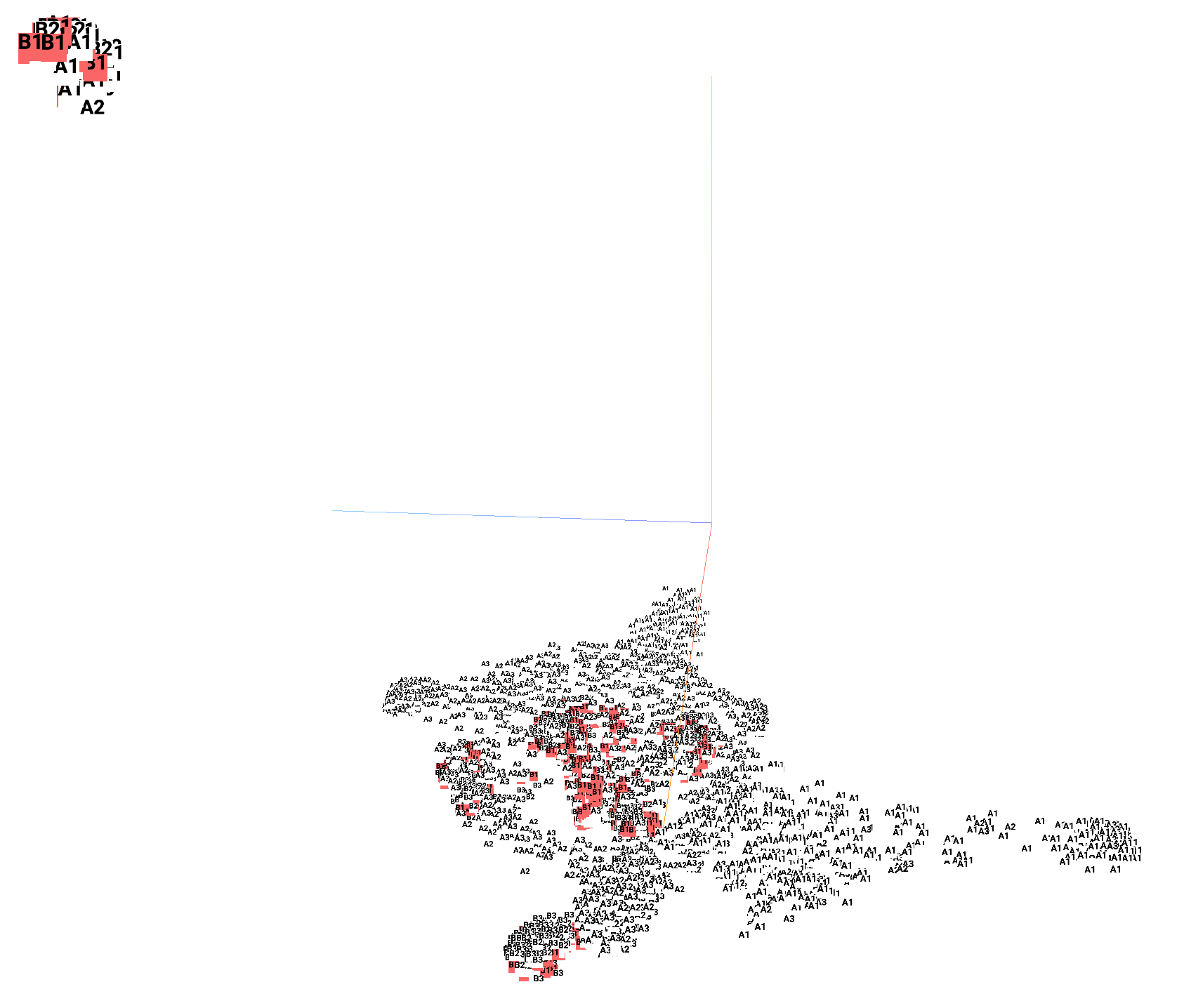}
	}
	
	\subfigure[Middle stage of GDI-I$^3$]{
		\includegraphics[width=0.4\textwidth]{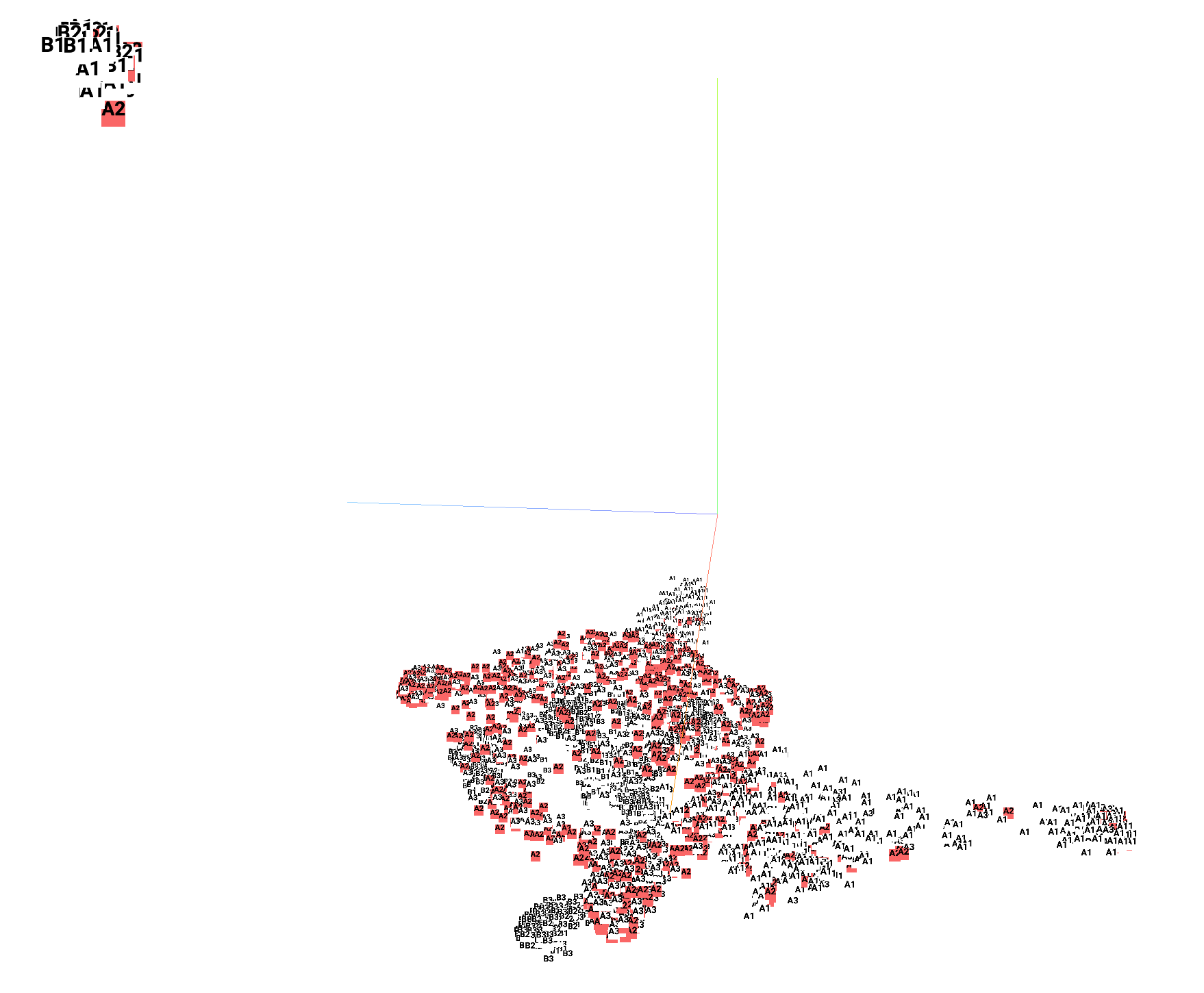}
	}
	\subfigure[Middle stage of GDI-I$^1$]{
		\includegraphics[width=0.4\textwidth]{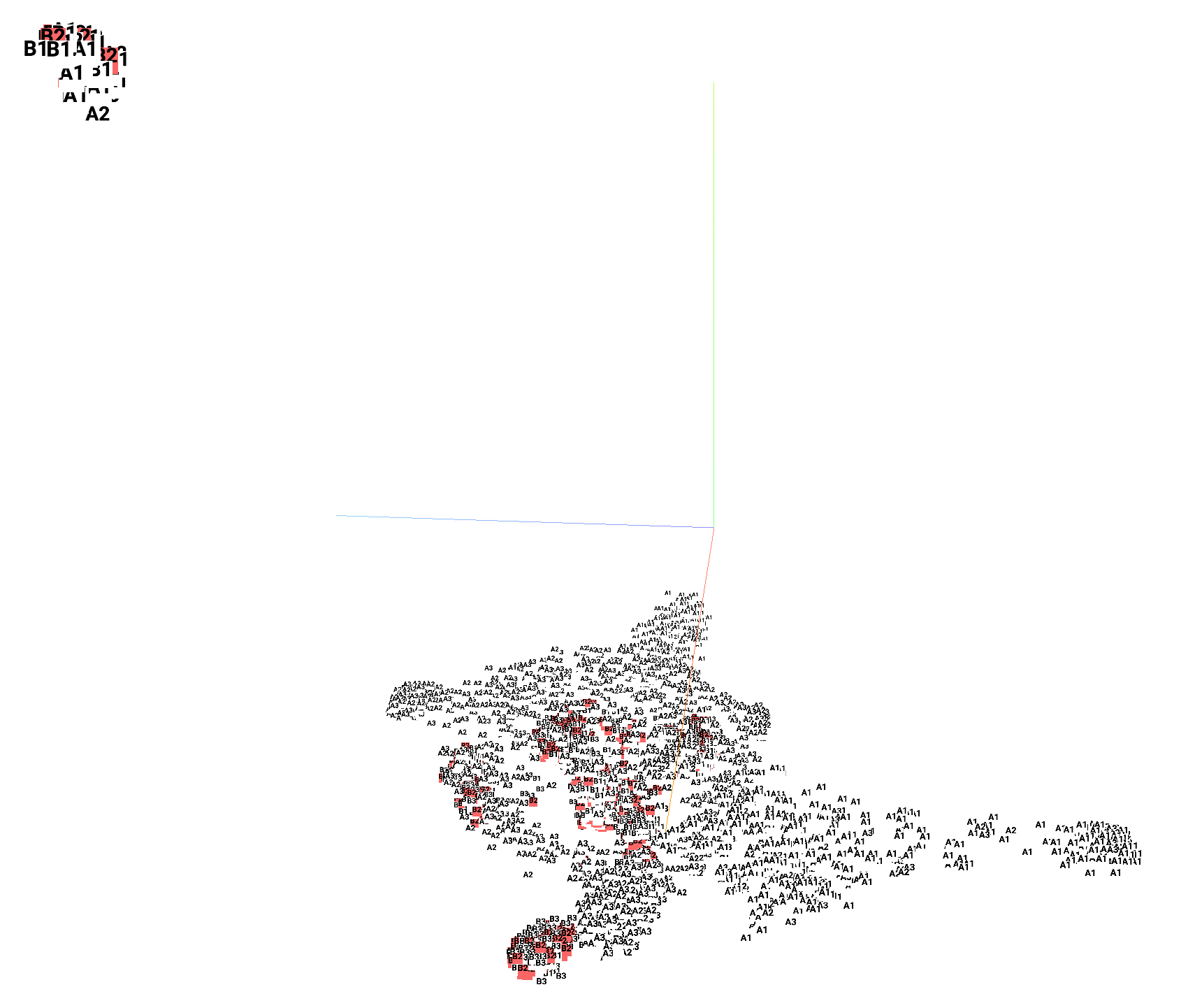}
	}
	
	\subfigure[Later stage of GDI-I$^3$]{
		\includegraphics[width=0.4\textwidth]{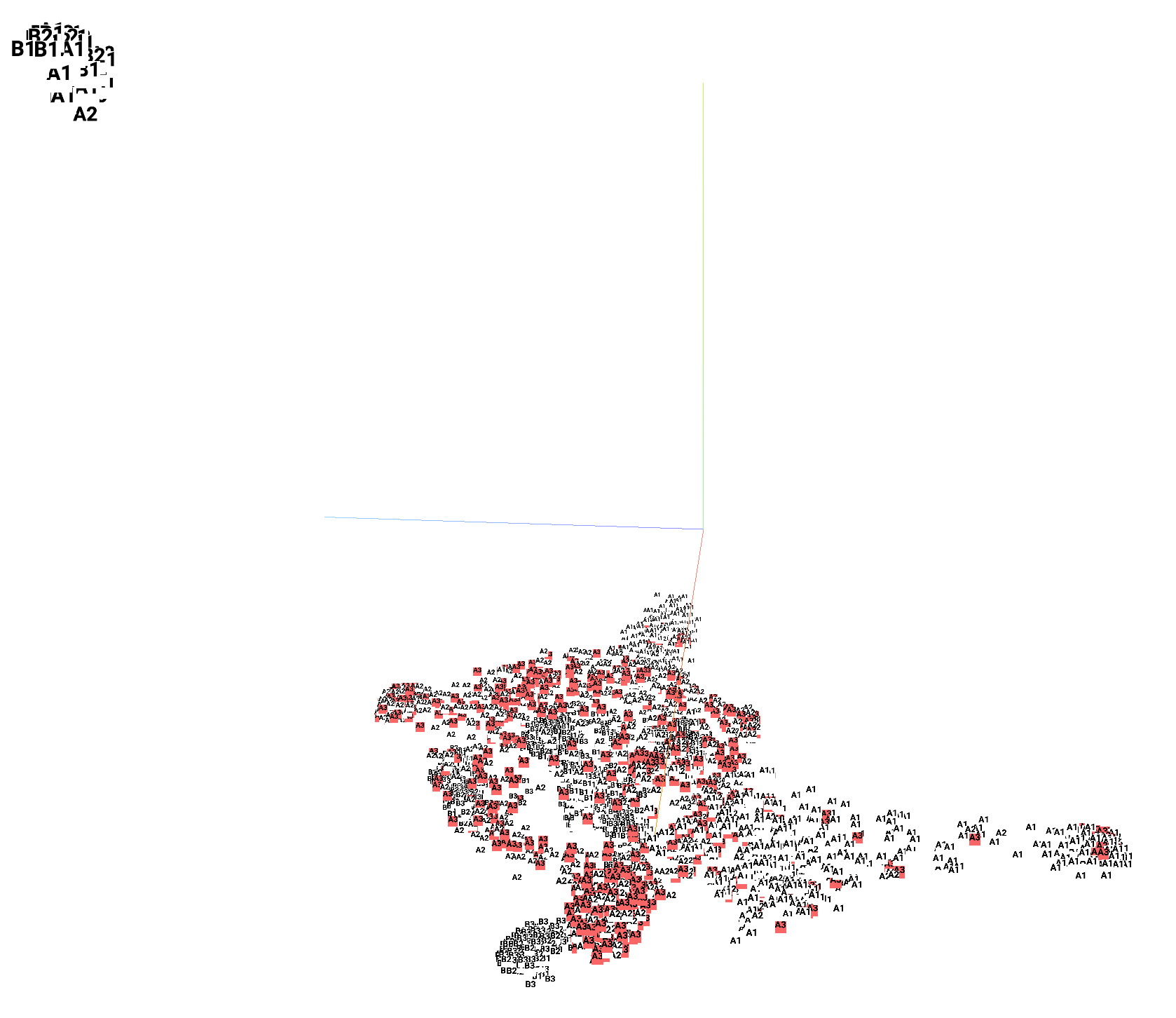}
	}
	\subfigure[Later stage of GDI-I$^1$]{
		\includegraphics[width=0.4\textwidth]{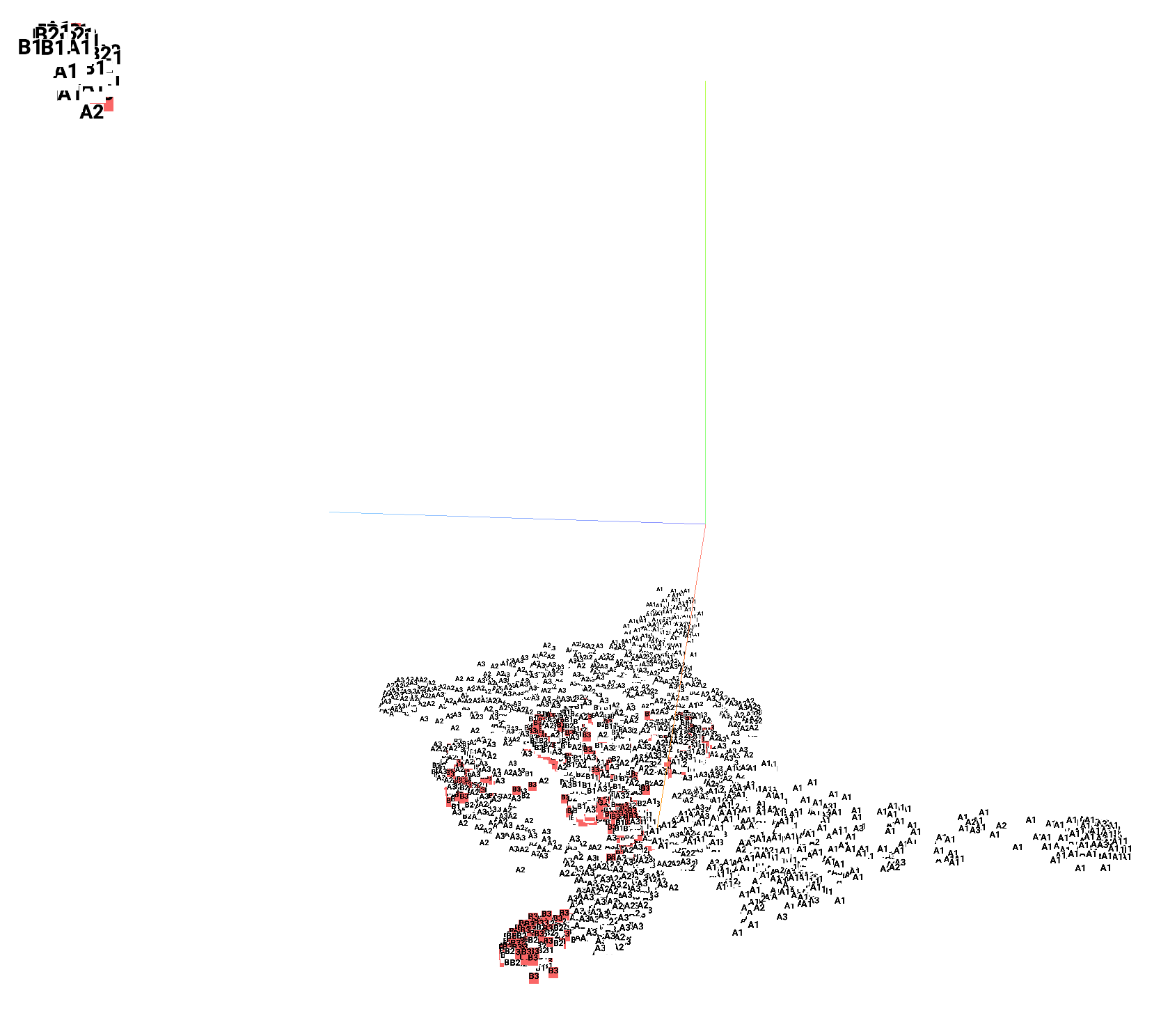}
	}
	\end{center}
	\caption{t-SNE of ChopperCommand. 
	t-SNE is drawn from 6k states.
	We sample 1k states from each stage of GDI-I$^3$ and GDI-I$^1$.
	We highlight 1k states of each stage of GDI-I$^3$ and GDI-I$^1$.}
	\label{Fig: t-SNE of ChopperCommand}
\end{figure}

\begin{figure}[!ht]
	\begin{center}
	    \subfigure[Early stage of GDI-H$^3$]{
		\includegraphics[width=0.4\textwidth]{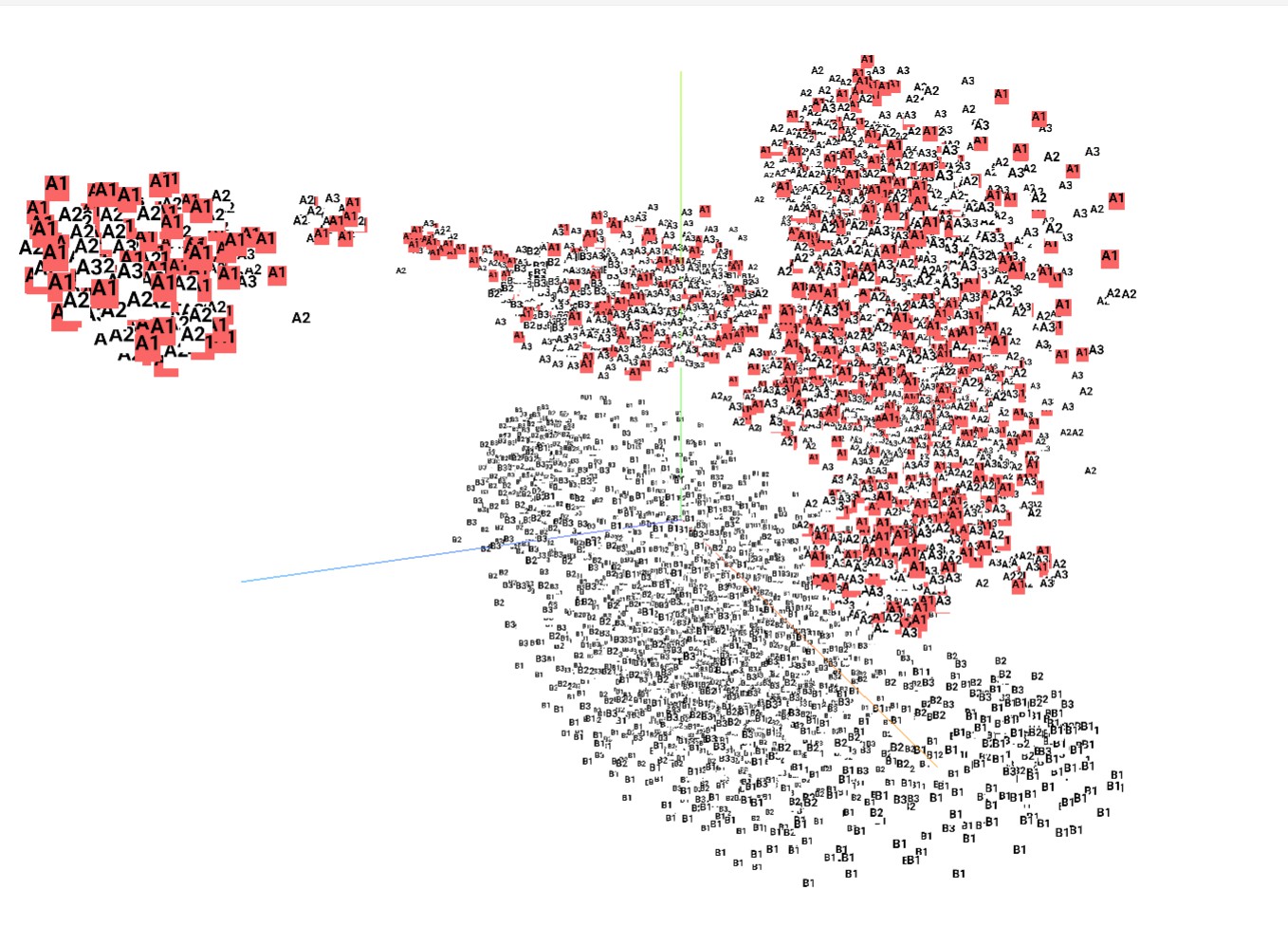}
	   }
	\subfigure[Early stage of GDI-I$^1$]{
		\includegraphics[width=0.4\textwidth]{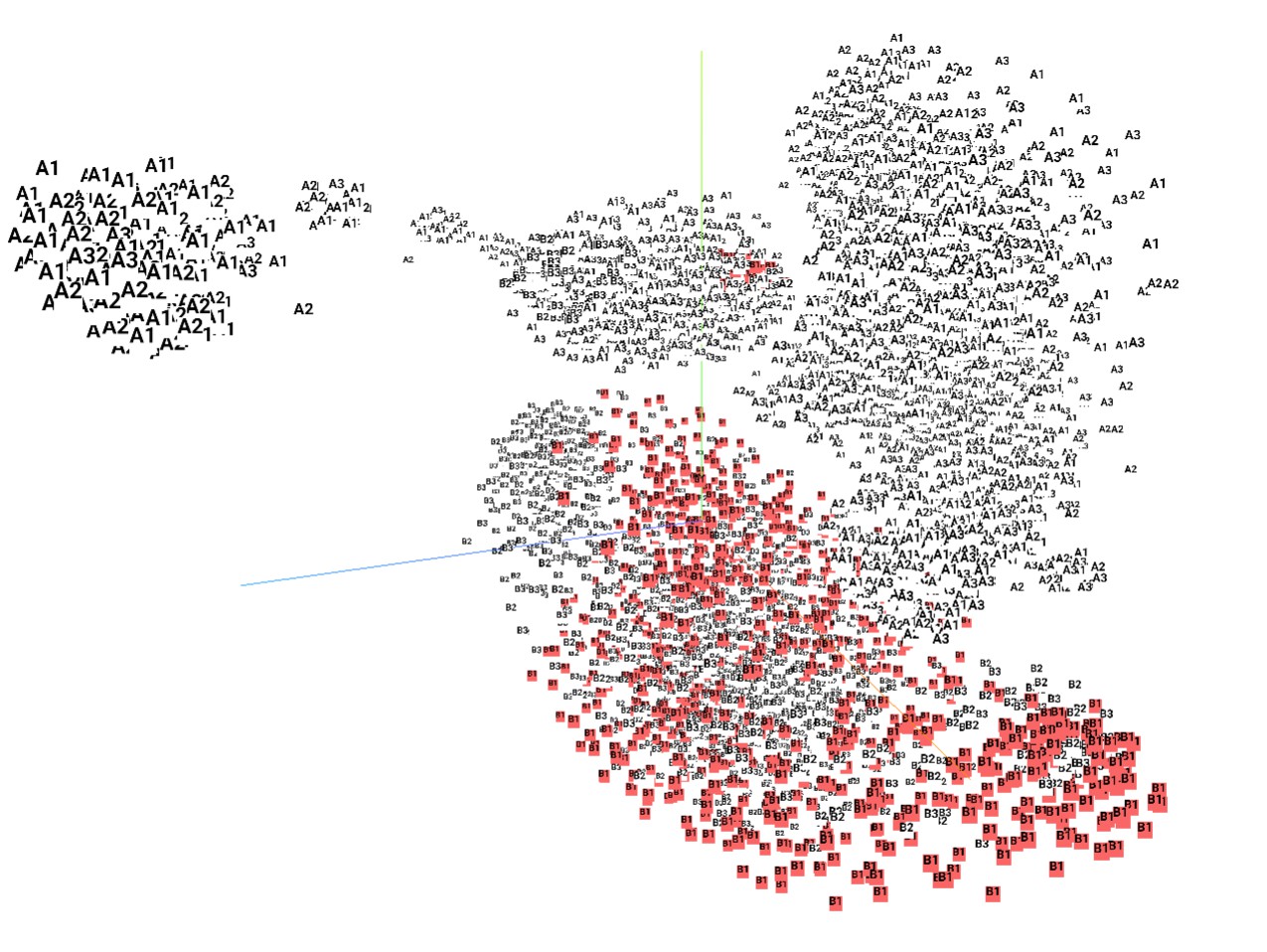}
	}
	
	\subfigure[Middle stage of GDI-H$^3$]{
		\includegraphics[width=0.4\textwidth]{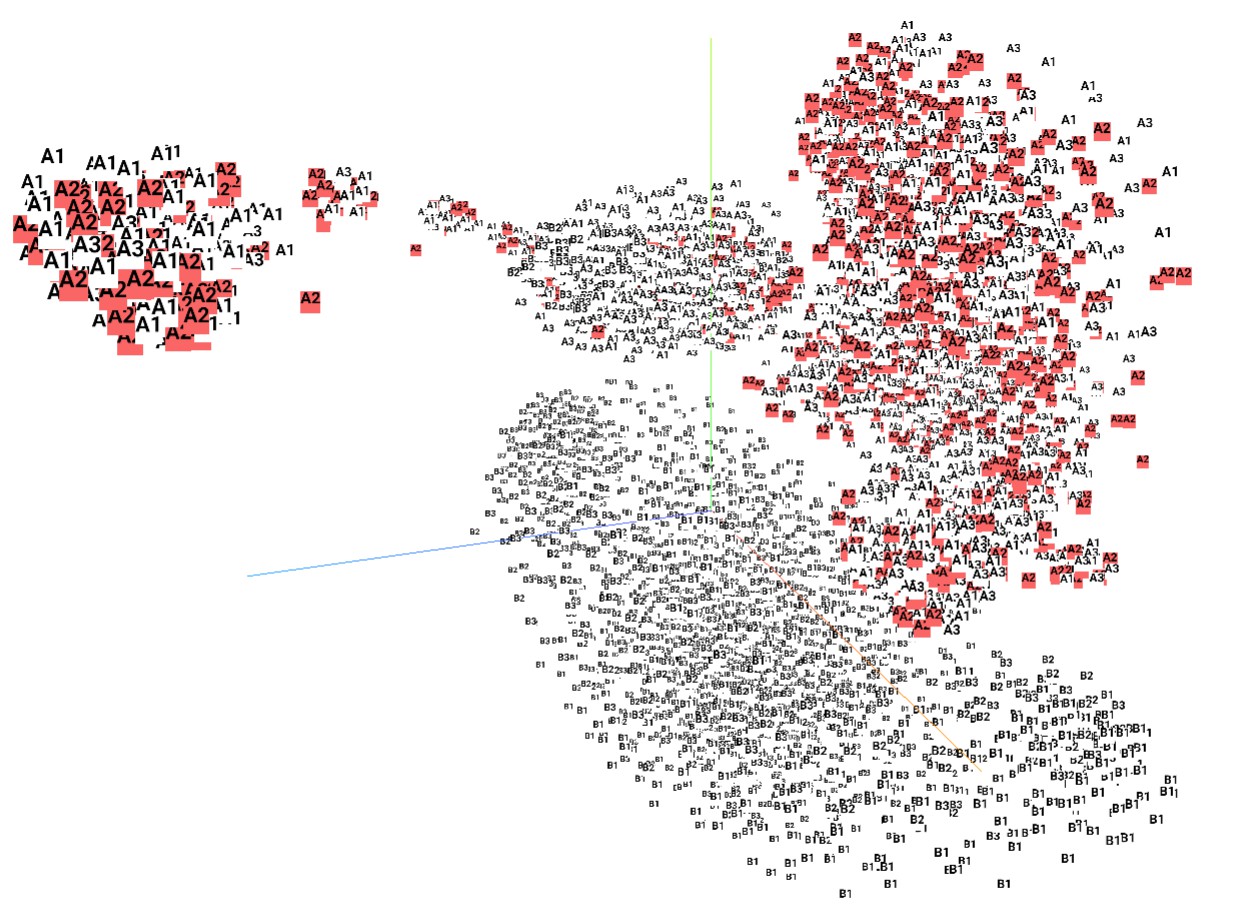}
	}
	\subfigure[Middle stage of GDI-I$^1$]{
		\includegraphics[width=0.4\textwidth]{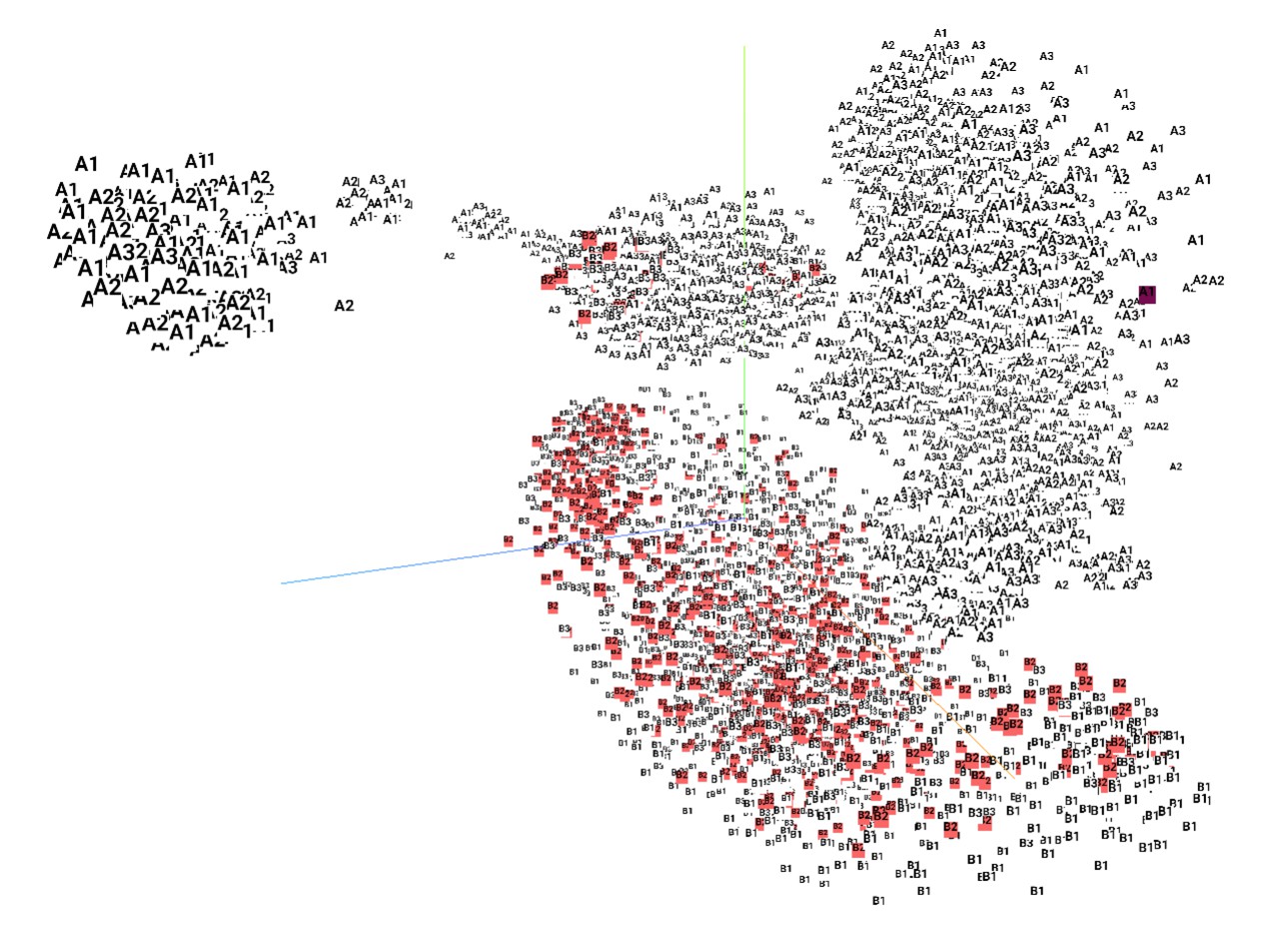}
	}
	
	\subfigure[Later stage of GDI-H$^3$]{
		\includegraphics[width=0.4\textwidth]{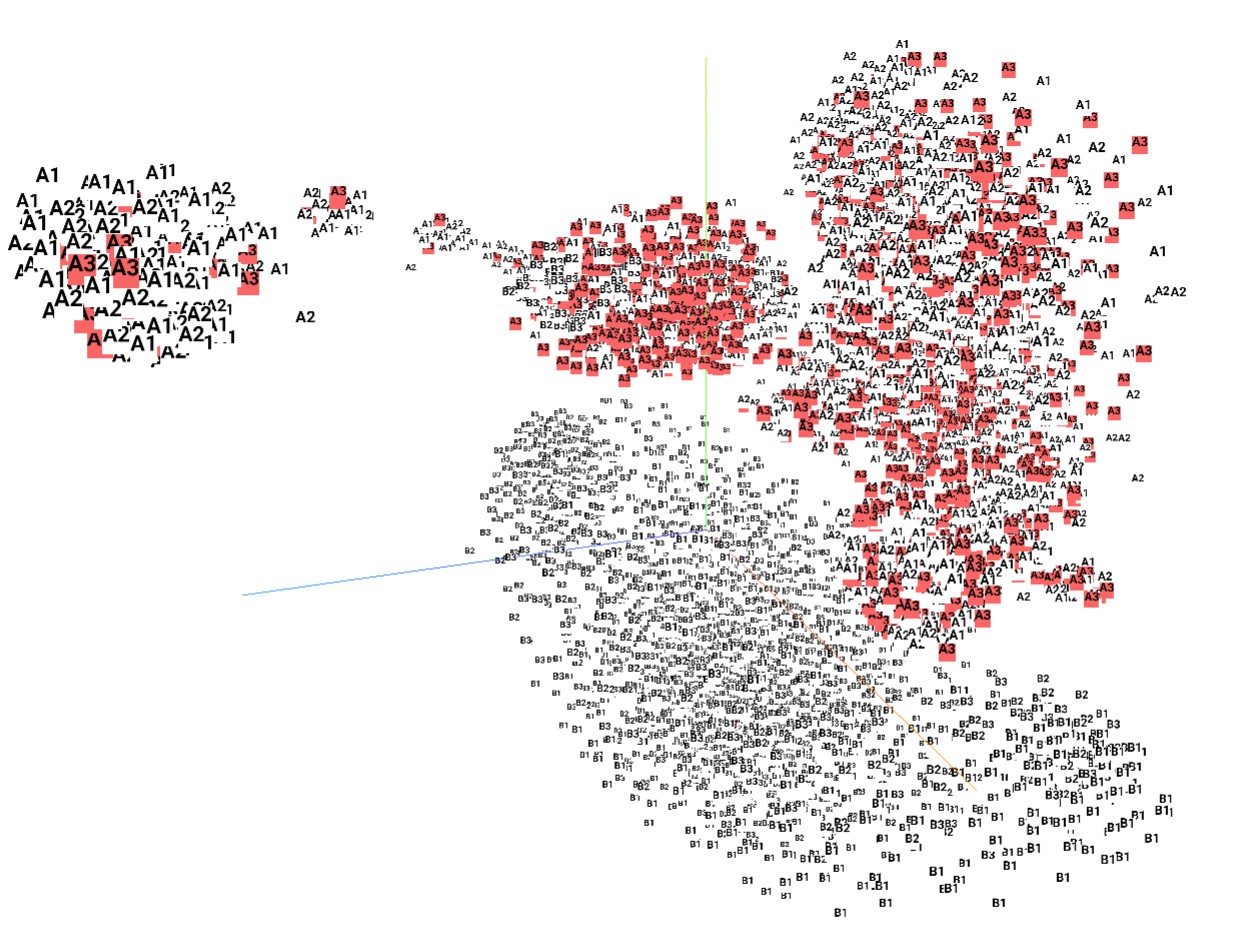}
	}
	\subfigure[Later stage of GDI-I$^1$]{
		\includegraphics[width=0.4\textwidth]{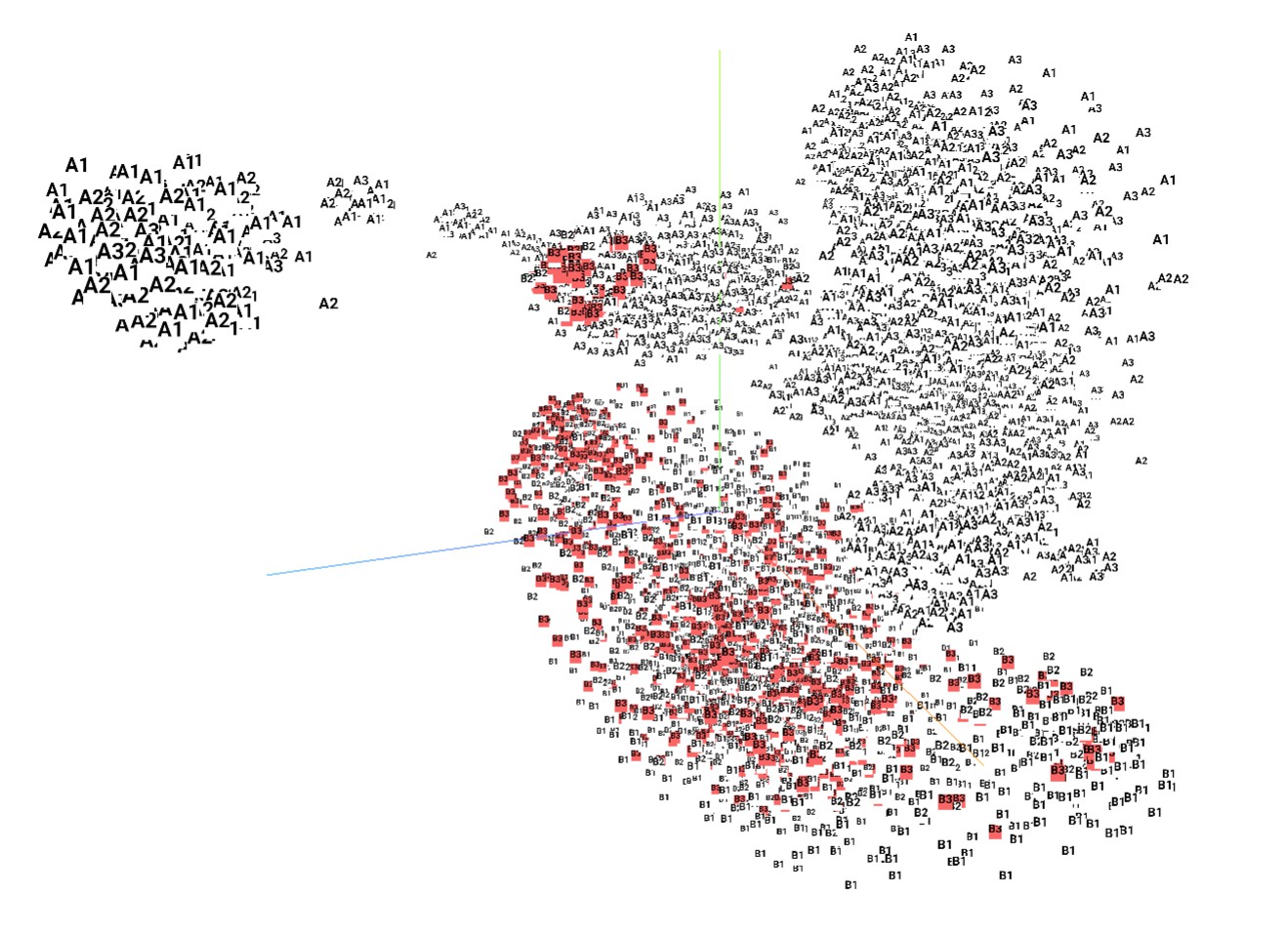}
	}
	\end{center}
	\caption{t-SNE of  Krull. 
	t-SNE is drawn from 6k states.
	We sample 1k states from each stage of GDI-H$^3$ and GDI-I$^1$.
	We highlight 1k states of each stage of GDI-H$^3$ and GDI-I$^1$.}
	\label{Fig: t-SNE of Krull}
\end{figure}

\begin{figure}[!ht]
    \begin{center}
        \subfigure[GDI-I$^3$ on Seaquest]{
		\includegraphics[width=0.4\textwidth]{photo/TSNE/sq/A.png}
	}
	\subfigure[GDI-I$^1$  on Seaquest]{
		\includegraphics[width=0.4\textwidth]{photo/TSNE/sq/B.png}
	}
	
	\subfigure[GDI-I$^3$ on ChopperCommand]{
		\includegraphics[width=0.4\textwidth]{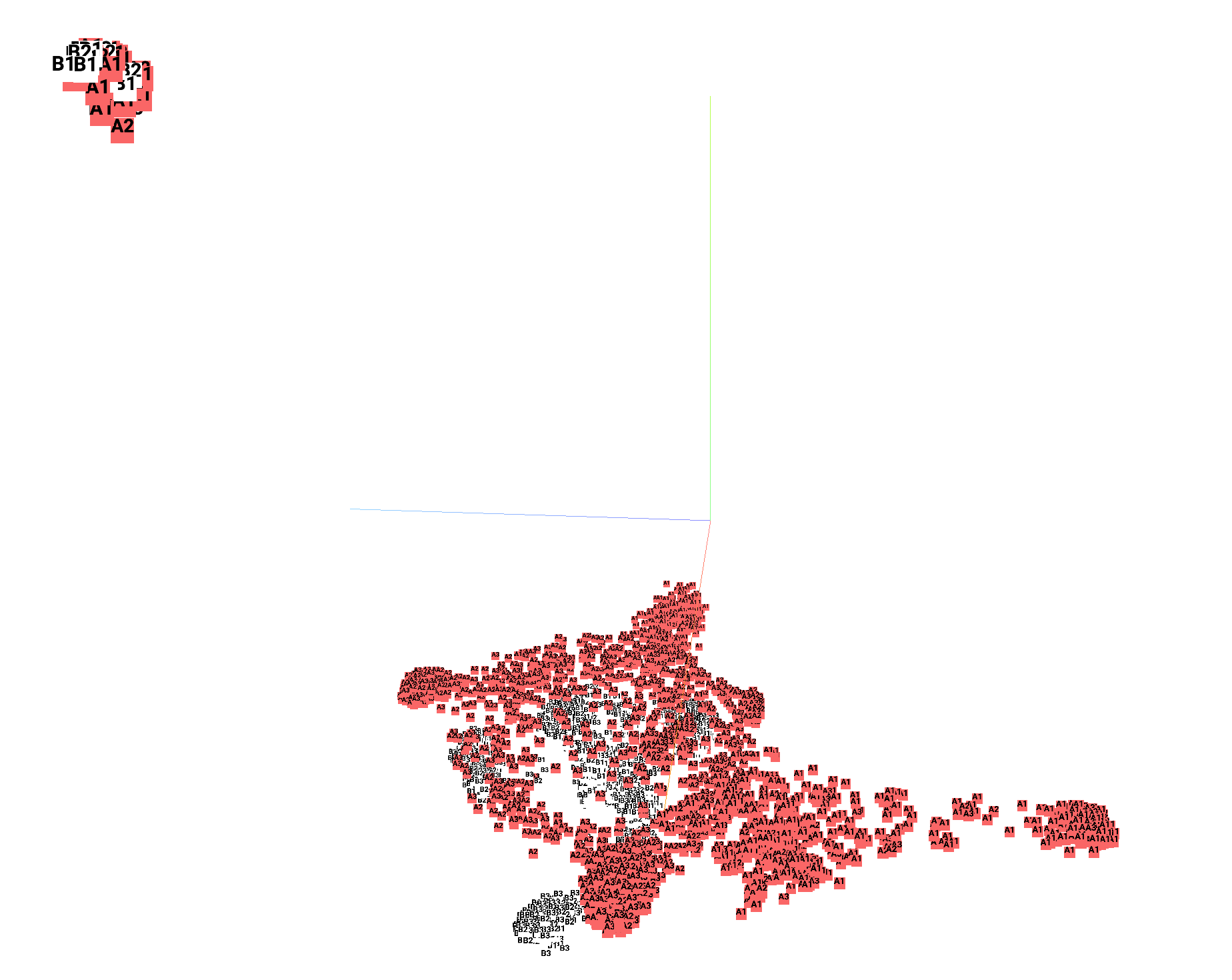}
	   }
	\subfigure[GDI-I$^1$ on ChopperCommand]{
		\includegraphics[width=0.4\textwidth]{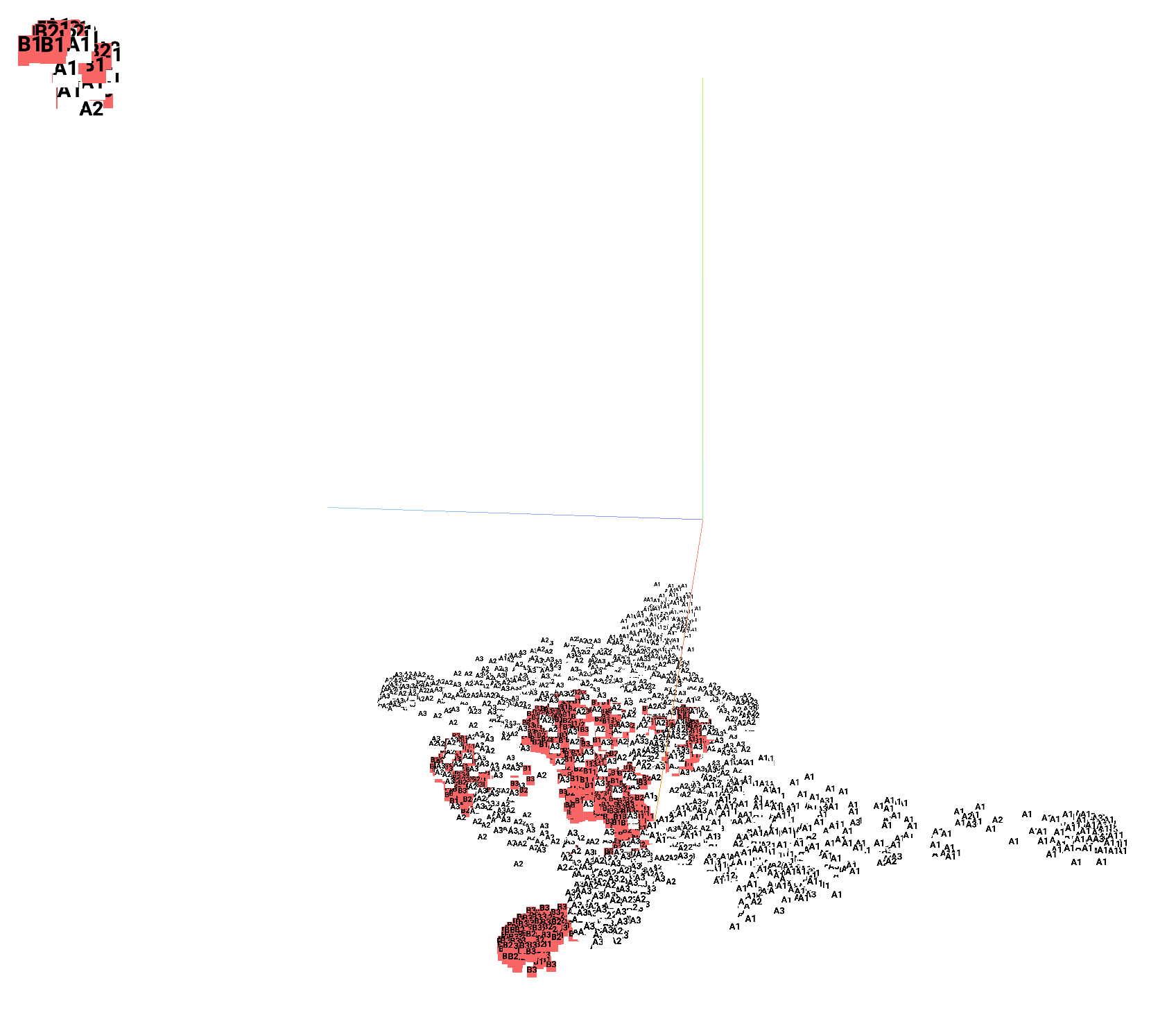}
	}
	
	\subfigure[ GDI-H$^3$ on  Krull]{
		\includegraphics[width=0.4\textwidth]{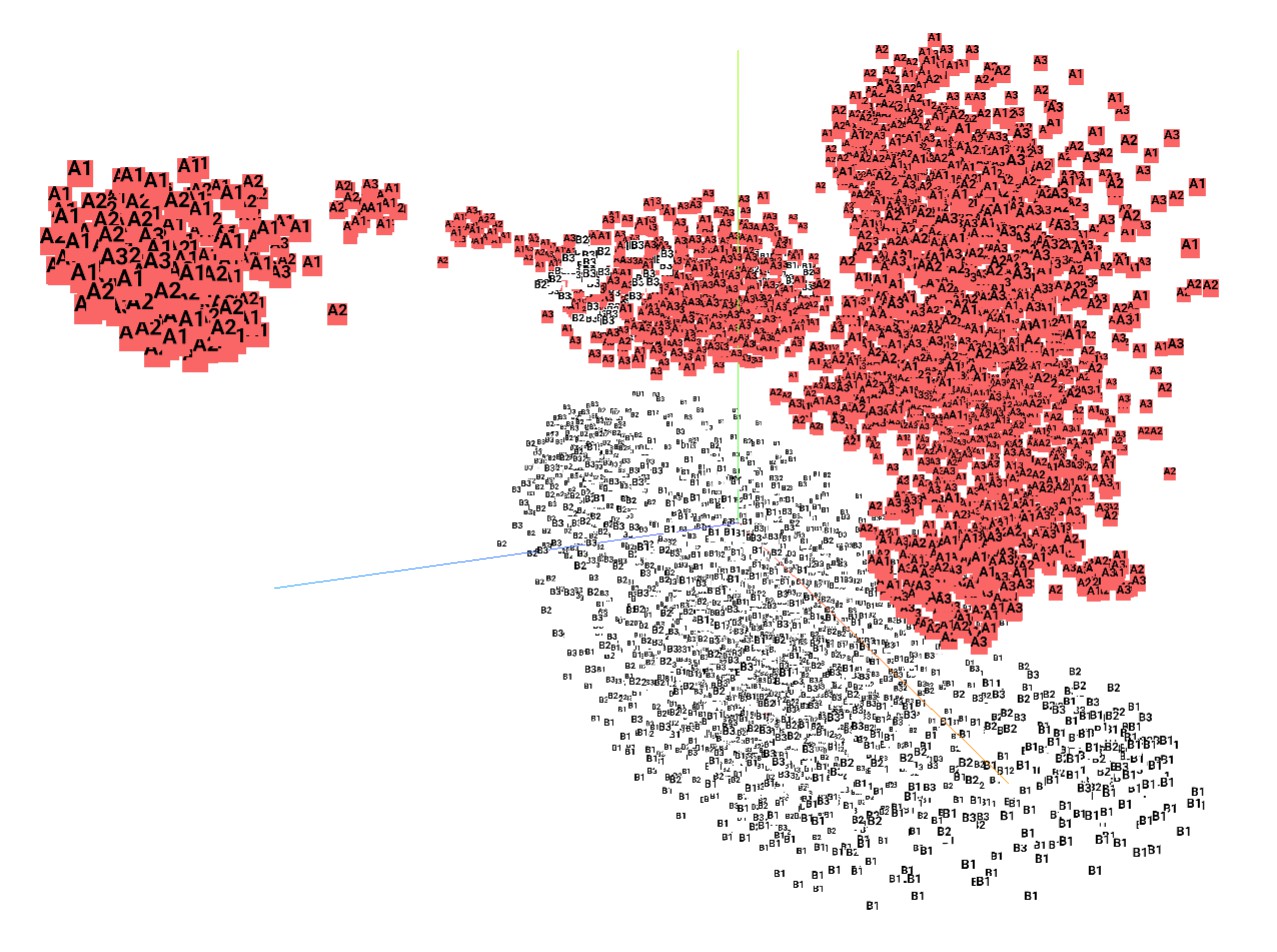}
	}
	\subfigure[GDI-I$^1$ on Krull]{
		\includegraphics[width=0.4\textwidth]{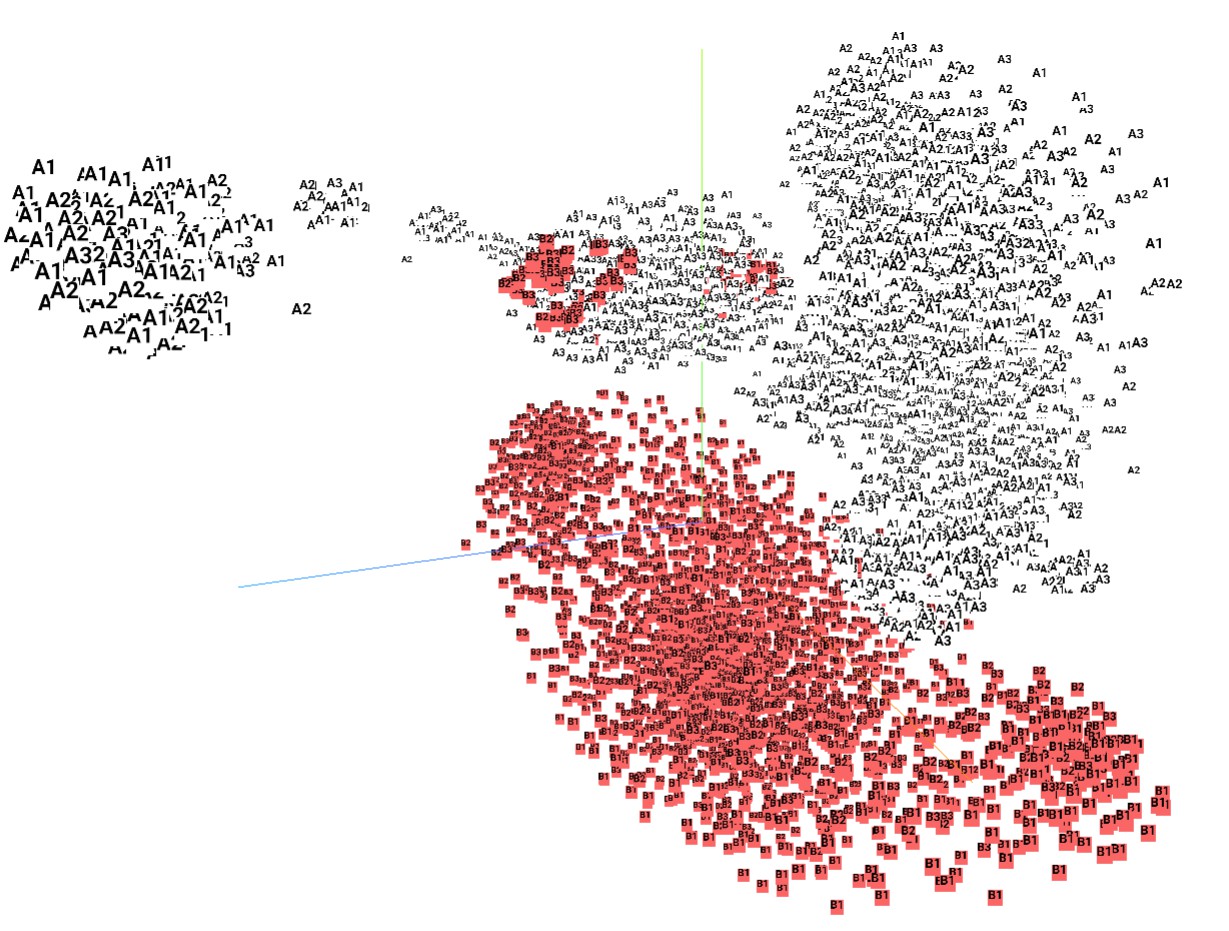}
	}
    \end{center}
	\caption{Overview of t-SNE in Atari games. 
	Each t-SNE figure is drawn from 6k states.
	We highlight 3k states of GDI-I$^3$, GDI-H$^3$ and GDI-I$^1$, respectively.}
	\label{Fig: Overview of t-SNE in Atari games.}
\end{figure}

\clearpage

\end{document}